\newcommand\pef[1]{(\ref{#1})}
\newcommand\blfootnote[1]{%
  \begingroup
  \renewcommand\thefootnote{}\footnote{#1}%
  \addtocounter{footnote}{-1}%
  \endgroup
}
\numberwithin{equation}{section}
\numberwithin{figure}{section}
\theoremstyle{plain}
\newtheorem{thm}{\protect\theoremname}[section]
\newtheorem{defn}[thm]{Definition}
\theoremstyle{remark}
\newtheorem{rem}{Remark}
\newcommand{\cA}{\mathcal{A}}
\newcommand{\cC}{\mathcal{C}}
\newcommand{\cM}{\mathcal{M}}
\newcommand{\cN}{\mathcal{N}}
\newcommand{\cE}{\mathcal{E}}
\newcommand{\cL}{\mathcal{L}}
\newcommand{\cP}{\mathcal{P}}
\newcommand{\R}{\mathbb{R}}
\newcommand{\E}{\mathbb{E}}
\newcommand{\indicator}[1]{\mathbbm{1}_{#1}}
\newcommand{\tensor}{\otimes}
\newcommand{\abs}[1]{\lvert#1\rvert}
\newcommand{\norm}[1]{\lvert\lvert#1\rvert\rvert}
\DeclareMathOperator{\Tr}{Tr}
\DeclareMathOperator{\diag}{diag}
\DeclareMathOperator{\sgn}{sgn}
\DeclareMathOperator{\OO}{\mathcal{O}}
\DeclareMathOperator{\Sym}{Sym}
\theoremstyle{plain}
\newtheorem{prop}[thm]{Proposition}
\newtheorem{lem}[thm]{Lemma}
\newtheorem{cor}[thm]{Corollary}
\theoremstyle{remark}
\theoremstyle{remark}
\newtheorem{obs}[thm]{Observation}
\newtheorem{remark}[thm]{Remark}
\theoremstyle{definition}
\DeclareMathOperator{\op}{{\rm op}}
\newcommand{\bE}{\mathbb{E}}
\newcommand{\bP}{\mathbb{P}}
\newcommand{\bR}{{\mathbb R}}
\newcommand{\bS}{\mathbb S}
\newcommand{\la}{\lambda}
\newcommand{\bu}{\mathbf{u}}
\providecommand{\theoremname}{Theorem}
\begin{document}
\title[Spectral alignment of SGD in high-dimensional classification]{Spectral alignment of stochastic gradient descent for high-dimensional classification tasks}

\author{G\'erard Ben Arous}
\author{Reza Gheissari}
\author{Jiaoyang Huang}
\author{Aukosh Jagannath}

\address[G\'erard Ben Arous]{Courant Institute, New York University}
\email{gba1@nyu.edu}

\address[Reza Gheissari]{Department of Mathematics, Northwestern University}
\email{gheissari@northwestern.edu}

\address[Jiaoyang Huang]{Department of Statistics and Data Science, University of Pennsylvania}\email{huangjy@wharton.upenn.edu}

\address[Aukosh Jagannath]{Department of Statistics and Actuarial Science, Department of Applied Mathematics, and Cheriton School of Computer Science, University of Waterloo}
\email{a.jagannath@uwaterloo.ca}

\maketitle

\begin{abstract}
We rigorously study the relation between the training dynamics via stochastic gradient descent (SGD) and the spectra of empirical Hessian and gradient matrices. We prove that in two canonical classification tasks for multi-class high-dimensional mixtures and either 1 or 2-layer neural networks, both the SGD trajectory and emergent outlier eigenspaces of the Hessian and gradient matrices align with a common low-dimensional subspace. Moreover, in multi-layer settings this alignment occurs per layer, with the final layer's outlier eigenspace evolving over the course of training, and exhibiting rank deficiency when the SGD converges to sub-optimal classifiers. This establishes  some of the rich predictions that have arisen from extensive numerical studies in the last decade about the spectra of Hessian and information matrices over the course of training in overparametrized networks.
\end{abstract}

\vspace{-.1cm}
\section{Introduction}
Stochastic gradient descent (SGD) and its many variants, are the backbone of modern machine learning algorithms (see e.g.,~\citet{Bottou99}). The training dynamics of neural networks, however, are still poorly understood in the non-convex and high-dimensional settings that are frequently encountered. A common explanation for the staggering success of neural networks, especially when overparameterized, is that the loss landscapes that occur in practice have many “flat” directions and a hidden low-dimensional structure within which the bulk of training occurs.\blfootnote{An extended abstract presenting results from this paper appeared in \emph{International Conference on Learning Representations (ICLR) 2024} under the title ``High-dimensional SGD aligns with emerging outlier eigenspaces''.}

To understand this belief, much attention has been paid to the Hessian of the empirical risk (and related matrices formed via gradients) along training. This perspective on the training dynamics of neural networks was proposed in~\citet{LeCun-EfficientBackprop}, and numerically analyzed in depth in~\citet{Sagun-Singularity-and-beyond,SagunEtAl}. The upshot of these studies was a broad understanding of the spectrum of the Hessian of the empirical risk, that we summarize as follows: 
\begin{enumerate}
    \item It has a \emph{bulk} that is dependent on the network architecture, and is concentrated around $0$, becoming more-so as the model becomes more overparametrized;
    \item It has (relatively) few \emph{outlier eigenvalues} that are dependent on the data, and evolve non-trivially along training while remaining separated from the bulk;
\end{enumerate}
Since those works, these properties of the Hessian and related spectra over the course of training have seen more refined and large-scale experimentation. ~\citet{Papyan-3-level} found a hierarchical decomposition to the Hessian for deep networks, attributing the bulk, emergent outliers, and a \emph{minibulk} to three different ``parts" of the Hessian. 
Perhaps most relevant to this work,~\citet{GD-in-tiny-subspace} noticed that gradient descent tends to quickly align with a \emph{low-dimensional} outlier subspace of the Hessian matrix, and stay in that subspace for long subsequent times. They postulated that this common low-dimensional structure to the SGD and Hessian matrix may be key to many classification tasks in machine learning. 
For a sampling of other empirical investigations of spectra of Hessians and information matrices along training, see e.g.,~\citet{Ghorbani-Hessian-eigenvalue-density,Papyan-Class-CrossClass,Li-etal-Hessian-based-analysis,MartinMahoney,EdgeOfStability-GD,xie2023on}.

From a theoretical perspective, much attention has been paid to the Hessians of deep networks using random matrix theory approaches. Most of this work has focused on the spectrum at a fixed point in parameter space, most commonly at initialization.
Early works in the direction include~\citet{Watanabe-FIM,Dauphin-et-al}. \citet{Choromanska-PMLR} noted similarities of neural net Hessians to spin glass Hessians, whose complexity (exponential numbers of critical points) has been extensively studied see e.g.,~\citet{ABA13,ABC13}. More recently, the expected complexity has been investigated in statistical tasks like tensor PCA and generalized linear estimation~\citet{BMMN17,maillard-landscape-complexity}. 
 In~\citet{PenningtonWorah-FIM} and~\citet{PenningtonBahri}, the empirical spectral distribution of Hessians and information matrices in single-layer neural networks were studied at initialization. ~\citet{liao2021hessian} studied Hessians of some non-linear models which they referred to as generalized GLMs, also at initialization, and, after our work first appeared,~\cite{GarrodKeating} studied the Hessian of a deep linear unconstrained feature model at the global minimizer of the loss. 
 
 Under the infinite-width neural tangent kernel limit of~\citet{JacotGabrielHongler-NTK},~\citet{ZhouWang} derived the empirical spectral distribution at initialization, and~\citet{Jacot-NTK-Hessian} studied its evolution over the course of training. In this limit the input dimension is kept fixed compared to the parameter dimension, while our interest in this paper is when the input dimension, parameter dimension, and number of samples all scale together. 

An important step towards understanding the evolution of Hessians along training in the high-dimensional setting, is understanding the training dynamics themselves. 
Since the classical work of~\cite{RobMon51}, there has been much activity studying limit theorems for stochastic gradient descent. In the high-dimensional setting, following~\citet{saad1995dynamics,saad1995line}, investigations have focused on finding a finite number of functions (sometimes called ``observables'' or ``summary statistics''), whose dynamics under the SGD are asymptotically autonomous in the high-dimensional limit. For a necessarily small sampling of this rich line of work,. we refer to~\cite{goldt2019dynamics,veiga2022phase,paquette2021sgd,pmlr-v195-arnaboldi23a,TanVershynin,BGJ22}. Of particular relevance, it was shown by~\cite{DamianLeeSoltanolkotabi-22,mousavi-hosseini2023neural} that for multi-index models, SGD predominantly lives in the low-dimensional subspace spanned by the ground truth parameters.

A class of tasks whose SGD is amenable to this broad approach is classification of Gaussian mixture models (GMMs). With various losses and linearly separable class structures, the minimizer of the empirical risk landscape with single-layer networks was studied in~\citet{Mignacco-GMM,loureiro2021learning}. A well-studied case of a Gaussian mixture model needing a two-layer network is under an XOR-type class structure; the training dynamics of SGD for this task were studied in~\citet{refinetti2021classifying} and~\citet{BGJ22} and it was found to have a particularly rich structure with positive probability of convergence to bad classifiers among other degenerate phenomena.

Still, a simultaneous understanding of high-dimensional SGD and the Hessian and related matrices' spectra along the training trajectory has remained largely open.

\subsection{Our contributions}
In this paper, we study the interplay between the training dynamics (via SGD) and the spectral compositions of the empirical Hessian matrix and an empirical gradient second moment matrix, or simply \emph{G-matrix}~\pef{eq:test-Hessian-Gram} (similar in spirit to an information matrix) over the course of training. 
We rigorously show the following phenomenology in two canonical high-dimensional classification tasks with $k$ ``hidden'' classes:  
 \begin{enumerate}
    \item Shortly into training, the empirical Hessian and empirical G-matrices have $C(k)$ many outlier eigenvalues. Their corresponding eigenvectors, along with the SGD trajectory, align with a common latent $C(k)$-dimensional subspace. In particular, the SGD and outlier eigenspaces align well with one another.
    Here $C(k)$ is explicit and depends on the model and the performance of the classifier to which the SGD converges.
    \item In multi-layer settings, this alignment happens within each layer, i.e., the first layer parameters align with the outlier eigenspaces of the corresponding blocks of the empirical Hessian and G-matrices, and likewise for the second layer parameters. 
    \item This alignment is \emph{not} predicated on success at the classification task: when the SGD converges to a sub-optimal classifier, the empirical Hessian and G matrices have lower rank outlier eigenspaces, and the SGD aligns with those rank deficient spaces. 
\end{enumerate}

The first model we consider is the basic example of supervised classification of general $k$-component Gaussian mixture models with $k$ linearly independent classes by a single-layer neural network. In Theorem~\ref{mainthm:SGD-aligns-with-Hessian}, we establish alignment of the form of Item 1 above, between each of the $k$ one-vs-all classifiers and their corresponding blocks in the empirical Hessian and G-matrices. See also the depictions in Figures~\ref{fig:KGMM-topspaces}--\ref{fig:KGMM-snapshots}. 
To show this, we show that the matrices have an outlier-minibulk-bulk structure throughout the parameter space, and derive limiting dynamical equations for the trajectory of appropriate summary statistics of the SGD trajectory. Importantly, the same low-dimensional subspace is at the heart of both the outlier eigenspaces and the summary statistics. At this level of generality, the SGD can behave very differently within the outlier eigenspace. As an example of the refined phenomenology that can arise, we further investigate the special case where the means are orthogonal; here the SGD aligns specifically with the single largest outlier eigenvalue, which itself has separated from the other $k-1$ outliers along training. 
This is proved in Theorem~\ref{mainthm:topeigenvector} and depicted in Figure~\ref{fig:DBBP-kgmm}. These results are presented in Section~\ref{subsec:1-layer-k-GMM}.

To demonstrate our results in more complex multi-layer settings, we consider supervised classification of a GMM version of the famous XOR problem of~\citet{minsky1969introduction}. This is one of the simplest models that requires a two-layer neural network to solve. We use a two-layer architecture with a second layer of width $K$. As indicated by Item 2 above, in Theorems~\ref{mainthm:XOR-lives-in-Hessian}--\ref{mainthm:XOR-all-live-in-subspace} the alignment of the SGD with the matrices' outlier eigenspaces occurs within each layer, the first layer having an outlier space of rank two, and the second layer having an outlier space of rank $4$ when the dynamics converges to an  optimal classifier. This second layer's alignment is especially rich, as when the model is overparametrized ($K$ large), its outlier space of rank $4$ is not present at initialization, and only separates from its rank-$K$ bulk over the course of training. This can be interpreted as a dynamical version of the well-known spectral phase transition in spiked covariance matrices of~\citet{baik2005phase}: see Figure~\ref{fig:DBBP-XOR} for a visualization. Moreover, the SGD for this problem is known to converge to sub-optimal classifiers with probability bounded away from zero~\citet{BGJ22}, and we find that in these situations, the alignment still occurs but the outlier eigenspaces into which the SGD moves are rank-deficient compared to the number of hidden classes, $4$: see Figure~\ref{fig:DBBP-XOR-deficient}. These results are presented in Section~\ref{subsec:XOR-GMM}.

\section{Main Results}\label{sec:main-results}

Let us begin by introducing the following general framework and notation. 
We suppose that we are given data from a distribution $\cP_Y$ over pairs $\mathbf Y = (y,Y)$ where $y\in \mathbb R^k$ is a one-hot ``label" vector that takes the value $1$ on a class (sometimes identified with the element of $[k]=\{1,...,k\}$ on which it is $1$), and $Y\in \mathbb R^d$ is a corresponding feature vector. In training we take as loss a function of the form $L(\mathbf{x},\mathbf{Y}): \mathbb R^p\times \mathbb R^{k+d} \to \mathbb R_+\,,$
where $\mathbf{x}\in \mathbb R^p$ represents the network parameter. (As we are studying supervised classification, in both settings this loss will be the usual cross-entropy loss corresponding to the architecture used.)

We imagine we have two data sets, a training set $(\mathbf{Y}^\ell)_{\ell=1}^{M}$ and a test set $(\widetilde {\mathbf{Y}}^\ell)_{\ell =1}^{\widetilde M}$, all drawn i.i.d.\ from $\cP_Y$.
Let us first define the stochastic gradient descent trained using $(\mathbf{Y}^{\ell})$. In order to ensure the SGD doesn't go off to infinity we add an $\ell^2$ penalty term (as is common in practice) with Lagrange multiplier $\beta$. The (online) stochastic gradient descent with initialization $\mathbf{x}_0$ and learning rate, or step-size, $\delta$, will be run using the training set $(\mathbf{Y}^\ell)_{\ell=1}^{M}$ as follows: 
\begin{align}\label{eq:SGD-def}
    \mathbf{x}_\ell = \mathbf{x}_{\ell-1} -  \delta \nabla L(\mathbf{x}_{\ell-1},\mathbf{Y}^\ell) - \beta \mathbf{x}_{\ell-1}\,.
\end{align}

Our aim is to understand the behavior of SGD with respect to principal subspaces, i.e., outlier eigenvectors, of the empirical Hessian matrix and empirical second moment matrix of the gradient. This latter matrix is exactly the information matrix when $L$ is the log-likelihood; in our paper $L$ is taken to be a cross-entropy loss, so we simply refer to this as the G-matrix henceforth. We primarily consider the empirical Hessian and empirical G-matrices generated out of the test data, namely: 
\begin{align}
    \nabla^2 \widehat R(\mathbf{x}) & =  \frac{1}{\widetilde M} \sum_{\ell = 1}^{\widetilde M} \nabla^2 L(\mathbf{x}, \widetilde{\mathbf{Y}}^\ell)\,, \qquad \text{and}\qquad  \widehat G(\mathbf{x})  = \frac{1}{\widetilde M} \sum_{\ell =1}^{\widetilde M} \nabla L(\mathbf{x},\widetilde{\mathbf{Y}}^\ell)^{\otimes 2}\,. \label{eq:test-Hessian-Gram}
\end{align}
 (Since we are working in the online setting, it is just as natural to generate these matrices with test data as with train data. See Remark~\ref{rem:test-vs-train} for how our results extend when training data is used.)
When the parameter space naturally splits into subsets of its indices (e.g., the first-layer weights and the second-layer weights), for a subset $I$ of the parameter coordinates, we use subscripts $\nabla^2_{I,I} \widehat R$ and $\widehat G_{I,I}$ to denote the block corresponding to that subset. Note that since the penalty term $\beta \|\mathbf{x}\|^2$ is not included in $L$, it does not show up in~\eqref{eq:test-Hessian-Gram}.  This convention matches the literature; note, however, that including this term would simply shift the spectrum of the Hessian by $2\beta$.

To formalize the notion of alignment between the SGD and the principal directions of the Hessian and G-matrices, we introduce the following language.  For a subspace $B$, we let $P_B$ denote the orthogonal projection onto $B$; for a vector $v$, we let $\|v\|$ be its $\ell^2$ norm; and for a matrix $A$, let $\|A\|=\|A\|_{\op}$ be its $\ell^2 \to \ell^2$ operator norm.
\begin{defn}\label{def:lives-in-span}
    The \emph{alignment} of a vector $v$ with  a subspace $B$ is the ratio $\rho(v,B) = \norm{P_{B}v}/\norm{v}$. 
    We say a vector $v$ \emph{lives in a subspace $B$} up to error $\varepsilon$ if 
    $\rho(v,B)\geq 1-\varepsilon$.
\end{defn}

For a matrix $A$, we let $E_{k}(A)$ denote the span of the top $k$ eigenvectors of $A$, i.e., the span of the $k$ eigenvectors of $A$ with the largest absolute values. We also use the following. 

\begin{defn}\label{def:matrix-lives-in-subspace}
    We say a matrix $A$ \emph{lives in} a subspace $B$. up to error $\varepsilon$ if there exists $M$ such that $\text{Im}(A-M)\subset B$ with $\|M\|_{\op}\le \varepsilon\|A\|_{\op}$, where $\|A \|_{\op}$ denotes the $\ell^2$-to-$\ell^2$ operator norm. 
\end{defn}

\subsection{Classifying linearly separable mixture models}\label{subsec:1-layer-k-GMM}

We begin by illustrating our results on (arguably) the most basic problem of high-dimensional multiclass classification, namely supervised classification of a $k$ component Gaussian mixture model with constant variance and linearly independent means using a single-layer network. (This is sometimes used as a toy model for the training dynamics of the last layer of a deep network via the common ansatz that the output of the second-to-last layer of a deep network behaves like a linearly separable mixture of Gaussians: see e.g., the neural collapse phenomenon posited by~\citet{PapyanDonoho}.)

\subsubsection{Data model}
Let $\cC = [k]$ be the collection of classes, with corresponding distinct  class means $(\mu_a)_{a\in [k]}\in \mathbb R^d$, covariance matrices $I_d/\lambda$, where $\lambda>0$ can be viewed a signal-to-noise parameter, and corresponding probabilities $0< (p_a)_{a\in [k]}  < 1$ such that $\sum_{a\in [k]} p_a =1$. The number of classes $k = O(1)$ is fixed (here and throughout the paper $o(1)$, $O(1)$ and $\Omega(1)$ notations are with respect to the dimension parameter $d$, and may hide constants that are dimension independent such as $k,\beta$).

For the sake of simplicity we take the means to be unit norm. Further, in order for the task to indeed be solvable with the single-layer architecture, we assume that the means are linearly independent, say with a fixed (i.e., $d$-independent) matrix of inner products $(\overline{m}_{ab})_{a,b} = (\mu_a \cdot \mu_b)_{a,b}$. 
Our data distribution $\cP_Y$ is a mixture of the form $\sum_{c}p_{c}\cN(\mu_{c},I_d/\lambda)$, with an accompanying class label $y\in \mathbb R^k$. Namely,  our data is given as $\mathbf{Y} = (y,Y)$ where: 
\begin{align}\label{eq:data-distribution}
	y \sim \sum_{a\in [k]} p_a \delta_{\mathbf 1_a}\,, \qquad \mbox{and} \qquad Y \sim \sum_{a\in [k]} y_{a} \mu_a + Z_\lambda\,,
\end{align}
and where $Z_\lambda \sim \mathcal N(0,I_d/\lambda)$. 

We perform classification by training a single-layer network formed by $k$ ``all-vs-one'' classifiers using the cross entropy loss (equivalently, we are doing multi-class logistic regression):
\begin{align}\label{eq:cross-entropy-loss}
L(\mathbf{x},\mathbf{Y})=-\sum_{c\in[k]}y_{c}{x}^{c}\cdot Y+\log\sum_{c\in [k]}\exp({x}^{c}\cdot Y)\,,
\end{align}
where $\mathbf{x}=(x^{c})_{c\in\cC}$ are the parameters, each of which
is a vector in $\R^{d}$, i.e., $\mathbf{x}\in\R^{dk}$. (Note that
we can alternatively view $\mathbf{x}$ as a $k\times d$ matrix.)

\subsubsection{Results and discussion}

Our first result is that after some linearly many steps, the SGD finds the subspace generated by the outlier eigenvalues of the Hessian and/or G-matrix of the test loss and lives there for future times.

\begin{thm}\label{mainthm:SGD-aligns-with-Hessian}
    Consider the mixture of $k$-Gaussians with loss function from~\pef{eq:cross-entropy-loss}, and SGD~\pef{eq:SGD-def} with learning rate $\delta = O(1/d)$, regularizer $\beta>0$, initialized from $\cN(0,I_d/d)$.  There exists $\alpha_0, \lambda_0$ such that if $\lambda\ge \lambda_0$,  and $\widetilde M \ge \alpha_0 d$, the following hold. For every $\varepsilon>0$, there exists $T_0(\varepsilon)$ such that for any fixed time horizon $T_0 < T_f < M/d$, with probability $1-o_d(1)$, 
    \begin{align*}
        \mathbf{x}_\ell^c & \text{ lives in } E_k(\nabla_{cc}^2 \widehat R(\mathbf{x}_\ell)) \text{ and in } E_k(\widehat G_{cc} (\mathbf{x}_\ell))\,,
    \end{align*}
    for every $c\in [k]$, up to $O(\varepsilon + \lambda^{-1})$ error, 
    for all $\ell \in [T_0\delta^{-1} ,T_f \delta^{-1}]$. 
\end{thm}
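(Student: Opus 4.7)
The plan is to identify a common $k$-dimensional latent subspace $U = \mathrm{span}\{\mu_1,\ldots,\mu_k\}$ and establish two complementary facts: (i) after time $T_0$, each SGD coordinate $\mathbf{x}_\ell^c$ lives in $U$ up to $O(\varepsilon + \lambda^{-1})$ error; and (ii) uniformly along the trajectory, both $E_k(\nabla^2_{cc}\widehat R(\mathbf{x}_\ell))$ and $E_k(\widehat G_{cc}(\mathbf{x}_\ell))$ coincide with $U$ up to $O(\lambda^{-1})$ error. Since $\dim U = k$, combining (i) and (ii) via a triangle inequality for principal angles yields the claim.

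\textbf{Step 1 (matrix structure).} Direct differentiation of~\pef{eq:cross-entropy-loss} gives
\[\nabla_{x^c} L(\mathbf{x},\mathbf{Y}) = (\sigma_c(\mathbf{x}\cdot Y)-y_c)\,Y,\qquad \nabla^2_{cc} L(\mathbf{x},\mathbf{Y}) = \sigma_c(1-\sigma_c)\,YY^\top,\]
where $\sigma_c$ denotes the $c$-th softmax. Thus both $\nabla^2_{cc}\widehat R$ and $\widehat G_{cc}$ are weighted sample covariances $\frac{1}{\widetilde M}\sum_\ell w_c^\ell(\mathbf{x})\,\widetilde Y^\ell (\widetilde Y^\ell)^\top$ with bounded, $1$-Lipschitz (in $\mathbf{x}\cdot\widetilde Y^\ell$) weights $w_c^\ell\in[0,1]$. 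Writing $\widetilde Y^\ell = \mu_{a_\ell}+Z^\ell/\sqrt\lambda$ splits each block into (i) a signal term $\frac{1}{\widetilde M}\sum_\ell w_c^\ell\mu_{a_\ell}\mu_{a_\ell}^\top$ supported on $U\otimes U$, with nonzero eigenvalues of order $1$ at generic $\mathbf{x}$; (ii) two signal-noise cross terms; and (iii) a noise-noise term $\frac{1}{\widetilde M\lambda}\sum_\ell w_c^\ell Z^\ell(Z^\ell)^\top$. Matrix-Bernstein on (ii) and Bai--Yin concentration on (iii), combined with $\widetilde M\ge \alpha_0 d$ for $\alpha_0$ large, bound the operator norm of (ii)+(iii) by $O(\lambda^{-1})$ up to logarithms. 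Because the test data are independent of the trajectory and $\|\mathbf{x}_\ell^c\|$ is bounded by a deterministic constant (see Step~2), a union bound over the $\mathrm{poly}(d)$-many trajectory points makes this uniform in $\ell$. A Davis--Kahan argument then identifies $E_k$ of each block with $U$, up to $O(\lambda^{-1})$ error in the sense of Definition~\ref{def:matrix-lives-in-subspace}.

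\textbf{Step 2 (SGD lives near $U$).} Rewrite the SGD step as
\[\mathbf{x}_\ell^c = (1-\beta)\mathbf{x}_{\ell-1}^c - \delta(\sigma_c - y_c)\mu_{a_\ell} - \tfrac{\delta}{\sqrt\lambda}(\sigma_c - y_c)Z^\ell.\]
Projecting onto $U^\perp$ kills the signal term and leaves a contractive AR(1) recursion driven by isotropic noise of per-coordinate variance $O(\delta^2/\lambda)$; a martingale maximal inequality combined with the $(1-\beta)$ contraction gives $\|P_{U^\perp}\mathbf{x}_\ell^c\| = O(\lambda^{-1/2})$ uniformly on $[T_0\delta^{-1},T_f\delta^{-1}]$. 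For the signal direction, introduce the summary statistics $m_{ca}(\ell) = \mu_a\cdot\mathbf{x}_\ell^c$. Since $\mathbf{x}^c\cdot Y = m_{c,a_\ell} + O(\lambda^{-1/2})$, the family $(m_{ca})_{c,a\in[k]}$ is asymptotically closed, and a propagation-of-chaos/martingale argument in the spirit of~\cite{BGJ22,saad1995dynamics} shows that with $t = \ell\delta$, $(m_{ca}(\lfloor t/\delta\rfloor))$ converges uniformly on $[0,T_f]$ to the solution $\overline m_{ca}(t)$ of an autonomous ODE on $\R^{k^2}$, with error $o_d(1) + O(\lambda^{-1/2})$. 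This ODE is the gradient flow of the $U$-restricted population cross-entropy loss with a $-\beta m$ weight-decay correction, and has a unique attracting fixed point with $\overline m^*_{cc} = \Omega(1)$. For $T_0 = T_0(\varepsilon)$ large enough, $|m_{ca}(\ell)-\overline m^*_{ca}|\le \varepsilon$ for all $\ell \ge T_0\delta^{-1}$, giving $\|P_U \mathbf{x}_\ell^c\| = \Omega(1)$; combined with the $U^\perp$-bound, $\rho(\mathbf{x}_\ell^c, U)\ge 1 - O(\varepsilon + \lambda^{-1})$.

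\textbf{Main obstacle.} Steps 1 and 2 combine to prove the theorem. The main technical difficulty lies in Step 2's uniform-in-time concentration over the $\Theta(d)$ SGD steps in $[T_0\delta^{-1},T_f\delta^{-1}]$: one must simultaneously absorb (a) the Euler-discretization error from the learning rate $\delta = O(1/d)$, (b) martingale fluctuations in the summary statistics, and (c) the $O(\lambda^{-1/2})$ softmax perturbation induced by the bulk noise $Z^\ell$, via a Gronwall argument. The hypotheses $\lambda \ge \lambda_0$ and $\widetilde M \ge \alpha_0 d$ enter in complementary ways: the former guarantees $\Omega(1)$ separation between the signal and the bulk in Step 1, and makes the summary-statistic ODE a small perturbation of its $\lambda = \infty$ limit in Step 2; the latter provides enough test samples for Bai--Yin and matrix-Bernstein to produce an $O(\lambda^{-1})$ rather than $O(\lambda^{-1/2})$ remainder in (ii)+(iii) above.
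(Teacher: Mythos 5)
Your proposal follows essentially the same route as the paper: a signal-plus-perturbation decomposition of the empirical $cc$-blocks around the rank-$k$ matrix $\sum_l p_l(\cdot)\,\mu_l^{\otimes 2}$ (the paper obtains this by computing the population matrices exactly via Gaussian integration by parts and then proving operator-norm concentration of the empirical about the population matrices with an $\epsilon$-net and Bernstein, which is equivalent to your direct signal/cross/noise split), combined with the summary-statistic analysis of $(m_{ca},R^\perp_{ab})$ and its ballistic ODE limit in the style of BGJ22 to show the trajectory is confined to a compact set, enters $\mathrm{Span}(\mu_1,\dots,\mu_k)$, and retains $\Omega(1)$ norm there. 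Two minor quantitative corrections: with only $\widetilde M\ge\alpha_0 d$ samples the fluctuation of the signal-noise cross term is an $\alpha_0$-dependent constant $O(\varepsilon)$, not $O(\lambda^{-1})$ uniformly in $\lambda$ --- in the paper's bookkeeping the $\lambda^{-1}$ comes from the population expansion and the $\varepsilon$ from concentration, which is why the theorem's error is $O(\varepsilon+\lambda^{-1})$; and for general linearly independent means the paper does not establish your claimed unique attracting fixed point, instead using that the $\lambda=\infty$ limit is a gradient system confined to a compact set whose drift is bounded away from zero on $\bigcup_a\bigcap_b\{m_{ab}=0\}$, which already yields the needed lower bound $\max_b|m_{ab}|=\Omega(1)$.
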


\begin{figure}[t]
    \centering

   \subfigure{\includegraphics[width=.35\textwidth]{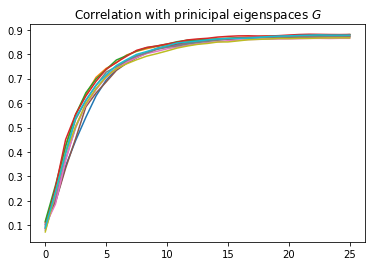}}
   \qquad
\subfigure{        \includegraphics[width=.35\textwidth]{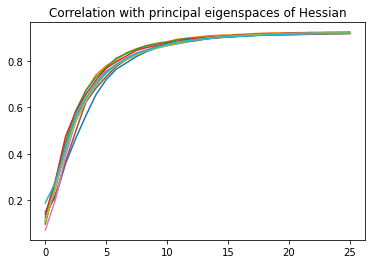}}
      \caption{The alignment of the SGD trajectory $\mathbf{x}_\ell^c$ with $E_k(\nabla^2_{cc}\widehat R(\mathbf{x}_\ell))$ (left) and $E_k(\widehat G_{cc}(\mathbf{x}_\ell))$ (right), for $c\in [k]$ (shown in different colors). The $x$-axis is rescaled time, $\ell \delta$. The parameters are $k=10$ classes in dimension $d=1000$ with $\lambda=10$, $\beta = 0.01$, and $\delta = 1/d$.}\label{fig:KGMM-topspaces}
\end{figure}

This result is demonstrated in Figure~\ref{fig:KGMM-topspaces} which plots the alignment of the training dynamics $\mathbf{x}_\ell^c$ with the principal eigenspaces of the Hessian and $G$ for each $c\in [k]$. As we see the alignment increases to near 1 rapidly for all blocks in both matrices. This theorem, and all our future results, are stated using a random Gaussian initialization, scaled such that the norm of the parameters is $O(1)$ in $d$. The fact that this is Gaussian is not relevant to the results, and similar results hold for other uninformative initializations with norm of $O(1)$.

Theorem~\ref{mainthm:SGD-aligns-with-Hessian} follows from the following theorem that describes the SGD trajectory, its Hessian and G-matrix (and their top $k$ eigenspaces), all live up to $O(1/\lambda)$ error in $\text{Span}(\mu_1,...,\mu_k)$.

\begin{thm}\label{mainthm:all-align-with-means}
	In the setup of Theorem~\ref{mainthm:SGD-aligns-with-Hessian}, the following live in $\text{Span}(\mu_1,...,\mu_k)$ up to $O(\varepsilon + \lambda^{-1})$ error with probability $1-o_d(1)$: 
	\begin{enumerate}
		\item The state of the SGD along training, $\mathbf{x}_\ell^c$ for every $c$;
		\item The $b,c$ blocks of the empirical test Hessian, $\nabla^2_{bc} \widehat R(\mathbf{x}_\ell)$ for all $b,c\in [k]$; 
		\item The $b,c$ blocks of the empirical test G-matrix $\widehat G_{bc}(\mathbf{x}_\ell)$ for all $b,c\in [k]$. 
	\end{enumerate}
\end{thm}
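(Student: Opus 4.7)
The argument rests on the signal-plus-noise decomposition $Y^{\ell}=\mu_{y^{\ell}}+\lambda^{-1/2}Z^{\ell}$ with $Z^{\ell}\sim\cN(0,I_{d})$ i.i.d.\ (and likewise $\widetilde Y^{\ell}=\mu_{\widetilde y^{\ell}}+\lambda^{-1/2}\widetilde Z^{\ell}$), so that each data vector lies in $\fU:=\text{Span}(\mu_{1},\dots,\mu_{k})$ modulo an isotropic Gaussian fluctuation of scale $\lambda^{-1/2}$. Because the cross-entropy gradient factorizes as $\nabla_{x^{c}}L(\mathbf{x},\mathbf{Y})=(\sigma_{c}(\mathbf{x},Y)-y_{c})Y$ and the Hessian block as $\nabla^{2}_{bc}L(\mathbf{x},\mathbf{Y})=D_{bc}(\mathbf{x},Y)\,YY^{\top}$ with $D_{bc},(\sigma_{c}-y_{c})$ uniformly bounded softmax-derived scalars, both the SGD updates and the test matrices inherit this decomposition as a signal piece landing in $\fU$ (resp.\ $\fU\otimes\fU$) plus noise pieces to be controlled by concentration.

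\textbf{Item (1): SGD trajectory.} Unrolling~\pef{eq:SGD-def} yields
\[
x_{\ell}^{c}=(1-\beta)^{\ell}x_{0}^{c}-\delta\sum_{j=1}^{\ell}(1-\beta)^{\ell-j}(\sigma_{c}-y_{c})_{j}\mu_{y^{j}}-\frac{\delta}{\sqrt{\lambda}}\sum_{j=1}^{\ell}(1-\beta)^{\ell-j}(\sigma_{c}-y_{c})_{j}Z^{j}.
\]
The middle sum sits in $\fU$ by construction, and the initial-condition term has norm $O((1-\beta)^{\ell})\leq\varepsilon$ once $\ell\geq T_{0}(\varepsilon)/\delta$ for $T_{0}$ large. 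The third sum is a discrete-time martingale (conditional on the past, $Z^{j}$ is independent of $\mathbf{x}_{j-1}$ and the weights are uniformly bounded), whose quadratic variation along any unit vector in $\fU^{\perp}$ is at most $C\delta^{2}/(\beta\lambda)$; Freedman's inequality plus summation over a basis of $\fU^{\perp}$ gives total orthogonal norm $O((d\beta\lambda)^{-1/2})=o_{d}(1)$ with high probability. A parallel analysis of $P_{\fU}x_{\ell}^{c}$ shows $\|x_{\ell}^{c}\|=\Omega(1)$ for $\ell\geq T_{0}/\delta$, whence $\rho(x_{\ell}^{c},\fU)\geq 1-O(\varepsilon+\lambda^{-1})$. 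A union bound over the polynomially many $\ell\in[T_{0}/\delta,T_{f}/\delta]$ upgrades this to a statement uniform in time.

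\textbf{Items (2)--(3): Hessian and G-matrix blocks.} Expanding
\[
\widetilde Y^{\ell}\widetilde Y^{\ell\top}=\mu_{\widetilde y^{\ell}}\mu_{\widetilde y^{\ell}}^{\top}+\lambda^{-1/2}\bigl(\mu_{\widetilde y^{\ell}}\widetilde Z^{\ell\top}+\widetilde Z^{\ell}\mu_{\widetilde y^{\ell}}^{\top}\bigr)+\lambda^{-1}\widetilde Z^{\ell}\widetilde Z^{\ell\top},
\]
each block of $\nabla^{2}\widehat R(\mathbf{x}_{\ell})$ and $\widehat G(\mathbf{x}_{\ell})$ is an empirical mean, weighted by bounded scalars depending on $(\mathbf{x}_{\ell},\widetilde Y^{\ell})$, of these four contributions. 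The $\mu\mu^{\top}$ piece lies in $\fU\otimes\fU$; take the candidate $M$ of Definition~\ref{def:matrix-lives-in-subspace} to be the remaining three pieces restricted to image outside $\fU$. The cross-term average is a sum of $\widetilde M$ rank-one matrices of per-sample norm $O(\sqrt{d/\lambda})$ and (approximately) zero mean, yielding operator norm $O(\sqrt{d/(\widetilde M\lambda)})$ by matrix Bernstein; the Wishart tail $\lambda^{-1}\widetilde M^{-1}\sum w_{\ell}\widetilde Z^{\ell}\widetilde Z^{\ell\top}$ with bounded $w_{\ell}$, under $\widetilde M\geq\alpha_{0}d$, has operator norm $O(\lambda^{-1})$ by standard sample-covariance bounds. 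Once the dynamics have separated classes (for $\ell\geq T_{0}(\varepsilon)/\delta$), the weights $D_{bc}$ and $(\sigma_{b}-y_{b})(\sigma_{c}-y_{c})$ are $\Omega(1)$ on a positive proportion of samples (deduced from Item (1) together with the known class structure), so $\|\nabla^{2}_{bc}\widehat R\|_{\op},\|\widehat G_{bc}\|_{\op}=\Omega(1)$ from the $\mu\mu^{\top}$ contribution. Dividing the $\|M\|_{\op}$ bound by this denominator and taking $\lambda\geq\lambda_{0}$ collapses both noise pieces into the $O(\varepsilon+\lambda^{-1})$ alignment error.

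\textbf{Main obstacle.} The chief subtlety is that the weights $D_{bc}(\mathbf{x}_{\ell},\widetilde Y^{\ell})$ and $(\sigma_{c}-y_{c})_{j}$ depend on the very noise $\widetilde Z^{\ell}$ (resp.\ past $Z^{j}$) and parameters $\mathbf{x}_{\ell}$ one is trying to concentrate against. I handle this by a bootstrap on Item (1): because $\mathbf{x}_{\ell}$ lives in $\fU$ up to tiny error, the inner products $\mathbf{x}_{\ell}\cdot\widetilde Y^{\ell}$ depend on $\widetilde Z^{\ell}$ essentially only through its $k$-dimensional projection $P_{\fU}\widetilde Z^{\ell}$; conditioning on $P_{\fU}\widetilde Z^{\ell}$ thus decouples the weights from the orthogonal noise $P_{\fU^{\perp}}\widetilde Z^{\ell}$, and the Wishart/rank-one concentration estimates then reduce to weighted sample-covariance bounds accessible via Hanson--Wright plus matrix Bernstein. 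Closing this bootstrap uniformly over $\ell\in[T_{0}/\delta,T_{f}/\delta]$, and feeding it consistently between Items (1)--(3), is the main technical task.
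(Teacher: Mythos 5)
Your route is genuinely different from the paper's: for Item (1) you unroll the SGD recursion and treat the noise directly as a (near-)martingale, whereas the paper passes to the effective ODE limit of the summary statistics $(\mathbf{m},\mathbf{R}^\perp)$ (Theorem~\ref{thm:k-GMM-ballistic-limit}) and reads off $\dot R^\perp_{aa}=-\beta R^\perp_{aa}+O(\lambda^{-1})$; for Items (2)--(3) you decompose $\widetilde Y\widetilde Y^{\top}$ into signal, cross and Wishart pieces and apply matrix concentration to each, whereas the paper first computes the population blocks in closed form by Gaussian integration by parts (Lemmas~\ref{l:offblock1}--\ref{l:Gblock1}), exhibiting them as rank-$\le k^2$ plus $\frac{1}{\lambda}I_d$, and then proves concentration of the empirical matrices about the population ones by an $\epsilon$-net and scalar Bernstein (Theorem~\ref{t:concentration1}), union-bounding over the trajectory using independence of test and train data. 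Your decomposition is workable and arguably more elementary; note, however, that since the test data is independent of $\mathbf{x}_\ell$, the conditioning-on-$P_{\fU}\widetilde Z^{\ell}$ bootstrap you describe as the ``main obstacle'' is not needed here (it matters only for the train-data version): the non-vanishing means of your cross and Wishart pieces are computed once and for all by Gaussian integration by parts, and the fluctuations are handled by Bernstein because the weighted summands are uniformly sub-exponential.

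There are two genuine gaps. First, and most importantly, the definition of ``living in'' a subspace is \emph{multiplicative}, so besides bounding $\|P_{\fU^\perp}x^c_\ell\|$ you must prove $\|x^c_\ell\|=\Omega(1)$ uniformly for $\ell\in[T_0\delta^{-1},T_f\delta^{-1}]$, and likewise that the signal parts of $\nabla^2_{bc}\widehat R$ and $\widehat G_{bc}$ have operator norm bounded below. You dismiss the first as ``a parallel analysis of $P_{\fU}x^c_\ell$,'' but this is where the real work lies: the in-span component obeys a coupled nonlinear system in the $k^2$ overlaps $m_{ab}=x^a\cdot\mu_b$, and nothing a priori prevents the signal increments from cancelling. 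The paper's Lemma~\ref{lem:kGMM-ODE-lives-in-span} handles this by showing the limiting dynamics is a gradient system (so has no recurrent orbits) and that on the degenerate set $\{m_{ab}=0\ \forall b\}$ some drift $\dot m_{ab}$ is bounded away from zero --- a step that crucially uses the assumed linear independence of $\mu_1,\dots,\mu_k$, which your argument never invokes. Second, your martingale claim for Item (1) is not correct as stated: the weight $(\sigma_c-y_c)_j$ depends on $Z^j$ itself (through $\mathbf{x}_{j-1}\cdot Y^j$), so the increments are not conditionally centered. The conditional mean $\E[(\sigma_c-y_c)_jZ^j\mid\cF_{j-1}]=\lambda^{-1}\E[(x^c_{j-1}-\langle x^B_{j-1}\rangle_{\pi})\pi(c)]$ produces a drift whose orthogonal component is proportional to $\max_a\|x^{a,\perp}_{j-1}\|/\lambda$; this accumulates to an admissible $O(1/(\beta\lambda))$, but only after a Gronwall-type bootstrap that also establishes confinement $\|\mathbf{x}_\ell\|\le L$ (a naive triangle inequality fails since $\|Y^j\|\asymp\sqrt{d/\lambda}$). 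Both repairs are feasible, but as written the proposal does not contain them.
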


We demonstrate this result in Figure~\ref{fig:KGMM-snapshots}, which shows the coordinate-wise values of a fixed block of the SGD, the Hessian, and the G-matrix. 

\begin{figure}[t]
    \centering
   \subfigure{\includegraphics[width=.3\textwidth]{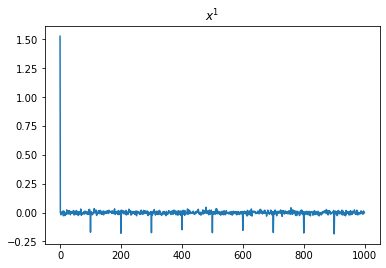}}
\subfigure{\includegraphics[width=.3\textwidth]{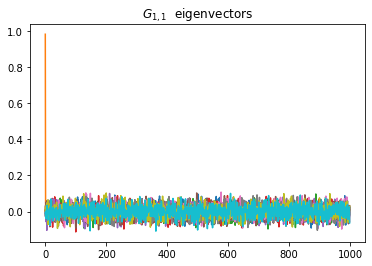}}
\subfigure{\includegraphics[width=.3\textwidth]{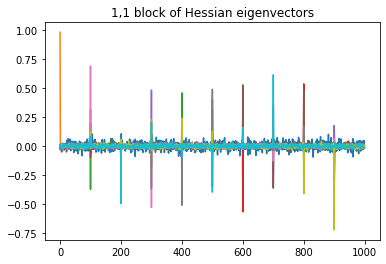}}
      \caption{From left to right: Plot of entries of $\mathbf{x}_\ell^1$ and the $k$ leading eigenvectors  (in different colors)  of $\nabla_{11}^2\widehat R(\mathbf{x}_\ell)$ and $\widehat G_{11}(\mathbf{x}_\ell)$ respectively at the end of training, namely $\ell = 50\cdot d=25,000$ steps. Here the $x$-axis represents the coordinate index. The parameters are the same as in Fig.~\ref{fig:KGMM-topspaces} and the means are $\mu_i = e_{i*50}$.}\label{fig:KGMM-snapshots}
\end{figure}

Inside the low-rank space spanned by $\mu_1,...,\mu_k$, the training dynamics,  Hessian and G-matrix spectra can display different phenomena depending on the relative locations of $\mu_1,...,\mu_k$ and weights $p_1,...,p_k$. To illustrate the more refined alignment phenomena, let us take as a concrete example $p_c = \frac{1}{k}$ for all $c$, and  $\mu_1,...,\mu_k$ orthonormal. 

With this concrete choice, we can analyze the limiting dynamical system of the SGD without much difficulty and its relevant dynamical observables have a single stable fixed point, to which the SGD converges in linearly many, i.e., $O(\delta^{-1})$, steps. This allows us to show more precise alignment that occurs within $\text{Span}(\mu_1,...,\mu_k)$ over the course of training. 

\begin{thm}\label{mainthm:topeigenvector}
In the setting of Theorem~\ref{mainthm:SGD-aligns-with-Hessian}, with the means $(\mu_1,...,\mu_k)$ being orthonormal, the estimate $\mathbf{x}_\ell^c$ has $\Omega(1)$, positive, inner product with the top eigenvector of both $\nabla^2_{cc} \widehat R(\mathbf{x}_\ell)$ and $\widehat G_{cc}(\mathbf{x}_\ell)$ (and negative, $\Omega(1)$ inner product with the $k-1$ next largest eigenvectors). 
Also, the top eigenvector of $\nabla^2_{cc} \widehat R(\mathbf{x}_\ell)$, as well as that of $\widehat G_{cc}(\mathbf{x}_\ell)$, live in $\text{Span}(\mu_c)$ up to $O(\varepsilon + \lambda^{-1})$ error. 
\end{thm}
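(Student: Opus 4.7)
My plan is to lift the analysis to the $k$-dimensional mean subspace $V := \text{Span}(\mu_1,\ldots,\mu_k)$ using Theorem~\ref{mainthm:all-align-with-means}, inside of which the full $S_k$-permutation symmetry of the orthonormal, equi-weighted setup reduces the problem to an essentially finite-dimensional one. Concretely, I would write $\mathbf{x}^c_\ell = \sum_a m^c_a(\ell)\mu_a + \xi^c_\ell$ with $\|\xi^c_\ell\| = O(\varepsilon + \lambda^{-1})$; by orthonormality the projection of $\mathbf{x}^c_\ell$ onto $V$ is encoded by the $k\times k$ overlap matrix $(m^c_a(\ell))$. The first substantive step is to show, via the effective-ODE/concentration framework already underlying Theorems~\ref{mainthm:SGD-aligns-with-Hessian}--\ref{mainthm:all-align-with-means}, that the $m^c_a(\ell)$ converge, for $\ell \in [T_0/\delta, T_f/\delta]$, to an $\varepsilon$-neighborhood of the unique stable fixed point. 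Strict convexity of penalized multi-class logistic regression gives uniqueness of the fixed point; joint permutation symmetry then forces it into the form
\begin{equation*}
 m^c_c = \alpha^* > 0, \qquad m^c_a = -\gamma^* < 0 \;\; (a\neq c),
\end{equation*}
with $\alpha^* = (1-s^*)/(k\beta)$, $\gamma^* = s^*_a/(k\beta)$, $s^* = e^{\alpha^*}/(e^{\alpha^*} + (k-1)e^{-\gamma^*})$ and $s^*_a = (1-s^*)/(k-1)$ from the stationarity of the regularized risk.

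Next, I would compute the projections $P_V \widehat G_{cc}(\mathbf{x}_\ell) P_V$ and $P_V \nabla^2_{cc}\widehat R(\mathbf{x}_\ell) P_V$ at $\mathbf{x}_\ell$ close to the fixed point by directly expanding the softmax. Conditioning on the latent class $a$ and writing $\widetilde Y = \mu_a + Z_\lambda$, together with concentration of the $\widetilde M \geq \alpha_0 d$ sample empirical averages, gives
\begin{align*}
 P_V \widehat G_{cc}(\mathbf{x}_\ell) P_V &= \tfrac{1}{k}(1-s^*)^2 \mu_c\mu_c^\top + \tfrac{1}{k}\sum_{a\neq c}(s^*_a)^2 \mu_a\mu_a^\top + O(\varepsilon + \lambda^{-1}), \\
 P_V \nabla^2_{cc}\widehat R(\mathbf{x}_\ell) P_V &= \tfrac{1}{k}\, s^*(1-s^*)\mu_c\mu_c^\top + \tfrac{1}{k}\sum_{a\neq c}s^*_a(1-s^*_a)\mu_a\mu_a^\top + O(\varepsilon + \lambda^{-1}).
\end{align*}
By orthonormality these projected matrices are diagonal in the $\mu$-basis. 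For the $G$-matrix the ratio of the $\mu_c$-eigenvalue to the degenerate $\mu_{a\neq c}$-eigenvalue is $(k-1)^2$, independent of the fixed-point values, so the top eigenvector is $\pm\mu_c$ and the next $k-1$ span $\{\mu_a : a\neq c\}$. For the Hessian an elementary calculation gives $s^*(1-s^*) - s^*_a(1-s^*_a) = (s^* k - 1)(k-2)(1-s^*)/(k-1)^2$, which is strictly positive for $k\geq 3$ since $s^* > 1/k$ is equivalent to $\alpha^* + \gamma^* > 0$. Combining these spectral identifications with $\mathbf{x}^c_\ell\cdot\mu_c \approx \alpha^* = \Omega(1) > 0$ and $\mathbf{x}^c_\ell\cdot\mu_a \approx -\gamma^* = -\Omega(1) < 0$ ($a\neq c$), and upgrading from the projected to the full Hessian/$G$ blocks via a Davis--Kahan perturbation bound using the $O(\varepsilon + \lambda^{-1})$ bulk control from Theorem~\ref{mainthm:all-align-with-means}, yields both asserted alignment claims.

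The main obstacle I anticipate is quantifying the Hessian spectral gap: Davis--Kahan requires the gap between the $\mu_c$-spike and the $(k-1)$-fold degenerate $\mu_{a\neq c}$-spike to exceed the $O(\varepsilon + \lambda^{-1})$ off-$V$ corrections uniformly in $d$. For $G$ this is automatic from the factor-$(k-1)^2$ ratio, but for the Hessian the gap vanishes as $k\to 2$ (where $\alpha^* = \gamma^*$ by symmetry forces the two spike eigenvalues to collide), so the argument has to exploit $k\geq 3$ to keep it $\Omega(1)$; in the degenerate $k=2$ case the top eigenvector is only defined within a two-dimensional eigenspace and the claim must be reinterpreted as alignment with that two-dimensional space, already delivered by Theorem~\ref{mainthm:all-align-with-means}. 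A secondary issue is that the theorem is dynamic rather than stationary --- the SGD is near but not at the fixed point throughout $[T_0/\delta, T_f/\delta]$ --- which I would handle by combining linear convergence of the effective overlap ODE (at rate $\Omega(\beta)$ from strict convexity of the reduced penalized loss) with the martingale concentration used elsewhere in the paper, giving uniform $O(\varepsilon)$ control of $(m^c_a(\ell))$ across the whole window.
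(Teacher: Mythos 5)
Your proposal follows essentially the same route as the paper's proof: the fixed-point analysis of the effective dynamics (Proposition~\ref{prop:k-GMM-SGD-result-orthonormal-means}) gives $m^c_c=\alpha^*>0$ and $m^c_a=-\gamma^*<0$, the population computations plus concentration (Lemmas~\ref{l:dblock1} and~\ref{l:Gblock1} with Theorem~\ref{t:concentration1}) give the rank-$k$ plus $O(\varepsilon+\lambda^{-1})$ decomposition diagonal in the orthonormal $\mu$-basis, and the eigenvalue comparison $s^*(1-s^*)>s^*_a(1-s^*_a)$ together with eigenvector stability finishes the argument. Your explicit gap formula $(1-s^*)(k-2)(s^*k-1)/(k-1)^2$ is correct and usefully exposes the $k=2$ degeneracy that the paper's qualitative claim (``$\bar\pi_l(a)$ is closer to $0$ than $\bar\pi_a(a)$ is to $1$'') glosses over; note only that for $k=2$ the G-matrix eigenvalue ratio $(k-1)^2$ equals $1$ as well, so the G-gap is not ``automatic'' there and needs the same $k\ge 3$ caveat as the Hessian.
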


Put together, the above three theorems describe the following rich scenario for classification of the $k$-GMM. At initialization, and throughout the parameter space in each class-block, the Hessian and G-matrices decompose into a rank-$k$ outlier part spanned by $\mu_1,...,\mu_k$, and a correction term of size $O(1/\lambda)$ in operator norm. Furthermore, when initialized randomly, the SGD is not aligned with the outlier eigenspaces, but does align with them in a short $O(\delta^{-1})$ number of steps. Moreover, when the means are orthogonal, each class block of the SGD $\mathbf{x}_\ell^c$ in fact correlates strongly with the specific mean for its class $\mu_c$, and simultaneously, in the Hessian and G-matrices along training, the eigenvalue corresponding to $\mu_c$ becomes distinguished from the other $k-1$ outliers. 
We illustrate these last two points in Figure~\ref{fig:DBBP-kgmm}. We also refer the reader to Section~\ref{sec:additional figures} for further numerical demonstrations. 
Each of these phenomena appear in more general contexts, and in even richer manners, as we will see in the following section.

\begin{figure}[t]
    \centering
\includegraphics[width=.35\textwidth]{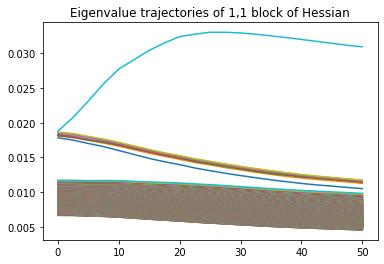}
\qquad 
\includegraphics[width=.35\textwidth]{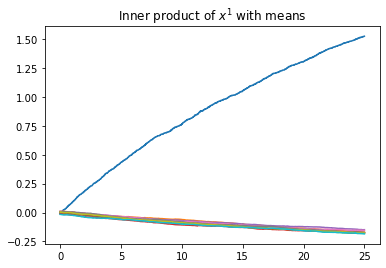}
      \caption{Left: the eigenvalues (in different colors) of $\nabla^2 \widehat R_{11}(\mathbf{x}_\ell)$ over the course of training.  The leading $k$ eigenvalues are separated from the bulk at all times, and the top eigenvalue, corresponding to $\mu_1$ separates from the remaining eigenvalues soon after initialization. Right: the inner product of $\mathbf{x}_\ell^1$ with the means $\mu_1,...,\mu_k$ undergoes a similar separation over the course of training. Parameters are the same as in preceding figures.}
    \label{fig:DBBP-kgmm}
\end{figure} 

\subsection{Classifying XOR-type mixture models via two-layer networks }\label{subsec:XOR-GMM}
With the above discussion in mind, let us now turn to more complex classification tasks that are not linearly separable and require the corresponding network architecture to be multilayer. 

\subsubsection{Data model}
For our multilayer results, we consider the problem of classifying a $4$-component Gaussian mixture 
whose class labels are in a so-called XOR form. 
More precisely, consider a mixture of four Gaussians with means $\mu,-\mu, \nu,-\nu$ where $\|\mu\|= \|\nu\|=1$  and, say for simplicity, are orthogonal, and variances $I_d/\lambda$. There are two classes, class label $1$ for Gaussians with mean $\pm \mu$, and $0$ for Gaussians with mean $\pm \nu$. To be more precise, our data distribution $\cP_Y$ is 
\begin{align}\label{eq:XOR-data-distribution}
	y \sim \frac{1}{2} \delta_0 + \frac{1}{2}\delta_1 \quad \text{and}\quad Y \sim \begin{cases} \frac{1}{2} \cN(\mu ,I_d/\lambda) +\frac{1}{2} \cN(-\mu ,I_d/\lambda) & \quad y = 1 \\  \frac{1}{2} \cN(\nu ,I_d/\lambda) +\frac{1}{2} \cN(-\nu ,I_d/\lambda) & \quad y =0\end{cases}\,. 
\end{align}

This is a Gaussian version of the XOR problem of~\citet{minsky1969introduction}. It is one of the simplest examples of a classification task requiring a multi-layer network to express a good classifier. 

We therefore use a two-layer architecture with the intermediate layer having width $K\ge 4$ (any less and a Bayes-optimal classifier would not be expressible), ReLu activation function $g(x) = x\vee 0$ and then sigmoid activation function $\sigma(x) = \frac{1}{1+e^{-x}}$. The parameter space is then $\mathbf{x}=(W,v) \in \mathbb R^{Kd + K}$, where the first layer weights are denoted by $W = W(\mathbf{x})\in \mathbb R^{K\times d}$ and the second layer weights are denoted by $v = v(\mathbf{x}) \in \mathbb R^K$. 
We use the binary cross-entropy loss on this problem,   
\begin{align}\label{eq:XOR-loss}
	L(\mathbf{x}; \mathbf{Y}) = - y v \cdot g(WY) + \log (1+e^{ v\cdot g(WY)}) \,,
\end{align}  
with $g$ applied entrywise. 
The SGD for this classification task was studied in some detail in~\citet{refinetti2021classifying} and~\citet{BGJ22}, with ``critical" step size $\delta = \Theta(1/d)$.

\subsubsection{Results and discussion}
We begin our discussion with the analogue of Theorem~\ref{mainthm:SGD-aligns-with-Hessian} in this setting.  As the next theorem demonstrates, in this more subtle problem, the SGD still finds and lives in the principal directions of the empirical Hessian and G-matrices.\footnote{As the second derivative of ReLU is only defined in the sense of distributions, care must be taken when studying these objects. This singularity is not encountered by the SGD trajectory almost surely.}
Here, the principal directions vary significantly across the parameter space, and the alignment phenomenon differs depending on which fixed point the SGD converges to. Moreover, the relation between the SGD and the principal directions of the Hessian and G-matrices can be seen per layer.

\begin{thm}\label{mainthm:XOR-lives-in-Hessian}
	Consider the XOR GMM mixture with loss function~\pef{eq:XOR-loss} and the corresponding SGD \pef{eq:SGD-def} with $\beta \in (0,1/8)$, learning rate $\delta = O(1/d)$, initialized from $\cN(0,I_d/d)$. There exist $\alpha_0,\lambda_0$ such that if $\lambda \ge \lambda_0$, and $\widetilde M\ge \alpha_0 d$, the following hold.  For every $\varepsilon>0$, there exists $T_0(\varepsilon)$ such that for any fixed time horizon $T_0 <T_f<M/d$, with probability $1-o_d(1)$, for all $i\in \{1,...,K\}$, 
	\begin{enumerate}
		\item $W_i(\mathbf{x}_\ell)$ lives in $E_2(\nabla^2_{W_i W_i} \widehat R(\mathbf{x}_\ell))$ and in $E_2(\widehat G_{W_i W_i}(\mathbf{x}_\ell))$, and
		\item $v(\mathbf{x}_\ell)$ lives in $E_4(\nabla^2_{vv}\widehat R(\mathbf{x}_\ell))$ and $E_4(\widehat G_{vv}(\mathbf{x}_\ell))$,
	\end{enumerate}  
  up to $O(\varepsilon + \lambda^{-1/2})$\footnote{In this theorem and in Theorem~\ref{mainthm:XOR-all-live-in-subspace}, the big-$O$ notation also hides constant dependencies on the initial magnitudes of $v(\mathbf{x}_0)$, and on $T_f$.} error, for all $\ell \in [T_0 \delta^{-1},T_f\delta^{-1}]$. 
\end{thm}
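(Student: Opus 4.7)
The plan is to execute a per-layer analogue of the argument used for Theorems~\ref{mainthm:SGD-aligns-with-Hessian}--\ref{mainthm:all-align-with-means}, with two modifications that reflect the multilayer nature of the problem: the first-layer and second-layer signal subspaces are different, and the second-layer signal subspace itself depends on the training state and only emerges over the course of training.

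First I would identify the candidate low-rank subspaces in which the Hessian block, G-matrix block, and SGD iterates all concentrate, per layer.  Differentiating~\pef{eq:XOR-loss}, one finds that, away from the measure-zero nondifferentiability set of $g$, each summand of the first-layer Hessian block $\nabla^2_{W_i W_i} L(\mathbf{x},\mathbf{Y})$ is a scalar factor (depending on the pre- and post-activations) times the rank-one matrix $Y Y^\top$, and similarly each summand of $\widehat G_{W_i W_i}$ is a scalar times $Y Y^\top$.  Since $Y$ is drawn from~\pef{eq:XOR-data-distribution}, the empirical average over $\widetilde M \geq \alpha_0 d$ test samples of $YY^\top$ concentrates to $\tfrac12(\mu\mu^\top + \nu\nu^\top) + \lambda^{-1} I_d$, giving a decomposition of $\nabla^2_{W_iW_i}\widehat R$ and $\widehat G_{W_iW_i}$ as a matrix whose image lies in $\text{Span}(\mu,\nu)$ plus an operator-norm $O(\lambda^{-1/2})$ correction that is uniform in $\mathbf{x}$ on the relevant set.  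The same calculation at the second layer gives $\nabla^2_{vv} L = \sigma'(v\cdot g(WY))\, g(WY) g(WY)^\top$; because $Y$ concentrates on $\{\pm\mu,\pm\nu\}$, the outlier image lies in the four-dimensional space $S_4(W) := \text{Span}\bigl(g(W\mu), g(-W\mu), g(W\nu), g(-W\nu)\bigr)$ up to an operator-norm $O(\lambda^{-1/2})$ correction (noting that $g$ is Lipschitz and that $\|WZ_\lambda\|$ is $O(\lambda^{-1/2})$).

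Second I would control the SGD trajectory and connect it to these subspaces.  For the dynamics I would import the effective dynamical analysis of~\citet{BGJ22}: the summary statistics $\bigl(W_i(\mathbf{x}_\ell)\cdot\mu, W_i(\mathbf{x}_\ell)\cdot\nu, v_i(\mathbf{x}_\ell)\bigr)_{i\leq K}$ track (with probability $1-o_d(1)$) the solution of an autonomous ODE on the rescaled time scale $\ell\delta \in [0,T_f]$, and the iterates remain bounded.  Combined with the explicit form of~\pef{eq:SGD-def}, the component of each $W_i(\mathbf{x}_\ell)$ orthogonal to $\text{Span}(\mu,\nu)$ evolves by a multiplicative contraction of rate $1-\beta$ plus a per-step noise of order $\delta \lambda^{-1/2}$, so after a time $T_0(\varepsilon)$ depending only on $\beta$ and $\varepsilon$ it has norm at most $(\varepsilon + O(\lambda^{-1/2}))\|W_i(\mathbf{x}_\ell)\|$.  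For the second layer, writing $v$ in the basis $S_4(W(\mathbf{x}_\ell))^{\perp} \oplus S_4(W(\mathbf{x}_\ell))$ and noting that the increment $-\delta\nabla_v L$ lives in $S_4(W(\mathbf{x}_\ell))$ up to $O(\delta\lambda^{-1/2})$, the component of $v$ orthogonal to $S_4(W(\mathbf{x}_\ell))$ likewise contracts under the $\beta \mathbf{x}$ term; handling the time-dependence of $S_4(W(\mathbf{x}_\ell))$ requires the first-layer alignment to already have set in, giving a two-stage argument with an intermediate time $T_0'(\varepsilon)<T_0(\varepsilon)$.

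Finally I would pass from alignment with these candidate subspaces to alignment with $E_2$ and $E_4$ of the matrices themselves.  A Weyl/Davis--Kahan argument shows that the operator-norm $O(\lambda^{-1/2})$ perturbation moves the top two (resp.\ top four) eigenspaces of the Hessian and G-matrix blocks by only $O(\lambda^{-1/2})$ in subspace distance, provided the outlier eigenvalues remain $\Omega(1)$-separated from the remaining spectrum; this separation reduces to a lower bound on the scalar activation prefactors along the ODE trajectory, which is automatic away from the degenerate fixed points.  The main obstacle is the second layer: as remarked after the theorem statement and visualized in Figure~\ref{fig:DBBP-XOR}, the rank-four outlier space is \emph{not} present at initialization, only separating from the rank-$K$ bulk after the first layer aligns.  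Consequently one must choose $T_0(\varepsilon)$ large enough that this dynamical spiked-covariance transition has fully taken place before applying the perturbation argument, and carefully bookkeep the three interlocked time scales (first-layer alignment, second-layer spike emergence, and SGD alignment with the spikes).  The use of $\beta \in (0,1/8)$ enters precisely to guarantee both that the regularization is strong enough to damp out the orthogonal components and weak enough that the signal prefactors along the limiting ODE remain $\Omega(1)$.
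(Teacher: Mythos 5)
Your overall architecture coincides with the paper's: a per-layer low-rank-plus-small decomposition of the population blocks, uniform operator-norm concentration of the empirical matrices, the effective ODE limit of~\citet{BGJ22} for the summary statistics to confine and localize the SGD, and an eigenvalue-separation/perturbation step to pass from the candidate spans to $E_2$ and $E_4$. Two points deserve comment. First, your derivation of the first-layer decomposition is stated as ``scalar prefactor times $YY^\top$, and the empirical average of $YY^\top$ concentrates to $\tfrac12(\mu\mu^\top+\nu\nu^\top)+\lambda^{-1}I_d$''; this does not follow as written, because the prefactor $\widehat y(1-\widehat y)v_i^2 g'(W_i\cdot Y)$ is correlated with $Y$. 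The correct population computation (Lemma~\ref{lem:population-Hessian-approximation}) conditions on the mean $\vartheta$ and integrates by parts, and the resulting weight of $\vartheta^{\otimes 2}$ is $\tfrac{v_i^2}{4}\sigma(\cdot)(1-\sigma(\cdot))F\big(\sqrt{\lambda/R_{ii}}\,W_i\cdot\vartheta\big)$ rather than a uniform $1/2$; these $\vartheta$-dependent weights are precisely what must be bounded below for your Weyl/Davis--Kahan step, and that lower bound comes from item (2) of Proposition~\ref{prop:XOR-GMM-SGD-result}. Your conclusion (image in $\mathrm{Span}(\mu,\nu)$ up to $O(\lambda^{-1/2})$) is right, but the intermediate step needs the conditioning argument, not bare concentration of $YY^\top$.

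Second, for the second-layer SGD alignment you propose a direct contraction of the component of $v$ orthogonal to the moving subspace $S_4(W(\mathbf{x}_\ell))$. The paper instead characterizes the fixed points of the $\lambda=\infty$ ODE and observes (Observation~\ref{obs:xor-fixed-points-in-principal-directions}, where $v(x_\star)=g^\mu+g^{-\mu}-g^\nu-g^{-\nu}$) that the alignment is exact at any fixed point, then uses convergence to the fixed-point set; this sidesteps entirely the rotation of $S_4$ that your two-stage argument must control. Your route is viable but strictly harder, and the bookkeeping you flag is a real obligation, not a formality. Finally, a small correction: $\beta<1/8$ is not what damps the orthogonal components (any $\beta>0$ does that, given a long enough burn-in $T_0$); it is needed only so that the SGD does not collapse to the origin and the nontrivial fixed points with $|v_i|$ and $\max_\vartheta W_i\cdot\vartheta$ bounded away from zero exist (cf.\ Remark~\ref{rem:beta>1/8}).
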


\begin{rem}
   The reader may notice that in Theorems~\ref{mainthm:SGD-aligns-with-Hessian}--\ref{mainthm:XOR-lives-in-Hessian}, the criticality vs.\ sub-criticality ($\delta = \Theta(1/d)$ vs.\ $\delta = o(1/d)$) of the step-size does not affect the main alignment results. This is because the correction to the limiting SGD trajectory due to criticality of the step-size is of order $O(1/\lambda)$ and is getting absorbed into the other error terms of the theorems. It would be interesting to make these errors more precise to probe the influence of the criticality of the SGD step-size on the alignment phenomenon.
\end{rem}

\begin{rem}\label{rem:beta>1/8}
    The restriction to $\beta<1/8$ in Theorems~\ref{mainthm:XOR-lives-in-Hessian}--\ref{mainthm:XOR-all-live-in-subspace} is because when $\beta>1/8$ the regularization is too strong for the SGD to be meaningful; in particular, the SGD converges ballistically to the origin in parameter space, with no discernible preference for the directions corresponding to $\mu,\nu$ as the other directions. The above theorems are still valid there if the notion of error for living in a space from Definition~\ref{def:lives-in-span} were additive instead of multiplicative (i.e., $\|v - P_B v\| \le \varepsilon$).  
\end{rem}

This theorem is demonstrated in Figure~\ref{fig:XOR-topspaces}. There we have plotted the alignment of the rows in the intermediate layer with the space spanned by the top two eigenvectors of the corresponding first-layer blocks of the Hessian and G-matrices, and similarly for the final layer.

\begin{figure}[t]
    \centering
    \subfigure[]{\includegraphics[width=.235\textwidth]{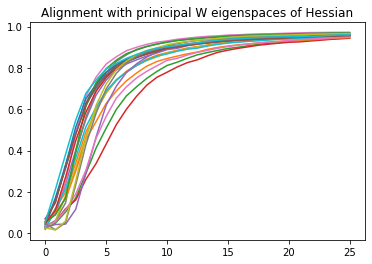}}
           \subfigure[]{ \includegraphics[width=.235\textwidth]{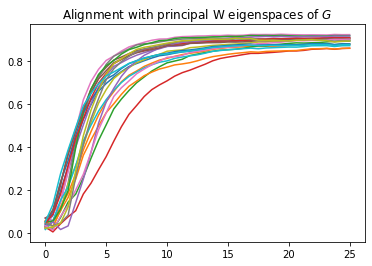}}
\subfigure[]{\includegraphics[width=.235\textwidth]{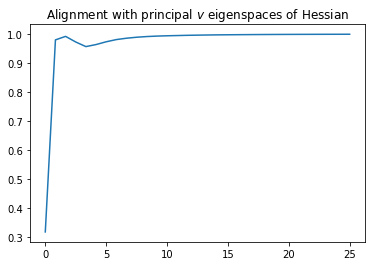}}
   \subfigure[]{\includegraphics[width=.235\textwidth]{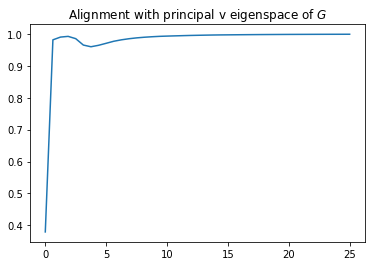}}
      \caption{(a) and (b) depict the alignment of the first layer weights $W_i(\mathbf{x}_\ell)$ for $i=1,...,K$ (in different colors) with the principal subspaces of the corresponding blocks of the Hessian and G-matrices,  i.e., with $E_2(\nabla^2_{W_i W_i} \widehat R(\mathbf{x}_\ell))$ and $E_2(\widehat G_{W_i W_i}(\mathbf{x}_\ell))$. (c) and (d) plot the second-layer alignment, namely of $v(\mathbf{x}_\ell)$ with $E_4(\nabla^2_{vv} \widehat R(\mathbf{x}_\ell))$ and $E_4(\widehat G_{vv}(\mathbf{x}_\ell))$. Parameters are $d=1000$, $\lambda=10$, and $K=20$}
    \label{fig:XOR-topspaces} 
\end{figure}

As before, the above theorem follows from the following theorem that describes both the SGD trajectory, its Hessian, and its G-matrix, living up to $O(\varepsilon+ \lambda^{-1/2})$ error in their first-layer blocks in $\text{Span}(\mu,\nu)$ and in their second layer blocks in 
$$\text{Span}(g(W(\mathbf{x}_\ell) \mu), g(-W(\mathbf{x}_\ell) \mu),g(W(\mathbf{x}_\ell) \nu), g(-W(\mathbf{x}_\ell) \nu))\,,$$
where $g$ is applied entrywise. 

\begin{thm}\label{mainthm:XOR-all-live-in-subspace}
In the setting of Theorem~\ref{mainthm:XOR-lives-in-Hessian}, up to $O(\varepsilon+ \lambda^{-1/2})$ error with probability $1-o_d(1)$, the following live in $\text{Span}(\mu,\nu)$, 
\begin{itemize}
	\item The first layer weights, $W_i(\mathbf{x}_\ell)$ for each $i\in \{1,...,K\}$,
	\item The first-layer empirical test Hessian $\nabla_{W_i W_i}^2 \widehat R(\mathbf{x}_\ell)$ for each $i\in \{1,...,K\}$,
	\item The first-layer empirical test G-matrix $\widehat G_{W_i W_i}(\mathbf{x}_\ell)$ for each $i\in \{1,...,K\}$,
\end{itemize}
and the following live in $\text{Span}(g(W(\mathbf{x}_\ell) \mu), g(-W(\mathbf{x}_\ell) \mu),g(W(\mathbf{x}_\ell) \nu), g(-W(\mathbf{x}_\ell) \nu))$
\begin{itemize}
	\item The second layer weights $v(\mathbf{x}_\ell)$,
	\item The second-layer empirical test Hessian $\nabla^2_{vv} \widehat R(\mathbf{x}_\ell)$,
	\item The second-layer empirical test G-matrix $\widehat G_{vv}(\mathbf{x}_\ell)$.
\end{itemize}
\end{thm}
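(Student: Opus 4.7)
\textbf{Proof plan for Theorem~\ref{mainthm:XOR-all-live-in-subspace}.} The plan is to reduce everything, via explicit formulas for the gradient and Hessian of the loss~\pef{eq:XOR-loss}, to statements about the SGD summary statistics and about concentration of the empirical test averages, and then invoke these reductions together with a tracking argument for the SGD. Throughout, write $W_i = a_i \mu + b_i \nu + W_i^\perp$ where $W_i^\perp \perp \text{Span}(\mu,\nu)$; the goal for the first layer is to show $\|W_i^\perp\| = O(\varepsilon + \lambda^{-1/2})$ while $\|W_i\|=\Theta(1)$, so that $\rho(W_i,\text{Span}(\mu,\nu))\ge 1-O(\varepsilon+\lambda^{-1/2})$.

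\textbf{Step 1 (First layer SGD trajectory).} A direct computation shows $\nabla_{W_i} L(\mathbf{x},\mathbf{Y}) = v_i\,(\sigma(v\cdot g(WY))-y)\,\mathbf{1}\{(WY)_i\ge 0\}\,Y$. Splitting $Y = \sum_{c}y_c \mu_c + Z_\lambda$ and using the fact that $\|Z_\lambda\|^2 = d/\lambda + o(d/\lambda)$ with direction uniform on the sphere (independent of everything else), the gradient has a deterministic signal component in $\text{Span}(\mu,\nu)$ plus a noise vector of norm $O(\lambda^{-1/2})$. Summing the SGD update~\pef{eq:SGD-def} and using the $\ell^2$ regularization $\beta$ (which causes the coefficient of older noise contributions to decay geometrically as $(1-\beta)^{\ell-s}$), the orthogonal part $W_i^\perp(\mathbf{x}_\ell)$ stays of order $\lambda^{-1/2}$ uniformly over the trajectory. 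This is the standard summary-statistics analysis in the spirit of \cite{BGJ22}, and the same argument also shows that $\|W_i\|$ remains $\Theta(1)$ (it is bounded below by the regularized signal and above by the effective norm set by $\beta$).

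\textbf{Step 2 (First layer Hessian and G-matrix).} A direct differentiation gives
\begin{align*}
    \nabla^2_{W_i W_i} L(\mathbf{x},\mathbf{Y}) = v_i^2\,\sigma'(v\cdot g(WY))\,\mathbf{1}\{(WY)_i\ge 0\}\,YY^T
\end{align*}
(the ReLU kink contributes a distributional term whose singular set is almost surely avoided by $\mathbf{x}_\ell$). Conditioning on the class label and the sign of $(WY)_i$ decomposes $\widehat R$'s second derivative into four weighted averages of $YY^T$ over half-spaces. By Step 1 the indicators only depend on $Y$ through $a_i \mu\cdot Y + b_i \nu\cdot Y + W_i^\perp\cdot Y$ which, up to $O(\lambda^{-1/2})$ error, depends only on the projection of $Y$ onto $\text{Span}(\mu,\nu)$. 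The covariance of $Y$ orthogonal to $\text{Span}(\mu,\nu)$ conditional on this is $I/\lambda$ restricted to $\text{Span}(\mu,\nu)^\perp$, so the population Hessian block lies in $\text{Span}(\mu,\nu)$ up to an operator-norm correction $O(\lambda^{-1})$. The sample-to-population error is $O(\sqrt{d/\widetilde M})$ by standard matrix concentration for sub-exponential averages, which under $\widetilde M \ge \alpha_0 d$ is $O(\alpha_0^{-1/2})$, absorbed into the $\varepsilon$ tolerance. An identical computation with $\nabla_{W_i}L(\nabla_{W_i}L)^T$ in place of $YY^T$ handles $\widehat G_{W_iW_i}$.

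\textbf{Step 3 (Second layer gradient, Hessian, G-matrix).} Here $\nabla_v L = (\sigma(v\cdot g(WY))-y)\,g(WY)$, so $\nabla_{vv}^2 L = \sigma'(\cdot)\,g(WY)g(WY)^T$ and $\nabla_v L\,\nabla_v L^T$ both have rank one along $g(WY)$. Conditioning on the class of $Y$ and using $Y = \pm\mu + Z_\lambda$ or $\pm\nu + Z_\lambda$, concentration of the Lipschitz map $Z \mapsto g(WZ)$ (Gaussian concentration applied row-wise, using $\|W\|_{\mathrm{op}}=O(1)$ from Step 1) shows that $g(WY)$ is within $O(\lambda^{-1/2})$ of one of $g(\pm W\mu), g(\pm W\nu)$ in $\ell^2$. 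Averaging the resulting rank-one matrices over the test set yields that $\widehat G_{vv}$ and $\nabla^2_{vv}\widehat R$ live in the four-dimensional span stated, up to an operator-norm residual $O(\varepsilon+\lambda^{-1/2})$. For $v(\mathbf{x}_\ell)$ itself, iterate the SGD recursion: each gradient step deposits mass in the current four-dimensional span, and the regularization damps older deposits. Since by Step 1 the span $\text{Span}(g(\pm W\mu),g(\pm W\nu))$ evolves slowly (its coefficients depend continuously on the $O(1)$-many summary statistics $a_i,b_i$), an approximate-invariance argument shows $v(\mathbf{x}_\ell)$ lives in the instantaneous four-dimensional subspace up to the same tolerance.

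\textbf{Main obstacles.} The hardest point is the joint control in Step 3: the four-dimensional subspace for the second layer is \emph{not} fixed, because $W$ itself is evolving, so one must show that $v(\mathbf{x}_\ell)$ tracks a moving target. Handling this requires either decomposing $v(\mathbf{x}_\ell)$ along a slowly varying basis adapted from the summary statistics $(a_i,b_i)$ and verifying that the basis changes at rate $O(\delta)$ per step, or (preferable) passing through the limiting ODE and showing the limit is exactly supported in the instantaneous span. A secondary technical obstacle is the ReLU kink in Step 2: one needs to verify that the set $\{(WY)_i=0\}$ has probability $O(\lambda^{-1/2})$ on a given draw, and is avoided with probability $1$ along the trajectory, so that the classical Hessian formula is valid almost everywhere and the deficiency from the kink is absorbed into the $O(\lambda^{-1/2})$ bound. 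Finally, uniformity over $\ell \in [T_0\delta^{-1}, T_f\delta^{-1}]$ is obtained via a union bound on a $\delta$-net together with the Lipschitz regularity of the Hessian and gradient entries in $\mathbf{x}$, which costs only a polylogarithmic factor that is absorbed into $\varepsilon$.
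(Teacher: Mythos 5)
Your overall architecture matches the paper's: (i) control the SGD through the summary statistics and their ODE limit, (ii) show the population Hessian/G blocks are low-rank plus $O(\lambda^{-1/2})$ in operator norm, (iii) transfer to the empirical matrices by concentration. However, there are two genuine gaps. First, the multiplicative notion of ``living in a subspace'' (Definitions~\ref{def:lives-in-span}--\ref{def:matrix-lives-in-subspace}) requires \emph{lower} bounds: $\|W_i(\mathbf{x}_\ell)\|$ and $|v_i(\mathbf{x}_\ell)|$ bounded away from zero, and the low-rank parts of the population blocks having operator norm bounded below uniformly in $\varepsilon,\lambda$ (otherwise the block could consist entirely of the error term $M$ and the conclusion is vacuous). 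You assert $\|W_i\|=\Theta(1)$ ``bounded below by the regularized signal,'' but this is exactly where the difficulty lies: the limiting dynamics has degenerate fixed points with $m_i^\mu=m_i^\nu=v_i=0$ (dead neurons), and ruling these out requires the fixed-point classification, the hypothesis $\beta<1/8$, and the fact that the random initialization of $v$ avoids the degenerate basin with probability one (Lemma~\ref{lem:fixed-point-chosen-by-lambda-infinity-dynamics} and Proposition~\ref{prop:XOR-GMM-SGD-result}(2)). Relatedly, the coefficient of $\vartheta^{\otimes 2}$ in the $W_iW_i$ population block is $v_i^2 F(m_i^\vartheta\sqrt{\lambda/R_{ii}})$ and that of $g(W\vartheta)^{\otimes 2}$ in the $vv$ block is $\sigma'(v\cdot g(W\vartheta))$; lower-bounding these needs $|v_i|>\eta$ and $\max_\vartheta W_i\cdot\vartheta>\eta$, none of which follow from your Step 1 as written.

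Second, your uniformity-over-$\ell$ argument does not work as stated. A $\delta$-net over the parameter ball in $\mathbb{R}^{Kd+K}$ has $e^{\Theta(d\log(1/\delta))}=e^{\Theta(d\log d)}$ points, so the union bound requires $\widetilde M\gtrsim d\log d$, which is strictly stronger than the theorem's hypothesis $\widetilde M\ge \alpha_0 d$; moreover the Hessian entries are \emph{not} Lipschitz in $\mathbf{x}$ for this model because of the indicators $\mathbf 1\{W_i\cdot Y\ge 0\}$ from the ReLU (this is exactly why the paper defers the train-data version, Remark~\ref{rem:test-vs-train}). Since the matrices are built from \emph{test} data independent of the training trajectory, the correct and much cheaper route is to apply the pointwise concentration bound (Theorem~\ref{t:concentration2}) at each of the $T_f\delta^{-1}=O(d)$ trajectory points and union bound over those, conditioning on the trajectory. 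A smaller imprecision: the per-sample gradient noise orthogonal to $\mathrm{Span}(\mu,\nu)$ has norm $O(\sqrt{d/\lambda})$, not $O(\lambda^{-1/2})$; the control of $W_i^\perp$ must pass through the summary statistic $R_{ii}^\perp$, whose drift is $-2\beta R_{ii}^\perp$ plus a corrector of order $c_\delta/\lambda$ after the $\delta=O(1/d)$ rescaling, which is what actually yields the $O(\varepsilon+\lambda^{-1/2})$ bound.
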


\begin{figure}[t]
    \centering
 \includegraphics[width=.35\textwidth]{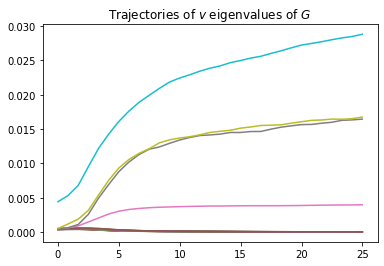}
   \qquad
 \includegraphics[width=.35\textwidth]{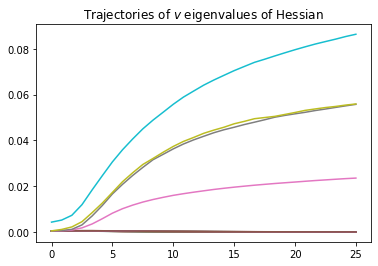}
      \caption{ 
      The eigenvalues (in different colors) of the $vv$ blocks of the Hessian and G-matrices over time from a random initialization. Initially, there is one outlier eigenvalue due to the positivity of the ReLU activation. Along training, four outlier eigenvalues separate from the bulk, corresponding to the four ``hidden" classes in the XOR problem. Parameters are the same as in Figure~\ref{fig:XOR-topspaces}.}
    \label{fig:DBBP-XOR}
\end{figure}

Let us discuss a bit more the phenomenology of the alignment in the second layer. First of all, we observe that the subspace in which the alignment occurs is a random---depending on the initialization and trajectory (in particular, choice of fixed point the SGD converges to)---4-dimensional subspace of $\mathbb R^K$. 
Furthermore, if we imagine $K$ to be much larger than $4$ so that the model is overparametrized, at initialization, unlike the $1$-layer case studied in Section~\ref{subsec:1-layer-k-GMM}, the Hessian and G-matrices do not exhibit any alignment in their second layer blocks. In particular, the second layer blocks of the Hessian and G-matrices look like (non-spiked) Gaussian orthogonal ensemble and Wishart matrices\footnote{These are two of the most classical random matrix models, see~\cite{AGZ} for more.} in $K$ dimensions at initialization, and it is only over the course of training that they develop $4$ outlier eigenvalues as the first layer of the SGD begins to align with the mean vectors and the vectors $(g(W(\mathbf{x}_\ell)  \vartheta))_{\vartheta \in \{\pm \mu,\pm \nu\}}$ in turn get large enough to generate outliers. This crystallization of the last layer around these vectors over the course of training is reminiscent of the neural collapse phenomenon described in~\citet{PapyanDonoho} (see also~\citet{NeuralCollapse-MSE,ZhuNeuralCollapse}). The simultaneous emergence, along training, of outliers in the Hessian and G-matrix spectra can be seen as a dynamical version of what is sometimes referred to as the BBP transition after~\citet{baik2005phase} (see also~\citet{Peche06}). This dynamical transition is demonstrated in Figure~\ref{fig:DBBP-XOR}

Finally, we recall that~\citet{BGJ22} found a positive probability (uniformly in $d$, but shrinking as the architecture is overparametrized by letting $K$ grow) that the SGD converges to sub-optimal classifiers from a random initialization. When this happens, $g(W(\mathbf{x}_\ell)\vartheta)$ remains small for the hidden classes $\vartheta\in \{\pm \mu,\pm \nu\}$ that are not classifiable with SGD output. In those situations, Theorem~\ref{mainthm:XOR-all-live-in-subspace} shows that the outlier subspace in the $vv$-blocks that emerges will have rank smaller than $4$, whereas when an optimal classifier is found it will be of rank $4$: see Figure~\ref{fig:DBBP-XOR-deficient}. Knowing that the classification task entails a mixture of $4$ means, this may provide a method for devising a stopping rule for classification tasks of this form by examining the rank of the outlier eigenspaces of the last layer Hessian or G-matrix. While the probability of such sub-optimal classification is bounded away from zero, the probability  goes to zero exponentially as the model is overparametrized via $K\to\infty$, and that gives more chances to allow the second layer SGD, Hessian, and G-matrices to exhibit full $4$-dimensional principal spaces. This serves as a concrete and provable manifestation of the \emph{lottery ticket hypothesis} of~\citet{LotteryTicket}. 

\begin{figure}[t]
    \centering
\subfigure[]{        \includegraphics[width=.235\textwidth]{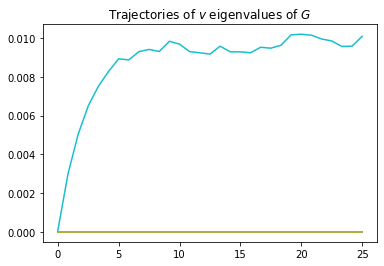}}
   \subfigure[]{     \includegraphics[width=.235\textwidth]{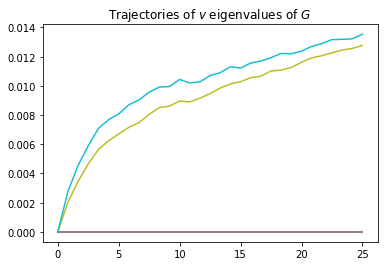}}
\subfigure[]{        \includegraphics[width=.235\textwidth]{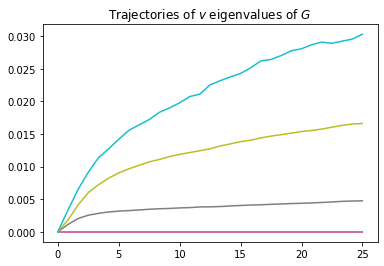}}
\subfigure[]{        \includegraphics[width=.235\textwidth]{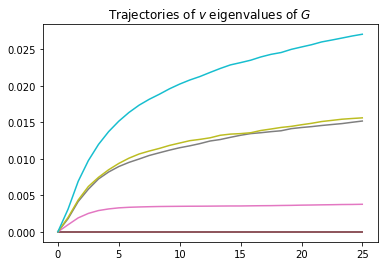}}
      \caption{Evolution of eigenvalues in the $v$ component of G over time in rank deficient cases. Here SGD is started from initializations that converge to suboptimal classifiers (this has uniformly positive, $K$-dependent, probability under a random initialization). From left to right, the SGD's classifier varies in the number of hidden classes it discerns, from $1$ to $4$. There is still a dynamical spectral transition, now with only a corresponding number of emerging outlier eigenvalues.}
    \label{fig:DBBP-XOR-deficient}
\end{figure}

\subsection{Outline and ideas of proof}

The proofs of our main theorems break into three key steps. 
\begin{enumerate}
    \item In Sections~\ref{s:1-layer-pop}--\ref{s:multilayer-pop}, we show that the \emph{population} Hessian and G-matrices, have bulks (and possibly minibulks) that are $O(1/\lambda)$ in operator norm, and finite $C(k)$-rank parts. We can explicitly characterize the low-rank part's eigenvalues and eigenvectors up to $O(1/\lambda)$ corrections, as functions of the parameter space, to see exactly where their emergence as outliers occurs depending on the model. 
    \item In Section~\ref{sec:SGD-analysis}, we analyze the SGD trajectories for the $k$-GMM and XOR classification problems. We do this using the limiting effective dynamics  theorem proven in~\citet{BGJ22} for finite families of summary statistics. We derive ODE limits for these summary statistics, notably for the general $k$-GMM which was not covered in that paper: see Theorem~\ref{thm:k-GMM-ballistic-limit}. We then pull back the limiting dynamics to finite $d$ and expand its $\lambda$-finite trajectory about its $\lambda = \infty$ solution. These latter steps involve understanding some of the stability properties of the dynamical system limits of the SGD's summary statistics. 
    \item In Section~\ref{sec:Hessian-concentration}, we prove concentration for the empirical Hessian and G-matrices about their population versions, in operator norm, throughout the parameter space. In some related settings, including binary mixtures of Gaussians,~\citet{MBM18} established concentration of the empirical Hessian about the population Hessian uniformly in the parameter space assuming polylogarithmic sample complexity. Our proofs are based on $\epsilon$-nets and concentration inequalities for uniformly sub-exponential random variables, albeit with some twists due, for instance, to non-differentiability of the ReLU function. 
\end{enumerate}
In Section~\ref{s:main-theorem-proofs}, we combine these steps to establish alignment of the $\lambda,\alpha$-finite matrices' outlier eigenspaces, and the SGD trajectory, with the common ``ground truth" subspace spanned by outlier eigenvectors of the $\lambda = \infty$ population matrices. In particular, in the examples we study, this is also the span of the class means or their images under the (time-dependent) first layer transformation.

\begin{remark}\label{rem:test-vs-train}
Our results are stated for the empirical Hessian and G-matrices generated using test data, along the SGD trajectory generated from training data. Since we are considering online SGD, the empirical Hessian and G-matrices with training data are no more relevant than those generated from test data, but the reader may still wonder whether the same behavior holds. A straightforward modification of our arguments in Section~\ref{sec:Hessian-concentration} is given in Section~\ref{sec:training-data} to extend our results for the $k$-GMM model to Hessian and G-matrices generated from train data, if we assume that $M\gtrsim d\log d$ rather than simply $M\gtrsim d$. It is an interesting mathematical question to drop this extra logarithmic factor. The extension in the XOR case is technically more involved due to the lack of regularity of the ReLU function. Also see Section~\ref{sec:additional figures} for numerical demonstrations that the phenomena in the empirical matrices generated from train data are identical to those generated from test data. 
\end{remark}

\begin{remark}A natural question is whether our results apply to more general mixture distributions than Gaussian ones. Unfortunately, the isotropy of the Gaussian is used crucially in the study of SGD trajectories in their high-dimensional limits. Establishing a form of dynamical universality for the summary statistic trajectories would be of interest. 
\end{remark}

\subsection{Global notation}
Throughout the paper, we are imagining the dimension $d$ to be sufficiently large, the sample complexity and inverse step size scaling with $d$, and our results hold for all large~$d$. Towards that, when we use $f\lesssim g$ or $f = O(g)$, we mean $f\le Cg$ for a constant $C$ depending on fixed parameters, e.g., the number of classes $k$, the width of the second layer in the XOR case $K$, and the regularizer $\beta$. When there are other parameters on which we want to emphasize the dependence, we include that as a subscript, e.g., as $f\lesssim_r g$ if $r$ is the radius of a ball in parameter space to which we are confining ourselves. Also, throughout the paper, for a vector $v$, we use $\|v\|$ to denote its $\ell^2$ norm, and for a matrix $A$, use $\|A\|$ to denote its ($\ell^2\to\ell^2$) operator norm.

\section{Analysis of the population matrices: 1-layer networks}\label{s:1-layer-pop}

In this section, we study the Hessian and G-matrix of the \emph{population} loss for the $k$-GMM problem whose data distribution and loss were given in~\pef{eq:data-distribution}--\pef{eq:cross-entropy-loss}. Specifically, we give the Hessian and G-matrices' $\lambda$-large expansion, and showing they have low rank structures with top eigenspaces generated by $O(1/\lambda)$ perturbations of the mean vectors $(\mu_1,...,\mu_k)$. In Section~\ref{sec:Hessian-concentration}, we will show that the empirical matrices are well concentrated about the population matrices.

\subsection{Preliminary calculations and notation}\label{s:pre}

It helps to first fix some preliminary notation, and give expressions for derivatives of the loss function of~\pef{eq:cross-entropy-loss}. Differentiating that, we get  
\begin{align}\label{eq:DL}
\nabla_{x_{c}}L=(-y_{c}+\frac{\exp(x^{c}\cdot Y)}{\sum_{a}\exp(x^{a}\cdot Y)})Y\,,
\end{align}
the Hessian matrix
\begin{align}\label{eq:D2L}
\nabla_{x_{b}}\nabla_{x_{c}}L=\Big(\frac{\exp(x^{b}\cdot Y)}{\sum_a\exp(x^{a}\cdot Y)}\delta_{bc}-\frac{\exp(x^{b}\cdot Y)\exp(x^{c}\cdot Y)}{(\sum_{a}\exp(x^{a}\cdot Y))^{2}}\Big)Y\tensor Y\,,
\end{align}
and the G-matrix
\begin{align}\label{eq:DDL}
    \nabla_{x^b}L\tensor \nabla_{x^c}L=
    \left(y_c y_b-\pi_Y(b)y_c-y_b\pi_Y(c)+\pi_Y(c)\pi_Y(b)\right)Y\otimes Y\,.
\end{align}

Given the above, the following probability distribution over the $[k]$ classes naturally arises, for which we reserve the notation $\pi_{Y}(\cdot)\in\cM_{1}([k])$ (which is a probability measure on $[k]$): 
\begin{align}\label{eq:pi-dist}
\pi_Y(c) = \pi_{Y}(c;\mathbf{x}):=\frac{\exp(x^{c}\cdot Y)}{\sum_{a\in [k]}\exp(x^{a}\cdot Y)}.
\end{align}
Note the dependency of $\pi_{Y}$ on the point in parameter space $\mathbf{x}$. Since in this section $\mathbf x$ can be viewed as fixed, we will suppress this dependence from the notation. In the rest of this section, we will denote $Z\sim \cN(0, I_d/\la)$. We also denote
\begin{align}\label{eq:pi-expectation-covariance}
    \langle x^B\rangle=\langle x^B\rangle_{\pi_Y}=\sum_a x^a \pi_Y(a)\,,
    \quad
    \langle x^B;x^B\rangle=\langle x^B;x^B\rangle_{\pi_Y}=\sum_a (x^a\otimes x^a) \pi_Y(a)-\langle x\rangle_{\pi_Y}^{\otimes 2}\,,
\end{align}
where $B\sim \pi_Y$.

We consider the population Hessian $\nabla^2 \Phi=\nabla^2\bE[L]$, block by block, using $\nabla_{bc} \Phi$ to denote the $d\times d$ block in $\nabla_x^2\Phi$ corresponding to $\nabla_{x^b}\nabla_{x^c}\Phi$. Then, thanks to \pef{eq:D2L}, the $bc$ block of the population Hessian is of the form
\[
\nabla_{bc}\Phi=\E\left[(\pi_{Y}(b)\delta_{bc}-\pi_{Y}(b)\pi_{Y}(c))Y\tensor Y\right]\,.
\]
We also consider the population G-matrix $\Gamma=\bE[\nabla L^{\otimes 2}]$, block by block, using $\Gamma_{bc}$ to denote the $d\times d$ block corresponding to $\bE[\nabla_{x^b} L\otimes \nabla_{x^c}L]$. Then, thanks to \pef{eq:DDL}, the $bc$ block of the population Hessian is of the form
\[
\Gamma_{bc}=\bE[\left(y_c y_b-\pi_Y(b)y_c-y_b\pi_Y(c)+\pi_Y(c)\pi_Y(b)\right)Y\otimes Y]\,.
\]

For both population Hessian matrix and G-matrix, We will study the off-diagonal blocks $a\ne c$ and the diagonal ones $a=c$ separately. 

\subsection{Analysis of the population Hessian matrix}
We now compute exact expressions for the blocks of the population Hessian as $\lambda$ gets large: see Lemmas~\ref{l:offblock1}--\ref{l:dblock1}. 

We begin by studying the off-diagonal blocks: $b\neq c$, for which,  
\begin{align}\begin{split}\label{e:off-diagonalPhi}
\nabla_{bc}\Phi & =\E\left[(\pi_Y(b)\delta_{cb}-\pi_Y(b)\pi_{Y}(c))Y^{\otimes 2}\right] =-\sum_{l}p_{l}\E\left[\pi_{Y_l}(b)\pi_{Y_l}(c)Y_l^{\otimes 2}\right]\,,
\end{split}\end{align}
where $Y_l  = \mu_l + Z$, i.e., it is distributed like $Y$ given class choice $l$. 
It helps here to recall the well-known Gaussian integration by parts formula: For $f$ that is differentiable with derivative of at most exponential growth at infinity, 
\[
\E[f(Z_\lambda) Z_\lambda] = \frac{1}{\lambda}\E[\nabla f(Z_\lambda)]\,.
\]

Our goal is to show the following.

\begin{lem}\label{l:offblock1}
   The off-diagonal blocks of the population Hessian satisfy
\begin{align*}
    \nabla_{bc} \Phi(\mathbf{x}) = \E \Big\{\pi_Y(c)\pi_Y(a)\Big[ \big(\mu_{y}+\frac{1}{\lambda}[(x^{c}+x^{a})-2\left\langle x^{B}\right\rangle]\big)^{\tensor2}+\frac{1}{\lambda}I_d-\frac{2}{\lambda^{2}}\left\langle x^{B};x^{B}\right\rangle\Big]\Big\}\,.
\end{align*}
    In particular they are shifts by the identity of a matrix of rank at most $k^2$.
\end{lem}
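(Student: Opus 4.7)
The plan is to start from the expression $\nabla_{bc}\Phi(\mathbf{x})=-\sum_{l}p_{l}\,\E[\pi_{Y_l}(b)\pi_{Y_l}(c)\,Y_l^{\otimes 2}]$ derived at \pef{e:off-diagonalPhi}, write $Y_l=\mu_l+Z$ with $Z\sim\cN(0,I_d/\la)$, and use Gaussian integration by parts in $Z$ to turn every factor of $Z$ inside $Y_l^{\otimes 2}$ into a derivative of the softmax weight $F(Y):=\pi_Y(b)\pi_Y(c)$. The two IBP identities I would lean on are the scalar version $\E[f(Z)Z]=\tfrac1\la \E[\nabla f(Z)]$ and, iterating it,
\begin{align*}
\E[f(Z)\,Z\otimes Z]=\tfrac{1}{\la^{2}}\E[\nabla^{2}f(Z)]+\tfrac{1}{\la}\E[f(Z)]\,I_d\,,
\end{align*}
both applied componentwise to tensor-valued expressions.

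Next I would compute $\nabla_Y F$ and $\nabla_Y^{2}F$ explicitly using the standard softmax identity $\partial_j \pi_Y(a)=\pi_Y(a)(x^{a}_j-\langle x^{B}_j\rangle_{\pi_Y})$. Writing $v(Y):=(x^{b}+x^{c})-2\langle x^{B}\rangle_{\pi_Y}$, this gives $\nabla_Y F=F\,v$ and
\begin{align*}
\nabla_{Y}^{2}F=F\bigl(v^{\otimes 2}-2\langle x^{B};x^{B}\rangle_{\pi_Y}\bigr),
\end{align*}
where the covariance piece emerges from $\partial_j\langle x^{B}_i\rangle=\langle x^{B}_i x^{B}_j\rangle-\langle x^{B}_i\rangle\langle x^{B}_j\rangle$. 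I would then expand $(\mu_l+Z)^{\otimes 2}=\mu_l^{\otimes 2}+\mu_l\otimes Z+Z\otimes\mu_l+Z^{\otimes 2}$ and apply the two IBP formulas term by term: the two cross pieces produce $\tfrac1\la\bigl(\mu_l\otimes\E[Fv]+\E[Fv]\otimes\mu_l\bigr)$, and the $Z\otimes Z$ piece produces $\tfrac{1}{\la^{2}}\E[F(v^{\otimes 2}-2\langle x^{B};x^{B}\rangle)]+\tfrac{1}{\la}\E[F]\,I_d$. The rank-one pieces collapse via the completion of the square
\begin{align*}
\mu_l^{\otimes 2}+\tfrac{1}{\la}\bigl(\mu_l\otimes v+v\otimes\mu_l\bigr)+\tfrac{1}{\la^{2}}v^{\otimes 2}=\bigl(\mu_l+\tfrac{1}{\la}v\bigr)^{\otimes 2}\,,
\end{align*}
so that re-packaging $\sum_l p_l\E_Z[\,\cdot\,]$ as a single expectation over $\mathbf{Y}\sim\cP_Y$ (with $\mu_y$ in place of $\mu_l$) yields the claimed formula for $\nabla_{bc}\Phi(\mathbf{x})$, up to the overall sign dictated by \pef{eq:D2L}.

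For the rank statement, I would note that $v(Y)\in\text{Span}(x^{1},\ldots,x^{k})$ for every $Y$, that $\langle x^{B};x^{B}\rangle_{\pi_Y}$ always has image in $\text{Span}(x^{1},\ldots,x^{k})$, and that $\mu_y$ ranges over $\{\mu_1,\ldots,\mu_k\}$. Hence after peeling off the scalar shift $\tfrac{1}{\la}\E[F]\,I_d$, the remaining operator has image contained in the fixed, $d$-independent subspace $\text{Span}(\mu_1,\ldots,\mu_k,x^{1},\ldots,x^{k})$, which is enough to bound its rank by $k^{2}$ (crudely). The only real obstacle is the bookkeeping in the $Z\otimes Z$ step, where one must iterate Gaussian IBP twice and carefully track the extra $\tfrac{1}{\la}\E[F]\,I_d$ that does not get absorbed into the square; every other step is an elementary softmax differentiation.
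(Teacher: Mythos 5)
Your proposal is correct and follows essentially the same route as the paper: the same decomposition of $Y_l^{\otimes 2}=\mu_l^{\otimes 2}+2\Sym(Z\otimes\mu_l)+Z^{\otimes 2}$, the same single and iterated Gaussian integration by parts identities, the same softmax derivative computations $\nabla_Z(\pi\pi)=\pi\pi\,v$ and $\nabla_Z^2(\pi\pi)=\pi\pi(v^{\otimes 2}-2\langle x^B;x^B\rangle)$, and the same completion of the square to repackage everything as $(\mu_l+\tfrac1\lambda v)^{\otimes 2}$ plus the identity shift and covariance correction. Your handling of the overall sign and the rank bound (image contained in $\mathrm{Span}(\mu_1,\dots,\mu_k,x^1,\dots,x^k)$ after peeling off the identity) is sound and, if anything, slightly sharper than the paper's crude $k^2$ accounting.
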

\begin{proof}
We decompose each term in \pef{e:off-diagonalPhi} as three terms
\begin{align*}
\E\left[\pi_{Y_l}(b)\pi_{Y_l}(c)\right]\mu_{l}^{\otimes 2}+2\Sym(\E\left[\pi_{Y_l}(b)\pi_{Y_l}(c)Z\right]\tensor\mu_{l})+\E\left[\pi_{Y_l}(b)\pi_{Y_l}(c)Z^{\otimes 2}\right]=:
({\rm i})+({\rm ii})+({\rm iii})\,,
\end{align*}
where we've used here multi-linearity of $\tensor$. Here $\Sym(c\tensor b)=(c\tensor b+b\tensor c)/2$. Let's look at this term-by-term. We leave term (i) as is. For Term (ii), we notice by Gaussian integration by parts, 

\begin{align*}
 \E\left[\pi_{Y_l}(c)\pi_{Y_l}(b)Z\right]\tensor\mu_{l}
  & =\frac{1}{\lambda}\E\left[\nabla_{Z}(\pi_{Y_l}(c)\pi_{Y_l}(b))\right]\tensor\mu_{l} \\
  &=\frac{1}{\lambda}\E\left[((x^{b}+x^{c})-2\left\langle x^{B}\right\rangle_{\pi_{Y_l}}) \pi_{Y_l}(b)\pi_{Y_l}(c)\right]\tensor\mu_{l} \\
  & =\frac{1}{\lambda}(\E[\pi_{Y_l}(b)\pi_{Y_l}(c)](x^{b}+x^{c})\tensor\mu_{l}-2\E_y\left[\left\langle x^{B}\right\rangle_{\pi_{Y_l}} \pi_{Y_l}(b)\pi_{Y_l}(c)\right]\tensor\mu_{l})\,.
\end{align*}
where we recall~\pef{eq:pi-expectation-covariance} and used the derivative calculation 
\begin{align}
\nabla_Z \pi_{Y_l}(a)=\nabla_Z\frac{\exp[x^a\cdot (\mu _l+Z)]}{\sum_b \exp[x^b\cdot (\mu _l+Z)]}=(x^a-\langle x^B\rangle_{\pi_{Y_l}})\pi_{Y_l}(a)\,. \label{eq:grad-pi}
\end{align}

This tells us that 
\[
({\rm ii})=\frac{2}{\lambda}\Sym\Big(\E\left[\left((x^{b}+x^{c})-2\left\langle x^{B}\right\rangle_{\pi_{Y_l}} \right)\pi_{Y_l}(b)\pi_{Y_l}(c)\right]\tensor\mu_{l}\Big),
\]
notice that this is of rank at most $2$ and of order $O(1/\lambda)$. 

Finally for term (iii), we integrate by parts again, to get, for every $i,j$, 
\begin{align}\begin{split}\label{e:intbypart}
\E\left[\pi_{Y_l}(b)\pi_{Y_l}(c)Z_{i}Z_{j}\right] & =\frac{1}{\lambda}\E[\partial_{Z_i}(\pi_{Y_l}(b)\pi_{Y_l}(c))Z_{j}]+\frac{1}{\lambda}\E[\pi_{Y_l}(b)\pi_{Y_l}(c)\delta_{ij}]\\
 & =\frac{1}{\lambda^{2}}\E[\partial_{Z_i Z_j}(\pi_{Y_l}(b)\pi_{Y_l}(c))]+\frac{1}{\lambda}\E[\pi_{Y_l}(b)\pi_{Y_l}(c)\delta_{ij}].
\end{split}\end{align}
so term (iii) is given by
\[
\frac{1}{\lambda} \E[\pi_{Y_l}(b)\pi_{Y_l}(c) I_d]+\frac{1}{\lambda^{2}}\E[\nabla_{Z}^{2}\left(\pi_{Y_l}(b)\pi_{Y_l}(c)\right)]\,.
\]
Examining this second term, 
\begin{align*}
\E[ & \partial_{Z_i Z_j} (\pi_{Y_l}(b)\pi_{Y_l}(c)) ]
   =\E\Big[\partial_{Z_j}\Big\{ \big(x_{i}^{b}+x_{i}^{c}-2\langle x_{i}^{B}\rangle_{\pi_{Y_l}} \big)\pi_{Y_l}(b)\pi_{Y_l}(c)\Big\} \Big]\\
 & =\E\Big[\big(x_{i}^{b}+x_{i}^{c}-2\langle x_{i}^{B}\rangle_{\pi_{Y_l}} \big)\big((x_{j}^{b}+x_{j}^{c})-2\langle x_{j}^{B}\rangle_{\pi_{Y_l}} \big){\pi_{Y_l}}(b){\pi_{Y_l}}(c)\Big]-2\E\big[\langle x_{i}^{B};x_{j}^{B}\rangle_{\pi_{Y_l}} {\pi_{Y_l}}(b){\pi_{Y_l}}(c)\big]\,,
\end{align*}
where we first used~\pef{eq:grad-pi}, then used that 
\begin{align}\label{eq:grad-pi-x^B}
\nabla_Z\langle x^B\rangle_{\pi_{Y_l}} &= \nabla_Z\frac{\sum_bx^b\exp[x^b\cdot(\mu_l+Z)]}{\sum_b \exp[x^b\cdot (\mu_l+Z)]} = \langle x^B\tensor x^B\rangle_{\pi_{Y_l}} -\langle x^B\rangle_{\pi_{Y_l}}^{\tensor 2} = \langle x^B;x^B\rangle_{\pi_{Y_l}}\,.
\end{align}

As such, 
\begin{align*}
\frac{1}{\la^2}\E[\nabla_{Z}^{2}({\pi_{Y_l}}(b){\pi_{Y_l}}(c)) ]& =\frac{1}{\lambda^{2}}\Big\{ \E\Big[(x^{b}+x^{c}-2\langle x^{B}\rangle_{\pi_{Y_l}} )^{\otimes 2}{\pi_{Y_l}}(b){\pi_{Y_l}}(c)\Big]\\
 & \qquad\quad -2\E\Big[\langle x^{B};x^{B}\rangle_{\pi_{Y_l}} {\pi_{Y_l}}(b){\pi_{Y_l}}(c)\Big]\Big\}\,.
\end{align*}
This term can be seen to be of rank at most $k^2$ (each $\langle x^B\rangle$ is a weighted sum of $(x^a)_{a}$).

Combining all three terms above, we get that the off-diagonal block of the population Hessian matrix is given by the sum over $y\in [k]$ of 
\begin{align}\begin{split}\label{e:pipiYY}
 & \E\left[{\pi_{Y_l}}(b){\pi_{Y_l}}(c)\right]\mu_{l}^{\otimes 2}\! + \!\frac{2}{\lambda}\Sym\Big(\E\big[\big((x^{b}+x^{a})-2\langle x^{B}\rangle_{\pi_{Y_l}} \big)\pi_{\pi_{Y_l}}(b)\pi_{\pi_{Y_l}}(c)\big]\tensor\mu_{l}\Big)\!+\!\frac{1}{\lambda}\E[\pi_{\pi_{Y_l}}(b)\pi_{\pi_{Y_l}}(c)]I_d\\
 & +\frac{1}{\lambda^{2}}\left(\E\left[((x^{b}+x^{a})-2\left\langle x^{B}\right\rangle_{\pi_{Y_l}} )^{\otimes 2}\pi_{\pi_{Y_l}}(b)\pi_{\pi_{Y_l}}(c)\right]-2\E\left[\left\langle x^{B};x^{B}\right\rangle_{\pi_{Y_l}} \pi_{\pi_{Y_l}}(b)\pi_{\pi_{Y_l}}(c)\right]\right).
\end{split}\end{align}
Summing this in $y$ we get that an off-diagonal block is of
the form
\begin{align}\label{e:ABC}
-(A+\frac{1}{\lambda}B+\frac{1}{\lambda^{2}}C)\,,
\end{align}
where 
\begin{align*}
A&=\sum_{l}p_{l}\E\left[\pi_{\pi_{Y_l}}(c)\pi_{\pi_{Y_l}}(a)\right]\mu_{l}^{\otimes 2}\,, \\ 
B&=\sum_{l} p_y \E\left[\pi_{{Y_l}}(a)\pi_{Y_l}(c)\right]I_d+2\sum_{l}p_{l}\Sym\Big(\E_{l}\left[\left(x^{c}+x^{a}-2\left\langle x^{B}\right\rangle_{\pi_{Y_l}} \right)\pi_{Y_l}(c)\pi_{Y_l}(a)\right]\tensor\mu_{l}\Big)\,, \\
C&=\sum_l p_l \E\left[(x^{c}+x^{a}-2\left\langle x^{B}\right\rangle_{\pi_{Y_l}} )^{\otimes 2}\pi_{\pi_{Y_l}}(c)\pi_{\pi_{Y_l}}(a)\right]-2\E\left[\left\langle x^{B};x^{B}\right\rangle_{\pi_{Y_l}}\pi_{Y_l}(c)\pi_{Y_l}(a)\right].
\end{align*}
In summary to leading order it is $A$ which is rank $k$. To next
order it is $O(1/\lambda)$ and that term is a full rank (identity)
plus a rank at most $2k$ term. To next order it is $O(1/\lambda^{2})$
and this is a covariance-type quantity with respect to the Gibbs probability $\pi_{Y_y}$, with rank at most $k^2$. 

We can group the expression \pef{e:ABC} further 
as 
\begin{align*}
\sum_l p_l \E \Big\{\pi_{Y_l}(c)\pi_{Y_l}(a)\Big[ \big(\mu_{l}+\frac{1}{\lambda}[x^{c}+x^{a}-2\left\langle x^{B}\right\rangle_{\pi_{Y_l}}]\big)^{\tensor2}+\frac{1}{\lambda}I_d-\frac{2}{\lambda^{2}}\left\langle x^{B};x^{B}\right\rangle_{\pi_{Y_l}}\Big]\Big\}\,.
\end{align*}
This is of the form rank $k$ plus rank $k^{2}$ shifted by the identity
(adding one more eigenvalue). Incorporating the average over $l$ into the expectation, this is exactly the claimed expression.  
\end{proof}

\subsubsection{On-diagonal blocks}\label{s:on-diagonal}
In this section, we study the $aa$ diagonal blocks 
\begin{align}\label{e:diagonalPhi}
\nabla_{aa}\Phi & =\E\left[(\pi_{Y}(a)(1-\pi_{Y}(a))Y^{\otimes 2}\right] = \sum_{l} p_l \E\left[(\pi_{Y_l}(a)(1-\pi_{Y_l}(a))Y_l^{\otimes 2}\right] \,.
\end{align}
We prove the following large $\lambda$ expansion.

\begin{lem}\label{l:dblock1}
    The diagonal $aa$-block of the population Hessian $\nabla_{aa}\Phi$ equals
\[
\begin{aligned}
    \E\Big[\pi_{Y}(a)\Big(\mu & +\frac{1}{\lambda}(x^a-\langle x^B\rangle_{\pi_{Y}})^{\tensor 2}-\frac{1}{\lambda^2}\langle x^B;x^B\rangle_{\pi_{Y}}\Big)\Big]\\
    &-\E \Big[ \pi(a)^2\Big((\mu+\frac{2}{\lambda}(x^a-\langle x^B\rangle_{\pi_{Y}})^{\tensor 2}-\frac{2}{\lambda^2}\langle x^B;x^B\rangle_{\pi_{Y}}\Big)\Big] + \frac{1}{\lambda}\E \big[\pi_{Y}(a)(1-\pi_{Y}(a))\big] I_d\,.
\end{aligned}
\]
In particular, it is a shift by the identity of a rank at-most $k^2$ matrix.
\end{lem}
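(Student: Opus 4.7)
The plan is to adapt verbatim the scheme used for Lemma~\ref{l:offblock1} to the diagonal case. First I would condition on the class $l$, so that under $Y_l = \mu_l + Z$ with $Z\sim\cN(0,I_d/\lambda)$ one has
\[
Y_l^{\otimes 2} = \mu_l^{\otimes 2} + 2\,\Sym(\mu_l \otimes Z) + Z^{\otimes 2}\,.
\]
Multiplying by $\pi_{Y_l}(a)(1-\pi_{Y_l}(a))$ and taking expectation splits the $aa$ block, exactly as in \pef{e:pipiYY}, into a zeroth-order ``$\mu_l^{\otimes 2}$'' piece, a cross term handled by one Gaussian integration by parts, and a pure-noise piece handled by two integrations by parts.

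Next, I would use the algebraic identity $\pi(a)(1-\pi(a)) = \pi(a) - \pi(a)^2$ to split every expectation into a $\pi(a)$-piece and a $\pi(a)^2$-piece, treating the two in parallel. The $\pi(a)$-piece is essentially identical to the off-diagonal computation: by \pef{eq:grad-pi}, $\nabla_Z \pi_{Y_l}(a) = (x^a - \langle x^B\rangle_{\pi_{Y_l}})\pi_{Y_l}(a)$, and a second derivative, using \pef{eq:grad-pi-x^B}, yields
\[
\nabla_Z^2 \pi_{Y_l}(a) = \pi_{Y_l}(a)\bigl[(x^a - \langle x^B\rangle)^{\otimes 2} - \langle x^B;x^B\rangle\bigr]\,.
\]
For the $\pi(a)^2$-piece, the chain rule gives $\nabla_Z \pi_{Y_l}(a)^2 = 2(x^a - \langle x^B\rangle)\pi_{Y_l}(a)^2$, and differentiating a second time produces
\[
\nabla_Z^2 \pi_{Y_l}(a)^2 = \pi_{Y_l}(a)^2\bigl[4(x^a - \langle x^B\rangle)^{\otimes 2} - 2\langle x^B;x^B\rangle\bigr]\,,
\]
which is the source of the factor-of-$2$ asymmetry appearing in front of the $\lambda^{-1}$ and $\lambda^{-2}$ correction terms in the second expectation of the claim.

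Then I would collect everything. The pure-noise pieces yield the $\frac{1}{\lambda}\E[\pi_Y(a)(1-\pi_Y(a))] I_d$ term plus $\lambda^{-2}$ corrections from the second-derivative formulas above; the cross terms contribute at order $\lambda^{-1}$ and, after completing the square into $(\mu_l + \lambda^{-1}(x^a - \langle x^B\rangle))^{\otimes 2}$, reproduce the structure of the two leading expectations in the claim. Summing over $l$ against $p_l$ recombines everything into the unconditional expectation $\E_Y[\cdot]$, yielding the stated formula. The rank bound follows as in Lemma~\ref{l:offblock1}: since each $\langle x^B\rangle_{\pi_Y}$ lies in $\text{Span}(x^1,\ldots,x^k)$, both $(x^a - \langle x^B\rangle)^{\otimes 2}$ and $\langle x^B;x^B\rangle$ have rank at most $k^2$ after the $Y$-expectation, and the $\frac{1}{\lambda}(\cdot) I_d$ piece supplies the single additional eigenvalue.

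The main obstacle is genuinely just careful bookkeeping of the factor-of-$2$ asymmetry between differentiating $\pi$ and $\pi^2$; all the other structural work (Gaussian IBP, grouping by powers of $\lambda^{-1}$, reading off the rank from the dimension of $\text{Span}(x^1,\ldots,x^k)$) is a direct transcription of the off-diagonal argument, so I expect no conceptual surprises beyond arithmetic care in matching constants.
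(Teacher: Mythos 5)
Your proposal is correct and follows essentially the same route as the paper: split $\pi(1-\pi)=\pi-\pi^2$, expand $Y_l^{\otimes 2}$ about $\mu_l$, and apply Gaussian integration by parts once and twice using \pef{eq:grad-pi} and \pef{eq:grad-pi-x^B}. The only cosmetic difference is that the paper obtains the $\pi^2$-piece (and its factor-of-$2$ asymmetry) by invoking Lemma~\ref{l:offblock1} with $b=c=a$, whereas you recompute it directly via the chain rule; the resulting identities are the same.
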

\begin{proof}
By \pef{e:diagonalPhi}, the diagonal $aa$-block is given by an average over $l\in [k]$ of 
\[
\E[\pi_{Y_l}(a)(1-\pi_{Y_l}(a))Y^{\otimes 2}]=\E[\pi_{Y_l}(a)(\mu_l + Z)^{\otimes 2}]-\E [\pi_{Y_l}(a)^2(\mu_l+ Z)^{\otimes 2}]\,.
\]
Note that the second term was exactly what was computed Lemma~\ref{l:offblock1}, setting $b=c$. It remains to compute the first.
To this end, we proceed as before and for each fixed $l$, write the above as 
\[
\E[\pi_{Y_l}(a)]\mu_l^{\tensor 2}+2\Sym(\E[\pi_{Y_l}(a)Z]\tensor \mu_l)+\E[\pi_{Y_l}(a)Z^{\otimes 2}]\,.
\]
We will integrate the second and third terms by-parts. By the gradient calculations of~\pef{eq:grad-pi},
\begin{align}\begin{split}\label{e:pzterm}
\E[\pi_{Y_l}(a)Z]=\frac{1}{\lambda}\E[\nabla_Z \pi_{Y_l}(a)]=\frac{1}{\lambda}\E[(x^a-\langle x^B\rangle_{\pi_{Y_l}})\pi_{Y_l}(a)]\,,
\end{split}\end{align}
and by the calculation of~\pef{eq:grad-pi-x^B}, 
\begin{align}\label{e:pzzterm}
\E[\pi_{Y_l}(a) Z^{\otimes 2}] & = \frac{1}{\lambda}\E [\pi_{Y_l}(a)] I_d+\frac{1}{\lambda^2}\E[\nabla_Z^2 \pi_{Y_l}(a)
] \nonumber\\ 
& 
= \frac{1}{\lambda}\E[\pi_{Y_l}(a)] I_d+\E [\pi_{Y_l}(a) ((x^a-\langle x^B\rangle_{\pi_{Y_l}})^{\tensor 2} - \langle x^B;x^B\rangle_{\pi_{Y_l}})]\,.
\end{align}
Combining the above expressions yields 
\[
\E[\pi_{Y_l}(a) Y_l^{\otimes 2}] = \E[\pi_{Y_l}(a)((\mu_l+\frac{1}{\lambda}(x^a-\langle x^B\rangle_{\pi_{Y_l}})^{\tensor 2}+\frac{1}{\lambda} I_d -\frac{1}{\lambda^2}\langle x^B;x^B\rangle_{\pi_{Y_l}})].
\]
On the otherhand, Lemma~\ref{l:offblock1}, yields
\[
\E[\pi_{Y_l}(a)^2 Y_l^{\otimes 2}] = \E[\pi_{Y_l}(a)^2((\mu_l+\frac{2}{\lambda}(x^a-\langle x^B\rangle_{\pi_{Y_l}})^{\tensor 2}+\frac{1}{\lambda} I_d -\frac{2}{\lambda^2}\langle x^B;x^B\rangle_{\pi_{Y_l}})]\,.
\]
Combining these two and averaging over $l$ yields the desired.
\end{proof}

\subsection{Analysis of the population G-matrix}
We now compute an exact expansion of the population G-matrix as $\lambda$ gets large.

\begin{lem}\label{l:Gblock1}
For any $b,c$, the $bc$ block of the population G-matrix can be written in the form:
\[
\E [\nabla_{x^b} L\tensor \nabla_{x^c} L] = A+\frac{B}{\lambda}+\frac{C}{\lambda^2}\,,
\]
where $A$ is in the span of $(\mu_c)_c$ and $B,C$ have operator norm bounded by $1$. 
In particular,
\begin{align*}
        A &= \delta_{bc} p_b \mu_b^{\tensor 2}
        - p_c \E\{\pi_{Y_c}(b)[\mu_c +\frac{1}{\lambda}(x^b-\langle x^B\rangle_{\pi_{Y_c}})]^{\tensor 2}\}
        - p_b \E\{\pi_{Y_b}(c)[\mu_b+\frac{1}{\lambda}(x^c-\langle X^B\rangle_{\pi_{Y_b}})]^{\tensor 2}\} \nonumber\\
        &\qquad+\sum_l p_l \E\{ \pi_{Y_l}(b)\pi_{Y_l}(c) [\mu_l -\frac{1}{\lambda}(x^b+x^c-2\langle x^B\rangle_{\pi_{Y_l}})]^{\tensor 2}\}\,,\\
    B&= \big(\delta_{bc} p_b  - (p_b\E[\pi_{Y_b}(c)] + p_c\E[\pi_{Y_c}(b)])+\sum_l p_l\E[\pi_{Y_l}(b)\pi_{Y_l}(c)] \big) I_d\,,\\
     C&=   -p_c \E[\langle x^B;x^B\rangle_{\pi_{Y_c}} \pi_{Y_c}(b)]-p_b \E[\langle x^B;x^B\rangle_{\pi_{Y_b}} \pi_{Y_b}(c)]
    -2\sum_l p_l \E[\langle x^B;x^B\rangle_{\pi_{Y_l}} \pi_{Y_l}(b)\pi_{Y_l}(c)]\,.
\end{align*}
\end{lem}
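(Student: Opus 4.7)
The plan is to reduce everything to the two Gaussian-integration-by-parts expansions already carried out in the proofs of Lemmas~\ref{l:offblock1} and~\ref{l:dblock1}. Starting from~\pef{eq:DDL}, I expand the coefficient of $Y\tensor Y$ using the factorization $y_by_c - y_c\pi_Y(b) - y_b\pi_Y(c) + \pi_Y(b)\pi_Y(c) = (y_b - \pi_Y(b))(y_c - \pi_Y(c))$, and take the four expectations separately. The first, $\E[y_by_cY^{\tensor 2}]$, gives $\delta_{bc}p_b(\mu_b^{\tensor 2} + I_d/\la)$ directly by conditioning on the label. Conditioning on the label in the two mixed terms yields $p_c\E[\pi_{Y_c}(b)Y_c^{\tensor 2}]$ and $p_b\E[\pi_{Y_b}(c)Y_b^{\tensor 2}]$, which are exactly the expectations expanded via~\pef{e:pzterm} and~\pef{e:pzzterm} in the proof of Lemma~\ref{l:dblock1}. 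The fourth term, summed over classes, yields $\sum_l p_l\E[\pi_{Y_l}(b)\pi_{Y_l}(c)Y_l^{\tensor 2}]$, whose large-$\la$ expansion is precisely~\pef{e:pipiYY} from the proof of Lemma~\ref{l:offblock1}.

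I would then regroup the resulting terms by powers of $1/\la$. The $\mu_l^{\tensor 2}$ parts, together with the cross terms $\tfrac{2}{\la}\Sym(\mu_l\tensor(\cdots))$, complete into the squares $[\mu_l + \tfrac{1}{\la}(\cdots)]^{\tensor 2}$ listed in $A$; the identity-valued contributions from the Gaussian moments $\E[\pi(\cdot)Z^{\tensor 2}] = \tfrac{1}{\la}\E[\pi(\cdot)]I_d + O(1/\la^2)$ combine to form $B$; and the conditional-covariance contributions $\langle x^B;x^B\rangle_{\pi_{Y_l}}$ produced by $\nabla_Z^2\pi$ assemble into $C$. For the operator-norm bounds, the scalar coefficient of the identity in $B$ is exactly $\E[(y_b - \pi_Y(b))(y_c - \pi_Y(c))]$, which has absolute value at most $1$ since each factor lies in $[0,1]$; and $C$ is a weighted combination of at most four conditional covariances $\langle x^B;x^B\rangle_{\pi_{Y_l}}$ with weights in $[0,1]$, each with operator norm at most $\max_a\|x^a\|^2 = O(1)$ along the SGD trajectory controlled in Section~\ref{sec:SGD-analysis}.

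The main obstacle is pure bookkeeping: four expectations, each producing three terms at orders $1$, $1/\la$, $1/\la^2$, must be combined carefully so that the telescoping $[\mu_l + \tfrac{1}{\la}(\cdots)]^{\tensor 2}$ squares emerge in $A$ and the signs work out consistently across the four summands. No analytic input beyond the gradient identities~\pef{eq:grad-pi} and~\pef{eq:grad-pi-x^B} and Gaussian integration by parts is needed; the lemma is essentially a reorganization of the calculations already performed in Lemmas~\ref{l:offblock1}--\ref{l:dblock1}.
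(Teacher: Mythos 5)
Your proposal is correct and follows essentially the same route as the paper: the paper likewise splits \pef{eq:DDL} into the four expectations $(i)$--$(iv)$, computes $(i)$ directly, handles the mixed terms via \pef{e:pzterm}--\pef{e:pzzterm}, reuses \pef{e:pipiYY} for the $\pi\pi$ term, and then regroups into the tensor-squares of $A$ and the identity and covariance pieces of $B$ and $C$. Your added remark on how the operator-norm bounds for $B$ and $C$ follow is a reasonable supplement, since the paper asserts them without detail.
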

\begin{proof}
    
We recall from \pef{eq:DDL}, there are four terms in the $bc$ block of the G-matrix:
\begin{align}\label{e:decomp}
   \bE[ \nabla_{x^b}L\tensor\nabla_{x^c}L]
   =:(i)-(ii)-(iii)+(iv)\,,
\end{align}
where 
\begin{align*}
    &(i):=\bE[y_c y_b Y\otimes Y],
    \quad
    (ii):=\bE[\pi_Y(b)y_c Y\otimes Y],
    \\
    &(iii):=\bE[y_b\pi_Y(c) Y\otimes Y],
    \quad
    (iv):=\bE[\pi_Y(c)\pi_Y(b)Y\otimes Y]\,.
\end{align*}
The first term in~\pef{e:decomp} is easy to compute
\begin{align}\label{e:I}
    (i)=\bE[y_b y_c Y\otimes Y]
    =\delta_{bc}p_b(\mu_b\otimes \mu_b+I_d/\lambda)\,.
\end{align}
The second and third terms are similar to each other, we will only compute the second term $(ii)$,
\begin{align}\begin{split}\label{e:II}
    (ii)&=\bE[y_c \pi_Y(b) Y\otimes Y]
    =p_c\bE\left[ \pi_{Y_c}(b) (\mu_c+Z)^{\otimes 2}\right]\\
    &=p_c\bE[\pi_{Y_c}(b)\mu_c^{\otimes 2}]
    +2p_c\Sym \bE[\pi_{Y_c}(b)Z\otimes \mu_c]
    +p_c\bE[\pi_{Y_c}(b)Z^{\otimes 2}]\,.
\end{split}\end{align}
Using the calculations of~\pef{e:pzterm} and~\pef{e:pzzterm}, we conclude that
\begin{align}\label{e:II2}\begin{split}
    (ii)
    &=p_c \bE[\pi_{Y_c}(b)(\mu_c^{\otimes 2}+I_d/\lambda)]+\frac{1}{\lambda}2\Sym(\bE[(x^b-\langle x^B\rangle_{\pi_
{Y_c}}
    ]) \pi_{Y_c}(b)]\otimes \mu_c)].\\
    & \quad +\frac{2}{\lambda^2}p_c\bE[((x^b-\langle x^B\rangle_{\pi_{Y_c}})^{\otimes 2}-\langle x^B; x^B\rangle_{\pi_{Y_c}})\pi_{Y_c}(b)]\,.
\end{split}\end{align}
For the last term $(iv)$ in \pef{e:decomp}, it has been computed in  \pef{e:pipiYY} that $(iv)$ is the expectation over $l$ of
\begin{align}\begin{split}\label{e:pipiYYcopy}    
&\bE\left[\pi_{Y_l}(b)\pi_{Y_l}(c)\right]\mu_{l}^{\otimes 2}\\
&+ \frac{1}{\lambda}\left[2\Sym(\bE\left[\left((x^{b}+x^{c})-2\langle x^B\rangle_{\pi_{Y_l}}\right)\pi_{Y_l}(b)\pi_{Y_l}(c)\right]\tensor\mu_{l})\right]+
  \frac{1}{\lambda}\bE[\pi_{Y_l}(b)\pi_{Y_l}(c)]I_d\\
  &+\frac{1}{\lambda^{2}}\left(\bE\left[((x^{b}+x^{c})-2\langle x^B\rangle_{\pi_{Y_l}})^{\tensor 2}\pi_{Y_l}(b)\pi_{Y_l}(c)\right]-2\bE\left[\langle x^B;x^B\rangle_{\pi_{Y_l}} \pi_{Y_l}(b)\pi_{Y_l}(c)\right]\right)\,.
\end{split}\end{align}

By plugging \pef{e:I}, \pef{e:II2}, its analogue for (iii), and \pef{e:pipiYYcopy} into \pef{e:decomp}, we conclude that the block  $\bE[G_{x^b x^c}]$ of the population G-matrix is given by
\[
    \bE[G_{x^b x^c}]=A'+\frac{B'}{\lambda}+\frac{C'}{\lambda^2}\,,
\]
where 
\begin{align*}
    A'=\delta_{bc}p_b \mu_b^{\otimes 2} -p_c\bE[\pi_{Y_c}(b)]\mu_c^{\otimes 2}-p_b\bE[\pi_{Y_b}(c)]\mu_b^{\otimes 2} +\sum_l p_l\bE[\pi_{Y_l}(b)\pi_{Y_l}(c)]\mu_l^{\otimes 2}\,,
\end{align*}
and 
\begin{align*}
    B'&=\left(\delta_{bc}p_b -(p_b\bE[\pi_{Y_b}(c)]+p_c\bE[\pi_{Y_c}(b)])+\sum_l p_l\bE[\pi_{Y_l}(b)\pi_{Y_l}(c)]\right)I_d\\
    & \quad -2p_c\Sym(\bE[(x^b-\langle x^B\rangle_{\pi_{Y_c}}) \pi_{Y_c}(b)]\otimes \mu_c)]-2 p_b\Sym(\bE[(x^c-\langle x^B\rangle_{\pi_{Y_b}}) \pi_{Y_b}(c)]\otimes \mu_b)]\\
  & \quad +\sum_l 2p_l\Sym(\bE\left[\left((x^{b}+x^{c})-2\langle x^B\rangle_{\pi_{Y_l}}\right)\pi_{Y_l}(b)\pi_{Y_l}(c)\right]\tensor\mu_{l})\,,
\end{align*}
and 
\begin{align*}
    C'&=-p_c\bE[((x^b-\langle x^B\rangle_{\pi_{Y_c}})^{\otimes 2}-\langle x^B; x^B\rangle_{\pi_{Y_c}})\pi_{Y_c}(b)]
    -p_b\bE[((x^c-\langle x^B\rangle_{\pi_{Y_b}})^{\otimes 2}-\langle x^B; x^B\rangle_{\pi_{Y_b}})\pi_{Y_b}(c)]\\
    & \quad +\sum_l p_l \left(\bE\left[((x^{b}+x^{c})-2\langle x^B\rangle_{\pi_{Y_l}} )^{\tensor 2}\pi_{Y_l}(b)\pi_{Y_l}(c)\right]-2\bE\left[\langle x^B;x^B\rangle_{\pi_{Y_l}} \pi_{Y_l}(b)\pi_{Y_l}(c)\right]\right)\,.
\end{align*}
Grouping tensor-squares we obtain the desired decompostion 
in terms of $A,B,$ and $C$.
\end{proof}

\section{Analysis of population matrices: the 2-layer case}\label{s:multilayer-pop}

In this section we analyze the population Hessian and G-matrices for the 2-layer XOR model, and especially its $\lambda$ large behavior by viewing it as a perturbation of its $\lambda = \infty$ value. Specifically, we compute its large $\lambda$ expansion, and uncover an underlying low-rank structure. We will show that the empirical Hessian and G-matrices concentrate about their population versions in Section~\ref{sec:Hessian-concentration}. 

\subsection{Preliminary calculations}
Recall the data model for the 2-layer XOR GMM from~\pef{eq:XOR-data-distribution} and the loss function~\pef{eq:XOR-loss}, with $\sigma$ being the sigmoid function and $g$ being ReLU.  The 2-layer architecture has intermediate layer width $K$, so that the first layer weights $W$ form a $K\times d$ matrix, and the second layer weights $v$ form a $K$-vector. 
Observe that 
\begin{align*}
\nabla_{v}L & =-(y-\widehat{y})g(WY)\,, \qquad \text{and} \qquad \nabla_{W_{i}}L =-(y-\widehat{y})g'(W_{i}\cdot Y)v_{i}Y\,,
\end{align*}
where 
\begin{align}\label{eq:yhat}
    \widehat{y}=\frac{e^{v\cdot g(WY)}}{1+e^{v\cdot g(WY)}} = \sigma(v\cdot g(WY))\,.
\end{align}

The $vv$-block of the G-matrix is the $K\times K$ matrix given by
\begin{align}\label{e:gv}
    \nabla_{v}L ^{\otimes 2}&  =(y-\widehat{y})^2 g(WY)^{\otimes 2}\,.
\end{align}
The $W_i W_j$ block of the G-matrix corresponding to $W$ is given by 
\begin{align}\label{e:gw}
\nabla_{W_{i}}L \otimes \nabla_{W_j} L & =(y-\widehat{y})^2 v_i v_j g'(W_{i}\cdot Y)g'(W_{j}\cdot Y) Y^{\otimes 2}\,.
\end{align}
The per-layer Hessian for the loss on a given sample is then given by  
\begin{align}\label{eq:XOR-Hessian}
\nabla_{vv}^{2}L & =g(WY)^{\tensor2}\widehat{y}(1-\widehat{y})\,, \nonumber \\
\nabla_{W_{i}W_{j}}^{2}L & =(-\delta_{ij}v_{i}g''(W_{i}\cdot Y)(y-\widehat{y})+v_{i} v_j g'(W_{i}\cdot Y)g'(W_{j}\cdot Y)\widehat{y}(1-\widehat{y}))Y^{\tensor2}\,.
\end{align}

Note that we may also write the diagonal block in $W$ of the form,
\[
\nabla_{W}^{2}L=-(y-\widehat{y})\text{diag}(v_{i}g''(W_{i}\cdot Y))\tensor Y^{\tensor2}+\widehat{y}(1-\widehat{y})(v\diag(g'(WY)))^{\tensor2}\tensor Y^{\tensor2}\,.
\]
We assume that  $W_i\not\equiv 0$ for all $i$, a set which we call $\mathcal W_0^c$. Then recall that for ReLu activation $g$, we have $g''=\delta_{0}$
in the sense of distributions. Thus as long as $W\in \mathcal W_0^c$, we can drop $g''$ in \pef{eq:XOR-Hessian} for any finite $\lambda<\infty$.

Our aim is to approximate the expected values of~\pef{eq:XOR-Hessian} by their "$\lambda = \infty$" versions, and show that they are within $O(\lambda^{-1/2})$ of one another. 
Let $\bar F(t)=P(G>t)$ where $G\sim N(0,1)$ be the tail probability
of a standard Gaussian. Let us also introduce the notations for $\vartheta \in \{\pm \mu,\pm \nu\}$,  
\begin{align*}
    m_i^\vartheta = W_i \cdot \vartheta\,,  \qquad \text{and}\qquad R_{ii} = W_i \cdot W_j \quad \text{for $1\le i,j\le K$}\,.
\end{align*}
These notations will reappear as summary statistics for the analysis of the SGD in the following sections. 
\begin{lem}
For $X\sim\cN(\vartheta,I_d/\lambda)$ and $W$ a $K\times d$
matrix, we have that
\begin{align}
\norm{\E [g(WX)^{\otimes 2}]-g(W\vartheta)^{\otimes 2}}_{\op} & \lesssim_{K}\psi_{2}(\vartheta,W,\lambda)\,,\label{eq:gW-opnorm-conc}
\end{align}
where 
\begin{align}\label{eq:psi-2}
\psi_{2}(\vartheta,W,\lambda)& =\max_{1\le i,j\le K}(\abs{m_i^\vartheta}^{3}+\lambda^{-3/2}R_{ii}^{3/2})^{1/3}\cdot(\abs{m_j^\vartheta}^{3}+\lambda^{-3/2}R_{jj}^{3/2})^{1/3} \nonumber\\
& \qquad \times \left(\bar F\left(\sqrt\frac{\lambda}{R_{ii}}|m_i^\vartheta|\right)+\bar F\left(\sqrt\frac{\lambda}{R_{jj}}|m_j^\vartheta|\right)\right)^{1/3}+\frac{W_i\cdot W_j}{\lambda}\,.
\end{align}
\end{lem}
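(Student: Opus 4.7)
The approach is to produce an entrywise bound on $\Delta_{ij} := \E[g(W_iX)g(W_jX)] - g(W_i\vartheta)g(W_j\vartheta)$ and then convert to an operator-norm bound via the crude $K\times K$ inequality $\|A\|_{\op}\le \|A\|_F\le K\|A\|_{\max}$, which absorbs the $K$-dependence in the $\lesssim_K$. Writing $X=\vartheta+Z$ with $Z\sim\cN(0,I_d/\lambda)$ and $\xi_i:=W_i\cdot Z\sim \cN(0,R_{ii}/\lambda)$, the problem reduces to a two-variable Gaussian calculation with deterministic shifts: $W_i\cdot X=m_i^\vartheta+\xi_i$, $\E[\xi_i]=0$, and $\E[\xi_i\xi_j]=W_i\cdot W_j/\lambda$.

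The key ingredient is a ``Taylor-type'' decomposition of ReLU obtained from a direct case analysis on the signs of $m_i^\vartheta$ and $m_i^\vartheta+\xi_i$: one checks in each of four cases that
\[
g(m_i^\vartheta+\xi_i)=g(m_i^\vartheta)+\xi_i\mathbf{1}\{m_i^\vartheta>0\}+\bar D_i,\qquad |\bar D_i|\le |\xi_i|\mathbf{1}\{|\xi_i|\ge |m_i^\vartheta|\},
\]
so the remainder $\bar D_i$ is supported on the low-probability sign-flip event. Abbreviating these three summands by $a_i,b_i,c_i$, expanding $(a_i+b_i+c_i)(a_j+b_j+c_j)-a_ia_j$ and taking expectations, the cross terms $\E[a_ib_j]$ and $\E[a_jb_i]$ vanish since $\E\xi_i=0$, leaving
\[
\Delta_{ij}=a_i\,\E c_j+a_j\,\E c_i+\E[b_ib_j]+\E[b_ic_j]+\E[c_ib_j]+\E[c_ic_j].
\]
The covariance term $\E[b_ib_j]=\mathbf{1}\{m_i^\vartheta>0,\,m_j^\vartheta>0\}\frac{W_i\cdot W_j}{\lambda}$ reproduces the second summand of $\psi_2$, so the remaining task is to show that each of the other five terms is bounded by the first summand of $\psi_2$.

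Each of these remainders carries at least one factor of $c_i$ or $c_j$ and is controlled by H\"older's inequality. For the quadratic-in-noise pieces I would use the three-factor form with exponents $(3,3,3)$, e.g.\
\[
|\E[c_ic_j]|\le \E\bigl[|\xi_i\xi_j|\mathbf{1}\{|\xi_i|\ge |m_i^\vartheta|\}\bigr]\le (\E|\xi_i|^3)^{1/3}(\E|\xi_j|^3)^{1/3}\,\mathbb{P}(|\xi_i|\ge |m_i^\vartheta|)^{1/3},
\]
and for the linear-in-noise pieces $a_i\,\E c_j$ and $a_j\,\E c_i$ I would use exponents $(3,3/2)$ together with $\mathbb{P}\le 1$ to replace $\mathbb{P}^{2/3}$ by $\mathbb{P}^{1/3}$ (trading one $(\E|\xi|^3)^{1/3}$ factor for $|m^\vartheta|$). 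Gaussianity then gives $(\E|\xi_i|^3)^{1/3}\lesssim (R_{ii}/\lambda)^{1/2}$ and $\mathbb{P}(|\xi_i|\ge |m_i^\vartheta|)=2\bar F(\sqrt{\lambda/R_{ii}}\,|m_i^\vartheta|)$, and the elementary estimate $|m_i^\vartheta|\vee (R_{ii}/\lambda)^{1/2}\lesssim (|m_i^\vartheta|^3+\lambda^{-3/2}R_{ii}^{3/2})^{1/3}$ (and its $j$-analogue) shows that every remainder is dominated by the first summand of $\psi_2(\vartheta,W,\lambda)$.

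The main obstacle is purely bookkeeping: verifying the ReLU Taylor formula by cases, enumerating the six expectation terms, and matching each to the $\psi_2$ envelope with the correct H\"older exponents. The slightly non-standard choice $p=q=r=3$ (rather than Cauchy--Schwarz) is exactly what produces the $1/3$-power on the Gaussian tail $\bar F$ in the definition of $\psi_2$, while the additive $W_i\cdot W_j/\lambda$ term is the unavoidable leading Gaussian covariance.
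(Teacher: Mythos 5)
Your proposal is correct and rests on the same essential ingredients as the paper's proof: an entrywise reduction of the operator norm, exact computation of the Gaussian covariance to produce the $W_i\cdot W_j/\lambda$ term, and a three-exponent H\"older bound (with powers $3,3,3$) on the sign-flip event $\{W_i\cdot X\text{ and }W_i\cdot\vartheta\text{ differ in sign}\}$, which is exactly what yields the $1/3$ power on $\bar F$ in $\psi_2$. The only difference is organizational: the paper splits directly into cases on the signs of $m_i^\vartheta, m_j^\vartheta$ and treats the product $g(W_i\cdot X)g(W_j\cdot X)$ in each case, whereas you absorb that case analysis into a pointwise linearization $g(m+\xi)=g(m)+\xi\mathbf{1}\{m>0\}+\bar D$ and expand the product algebraically -- both routes land on the same estimates.
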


\begin{proof}
By the equivalence of norms in finite dimensional vector spaces, it
suffices to control the norm entry-wise at the price of a constant
that depends at most on $K$. 

\textbf{Case 1: }$W_{i}\cdot \vartheta,W_{j}\cdot \vartheta\geq0$. In this case we
have that
\begin{align*}
\E [ & g(W_{i}\cdot X)g(W_{j}\cdot X)]-g(W_{i}\cdot \vartheta)g(W_{j}\cdot \vartheta) \\
 &= \mathbb E[(W_i \cdot X)(W_j \cdot X) - (W_i \cdot \vartheta)(W_j\cdot \vartheta)] - \mathbb E[(W_i \cdot X)(W_j\cdot X)\mathbf 1_{W_i \cdot X <0 \cup W_j\cdot X<0 }] \\ 
& =-\E[(W_{i}\cdot X)(W_{j}\cdot X)\mathbf 1_{W_{i}\cdot X<0\cup W_{j}\cdot X<0}]+\frac{W_{i}\cdot W_{j}}{\lambda}\,.
\end{align*}
The absolute value of the first term is bounded, by Holder's inequality and a
union bound, by 
\[
\mathbb E[|W_{i}\cdot X|^3]^{1/3} \mathbb E[|W_{j}\cdot X|^3]^{1/3} (\mathbb P(W_{i}\cdot X<0)+\mathbb P(W_{j}\cdot X<0))^{1/3}\,.
\]
 \textbf{Case 2:} $W_{i}\cdot \vartheta<0$ (the case $W_j \cdot \vartheta <0$ is symmetrical). Here we have 
\[
\E [g(W_{i}\cdot X)g(W_{j}\cdot X)] =\E[(W_{i}\cdot X)(W_{j}\cdot X)\indicator{W_{i}\cdot X,W_{j}\cdot X>0}]\,,
\]
which, by Holder's inequality, is bounded by
\[
\mathbb E[|W_{i}\cdot X|^3]^{1/3} \mathbb E[|W_{j}\cdot X|^3]^{1/3}\mathbb P(W_{i}\cdot X>0)^{1/3}\,.
\]
combining the two cases yields the desired.
\end{proof}

\begin{lem}
There exists a universal $c>0$ such that for $X\sim\cN(\vartheta,I_d/\lambda)$ and $W\in\R^{K\times d}$, 
\begin{equation}
\E[\norm{g(WX)^{\tensor2}}_{\op}^{2}]^{1/2}\leq\norm{W \vartheta}^{4}+6\frac{\norm{W \vartheta}^{2}}{\lambda}+\frac{c}{\lambda^{2}}\norm{W}_{F}^{2}\label{eq:gW-op-norm}\,,
\end{equation}
where $\|\cdot\|_F$ is the Frobenius norm. 
\end{lem}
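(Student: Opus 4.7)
The key structural observation is that for any vector $v \in \R^K$, the rank-one tensor $v^{\otimes 2}$ has operator norm exactly $\norm{v}^2$, so $\norm{g(WX)^{\otimes 2}}_{\op}^{2} = \norm{g(WX)}^{4}$. Since the ReLU satisfies $0 \le g(t)^2 \le t^2$ pointwise, we obtain the deterministic pointwise bound $\norm{g(WX)}^{4} \le \norm{WX}^{4}$, and it therefore suffices to estimate the Gaussian fourth moment $\bE[\norm{WX}^{4}]$.

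The next step is a direct expansion. Writing $X = \vartheta + Z$ with $Z \sim \cN(0, I_d/\lambda)$, one has $WX = W\vartheta + WZ$, so
\begin{equation*}
\norm{WX}^{2} = \norm{W\vartheta}^{2} + 2\,(W\vartheta)^\top WZ + \norm{WZ}^{2}.
\end{equation*}
The plan is to square this identity and take the expectation. Cross-terms of odd total degree in $Z$ vanish, and the surviving ones reduce to three standard Gaussian moment computations: $\bE[((W\vartheta)^\top WZ)^2] = \lambda^{-1}\norm{W^\top W\vartheta}^{2}$, $\bE[\norm{WZ}^2] = \lambda^{-1}\norm{W}_F^{2}$, and $\bE[\norm{WZ}^4] = \var(\norm{WZ}^{2}) + (\bE\norm{WZ}^{2})^2$, where the variance equals $2\lambda^{-2}\norm{WW^\top}_F^{2}$ by the classical variance formula for Gaussian quadratic forms.

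The cleanup step is to collect these pieces into the desired form. Using $\norm{W^\top W\vartheta}^{2} \le \norm{W}_{\op}^{2}\,\norm{W\vartheta}^{2} \le \norm{W}_F^{2}\,\norm{W\vartheta}^{2}$ and $\norm{WW^\top}_F^{2} \le \norm{W}_{\op}^{2}\,\norm{W}_F^{2} \le \norm{W}_F^{4}$, every contribution to $\bE[\norm{WX}^{4}]$ can be bounded by a polynomial in $\norm{W\vartheta}^{2}$ and $\lambda^{-1}\norm{W}_F^{2}$. Taking the square root and applying sub-additivity of $\sqrt{\cdot}$ (together with a Young-type inequality to redistribute the mixed cross-term $\lambda^{-1}\norm{W\vartheta}^{2}\,\norm{W}_F^{2}$ between the second and third summands on the right-hand side) yields the stated bound.

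I expect no conceptual obstacle: the only work is bookkeeping. The slightly delicate part is choosing the split of the cross-term so as to recover the exact coefficients $6$ and the universal constant $c$ claimed; the factor $6$ arises naturally because expanding the square produces the coefficient $2$ on $\norm{W\vartheta}^2 \,\bE\norm{WZ}^{2}$ and $4$ on the variance of $(W\vartheta)^\top WZ$, both of which contribute to the $\lambda^{-1}\norm{W\vartheta}^{2}$ term after the norm comparison.
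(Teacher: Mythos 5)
Your reduction and moment computation follow essentially the same route as the paper's proof: the paper also passes from $\norm{g(WX)^{\otimes 2}}_{\op}^{2}=\norm{g(WX)}^{4}$ to the Gaussian fourth moment of $WX$, except that it works coordinate-by-coordinate, writing $\E[(W_i\cdot X)^4]=\E[(m_i^\vartheta+W_i\cdot Z)^4]$ and summing, whereas you expand $\norm{WX}^4$ as a whole. Your intermediate identity
\[
\E\norm{WX}^4=\norm{W\vartheta}^4+\frac{4\norm{W^\top W\vartheta}^2+2\norm{W\vartheta}^2\norm{W}_F^2}{\lambda}+\frac{\norm{W}_F^4+2\norm{WW^\top}_F^2}{\lambda^2}
\]
is correct, and this is the substantive content of the lemma.

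The gap is in the final "cleanup" step, which cannot deliver the right-hand side as stated. First, taking square roots produces a quantity whose leading term is $\norm{W\vartheta}^2$ with $\lambda^{-1/2}$ corrections, and no Young-type redistribution converts $\norm{W\vartheta}^2$ into $\norm{W\vartheta}^4$; indeed, with the outer power $1/2$ on the left the stated inequality is false in general (take $W$ a single row equal to $\epsilon\vartheta^\top$ with $\epsilon<1$ and $\lambda$ large: the left side is $\approx\epsilon^2$ while the right side is $\approx\epsilon^4$). Second, even comparing $\E\norm{WX}^4$ itself (without the square root, which is what the paper's displayed proof actually bounds) against the stated right-hand side, the terms $4\lambda^{-1}\norm{W^\top W\vartheta}^2$ and $\lambda^{-2}(\norm{W}_F^4+2\norm{WW^\top}_F^2)$ are degree four in $W$ and cannot be dominated by $6\lambda^{-1}\norm{W\vartheta}^2+c\lambda^{-2}\norm{W}_F^2$ with a universal $c$ for arbitrary $W\in\R^{K\times d}$; one needs $\norm{W}_{\op}$, respectively $\norm{W}_F$, bounded. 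You should know that the paper's own proof has the same defect (it silently drops the $\norm{W_i}^2$ factor from $\E[(W_i\cdot Z)^2]=\norm{W_i}^2/\lambda$), so the mismatch traces back to typographical errors in the lemma statement rather than to your method. The clean statement your computation establishes is the display above, and that is all that is ever used downstream: the lemma is invoked only to get $\E[\norm{g(WX)^{\otimes2}}_{\op}^{2}]^{1/2}\lesssim_{B}1$ for parameters confined to a fixed ball $B$, which your expansion gives immediately. To make your write-up airtight, state and prove the display, or restrict to $\norm{W}_F\le L$ and track the $L$-dependence of the constants.
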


\begin{proof}
Let $Z_{i}=W_{i}\cdot Z$. Then 
\begin{align*}
\E(W_{i}\cdot X)^{4} & =\E[(m^\vartheta_{i}+Z_{i})^{4}] =(m_{i}^\vartheta)^{4}+\frac{6(m_{i}^\vartheta)^{2}}{\lambda}+\frac{c\norm{W_i}}{\lambda^{2}}\,,
\end{align*}
so that 
\begin{align*}
\E[\norm{g(WX)^{\tensor2}}_{\op}^{2}] =\E[\norm{g(WX)}^{4}]
 &  \le \norm{W\vartheta}_{4}^{4}+6\frac{\norm{W\vartheta}^{2}}{\lambda}+\frac{c}{\lambda^{2}}\norm{W}_{F}^{2}\,,
\end{align*}
with the bound the following from the fact that $\norm{\cdot}_{p}\geq\norm{\cdot}_{q}$
for $p\geq q$. 
\end{proof}

\begin{lem}
For $X\sim\cN(\vartheta,I_d/\lambda)$ and $v\in\R^{K},$$W\in\R^{K\times d}$ we have
\begin{align}
\E[(\sigma(v\cdot g(WX))^{2}-\sigma(v\cdot g(W\vartheta))^{2})^{2}]\vee\E[(\sigma'(v\cdot g(WX))-\sigma'(v\cdot g(W\vartheta)))^{2}] & \lesssim_{K}\psi_{3}(\frac{1}{\lambda}\norm{v}^{2}\norm{W}_{F}^{2})\,,\label{eq:y-l2-bound}
\end{align}
where $\psi_{3}(x)=x(1+x)$. 
\end{lem}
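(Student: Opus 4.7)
The plan is to reduce both quantities on the left-hand side to a single elementary Gaussian second-moment bound via two Lipschitz estimates: one for the scalar functions $\sigma^2$ and $\sigma'$, and one for the componentwise ReLU $g$.

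First, I would record global Lipschitz constants for the scalar functions. Since $\sigma$ maps into $[0,1]$ and $\sigma' = \sigma(1-\sigma)\in[0,\tfrac14]$, one has $(\sigma^2)' = 2\sigma\sigma' \in [0,\tfrac12]$, and $\sigma'' = \sigma'(1-2\sigma)$ satisfies $|\sigma''|\le\sigma'\le\tfrac14$. Hence both $\sigma^2$ and $\sigma'$ are globally Lipschitz, with constants at most $\tfrac12$ and $\tfrac14$ respectively. Applying these bounds with $a = v\cdot g(WX)$ and $b = v\cdot g(W\vartheta)$ reduces both quantities on the left of \eqref{eq:y-l2-bound} to controlling
$$\E\bigl[(v\cdot g(WX)-v\cdot g(W\vartheta))^{2}\bigr].$$

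Second, I would use the $1$-Lipschitz property of ReLU componentwise: $|g(W_i\cdot X)-g(W_i\cdot\vartheta)|\le |W_i\cdot(X-\vartheta)|=|W_i\cdot Z|$, with $Z = X-\vartheta\sim\cN(0,I_d/\lambda)$. Cauchy--Schwarz in the $v$-inner product then gives
$$\bigl(v\cdot(g(WX)-g(W\vartheta))\bigr)^{2}\le \|v\|^{2}\sum_{i=1}^{K}(W_i\cdot Z)^{2}=\|v\|^{2}\|WZ\|^{2}.$$
Since $\E[\|WZ\|^{2}]=\lambda^{-1}\|W\|_F^{2}$, we obtain
$$\E\bigl[(v\cdot g(WX)-v\cdot g(W\vartheta))^{2}\bigr]\le \frac{\|v\|^{2}\|W\|_F^{2}}{\lambda}=:x.$$

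Combining, both quantities on the left of \eqref{eq:y-l2-bound} are bounded by a $K$-independent multiple of $x$, which is in turn bounded by $\psi_3(x)=x(1+x)$. There is no essential obstacle: the only mild subtlety is that the linear bound we derive is actually sharper than the stated $\psi_3$ form, so the inequality holds comfortably. The $\psi_3$ shape is useful in the sequel (it accommodates places where sub-exponential tails, rather than bounded observables, enter), but for this statement the bounded Lipschitz argument suffices.
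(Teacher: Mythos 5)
Your proof is correct, and it is slightly cleaner than the paper's. The overall skeleton is the same — both arguments reduce the left-hand side to controlling $v\cdot(g(WX)-g(W\vartheta))$, then use the $1$-Lipschitz property of ReLU, Cauchy--Schwarz, and Gaussian moments of $WZ$ with $Z=X-\vartheta$. The difference is in how the outer functions are handled: the paper Taylor-expands $\sigma^2$ (and $\sigma'$) to second order, which produces both a linear and a quadratic remainder term and therefore forces it to also bound the fourth moment $\E[(v\cdot(g(WX)-g(W\vartheta)))^4]\lesssim_K \lambda^{-2}\norm{v}^4\norm{W}_F^4$; the sum of the second- and fourth-moment contributions is exactly where the shape $\psi_3(x)=x(1+x)$ comes from. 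You instead invoke global Lipschitz bounds on $\sigma^2$ and $\sigma'$ (constants $\le\tfrac12$ and $\le\tfrac14$, as you verify), which collapses everything to the single second-moment estimate $\E[\norm{WZ}^2]=\lambda^{-1}\norm{W}_F^2$ and yields the sharper linear bound $x\le\psi_3(x)$. Your observation that the stated $\psi_3$ form is weaker than necessary for this lemma is accurate; nothing downstream in the paper requires more than what you prove.
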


\begin{proof}
Observe that 
\begin{align}\begin{split}\label{e:tb}
|\sigma(v\cdot & g(WX))^{2} -\sigma(v\cdot g(W\vartheta))^{2}| \\
& \le \sigma'(v\cdot g(W\vartheta)) |v\cdot (g(WX) - g(W\vartheta))|+ O(|v\cdot g(WX) - g(W\vartheta)|^2) \\ 
& \lesssim |v\cdot (g(WX) - g(W\vartheta))| + |v\cdot(g(WX) - g(W\vartheta))|^2\,,
\end{split}\end{align}
and similarly for $|\sigma'(v\cdot g(WX)) - \sigma'(v\cdot g(W\vartheta))|$
where we used that $\sigma,\sigma',\sigma''$ are all uniformly bounded. By Jensen's inequality, it will suffice to bound the expectation of the quadratic terms. 
We begin by noting that 
\[
\norm{g(WX)-g(W\vartheta)}^{2}\leq\sum_{1\le i \le K}(W_{i}\cdot Z)^{2}\,,
\]
so that 
\begin{align*}
\E[(v\cdot(g(WX)-g(W\vartheta)))^{4}] & 
 \leq\norm{v}^{4}\E[\norm{WZ}^{4}] 
  \leq\norm{v}^{4}\frac{1}{\lambda^{2}}\norm{W}_{F}^{4}\,.
\end{align*}
Consequently, taking the expectation of the right-hand side of \pef{e:tb} squared, gives
\begin{align*}
\mathbb E[(\sigma(v\cdot g(WX))^2 - \sigma(v\cdot g(W\vartheta))^2)^2] & \lesssim\E[(v\cdot(g(WX)-g(W\vartheta))^{2}]+\E[(v\cdot(g(WX)-g(W\vartheta)))^{4}] \\
&  \lesssim_{K}\frac{1}{\lambda}\sqrt{\norm{v}_{2}^{4}\norm{W}_{F}^{4}}+\frac{1}{\lambda^{2}}\norm{v}_{2}^{4}\norm{W}_{F}^{4}=\psi_{3}(\frac{1}{\lambda}\norm{v}_{2}^{2}\norm{W}_{F}^{2})\,,
 \end{align*}
and the analogue of it for $\E[(\sigma'(v\cdot g(WX))-\sigma'(v\cdot g(W\vartheta)))^{2}]$ follows similarly. 
\end{proof}

\subsection{Population Hessian estimates}
In this section, we study the population Hessian matrices. Our large $\lambda$ approximations will be uniform over compact sets in parameter space. Thus, we will use $B$ to denote a ball in parameter space, so that any constant dependencies on the choice of $B$ are just dependencies on its radius.

\begin{lem}\label{lem:population-Hessian-approximation}
For $(v, W)\in B$ the first layer block of the population Hessian matrix satisfy 
  \begin{align}\label{e:Hvterm}
\norm{\E[\nabla_{vv}^{2}L]-\frac{1}{4}\sum_{\vartheta\in\{\pm\mu,\pm \nu\}}\sigma'(v\cdot g(W\vartheta))g(W\vartheta)^{\tensor2}}_{\op} & \lesssim_{K,B}\psi_{2}(\vartheta,W,\lambda)+\frac{1}{\lambda}\,,
  \end{align}
  where $\psi_2$ was defined in~\pef{eq:psi-2}. For $(v,W)\in B$ with $W\in \mathcal W_0^c$, the diagonal second layer blocks of the population Hessian satisfy 
\begin{align}\label{e:HWterm}
    \norm{\E [\nabla^2_{W_iW_i}L]-&\frac{v_i^2}{4}\sum_{\vartheta\in\{\pm \mu,\pm \nu\}}\sigma(v g(m_i^\vartheta))(1-\sigma(v g(m_i^\vartheta)))F\left(\sqrt{\frac{\lambda}{R_{ii}}} m_i^\vartheta \right)\vartheta^{\tensor2}}_{\op}\nonumber\\
    &\lesssim\max_{\vartheta\in\{\pm \mu,\pm \nu\}}\left(\frac{1}{\sqrt{\lambda}}\cdot \bar F\left(\sqrt{\frac{\lambda}{R_{ii}}} |m_i^\vartheta|\right)^{1/2}+\frac{1}{\lambda}\right)\,.
    \end{align}
where we remind the reader that $m_i^\vartheta = W_i \cdot \vartheta$, $R_{ii} = W_i \cdot W_i$, and $F$ and $\bar F$ are the cdf and tail of a standard Gaussian respectively. 
\end{lem}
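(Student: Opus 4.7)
The strategy for both estimates is the same: conditioning on the XOR mixture component $\vartheta\in\{\pm\mu,\pm\nu\}$, we replace the Gaussian-averaged quantities by their values at $Y=\vartheta$ using the three approximation lemmas \pef{eq:gW-opnorm-conc}, \pef{eq:gW-op-norm}, and \pef{eq:y-l2-bound}. For the $W_iW_i$ block we exploit that the ReLU $g$ satisfies $(g')^2(t)=\mathbf 1_{t>0}$ almost surely under the Gaussian law of $Y$, so that on $\mathcal W_0^c$ (where the $g''$ contribution is absent) $\nabla^2_{W_iW_i}L$ reduces to an indicator times a smooth function of $Y$.

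For \pef{e:Hvterm}, starting from $\nabla^2_{vv}L=\sigma'(v\cdot g(WY))\,g(WY)^{\otimes 2}$ and conditioning on $\vartheta$, write
\begin{align*}
\mathbb E[\sigma'(v\cdot g(WY_\vartheta))&\,g(WY_\vartheta)^{\otimes 2}] - \sigma'(v\cdot g(W\vartheta))\,g(W\vartheta)^{\otimes 2} \\
&= \sigma'(v\cdot g(W\vartheta))\bigl(\mathbb E[g(WY_\vartheta)^{\otimes 2}] - g(W\vartheta)^{\otimes 2}\bigr) \\
&\quad + \mathbb E\bigl[(\sigma'(v\cdot g(WY_\vartheta)) - \sigma'(v\cdot g(W\vartheta)))\,g(WY_\vartheta)^{\otimes 2}\bigr].
\end{align*}
The first summand has operator norm at most $\|\sigma'\|_\infty\,\psi_2(\vartheta,W,\lambda)$ by \pef{eq:gW-opnorm-conc}. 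For the second, Cauchy--Schwarz combined with \pef{eq:y-l2-bound} and \pef{eq:gW-op-norm} yields a contribution of order $\sqrt{\psi_3(\|v\|^2\|W\|_F^2/\lambda)}\cdot \mathbb E[\|g(WY_\vartheta)^{\otimes 2}\|_{\op}^2]^{1/2}$, which is controlled uniformly on $B$ by the right-hand side of the claim after averaging the four conditional bounds with weights $1/4$.

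For \pef{e:HWterm}, on $\mathcal W_0^c$ we have $\nabla^2_{W_iW_i}L=v_i^2\,\mathbf 1_{W_i\cdot Y>0}\,\sigma'(v\cdot g(WY))\,Y^{\otimes 2}$ almost surely. Conditioning on $\vartheta$, we first replace $\sigma'(v\cdot g(WY_\vartheta))$ by the deterministic constant $\sigma'(v\cdot g(W\vartheta))$, with the error again controlled by \pef{eq:y-l2-bound} and Cauchy--Schwarz. It then remains to analyze $\mathbb E[\mathbf 1_{W_i\cdot Y_\vartheta>0}Y_\vartheta^{\otimes 2}]$. Writing $Y_\vartheta=\vartheta+Z$ with $Z\sim\mathcal N(0,I_d/\lambda)$ and expanding,
\[
\mathbb E[\mathbf 1_{W_i\cdot Y_\vartheta>0}Y_\vartheta^{\otimes 2}] = F\Bigl(\tfrac{\sqrt\lambda}{\sqrt{R_{ii}}}m_i^\vartheta\Bigr)\vartheta^{\otimes 2} + 2\,\Sym\bigl(\vartheta\otimes \mathbb E[\mathbf 1\cdot Z]\bigr) + \mathbb E[\mathbf 1\cdot Z^{\otimes 2}].
\]
Decomposing $Z=(W_i\cdot Z/R_{ii})W_i+Z^\perp$ with $Z^\perp\perp W_i$ independent of the indicator, the cross term is the rank-one matrix $\vartheta\otimes W_i$ scaled by a one-dimensional truncated Gaussian first moment of order $\lambda^{-1/2}R_{ii}^{-1/2}\phi(\sqrt{\lambda/R_{ii}}\,m_i^\vartheta)$, which the standard tail/density comparison $\phi(x)\lesssim \bar F(|x|)^{1/2}$ bounds by $\lambda^{-1/2}\bar F(\sqrt{\lambda/R_{ii}}\,|m_i^\vartheta|)^{1/2}$. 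The quadratic term $\mathbb E[\mathbf 1\cdot Z^{\otimes 2}]$ splits by the same orthogonal decomposition into $\mathbb E[\mathbf 1]\cdot I_d/\lambda$ plus a rank-one correction along $W_i/\sqrt{R_{ii}}$, both of operator norm $O(1/\lambda)$. Summing over the four choices of $\vartheta$ yields the claim.

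The main technical obstacle is the $W_iW_i$ analysis: one needs operator-norm estimates on the truncated Gaussian moments $\mathbb E[\mathbf 1 Z]$ and $\mathbb E[\mathbf 1 Z^{\otimes 2}]$ that correctly capture their dependence on the tail probability $\bar F(\sqrt{\lambda/R_{ii}}\,|m_i^\vartheta|)$, so that the overall error interpolates smoothly between the regime where $|m_i^\vartheta|$ is large (giving an exponentially small correction in which the ReLU is effectively always on or always off) and where $m_i^\vartheta$ is small (giving a bulk correction of order $1/\sqrt\lambda$). The $vv$-block bound, by contrast, is essentially a direct assembly of the three preceding lemmas together with the Cauchy--Schwarz step above.
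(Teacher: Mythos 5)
Your proposal is correct and follows essentially the same route as the paper: condition on the mixture component $\vartheta$, compare the Gaussian averages to their values at $Y=\vartheta$ via \pef{eq:gW-opnorm-conc}, \pef{eq:gW-op-norm} and \pef{eq:y-l2-bound} for the $vv$-block, and for the $W_iW_i$-block use $(g')^2=g'$ on $\mathcal W_0^c$ and expand $\E_\vartheta[\indicator{W_i\cdot Y>0}Y^{\otimes 2}]$ into the cdf term plus linear and quadratic truncated-Gaussian corrections. The only cosmetic difference is in the linear cross term, which you evaluate explicitly through the decomposition $Z=(W_i\cdot Z/R_{ii})W_i+Z^\perp$ together with the comparison $\phi(x)\lesssim \bar F(|x|)^{1/2}$, whereas the paper reaches the same $\lambda^{-1/2}\bar F(\sqrt{\lambda/R_{ii}}\,|m_i^\vartheta|)^{1/2}$ bound via the symmetry $\E_\vartheta[\indicator{W_i\cdot Y>0}\langle u,Z\rangle]=-\E_\vartheta[\indicator{W_i\cdot Y<0}\langle u,Z\rangle]$ followed by Cauchy--Schwarz; both are valid.
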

\begin{proof}
We begin with the estimate on the $vv$ block. Recall from~\pef{eq:XOR-Hessian} that
\begin{align}
\E[\nabla_{vv}^{2}L]=\E[\sigma'(v\cdot g(WY))g(WY)^{\tensor2}]\,.
\end{align}
By conditioning on the value of $\vartheta$, writing $\mathbb E_{\vartheta}$ for the conditional expectation, it suffices to bound
\[
\norm{\E_{\vartheta}[\sigma'(v\cdot g(W Y))g(WY)^{\tensor2}]-\sigma'(v\cdot g(W\vartheta))g(W\vartheta)^{\tensor2}}_{\op}\,,
\]
for each $\vartheta\in \{\pm \mu,\pm\nu\}$. To this end, we write 
\begin{align*}
\E_{\vartheta} &\left[\sigma'(v\cdot g(WY))g(WY)^{\tensor2}\right]-\sigma'(v\cdot g(W\vartheta))g(W\vartheta)^{\tensor2}\\  &=\E_\vartheta[\sigma'(v\cdot g(W\vartheta)))\cdot(g(WY)^{\tensor2}-g(W\vartheta)^{\tensor2})]  +\E_{\vartheta}[(\sigma'(v\cdot g(WY))-\sigma'(v\cdot g(W\vartheta)))g(WY)^{\tensor2}]\\
 & =:(i)+(ii)\,.
\end{align*}
By uniform boundedness of $\sigma'$, we then have by \pef{eq:gW-opnorm-conc}
that $\norm{(i)}_{\op}\lesssim_{K}\psi_{2}(\vartheta,W,\lambda).$ On the
other hand, by \pef{eq:gW-op-norm} and \pef{eq:y-l2-bound},
\begin{align*}
\norm{(ii)}_{\op} & \leq (\E[(\sigma'(v\cdot g(WY))-\sigma'(v\cdot g(W\vartheta)))^2])^{1/2}(\E\norm{g(WY)}_{\op}^{2})^{1/2}
  \lesssim_{B,K}\frac{1}{\lambda}\,.
\end{align*}

We now turn to the $W_iW_i$ block of the population Hessian. Recall from~\pef{eq:XOR-Hessian} and the fact that $W\in \mathcal W_0^c$, that 
\[
\E[\nabla_{W_{i}W_{i}}^{2}L]=\E[\widehat{y}(1-\widehat{y}) v^2_{i}g'(W_{i}\cdot Y)Y^{\otimes 2}]\,,
\]
(since $(g'(x))^2 = g'(x)$). 
We now aim to show the following two bounds, the second of which will give the desired bound of~\pef{e:HWterm}: for each $i$, we have that 
\begin{align}\label{e:yhat-prelim-bound}
\norm{\E [v_{i}^{2}g'(W_{i}\cdot Y)Y^{\tensor2}]-\frac{v_i^2}{4}\sum_{\vartheta\in\{\pm \mu,\pm \nu\}} F\Big(\sqrt{\frac{\lambda}{R_{ii}}} m_i^\vartheta\Big)\vartheta^{\tensor2}}\lesssim\max_{\vartheta\in\{\pm \mu,\pm \nu\}}\Big(\frac{1}{\sqrt{\lambda}} \bar F\Big(\sqrt{\frac{\lambda}{R_{ii}}} |m_i^\vartheta|\Big)^{1/2}\!+\frac{1}{\lambda}\Big)\,,
\end{align}
and 
\begin{align}\label{e:yhat}
    \norm{\E [\widehat{y}(1-\widehat{y}) v_{i}^{2}g'(W_{i}\cdot Y)Y^{\tensor2}]- & \frac{v_i^2}{4}\sum_{\vartheta\in\{\pm \mu,\pm \nu\}}\sigma(v \cdot g(W\vartheta))(1-\sigma(v \cdot g(W\vartheta))) F\Big(\sqrt{\frac{\lambda}{R_{ii}}} m_i^\vartheta\Big)\vartheta^{\tensor2}} \nonumber\\
    &\lesssim\max_{\vartheta\in\{\pm \mu,\pm \nu\}}\Big(\frac{1}{\sqrt{\lambda}}\cdot \bar F\Big(\sqrt{\frac{\lambda}{R_{ii}}} |m_i^\vartheta|\Big)^{1/2}+\frac{1}{\lambda}\Big).
\end{align}

Conditioning on $\vartheta$ and pulling out $v_i^2$ in~\pef{e:yhat-prelim-bound}, it suffices to bound
\begin{align*}
\E_{\vartheta}[g'(W_{i}\cdot Y)Y^{\tensor2}] & =\mathbb P_{\vartheta}(W_{i}\cdot Y>0)\vartheta^{\otimes 2}+2\Sym\E_{\vartheta}[\indicator{W_{i}\cdot Y>0}Z]\tensor \vartheta+\E_{\vartheta}[\indicator{W_{i}\cdot Y>0}Z^{\otimes 2}]\\
 & =:(i)+(ii)+(iii)\,.
\end{align*}
Term (i) is exactly the term to which we are comparing. In
particular 
\[
\mathbb P_{\vartheta}(W_{i}\cdot Y>0)=P(W_{i}\cdot \vartheta>W_{i}\cdot Z)= F\Big(\sqrt{\frac{\lambda}{R_{ii}}}m_i^\vartheta\Big)\,.
\]
We thus wish to bound the operator norms of (ii) and (iii). 
For term (ii), observe that for $u\in\bS^{d-1}$, since $\E_{\vartheta}\left\langle u,Z\right\rangle =0$,
we have that 
\[
\E_{\vartheta}[\indicator{W_{i}\cdot Y>0}\left\langle u,Z\right\rangle] =-\E_{\vartheta}[\indicator{W_{i}\cdot Y<0}\left\langle u,Z\right\rangle]\,.
\]
Thus we may bound this, by Cauchy-Schwarz by 
\begin{align*}
|\E_{m}[\indicator{W_{i}\cdot Y>0}\left\langle u,Z\right\rangle] | & \leq \mathbb E[\langle u,Z\rangle^2]^{1/2} \mathbb P_\vartheta((W_{i}\cdot Y) \sgn(m_i^\vartheta)<0)^{1/2}  =\frac{1}{\sqrt{\lambda}}\cdot \bar F\left(\sqrt{\frac{\lambda}{R_{ii}}} |m_i^\vartheta|\right)^{1/2}\,.
\end{align*}
It follows, using that $\|\vartheta\|=1$, that  
\[
\norm{(ii)}_{\op}=\sup_{u}\abs{\left\langle u,(ii)u\right\rangle }\leq\frac{2}{\sqrt{\lambda}}\cdot \bar F\left(\sqrt{\frac{\lambda}{R_{ii}}} |m_i^\vartheta|\right)^{1/2}.
\]
Finally, for  term $(iii)$ we do a naive bound to
get that for every $u\in \mathbb{S}^{d-1}$, 
\begin{align*}
\abs{\left\langle u,(iii)u\right\rangle } & \leq\abs{\E_{\vartheta}[\indicator{W_{i}\cdot Y>0}\left\langle u,Z\right\rangle ^{2}]}\leq(\E\left\langle u,Z\right\rangle ^{4})^{1/2}\lesssim\frac{1}{\lambda}\,.
\end{align*}

In order to show~\pef{e:yhat}, we can rewrite it as
\begin{align}\label{e:yhat2}
    &\phantom{{}={}}\norm{\E_\vartheta [\widehat{y}(1-\widehat{y}) g'(W_{i}\cdot Y)Y^{\tensor2}]-\sigma(v \cdot g(W\vartheta))(1-\sigma(v \cdot g(W\vartheta)))F\left(\sqrt{\frac{\lambda}{R_{ii}}} m_i^\vartheta\right)\vartheta^{\tensor2}}\\
 & \qquad =\norm{\E_\vartheta[ \widehat{y}(1-\widehat{y}) g'(W_{i}\cdot Y)Y^{\tensor2}]-\sigma(v \cdot g(W\vartheta))(1-\sigma(v \cdot g(W\vartheta)))\E_\vartheta  [g'(W_{i}\cdot Y)Y^{\tensor2}]} \nonumber\\
 & \qquad +\sigma(v g(W\vartheta))(1-\sigma(v g(W\vartheta)))\norm{\E_\vartheta [g'(W_{i}\cdot Y)Y^{\tensor2}]-F\left(\sqrt{\frac{\lambda}{R_{ii}}} m_i^\vartheta\right)\vartheta^{\tensor2}}=:(i)+(ii)\,. \nonumber
\end{align}
Since $\sigma(v g(W\vartheta))(1-\sigma(v g(W\vartheta)))\leq 1$, (ii) on the right-hand side in~\pef{e:yhat2} can be bounded as in \pef{e:yhat-prelim-bound} (without the $v_i$ terms). 
Term (i) on the right-hand side in~\pef{e:yhat2}, can be bounded as  
\[
(i)\leq\mathbb E_{\vartheta}[(\sigma(v\cdot g(W_{i}\cdot Y))(1-\sigma(v\cdot g(W_{i}\cdot Y)))-\sigma(v\cdot g(W_{i}\cdot \vartheta))(1-\sigma(v\cdot g(W_{i}\cdot Y))))^2]^{1/2}\sup_{\norm{u}_{2}\leq1}\E_{\vartheta}[\left\langle u,Y\right\rangle^{4}]^{1/2}
\]
where we used that $|g'(W_i\cdot Y)|\leq 1$.
Since $\E_{\vartheta}[\langle u,Y\rangle ^{4}]\lesssim1+\frac{1}{\lambda}+\frac{1}{\lambda^{2}}$
for $\norm{u}_{2}\leq1$, we have by \pef{eq:y-l2-bound} that $\norm{(i)}\lesssim_{B}\frac{1}{\lambda}$, 
from which the desired follows.
\end{proof}

\subsection{Bounds on population G-matrix}
We now turn to obtaining analogous $\lambda$-large approximations to the blocks of the population G-matrix. In what follows, let $y_\vartheta = 1$ if $\vartheta \in \{\pm \mu\}$ and $y_\vartheta = 0$ if $\vartheta \in \{\pm \nu\}$ be the label given that the mean is $\vartheta$.

\begin{lem}\label{lem:population-Gram-approximation}
For all $(v,W)\in B$, the first layer block of the population G-matrix satisfies
  \begin{align}\label{e:Gvterm}
\norm{\E[\nabla_v L^{\otimes 2}]-\frac{1}{4}\sum_{\vartheta\in \{\pm \mu, \pm \nu\}}(y_\vartheta -\sigma(v\cdot g(W\vartheta)))^{2}g(W\vartheta)^{\tensor2}}\lesssim_{K,B}\psi_{2}(\vartheta,W,\lambda)+\frac{1}{\lambda}\,,
  \end{align}
  and the second layer blocks satisfies,
   \begin{align}\label{e:GWterm}
\norm{\E[\nabla_{W_{i}} L]^{\otimes 2}-v_{i}^2A}\lesssim\frac{1}{\sqrt{\lambda}}\max_{\vartheta\in\{\pm \mu,\pm \nu\}} \bar F\left(\sqrt{\frac{\lambda}{R_{ii}}} |m_i^\vartheta|\right)^{1/2}+\frac{1}{\lambda}\,,
  \end{align}
where 
\begin{equation}
A=\frac{1}{4}\sum_{\vartheta\in\{\pm\mu,\pm \nu\}}(y_{\vartheta}-\sigma(v\cdot g(W\vartheta)))^{2}F\left(\sqrt{\frac{\lambda}{R_{ii}}} m_i^\vartheta\right)\vartheta^{\tensor2}\,.
\label{eq:A0}
\end{equation}
\end{lem}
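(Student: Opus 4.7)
The plan is to mirror the structure of Lemma~\ref{lem:population-Hessian-approximation}, where the only essential change is that the prefactor $(y-\widehat y)^2$ appearing in the G-matrix \pef{e:gv}--\pef{e:gw} replaces the prefactors $\sigma'(v\cdot g(WY))$ and $\widehat y(1-\widehat y)$ that appeared in the Hessian. Since the label $y_\vartheta$ is determined by the hidden mean $\vartheta \in \{\pm\mu,\pm\nu\}$, we condition on $\vartheta$, write $\E = \frac{1}{4}\sum_\vartheta \E_\vartheta$, and reduce to estimating the per-$\vartheta$ expectations.

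For the $vv$-block, set $c_\vartheta := y_\vartheta - \sigma(v\cdot g(W\vartheta))$. We use the splitting
\[
\E_\vartheta[(y_\vartheta-\widehat y)^2 g(WY)^{\tensor 2}] - c_\vartheta^2 g(W\vartheta)^{\tensor 2} = c_\vartheta^2 \E_\vartheta[g(WY)^{\tensor 2}-g(W\vartheta)^{\tensor 2}] + \E_\vartheta[((y_\vartheta-\widehat y)^2 - c_\vartheta^2) g(WY)^{\tensor 2}]\,.
\]
The first piece is handled by \pef{eq:gW-opnorm-conc} together with the trivial bound $c_\vartheta^2 \le 1$, which together deliver the $\psi_2+1/\lambda$ error. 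The second piece uses Cauchy-Schwarz: since $|y_\vartheta - \widehat y|, |c_\vartheta|\le 1$, we have $|(y_\vartheta-\widehat y)^2 - c_\vartheta^2|\le 2|\widehat y - \sigma(v\cdot g(W\vartheta))|$, which in turn is $\lesssim |v\cdot (g(WY)-g(W\vartheta))|$ by the Lipschitz property of $\sigma$. Applying Cauchy-Schwarz and using the $L^2$ estimates \pef{eq:gW-op-norm} and \pef{eq:y-l2-bound} (both of which are invoked exactly as in the proof of~\pef{e:yhat}) then gives an $O_{K,B}(1/\lambda)$ contribution.

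For the $W_i W_i$-block, we rewrite $\nabla_{W_i}L^{\tensor 2}$ using $(g')^2 = g'$ for ReLU (valid for $W\in\mathcal W_0^c$) as $v_i^2(y-\widehat y)^2 g'(W_i \cdot Y) Y^{\tensor 2}$. We then perform the analogous split
\[
\E_\vartheta[(y_\vartheta-\widehat y)^2 g'(W_i\cdot Y) Y^{\tensor 2}] - c_\vartheta^2 F\bigl(\sqrt{\lambda/R_{ii}}\,m_i^\vartheta\bigr) \vartheta^{\tensor 2}
\]
into a $c_\vartheta^2$ times~\pef{e:yhat-prelim-bound} contribution, plus a $((y_\vartheta-\widehat y)^2 - c_\vartheta^2)$ remainder. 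The first piece is exactly controlled by~\pef{e:yhat-prelim-bound}, which produces the leading tail term $\frac{1}{\sqrt\lambda}\bar F(\sqrt{\lambda/R_{ii}}|m_i^\vartheta|)^{1/2}+1/\lambda$. The remainder is again bounded by Cauchy-Schwarz, this time using $|g'(W_i\cdot Y)|\le 1$ and $\E_\vartheta\|Y\|^4 \lesssim 1$ on the bounded set $B$, together with the Lipschitz estimate on $\sigma$ via~\pef{eq:y-l2-bound}, yielding an $O_{K,B}(1/\lambda)$ contribution. The factor $v_i^2$ is pulled out at the start, producing the $v_i^2 A$ on the left-hand side of~\pef{e:GWterm}.

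The main point of care, not a genuine obstacle, is keeping track of the bookkeeping when replacing the deterministic prefactor $\sigma(v\cdot g(W\vartheta))(1-\sigma(v\cdot g(W\vartheta)))$ of the Hessian estimate~\pef{e:yhat} by the different prefactor $(y_\vartheta-\sigma(v\cdot g(W\vartheta)))^2$ of the G-matrix estimate, and verifying that the Lipschitz-based Cauchy-Schwarz reduction really does absorb the random label term $(y-\widehat y)^2$ at the claimed order. Summing over the four values of $\vartheta$ with weight $1/4$ then yields~\pef{e:Gvterm}--\pef{e:GWterm}.
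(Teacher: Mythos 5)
Your proposal is correct and follows essentially the same route as the paper's proof: condition on the hidden mean $\vartheta$, split each block into a ``frozen prefactor times $g(WY)^{\tensor2}$ (resp.\ $g'(W_i\cdot Y)Y^{\tensor2}$) fluctuation'' term controlled by \pef{eq:gW-opnorm-conc} (resp.\ \pef{e:yhat-prelim-bound}), plus a ``prefactor fluctuation'' term controlled by Cauchy--Schwarz together with \pef{eq:gW-op-norm} and \pef{eq:y-l2-bound}. Your reformulation via $c_\vartheta=y_\vartheta-\sigma(v\cdot g(W\vartheta))$ is just a uniform way of writing the paper's case analysis on the sign in the sigmoid, so the two arguments coincide.
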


\begin{proof}
We begin with the $v$-block. Recalling~\pef{e:gv},  and conditioning on the mean being $\vartheta$, it suffices to show for $\vartheta\in \{\pm \mu,\pm\nu\}$, 
\[
\norm{\E_{\vartheta}[\sigma(v\cdot g(WY))^{2}g(WY)^{\tensor2}]-\sigma(v\cdot g(W\vartheta))^{2}g(W\vartheta)^{\tensor2}}\lesssim_{K,B}\psi_{2}(\vartheta,W,\lambda)+\frac{1}{\lambda}\,,
\]
and similarly with negative signs in the sigmoids, to account for the $y_\vartheta$ term when relevant. 
The proofs are similar, so we just do the first, with a fixed choice of $\vartheta$. 
We begin by writing  
\begin{align*}
&\phantom{{}={}}\E_{\vartheta}[\sigma(v\cdot g(WY))^{2}g(WY)^{\tensor2}]-\sigma(v\cdot g(W\vartheta))^{2}g(W\vartheta)^{\tensor2} \\ & \qquad =\E_{\vartheta}\{[\sigma(v\cdot g(WY))^{2}-\sigma(v\cdot g(W\vartheta))^{2}]g(WY)^{\tensor2}\} +\E_{\vartheta}\{\sigma(v\cdot g(W\vartheta))[g(WY)^{\tensor2}-g(W\vartheta)^{\tensor2}]\}\\
 & \qquad =:(i)+(ii)\,.
\end{align*}
We begin with bounding the operator norm of term (i). To this end, by Cauchy--Schwarz, then~\pef{eq:gW-op-norm} and~\pef{eq:y-l2-bound}, 
\begin{align*}
\norm{(i)}_{\op} & \leq\E_{\vartheta}[\norm{g(WY)^{\tensor2}}_{\op}^{2}]^{1/2} \mathbb E_\vartheta[(\sigma(v\cdot g(WY))-\sigma(v\cdot g(W\vartheta)))^2]^{1/2}\,.
 \lesssim_{B}\frac{1}{\lambda}
\end{align*}
For term (ii), since $|\sigma|\le 1$,  
\[
\norm{(ii)}_{\op}\le\norm{\E_{\vartheta}[g(WY)^{\tensor2}]-g(W\vartheta)^{\tensor2}}_{\op}\lesssim_{K}\psi_{2}(\vartheta,W,\lambda)\,,
\]
by \pef{eq:gW-opnorm-conc}. Combining these two and averaging over $\vartheta$ yields~\pef{e:Gvterm}. 

Recalling~\pef{e:gw}, and conditioning on the mean being $\vartheta$, it suffices to control the operator norm of 
\begin{align*}
    \mathbb E_{\vartheta}[(\widehat y - y_\vartheta)^2 g'(W_i \cdot Y) Y^{\otimes 2}] - A_\vartheta
\end{align*}
where $A_\vartheta$ is the corresponding summand in~\pef{eq:A0}. Fix a $\vartheta$ with $y_\vartheta=0$, the choices being analogous. 
In this case we wish to bound the difference, 
\[
\E_{\vartheta}[\sigma(v\cdot g(WY))^{2}g'(W_{i}\cdot Y)Y^{\tensor2}]-\sigma(v\cdot g(W\vartheta))^{2}F(\sqrt{\frac{\lambda}{R_{ii}}}m_i^\vartheta)\vartheta^{\tensor2}=(i)+(ii)\,,
\]
where
\begin{align*}
(i) & :=\E_{\vartheta}[(\sigma(v\cdot g(WY))^{2}-\sigma(v\cdot g(W\vartheta))^{2})g'(W_{i}\cdot Y)Y^{\tensor2}]\,,\\
(ii) & :=\sigma(v\cdot g(W\vartheta))^{2}\left(\E_{\vartheta}[g'(W_{i}\cdot Y)Y^{\tensor2}]-F\left(\sqrt{\frac{\lambda}{R_{ii}}} m_i^\vartheta\right)\vartheta^{\tensor2}\right)\,.
\end{align*}
By~\pef{e:yhat-prelim-bound} and the
boundedness of $\sigma$,
\[
\norm{(ii)}_{\op}\lesssim\max_{\vartheta\in\{\pm \mu,\pm \nu\}}\left(\frac{1}{\sqrt{\lambda}}\cdot \bar F\left(\sqrt{\frac{\lambda}{R_{ii}}} |m_i^\vartheta|\right)^{1/2}+\frac{1}{\lambda}\right)\,.
\]
For $(i)$, we bound its operator norm by Cauchy--Schwarz, as  
\[
\norm{(i)}_{\op}\leq\mathbb E[(\sigma(v\cdot g(W_{i}\cdot Y))^{2}-\sigma(v\cdot g(W_{i}\cdot \vartheta))^{2})^2]^{1/2} \sup_{\norm{u}_{2}\leq1}\E_{\vartheta}[\left\langle u,Y\right\rangle ^{4}]\,,
\]
where we used that $|g'(W_i\cdot Y)|\leq 1$.
Since $\E_{\vartheta}[\left\langle u,Y\right\rangle ^{4}]\lesssim1+\frac{1}{\lambda}+\frac{1}{\lambda^{2}}$
for $\norm{u}_{2}\leq1$, we have by \pef{eq:y-l2-bound} that $\norm{(i)}\lesssim_{B}\frac{1}{\lambda}$, 
 from which the desired follows.
\end{proof}

\section{Analysis of the SGD trajectories}\label{sec:SGD-analysis}

Our goal in this section is to show the following results for the SGD for our two classes of classification tasks. The first is our main result on stochastic gradient descent for the mixture of $k$-GMM's via a single-layer network. 

\begin{prop}\label{prop:k-GMM-SGD-result}
    For every $\epsilon,\beta>0$, there exists $T_0$ such that for all $T_f>T_0$ and all $\lambda$ large, the SGD for the $k$-GMM with step size $\delta = O(1/d)$, initialized  from $\cN(0,I_d/d)$, does the following for all $c\in [k]$ with probability $1-o_d(1)$:
    \begin{enumerate}
        \item There exists $L(\beta)$ (independent of $\epsilon,\lambda$) such that $\|\mathbf{x}_\ell^c\|\le L$ for all $\ell \in [T_0 \delta^{-1},T_f\delta^{-1}]$;
        \item There exists $\eta(\beta)>0$ (independent of $\epsilon,\lambda$) such that $\mathbf{x}_\ell^c$ is within $O(\epsilon + \lambda^{-1})$ distance of a point in $\text{Span}(\mu_1,...,\mu_k)$ having $\|\mathbf{x}^c\|>\eta$.  
    \end{enumerate}
\end{prop}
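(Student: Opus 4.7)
The plan is to introduce a closed family of summary statistics, invoke the effective-dynamics theorem to reduce to a deterministic ODE limit, analyze the $\lambda=\infty$ ODE's unique fixed point, and perturb back to finite $\lambda$ and finite $d$. Define
\begin{align*}
m_c^a(\mathbf{x}) = x^c\cdot\mu_a, \qquad R_{cc'}(\mathbf{x}) = x^c\cdot x^{c'}, \qquad a,c,c'\in[k].
\end{align*}
Since $\pi_Y(c)$ and $Y\cdot\mu_a$ depend on $\mathbf{x}$ only through these statistics together with a standard Gaussian projection, the family is self-closed under \pef{eq:SGD-def}. The effective-dynamics theorem (Theorem~\ref{thm:k-GMM-ballistic-limit}) then gives that, in rescaled time $t=\ell\delta$, these statistics converge uniformly on compact intervals, with probability $1-o_d(1)$, to a deterministic ODE system.

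Next I compute the drift using the integration-by-parts expansions of Section~\ref{s:1-layer-pop}:
\begin{align*}
\dot m_c^a = \sum_b p_b\bigl(\indicator{c=b} - \pi_b(c;m)\bigr)(\mu_a\cdot\mu_b) - \beta m_c^a + O(1/\lambda),
\end{align*}
with $\pi_b(c;m)=e^{m_c^b}/\sum_{c'}e^{m_{c'}^b}$, and an analogous $R_{cc'}$ equation with an additive noise-induced $O(1/\lambda)$ correction. Set $\lambda=\infty$ and decompose $x^c = \sum_a\alpha_a^c\mu_a + x_\perp^c$. Straightforward manipulation of the ODEs shows $\partial_t\|x_\perp^c\|^2 = -2\beta\|x_\perp^c\|^2$ (where $\|x_\perp^c\|^2 = R_{cc}-(m_c)^\top\bar M^{-1}m_c$ is a function of the statistics), yielding exponential decay. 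The coefficients $\alpha=(\alpha_a^c)$ follow a Riemannian gradient flow with metric $\bar M^{-1}$ for the regularized population risk
\begin{align*}
\Psi(\alpha) = \sum_a p_a\Bigl[-(\bar M\alpha^a)_a + \log\sum_c e^{(\bar M\alpha^c)_a}\Bigr] + \tfrac{\beta}{2}\sum_c \alpha^c\cdot\bar M\alpha^c,
\end{align*}
where $\bar M_{ab}=\mu_a\cdot\mu_b$. Convexity of log-sum-exp in the logits combined with positive definiteness of $\bar M$ (from linear independence of the means) gives strict convexity of $\Psi$, so it has a unique globally attracting minimizer $\alpha^\star$. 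Evaluating $\nabla_{\alpha^c}\Psi$ at the origin yields $\bar M(p_ce_c - \tfrac{1}{k}p)$, which is nonzero whenever $(p_a)$ is nondegenerate; strict convexity then produces $\|\mathbf{x}^{c,\star}\| = \|\bar M^{1/2}\alpha^{c,\star}\| \ge 2\eta(\beta) > 0$ for each $c$.

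For $\lambda$ large, a Gronwall estimate on the $O(1/\lambda)$ drift correction shows the $\lambda$-finite ODE stays within $O(1/\lambda)$ of the $\lambda=\infty$ ODE on $[0,T_f]$. Given $\varepsilon>0$, pick $T_0=T_0(\varepsilon)$ so that the $\lambda=\infty$ trajectory, started from the $O(d^{-1/2})$-size statistics produced by Gaussian initialization, is $\varepsilon/2$-close to the fixed-point values for $t\in[T_0,T_f]$. Pulling back via the effective-dynamics concentration gives that with probability $1-o_d(1)$, the statistics $(m_c^a(\mathbf{x}_\ell), R_{cc'}(\mathbf{x}_\ell))$ are within $O(\varepsilon+1/\lambda)$ of their fixed-point values for $\ell\in[T_0\delta^{-1},T_f\delta^{-1}]$. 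Item~1 follows from the bound on $R_{cc}$. For Item~2, take $\mathbf{x}^{c,\star} = \sum_a\alpha_a^{c,\star}\mu_a \in \text{Span}(\mu_1,\dots,\mu_k)$, which has norm $\ge\eta$, and note
\begin{align*}
\|\mathbf{x}^c_\ell-\mathbf{x}^{c,\star}\|^2 = R_{cc}(\mathbf{x}_\ell) - 2\sum_a\alpha_a^{c,\star}m_c^a(\mathbf{x}_\ell) + (\alpha^{c,\star})^\top\bar M\alpha^{c,\star}
\end{align*}
is a continuous function of the summary statistics, vanishing at the fixed point, hence is $O(\varepsilon+1/\lambda)$.

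The main obstacle is the global stability and the quantitative lower bound on $\|\mathbf{x}^{c,\star}\|$. Strict convexity of $\Psi$ is delicate because its non-regularized part is only convex and invariant under uniform translations $\alpha^c\mapsto\alpha^c+v$; the $\ell^2$ penalty must be expressed correctly in $\alpha$-coordinates to produce strict convexity through $\bar M\succ 0$. The lower bound on each $\|\mathbf{x}^{c,\star}\|$ requires the explicit first-order-condition computation at $\alpha=0$ and positivity of every $p_a$, and it is precisely this step that prevents $\eta(\beta)$ from degenerating as $\lambda\to\infty$ or $\varepsilon\to 0$.
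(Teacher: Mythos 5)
Your overall architecture matches the paper's: pass to the summary statistics, invoke the effective-dynamics limit, analyze the $\lambda=\infty$ ODE, and pull back via Gronwall and the concentration of the statistics. Your strict-convexity argument is a genuine (and clean) strengthening of what the paper does in the general-means case: the paper only uses that the $\lambda=\infty$ system is a gradient system (hence no recurrent orbits) together with a nonvanishing-drift computation on the bad set $\bigcup_a\bigcap_b\{m_{ab}=0\}$, whereas your observation that $\Psi$ = (convex $\circ$ invertible linear) + (positive-definite quadratic from $\bar M\succ 0$) is strictly convex and coercive gives a unique globally attracting fixed point, which the paper only establishes in the orthonormal special case. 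The first-layer decay $\partial_t\|x^c_\perp\|^2=-2\beta\|x^c_\perp\|^2$ and the finite-$\lambda$/finite-$d$ pullback are handled the same way as in the paper.

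There is, however, one genuine gap: the per-block lower bound $\|\mathbf{x}^{c,\star}\|\ge 2\eta$ for \emph{every} $c$ does not follow from ``$\nabla_{\alpha^c}\Psi(0)\ne 0$ for each $c$, plus strict convexity.'' That computation only shows the origin is not a critical point, hence $\alpha^\star\ne 0$ as a whole; it is perfectly consistent with a strictly convex function having its minimizer on the slice $\{\alpha^c=0\}$ for some particular $c$ even though $\nabla_{\alpha^c}\Psi(0)\ne0$ (to rule that out you would need $\nabla_{\alpha^c}\Psi\ne0$ at the constrained minimizer of that slice, not at the origin). The correct argument is to evaluate the block-$c$ stationarity condition \emph{at} $\alpha^\star$: it reads $\beta\, x^{c,\star}=p_c\mu_c-\sum_a p_a\,\overline{\pi}_a(c;\mathbf{x}^\star)\,\mu_a$, and if $x^{c,\star}=0$ then linear independence of the means forces $p_a\overline{\pi}_a(c)=0$ for all $a\ne c$, which is impossible since every $p_a>0$ and the softmax probabilities $\overline{\pi}_a(c)$ are strictly positive. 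This is exactly the computation the paper performs (at points where block $c$'s statistics vanish, with the other blocks arbitrary) in its Lemma~\ref{lem:kGMM-ODE-lives-in-span}; with it, and with the quantitative uniformity in $\epsilon,\lambda$ supplied by continuity and your confinement bound, the rest of your proof goes through.
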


The following is our analogous main result for the XOR GMM with two-layer networks. 

\begin{prop}\label{prop:XOR-GMM-SGD-result}
    For every $\epsilon>0$, there exists $T_0$ such that for all $T_f>T_0$ and all $\lambda$ large, the SGD for the XOR GMM with width $K$ fixed, with step-size $\delta = O(1/d)$, initialized from $\cN(0,I_d/d)$ in its first layer weights, and $\cN(0,1)$ i.i.d.\ in its second layer weights, with probability $1-o_d(1)$, satisfies: 
    \begin{enumerate}
        \item There is an $L(K,\beta,v(\mathbf{x}_0))$ (independent of $\epsilon,\lambda$) such that $\|W_i(\mathbf{x}_\ell)\|\le L$ for all $i\le K$, and $v(\mathbf{x}_\ell)\le L$ for all $\ell \in [T_0\delta^{-1},T_f \delta^{-1}]$;
        \item If $\beta<1/8$, there exists $\eta(K,\beta,v(\mathbf{x}_0))>0$ (independent of $\epsilon,\lambda$) such that $\mathbf{x}_\ell$ has 
        \begin{align*}
             |v_i(\mathbf{x}_\ell)|>\eta \quad \text{and}\quad  \max_{\vartheta \in \{\pm\mu,\pm\nu\}} W_i(\mathbf{x}_\ell)\cdot \vartheta > \eta\quad \text{for all $\ell \in [T_0\delta^{-1}, T_f\delta^{-1}]$ and $i\le K$.}
        \end{align*}
        and furthermore, it has $W_i(\mathbf{x}_\ell)$ lives in $\text{Span}(\mu,\nu)$  and $v(\mathbf{x}_\ell)$ lives in $\text{Span}((g(W(\mathbf{x}_\star) \vartheta))_{\vartheta \in \{\pm\mu,\pm\nu\}}$ up to error $\epsilon + \lambda^{-1/2}$. 
    \end{enumerate}
    Note, the dependencies of $L$ and $\eta$ on the absolute initial second layer value $|v(\mathbf{x}_0)|$ are continuous. 
\end{prop}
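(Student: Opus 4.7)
The plan is to analyze the SGD trajectory through the effective-dynamics framework by taking as summary statistics, for each $1\le i,j\le K$ and each $\vartheta\in\{\pm\mu,\pm\nu\}$, the overlaps $m_i^\vartheta:=W_i(\mathbf{x}_\ell)\cdot\vartheta$, the pairwise inner products $R_{ij}:=W_i(\mathbf{x}_\ell)\cdot W_j(\mathbf{x}_\ell)$, and the second-layer weights $v_i(\mathbf{x}_\ell)$. Substituting the SGD update~\pef{eq:SGD-def} into each of these shows that its one-step population drift is a function of these same summary statistics up to an additive error that is $O(\lambda^{-1/2})$, by arguments analogous to those giving Lemmas~\ref{lem:population-Hessian-approximation}--\ref{lem:population-Gram-approximation}. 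This places us within the setting of the ballistic effective-dynamics theorem of~\citet{BGJ22}: the summary statistics concentrate on compact rescaled-time intervals to the solution of an autonomous ODE system, which in the $\lambda=\infty$ limit is precisely the XOR-ODE analyzed there and in~\citet{refinetti2021classifying}.

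For item (1), note that the quadratic observables $R_{ii}$ and $v_i^2$ satisfy limit ODEs in which the regularization contributes drifts $-2\beta R_{ii}$ and $-2\beta v_i^2$, while the loss-gradient contribution is uniformly bounded because the sigmoid and ReLU-derivative factors appearing in $\nabla L$ are bounded. Hence the ODE solutions are confined to a deterministic ball depending only on $K,\beta,$ and $v(\mathbf{x}_0)$, and a Gr\"onwall/concentration step lifts this bound to the SGD with probability $1-o_d(1)$ for $\ell\in[T_0\delta^{-1},T_f\delta^{-1}]$.

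For item (2) in the regime $\beta<1/8$, the fixed-point and stability analysis of~\citet{BGJ22} for the $\lambda=\infty$ XOR-ODE supplies an $\eta=\eta(K,\beta,v(\mathbf{x}_0))>0$ such that any trajectory started from the prescribed Gaussian initialization converges to a fixed point with $|v_i|>\eta$ and $\max_\vartheta m_i^\vartheta>\eta$ for every $i$. The condition $\beta<1/8$ enters as the threshold beyond which the origin becomes a repelling fixed point, so that the separation $\eta$ is available uniformly; for $\beta\ge 1/8$ this breaks down as noted in Remark~\ref{rem:beta>1/8}. The subspace-alignment claims then follow from a perpendicular-component argument. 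In the $\lambda=\infty$ limit the population gradient $\mathbb{E}[(y-\widehat y)v_i g'(W_i\cdot Y)Y]$ lies in $\text{Span}(\mu,\nu)$, so the component of $W_i$ orthogonal to $\text{Span}(\mu,\nu)$ evolves only under the $-\beta$ shrinkage and decays exponentially from its $O(d^{-1/2})$ initial size; an analogous statement for the $v$-gradient, which in the $\lambda=\infty$ limit lies in $\text{Span}\bigl((g(W\vartheta))_\vartheta\bigr)$, yields the second-layer alignment.

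The main obstacle will be transferring these $\lambda=\infty$ conclusions to the $\lambda$-finite SGD with the claimed $O(\epsilon+\lambda^{-1/2})$ error uniformly on $[T_0\delta^{-1},T_f\delta^{-1}]$. This requires a perturbative expansion of the ODE trajectory around its $\lambda=\infty$ solution: the drift differences are $O(\lambda^{-1/2})$, coming both from Gaussian tails of the form $\bar F(\sqrt{\lambda/R_{ii}}|m_i^\vartheta|)^{1/2}$ as in Lemma~\ref{lem:population-Hessian-approximation} and from the regularity loss of the ReLU activation, and one must argue that the Jacobian at each attracting fixed point is strictly stable with spectral gap of order $\beta$. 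Combined with the contracting dynamics of the perpendicular components, this prevents the $\lambda$-perturbation error from growing in $\ell$, and a Gr\"onwall estimate in the contracting directions returns the $O(\epsilon+\lambda^{-1/2})$ alignment bound claimed in item (2).
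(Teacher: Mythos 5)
Your proposal follows essentially the same route as the paper: the summary-statistic effective dynamics of~\citet{BGJ22} (Proposition~\ref{prop:XOR-effective-dynamics-finite-lambda}), confinement via the $-\beta$ regularization drift (where the key point is that $x\sigma(-x)$ is bounded, since $g(W\vartheta)$ itself is not), the $\lambda=\infty$ fixed-point classification with $I_0=\emptyset$ when $\beta<1/8$, and exponential decay of the perpendicular component $R_{ij}^\perp$. Two small corrections: under the $\cN(0,I_d/d)$ initialization the perpendicular part of $W_i$ has norm $O(1)$, not $O(d^{-1/2})$ (which is exactly why the burn-in time $T_0(\epsilon)$ is needed before the alignment holds); and the fixed-point stability/spectral-gap analysis you identify as the main obstacle is not actually required, since $T_f$ is fixed and the paper simply applies Gr\"onwall to the $O(\lambda^{-1/2})$ drift difference to obtain an $O(e^{CT_f}\lambda^{-1/2})$ trajectory error (Corollary~\ref{cor:lambda-finite-close-to-lambda-infinite}) that is absorbed into the $T_f$-dependent constants.
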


Our approach goes by first recalling the main result of~\citet{BGJ22} regarding a limit theorem as $d\to\infty$ for the trajectories of summary statistics of stochastic gradient descent. We then apply this result to the task of classifying $k$ Gaussian mixtures, obtaining ballistic limits for the SGD in Theorem~\ref{thm:k-GMM-ballistic-limit} that may be of independent interest. We then analyze the ballistic limits to establish Proposition~\ref{prop:k-GMM-SGD-result}. Then, we recall the ballistic limits for the XOR Gaussian mixture from \citet{BGJ22} and similarly establish Proposition~\ref{prop:XOR-GMM-SGD-result}. Since we are imagining the dimensions of the parameter space and dimension space, and the step size to all be scaling with relation to one another, we use a dummy index $n$, in this section to encode their mutual relationship, namely $d = d_n, p=p_n$ and $\delta = \delta_n$ and then $n\to\infty$.  

Also, to compress notation throughout this section, we will combine the loss function with the regularizer $\frac{\beta}{2} \|\mathbf{x}\|^2$ so that the stochastic gradient descent updates are indeed making gradient updates with respect to the new loss: 
\begin{align*}
    \mathbf{x}_\ell = \mathbf{x}_{\ell - 1} - \delta \nabla \bar{L}(\mathbf{x}_{\ell -1},\mathbf{Y}^\ell) \quad \text{where} \quad \bar{L}(\mathbf{x}) = L(\mathbf{x}) + \tfrac{\beta}{2}\|\mathbf{x}\|^2\,.
\end{align*}

\subsection{Recalling the effective dynamics for summary statistics}
Suppose that we are given a sequence of functions
$\mathbf{u}_n\in C^{1}(\R^{p_n};\R^{k})$  for some fixed $k,$
where $\mathbf{u}_{n}(x)=(u_{1}^n(x),...,u_{k}^n(x))$,
and our goal is to understand the evolution of $\mathbf{u}_n(\mathbf{x}_\ell)$.

In what follows, let $H(\mathbf{x},\mathbf{Y})=\bar{L}_n(\mathbf{x},\mathbf{Y})-\Phi(\mathbf{x})$, where $\Phi(\mathbf{x})  = \mathbb E [ \bar{L}_n(\mathbf{x},\mathbf{Y})]$. (Note that since the regularizer term is non-random, $H$ is the same with $L$ in place of $\bar{L}$.)
Throughout the following, we suppress the dependence of $H$ on
$\mathbf{Y}$ and instead view $H$ as a random function of $\mathbf{x}$, denoted $H(\mathbf{x})$. We let 
$V(\mathbf{x})=\E_{\mathbf{Y}}\left[\nabla H(\mathbf{x})\tensor\nabla H(\mathbf{x})\right]$
denote the covariance matrix for $\nabla H$ at~$\mathbf{x}$.

In order to develop a theory for the high-dimensional limiting trajectories of the functions $\mathbf{u}_n$, which we will call summary statistics following~\cite{BGJ22}, we need to assume: 
\begin{enumerate}
    \item A certain amount of regularity of moments of these functions and their derivatives, which will be relative to the step size $\delta_n$, and be called \emph{$\delta_n$-localizability}; 
    \item That in the dimension to infinity limit, the drift and volatility of the evolution of $\mathbf{u}_n$ are asymptotically expressible as functions of $\mathbf{u}_n$ themselves, rather than needing the entire vector in parameter space. We call this \emph{asymptotic closability} of the function family. 
\end{enumerate}
We now give the precise form of these two definitions before moving on to state the general theorem of~\cite{BGJ22}, which we will apply to the $k$-GMM classification task.  

\begin{defn}\label{defn:localizable}
A triple $(\mathbf{u}_n,L_n,P_n)$ is \textbf{$\delta_n$-localizable}
with localizing sequence $(E_K)_K$ if there is an exhaustion by compacts $(E_{K})_K$ of $\R^{k}$, and constants $C_K$ (independent of $n$)
such that 
\begin{enumerate}
\item $\max_{i} \sup_{\mathbf{x}\in\mathbf{u}_{n}^{-1}(E_{K})}\norm{\nabla^{2}u_{i}^n}_{\op}\leq C_K\cdot\delta_n^{-1/2}$,  and $\max_{i} \sup_{\mathbf{x}\in\mathbf{u}_{n}^{-1}(E_{K})}\norm{\nabla^{3}u_{i}^n}_{\op}\leq C_K$;
\smallskip
\item $\sup_{\mathbf{x}\in\mathbf{u}_{n}^{-1}(E_{K})}\|\nabla\Phi\|\le C_K$, and 
$\sup_{\mathbf{x}\in\mathbf{u}_{n}^{-1}(E_{K})}\mathbb{E}[\|\nabla H\|^{8}]\le C_K\delta_{n}^{-{ 4}}$;
\smallskip
\item $\max_{i}\sup_{\mathbf{x}\in\mathbf{u}_{n}^{-1}(E_{K})}\E[\langle \nabla H,\nabla u_{i}^n\rangle ^{4}]\leq C_K\delta_{n}^{-2}$, and \hfill 

$\max_{i} \sup_{\mathbf{x}\in \mathbf u_n^{-1}(E_{K})} \mathbb E[\langle \nabla^2 u_i^n, \nabla H\otimes \nabla H - V\rangle^2] = o(\delta_n^{-{ 3}})$.
\end{enumerate}
\end{defn}

Define the following first and second-order differential operators, 
\begin{align}\label{eq:A-L-operators}
\cA_n = \sum_{i} \partial_i \Phi \partial_i\,, \qquad \mbox{and} \qquad 
\cL_{n}=\frac{1}{2}\sum_{i,j} V_{ij}\partial_{i}\partial_{j}\,.
\end{align}
Alternatively written,  $\cA_n = \langle \nabla \Phi, \nabla\rangle$ and $\mathcal L_n = \frac{1}{2} \langle V, \nabla^2\rangle$. Let $J_n$ denote the Jacobian matrix $\nabla \mathbf{u}_n$.

\begin{defn}\label{defn:asympotically-closable}
A family of summary statistics $(\mathbf{u}_n)$ are \textbf{asymptotically closable} for step-size $\delta_n$ if $(\mathbf{u}_n, L_n, P_n)$ are $\delta_n$-localizable with localizing sequence $(E_K)_K$, and furthermore there exist locally Lipschitz functions $\mathbf{h}:\mathbb R^k\to\mathbb R^k$ 
and $\mathbf{\Sigma}:\R^{k}\to \mathbb R^{k\times k}$, such that
\begin{align}
\sup_{\mathbf{x}\in \mathbf{u}_n^{-1}(E_K)} \big\| \big( - \cA_n + \delta_n \mathcal L_n\big) \mathbf{u}_n(\mathbf{x}) - \mathbf h(\mathbf{u}_n(\mathbf{x}))\big\| & \to 0\,,  \label{eq:eff-drift} \\
\sup_{\mathbf{x}\in\mathbf{u}_{n}^{-1}(E_{K})}\|\delta_{n}J_{n} V J_{n}^{T}-\mathbf{\Sigma}(\mathbf{u}_{n}(\mathbf{x}))\| & \to0\,. \label{eq:diffusion-matrix}
\end{align}
In this case we call $\mathbf{h}$ the \emph{effective drift}, and $\mathbf{\Sigma}$ the \emph{effective volatility}.
\end{defn}

For a function $f$ and measure $\mu$ we let $f_{*}\mu$
denote the push-forward of $\mu$. The main result of~\citet{BGJ22} was the following limit theorem for SGD trajectories as $n\to\infty$ .

\begin{thm}[{\citet[Theorem 2.2]{BGJ22}}]\label{thm:main-BGJ22}
Let $(\mathbf{x}_{\ell}^{\delta_{n}})_{\ell}$ be stochastic gradient descent initialized from $\mathbf{x}_{0}\sim\mu_{n}$
for $\mu_{n}\in \cM_{1}(\mathbb{R}^{p_{n}})$ with learning
rate $\delta_{n}$ for the loss $L_{n}(\cdot,\cdot)$ and data distribution
$P_{n}$. For a family of summary statistics $\mathbf{u}_n = (u_i^n)_{i=1}^k$, let $(\mathbf{u}_n(t))_t$ be the linear interpolation of $(\mathbf{u}_n(\mathbf{x}_{\lfloor t\delta_{n}^{-1}\rfloor}^{\delta_{n}}))_{t}$. 

Suppose that $\mathbf{u}_n$ are asymptotically closable with learning rate $\delta_n$, effective drift $\mathbf{h}$, and effective volatility $\mathbf{\Sigma}$, and that the pushforward of the initial data has $(\mathbf{u}_n)_*\mu_n \to \nu$ weakly for some $\nu\in \cM_1(\mathbb R^k)$. Then $(\mathbf{u}_{n}(t))_{t}\to(\mathbf{u}_t)_t$ weakly as $n\to\infty$, where
$\mathbf{u}_t$ solves
\begin{equation}\label{eq:effective-dynamics}
d\mathbf{u}_{t}= \mathbf{h}(\mathbf{u}_{t}) dt+\sqrt{\mathbf{\Sigma}(\mathbf{u}_{t})}d\mathbf{B}_{t}\,.
\end{equation}
initialized from $\nu$, where $\mathbf{B}_{t}$ is a standard Brownian
motion in $\mathbb{R}^{k}$. 
\end{thm}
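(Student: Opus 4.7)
The plan is to obtain a semi-martingale decomposition for the discrete sequence $\mathbf{u}_n(\mathbf{x}_\ell^{\delta_n})$, identify its drift and quadratic variation in the $n\to\infty$ limit as exactly $\mathbf{h}$ and $\mathbf{\Sigma}$, and then pass to the continuous-time SDE limit via tightness together with a martingale problem characterization. The two hypotheses in Definitions~\ref{defn:localizable} and~\ref{defn:asympotically-closable} are calibrated precisely so every step of this program closes.

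First I would Taylor-expand the one-step increment. Writing $\Delta\mathbf{x}_\ell = -\delta_n\nabla\Phi(\mathbf{x}_\ell) - \delta_n\nabla H(\mathbf{x}_\ell,\mathbf{Y}^{\ell+1})$, one gets
\begin{align*}
\mathbf{u}_n(\mathbf{x}_{\ell+1})-\mathbf{u}_n(\mathbf{x}_\ell) = J_n(\mathbf{x}_\ell)\Delta\mathbf{x}_\ell + \tfrac{1}{2}\nabla^2\mathbf{u}_n(\mathbf{x}_\ell)[\Delta\mathbf{x}_\ell,\Delta\mathbf{x}_\ell] + R_\ell,
\end{align*}
with $\|R_\ell\|\lesssim \|\nabla^3\mathbf{u}_n\|_\infty\|\Delta\mathbf{x}_\ell\|^3$. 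Splitting the linear term into its predictable part $-\delta_n J_n\nabla\Phi = -\delta_n\cA_n\mathbf{u}_n$ and martingale increment $-\delta_n J_n\nabla H$, and the quadratic term into its conditional mean $\tfrac{\delta_n^2}{2}\langle\nabla^2\mathbf{u}_n,V\rangle = \delta_n^2\cL_n\mathbf{u}_n$ and a mean-zero fluctuation, one obtains
\begin{align*}
\Delta\mathbf{u}_n(\mathbf{x}_\ell) = \delta_n\bigl(-\cA_n+\delta_n\cL_n\bigr)\mathbf{u}_n(\mathbf{x}_\ell) + \Delta M_\ell + \Delta E_\ell,
\end{align*}
where $\Delta M_\ell$ is a martingale increment and $\Delta E_\ell$ collects the quadratic fluctuation and the cubic Taylor remainder. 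Summing $\delta_n^{-1}dt$ such steps and invoking~\eqref{eq:eff-drift}, the drift contribution over $[t,t+dt]$ converges uniformly on $\mathbf{u}_n^{-1}(E_K)$ to $\mathbf{h}(\mathbf{u}_n(\mathbf{x}_\ell))\,dt$, while the accumulated error $\sum\Delta E_\ell$ is shown to be $o_{L^2}(1)$ by combining the moment bound $\mathbb{E}[\langle\nabla^2\mathbf{u}_n,\nabla H\otimes\nabla H-V\rangle^2]=o(\delta_n^{-3})$ for the quadratic fluctuation with $\mathbb{E}[\|\nabla H\|^8]\le C_K\delta_n^{-4}$ for the cubic remainder (so that $\sum\|R_\ell\|\lesssim \delta_n^{-1}\cdot\delta_n^3\cdot\mathbb{E}[\|\nabla H\|^3]\lesssim\delta_n^{1/2}$ by H\"older).

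For the martingale part, the predictable quadratic variation accumulated on $[t,t+dt]$ is
\begin{align*}
\sum_{\ell:\,\ell\delta_n\in[t,t+dt]} \delta_n^2\, J_n V J_n^\top(\mathbf{x}_\ell) \;\approx\; dt\cdot \delta_n J_n(\mathbf{x}_\ell) V(\mathbf{x}_\ell) J_n(\mathbf{x}_\ell)^\top,
\end{align*}
which converges to $\mathbf{\Sigma}(\mathbf{u}_t)\,dt$ by~\eqref{eq:diffusion-matrix}. A Lindeberg--Feller-type martingale CLT then identifies the limit as the It\^o integral $\int_0^t\sqrt{\mathbf{\Sigma}(\mathbf{u}_s)}\,d\mathbf{B}_s$; the Lindeberg condition follows because the fourth moment of a single increment is $\mathbb{E}[\langle\nabla H,\nabla u^n_i\rangle^4]\delta_n^4\lesssim \delta_n^2$, summed over $\delta_n^{-1}$ steps giving $O(\delta_n)\to 0$.

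To finish, I would establish tightness of the interpolants $(\mathbf{u}_n(t))_t$ on $C([0,T];\R^k)$ via an Aldous--Rebolledo criterion using the preceding drift and bracket estimates, and then identify any subsequential limit through the martingale problem for the generator $\mathbf{h}\cdot\nabla+\tfrac12\langle\mathbf{\Sigma},\nabla^2\rangle$, whose local Lipschitz coefficients ensure uniqueness of the SDE~\eqref{eq:effective-dynamics}; convergence of the initial law follows from $(\mathbf{u}_n)_*\mu_n\to\nu$. The principal technical obstacle is that all of the above estimates are only valid on the preimage $\mathbf{u}_n^{-1}(E_K)$, so one must introduce stopping times $\tau_K^n=\inf\{\ell:\mathbf{u}_n(\mathbf{x}_\ell^{\delta_n})\notin E_K\}$, prove convergence of the stopped processes to the SDE stopped at exit from $E_K$, and then send $K\to\infty$ using non-explosion of the limit (which, given only local Lipschitzness of $\mathbf{h}$ and $\mathbf{\Sigma}$, requires a separate a-priori moment bound, typically coming from the weak convergence of pre-limits on any finite horizon $[0,T]$).
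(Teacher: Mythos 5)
Your proposal is correct and follows essentially the same route as the cited source \citet[Theorem 2.2]{BGJ22}, which this paper quotes without reproving: a Taylor expansion of the one-step increment, a Doob-type decomposition into the drift $(-\cA_n+\delta_n\cL_n)\mathbf{u}_n$ plus a martingale plus negligible errors, identification of the predictable bracket with $\delta_n J_n V J_n^{\top}$, tightness, and a localized martingale-problem argument with stopping times at the exit from $E_K$. The only bookkeeping slip is that the conditional mean of the quadratic Taylor term is $\tfrac{\delta_n^2}{2}\langle\nabla^2\mathbf{u}_n,\,V+\nabla\Phi^{\otimes 2}\rangle$ rather than $\delta_n^2\cL_n\mathbf{u}_n$ alone, but the extra piece is $O(\delta_n^{3/2})$ per step under the localizability bounds and is absorbed into your error term $\Delta E_\ell$.
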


As a result, we can read off in the ballistic regime the following finite-$n$ approximation result. 

\begin{lem}\label{lem:closeness-to-ballistic-trajectory}
    In the setting of Theorem~\ref{thm:main-BGJ22}, if the limiting dynamics have $\mathbf{\Sigma} \equiv 0$, then for every $T$, we have with probability $1-o_d(1)$, 
    \begin{align*}
        \|\mathbf{u}(\mathbf{x}_{\lfloor \delta^{-1}t\rfloor}) - \mathbf{u}_t\|_{C[0,T]}  = o_d(1)\,.
    \end{align*}
\end{lem}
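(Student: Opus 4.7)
The plan is to deduce Lemma~\ref{lem:closeness-to-ballistic-trajectory} from Theorem~\ref{thm:main-BGJ22} together with the classical fact that weak convergence to a deterministic limit in a Polish space is equivalent to convergence in probability.

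First I would observe that when $\mathbf{\Sigma}\equiv 0$, equation~\pef{eq:effective-dynamics} reduces to the ODE $\dot{\mathbf{u}}_t=\mathbf{h}(\mathbf{u}_t)$. Since $\mathbf{h}$ is locally Lipschitz by Definition~\ref{defn:asympotically-closable}, this ODE has a unique solution on $[0,T]$ for any given initial condition, as long as the solution stays in a compact set, which is automatic on the finite horizon $[0,T]$ using the exhaustion $(E_K)_K$. Moreover, in the relevant applications the initial pushforward $(\mathbf{u}_n)_\ast\mu_n$ concentrates at a deterministic point: for $\mathbf{x}_0\sim\cN(0,I_d/d)$, the summary statistics of interest (inner products with fixed unit vectors, squared norms) concentrate at explicit deterministic values by standard Gaussian concentration. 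Therefore the limiting trajectory $\mathbf{u}_t$ is a deterministic continuous function on $[0,T]$.

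Next, Theorem~\ref{thm:main-BGJ22} gives weak convergence of the linear interpolation $(\mathbf{u}_n(t))_{t\in[0,T]}$ to the deterministic trajectory $(\mathbf{u}_t)_{t\in[0,T]}$ in the Polish space $C([0,T];\R^k)$ equipped with the uniform norm. Weak convergence to a constant element of a Polish space is equivalent to convergence in probability: applying the Portmanteau theorem to the closed sets $\{f : \|f-\mathbf{u}\|_\infty\geq\epsilon\}$ gives $\prob(\|\mathbf{u}_n-\mathbf{u}\|_{C[0,T]}\geq\epsilon)\to 0$ for every $\epsilon>0$, which is exactly the desired $o_d(1)$ bound in the sup norm.

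The only remaining step, which I expect to be the main (but very mild) technical obstacle, is to pass between the linear interpolation $\mathbf{u}_n(t)$ from Theorem~\ref{thm:main-BGJ22} and the piecewise-constant evaluation $\mathbf{u}(\mathbf{x}_{\lfloor\delta^{-1}t\rfloor})$ appearing in the lemma. Their sup-norm distance is bounded by $\max_{\ell\leq T\delta_n^{-1}}\|\mathbf{u}(\mathbf{x}_{\ell+1})-\mathbf{u}(\mathbf{x}_\ell)\|$. A Taylor expansion of each increment, combined with the moment bounds on $\nabla H$ and $\nabla\Phi$ supplied by $\delta_n$-localizability (Definition~\ref{defn:localizable}) and a union bound over the $O(T\delta_n^{-1})$ steps for which the trajectory lies in a compact set containing $\mathbf{u}_t$, shows that this maximum is $o(1)$ with probability $1-o_d(1)$. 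The confinement of the trajectory to such a compact set is itself guaranteed on the event controlled by the preceding paragraph, closing the argument.
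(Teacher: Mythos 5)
Your argument is correct, and it supplies the standard proof that the paper omits (the lemma is simply "read off" from Theorem~\ref{thm:main-BGJ22} in the text): when $\mathbf{\Sigma}\equiv 0$ the limit is the flow of an ODE with locally Lipschitz drift, weak convergence in $C([0,T];\R^k)$ to a deterministic element upgrades to convergence in probability of the sup-norm distance by Portmanteau, and the discrepancy between the linear interpolation of Theorem~\ref{thm:main-BGJ22} and the piecewise-constant evaluation in the lemma is controlled by the maximal one-step increment, which the moment bounds in Definition~\ref{defn:localizable} together with a union bound over the $O(T\delta_n^{-1})$ steps render $o(1)$ with high probability. One caveat: your claim that the initial pushforward concentrates at a deterministic point is accurate for the $k$-GMM (where $\cN(0,I_d/d)$ initialization forces $\mathbf{m}(0)\to 0$ and $\mathbf{R}^\perp(0)\to I$), but not for the XOR application, where the second-layer weights $v_i(0)\sim\cN(0,1)$ make the limiting initial law $\nu$ genuinely non-degenerate and hence the limit trajectory random; there the Portmanteau step does not apply verbatim. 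The fix is routine — by local Lipschitzness of $\mathbf{h}$ and Gronwall, the solution map from initial condition to trajectory is continuous, so one interprets $\mathbf{u}_t$ as the flow started from the realized initial summary statistics and runs your argument conditionally on $\mathbf{u}_n(\mathbf{x}_0)$ — but it should be stated, since the lemma is invoked in exactly that setting in the proof of Proposition~\ref{prop:XOR-GMM-SGD-result}.
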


\subsection{1-layer networks: ballistic limits}
Our first aim in this section  is to show that the limit theorem of Theorem~\ref{thm:main-BGJ22} indeed applies to the mixture of $k$ Gaussians. 

Recall the definitions of the data distribution, and the cross-entropy loss for $x\in \mathbb R^{kd}$ (i.e., $x^a \in \mathbb R^d$ for $a\in[k]$) from~\pef{eq:cross-entropy-loss}. Recall that $\overline{m}_{ab} = \mu_a\cdot \mu_b$, and that in order for the problem to be linearly classifiable (and therefore solvable with this loss) we assume that the means are linearly independent. For this task, the input dimension $d = n$, the parameter dimension $p_n = k n$ and the step-size will be taken to be $\delta = O(1/n) = O(1/d)$, and we will tend to use $d$ rather than the dummy index $n$.
It will be helpful to write $Y_a = \mu_a + Z_\lambda$, and recall for $a,c\in [k]$, the Gibbs probability $\pi_{Y_a}(c) = \pi_{Y_a}(c;\mathbf{x})$ as defined in~\pef{eq:pi-dist}. 

\subsubsection{The summary statistics}
We will show that the following family of functions form a set of localizable summary statistics $\mathbf{u}_n(x) = ((m_{ab}(x))_{a,b},(R_{ab}^\perp)_{ab})$. 
\begin{align*}
	\mathbf{m} = (m_{ab})_{a,b\in [k]}  \qquad \text{where} \qquad m_{ab} & = x^a \cdot \mu_b \\ 
	\mathbf{R}^\perp = (R^\perp_{ab})_{a,b\in [k]} \qquad \text{where} \qquad R_{ab}^\perp & = x^{a,\perp} \cdot x^{b,\perp}
\end{align*}
where $x^{a,\perp}$ denotes the part of $x^a$ orthogonal to $\text{Span}(\mu_1,...,\mu_k)$, i.e., $x^{a,\perp} = \mathsf{P}^\perp x^a$, where we use $\mathsf{P}^\perp$ to be the projection operator into the orthogonal complement of $\text{Span}(\mu_1,...,\mu_k)$.

It is not hard to see that the law of the full sequence $(\pi_{Y_a}(c))_{a,c}$ (and therefore its moments etc...) only depend on $x$ through $\mathbf{m}$ and $\mathbf{R}^\perp$, and therefore they have no finite $d$ dependence. The following describes the ODEs obtained by taking the simultaneous $d\to\infty$ and $\delta = O(1/d)$, limit of the SGD trajectory in its summary statistics.

\begin{thm}\label{thm:k-GMM-ballistic-limit}
	If the step sizes $\delta= c_\delta/d$, then the summary statistics $\mathbf{u}_n = (\mathbf{m}, \mathbf{R}^\perp)$ are $\delta$-localizable, and satisfy the following ballistic limit as $d\to\infty$:
	\begin{align}
	 	\dot{m}_{ab}(t)& = p_a \overline{m}_{ab} - \beta m_{ab} - \sum_{c\in [k]} p_c \big(\overline{m}_{cb} P_a^c + Q_a^{c,\mu_b}\big)\,, \label{eq:k-GMM-ballistic-limit-m}\\ 
		\dot{R}_{ab}^\perp(t) & =  -  \beta R_{ab}^\perp + \sum_{c\in [k]} \big( p_a Q_{a}^{c,R_{bb}^\perp} + p_b Q_b^{c,R_{aa}^\perp}) -  \sum_{c\in [k]} \tfrac{c_\delta p_c}{\lambda} \mathbb E\big[(\pi_{Y_c}(a)  - \mathbf 1_{a=c})(\pi_{Y_c}(b)- \mathbf 1_{b=c})\big] \,, \label{eq:k-GMM-ballistic-limit-R}
	\end{align}
where $P_{a}^c, Q_{a}^{c,v}$ are the following Gaussian integrals: 
	\begin{align}
	P_{a}^c(\mathbf{m},\mathbf{R}^\perp) & = \mathbb E[\pi_{Y_c}(a)] \quad \text{and}\quad 
 Q_{a}^{c,v} (\mathbf{m},\mathbf{R}^\perp) =   \mathbb E[\langle Z_\lambda,v\rangle \pi_{Y_c}(a) ] \label{eq:P-Q-Gaussian-integrals}
	\end{align} 
 and where if $v\perp \text{Span}(\mu_1,...,\mu_k)$ we use the shorthand $Q_{a}^{c,r}$ for $r=\|v\|^2$ since for such $v$, $Q_a^{c,v}$ only depends on $v$ through $\|v\|^2$. 
	The case where $\delta = o(1/d)$ is read-off by formally setting $c_\delta = 0$. 
\end{thm}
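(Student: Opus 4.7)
My plan is to apply the general limit theorem, Theorem~\ref{thm:main-BGJ22}, to the candidate summary statistics $\mathbf u_n=(\mathbf m, \mathbf R^\perp)$. This requires (i) verifying $\delta_n$-localizability, (ii) computing the drift $-\mathcal A_n \mathbf u_n + \delta_n \mathcal L_n \mathbf u_n$ as an asymptotic function of $\mathbf u_n$, and (iii) showing that the volatility $\delta_n J_n V J_n^T$ vanishes so that the limiting SDE collapses to an ODE. Localizability of the derivatives of $\mathbf u_n$ is immediate because $m_{ab}$ and $R_{ab}^\perp$ are polynomials of degree at most two; on $\mathbf u_n^{-1}(E_K)$ the summary statistics control $\|x^a\|^2=\|x^{a,\parallel}\|^2+R_{aa}^\perp$ (with $\|x^{a,\parallel}\|^2$ a quadratic form in $\mathbf m$ controlled by $\lambda_{\min}(\overline m)$), so gradients and Hessians of the $u_i$ are uniformly bounded and third derivatives vanish. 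The moment bounds in items (2)--(3) of Definition~\ref{defn:localizable} follow from $\|\nabla_{x^c}L\|=|{-y_c+\pi_Y(c)}|\,\|Y\|\le\|Y\|$ together with $\mathbb E\|Y\|^{2p}\lesssim(d/\lambda)^p$, which gives $\mathbb E\|\nabla H\|^8\lesssim(d/\lambda)^4\lesssim\delta_n^{-4}$; the fourth moment of $\langle\nabla H,\nabla u_i\rangle$ is $O(1)$, since for $m_{ab}$ the projection $\mu_b\cdot Y$ has bounded moments, and for $R_{ab}^\perp$ the projection $\mathsf P^\perp x^c\cdot Y=\mathsf P^\perp x^c\cdot Z_\lambda$ has variance only $R_{cc}^\perp/\lambda$.

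The crux of closability is the observation that $\pi_{Y_c}(a)$ depends on $\mathbf x$ only through the inner products $(m_{\cdot c})$ and the $k$-dimensional Gaussian $(x^e\cdot Z_\lambda)_e$, whose covariance $(x^e\cdot x^f/\lambda)_{ef}=((x^{e,\parallel}\cdot x^{f,\parallel}+R_{ef}^\perp)/\lambda)_{ef}$ is itself a function of $(\mathbf m,\mathbf R^\perp)$; hence $P_a^c$ and $Q_a^{c,v}$ (with $v$ any summary-statistic-dependent projection) are functions of $\mathbf u_n$ alone. For $u=m_{ab}$, directly expanding $\mathcal A_n m_{ab}=\nabla_{x^a}\Phi\cdot\mu_b$ using $\mathbb E[y_a Y]=p_a\mu_a$ and the decomposition $Y_c=\mu_c+Z_\lambda$ produces the $p_a\overline m_{ab}$, $\overline m_{cb}P_a^c$ and $Q_a^{c,\mu_b}$ terms together with the regularizer contribution; since $\nabla^2 m_{ab}=0$, the operator $\mathcal L_n$ contributes nothing and \eqref{eq:k-GMM-ballistic-limit-m} has no $c_\delta$ correction. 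For $u=R_{ab}^\perp$, the $y_c Y$ parts of $\mathcal A_n R_{ab}^\perp=\nabla_{x^a}\Phi\cdot(\mathsf P^\perp x^b)+\nabla_{x^b}\Phi\cdot(\mathsf P^\perp x^a)$ vanish because $\mathsf P^\perp\mu_c=0$, leaving the $Q_a^{c,R_{bb}^\perp}$ and $Q_b^{c,R_{aa}^\perp}$ terms plus the regularizer contribution. The Hessian $\nabla^2 R_{ab}^\perp$ is a deterministic block matrix with nonzero blocks equal to $\mathsf P^\perp$ in positions $(a,b)$ and $(b,a)$, so $\mathcal L_n R_{ab}^\perp=\mathrm{tr}(\mathsf P^\perp V_{ab})$ with $V_{ab}=\mathrm{Cov}(\nabla_{x^a}L,\nabla_{x^b}L)$. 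Expanding $Y^{\otimes 2}$ and using $\mathsf P^\perp\mu_c=0$ kills all cross terms, leaving the leading contribution $\mathrm{tr}(\mathsf P^\perp Z_\lambda^{\otimes 2})=\|\mathsf P^\perp Z_\lambda\|^2$, which concentrates at $(d-k)/\lambda$ and is asymptotically independent of the $k$-dimensional Gaussian driving $\pi_{Y_c}$; after multiplication by $\delta_n=c_\delta/d$, this yields exactly the $(c_\delta/\lambda)\sum_c p_c\mathbb E[(\pi_{Y_c}(a)-\mathbf 1_{a=c})(\pi_{Y_c}(b)-\mathbf 1_{b=c})]$ correction in \eqref{eq:k-GMM-ballistic-limit-R}. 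The rank-one subtraction $\mathbb E[\nabla_{x^a}L]\otimes\mathbb E[\nabla_{x^b}L]$ in the covariance has trace only $O(1)$ against $\mathsf P^\perp$ and is therefore negligible after the $\delta_n$ factor.

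Finally, the effective volatility vanishes: for the $\mathbf m$-block the rows of $J_n$ are the bounded fixed vectors $\mu_b$, so each entry of $\delta_n J_n V J_n^T$ is $O(\delta_n)=o(1)$; for the $\mathbf R^\perp$-block the rows of $J_n$ lie in $\mathrm{Span}(\mu)^\perp$ with bounded norm, and along such directions the quadratic forms against $V_{aa'}$ are only $O(1/\lambda)$ (since they reduce to quadratic forms in $Z_\lambda\cdot\mathsf P^\perp x$, which has variance $O(1/\lambda)$), so each entry is $O(\delta_n/\lambda)=o(1)$. Theorem~\ref{thm:main-BGJ22} then delivers convergence of the piecewise-linear interpolation of $\mathbf u_n$ to the unique solution of \eqref{eq:k-GMM-ballistic-limit-m}--\eqref{eq:k-GMM-ballistic-limit-R}. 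The main technical obstacle is the asymptotic decoupling used in the $\mathcal L_n R_{ab}^\perp$ computation: extracting the correct $d/\lambda$ scaling of $\mathbb E[\|\mathsf P^\perp Z_\lambda\|^2\,F((x^e\cdot Z_\lambda)_e)]$ uniformly over the localizing compacts requires quantitative control of the joint law of the $\chi^2$-type quantity $\|\mathsf P^\perp Z_\lambda\|^2$ and the $k$-dimensional projection $(x^e\cdot Z_\lambda)_e$. This is handled by decomposing $Z_\lambda$ into its components along $\mathrm{Span}(\mu)+\mathrm{Span}(x^{1,\perp},\ldots,x^{k,\perp})$ and the orthogonal complement, using independence together with standard $\chi^2$ concentration.
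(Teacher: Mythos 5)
Your proposal is correct and follows essentially the same route as the paper: verify $\delta$-localizability from the explicit derivatives of $\mathbf m$ and $\mathbf R^\perp$ and the bound $\mathbb E\|\nabla H\|^8\lesssim (d/\lambda)^4$, compute $\mathcal A_n\mathbf u_n$ and the corrector $\delta_n\mathcal L_n R_{ab}^\perp=\delta_n\langle V_{ab},\mathsf P^\perp\rangle$, and kill the volatility with the $O(1)$ bound on $J_nVJ_n^T$ before invoking Theorem~\ref{thm:main-BGJ22}. The only (immaterial) difference is the final concentration step for $\mathbb E[\|\mathsf P^\perp Z_\lambda\|^2\pi_{Y_c}(a)\pi_{Y_c}(b)]$, where you decouple via an orthogonal decomposition of $Z_\lambda$ while the paper uses the cruder bound $|\mathbb E[\|d^{-1/2}Z\|^2\pi\pi]-\lambda^{-1}\mathbb E[\pi\pi]|\le\mathbb E\big[\big|\|d^{-1/2}Z\|^2-\lambda^{-1}\big|\big]\to 0$; both work.
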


\begin{rem}
    While a priori, it may appear that the quantity $Q_{a}^{c,v}$ in~\pef{eq:P-Q-Gaussian-integrals} depends on the dimension $d$, if $v$ is one of $(\mu_1,...,\mu_k)$, or if it is simply orthogonal to $\text{Span}(\mu_1,...,\mu_k)$, then $Q_{a}^{c,v}$ does not depend on $d$. Such cases are the only ones appearing in~\pef{eq:k-GMM-ballistic-limit-m}--\pef{eq:k-GMM-ballistic-limit-R}. The same can be said for the expectation in the last term of~\pef{eq:P-Q-Gaussian-integrals}.  
\end{rem}

\begin{proof}Theorem~\ref{thm:k-GMM-ballistic-limit} will follow from an application of Theorem~\ref{thm:main-BGJ22} to the summary statistics $\mathbf{m},\mathbf{R}^\perp$ for the $k$-GMM, so we start by verifying that the problem fits the assumptions of $\delta$-localizability and asymptotic closability. 

\medskip
\noindent 
\emph{Verifying $\delta$-localizability}
Our aim is to verify the conditions of $\delta$ localizability from Definition~\ref{defn:localizable}. Let us begin with some calculations, recalling that the Jacobian matrix $J = \nabla \mathbf{u}$. Let $\nabla_a$ denote the derivative in the $\mathbb R^d$ coordinates corresponding to $x^a$. For $a,b,c\in [k]$,  
\begin{align}\label{eq:Jacobian}
	\nabla_c m_{ab} = \begin{cases}
	    \mu_b & \quad \text{if $c=a$}  \\ 0 & \quad \text{else}
	\end{cases}\,, \qquad \text{and} \qquad
	\nabla_c R_{ab}^{\perp}  = \begin{cases} x^{a,\perp}  & a\ne b=c \\ x^{b,\perp} & b\ne a = c \\  2x^{a,\perp} & a=b=c \\ 0 & \text{else}\end{cases}\,.
\end{align}
Continuing, all higher derivatives of $m_{ab}$ are zero. The Hessian of $R_{ab}^\perp$ is given by 
\begin{align}\label{eq:k-GMM-sum-stat-Hessian}
	\nabla_{ab} m_{cd}= 0 \quad \text{and} \quad \nabla_{ab} R_{ab}^\perp = \begin{cases} \mathsf{P}^{\perp}
     & \text{if $a\ne b$} \\ 2\mathsf{P}^\perp & \text{if $a=b$}\end{cases}\,.
\end{align}
(and other blocks are zero), and higher derivatives of $R_{ab}^\perp$ are also zero. 

Let us also express the loss function as equal in distribution to a random variable whose law depends only on $\mathbf{m}$ and $\mathbf{R}^\perp$. 
For ease of notation, let 
\begin{align*}
	R_{ab} = \langle x^a,x^b\rangle  \qquad \text{and} \qquad Y_a = \mu_a + Z_\lambda\,. 
\end{align*}
Recalling \pef{eq:cross-entropy-loss} and adding in the regularizer, we can write for each fixed $\mathbf{x}$, 
\begin{align}\label{eq:k-GMM-loss-in-summary-variables}
	\bar L(\mathbf{x},\mathbf{Y}) \stackrel{(d)}= - \sum_{a\in [k]} y_a \big(m_{aa} + G_a\big) +  \sum_{b\in [k]} y_b \log \sum_{a\in [k]}  e^{ m_{ab} + G_a} + \tfrac{\beta}{2} \sum_{a\in [k]} R_{aa}\,,
\end{align}
where $(G_a)_a$ is a Gaussian vector with covariance matrix $(R_{ab})$, and $y$ is a uniformly drawn $1$-hot vector in $\mathbb R^k$. 
Observe that the law of $\bar L$ only depends on $\mathbf{x}$ through the values of the summary statistics.

\bigskip

\begin{lem}\label{lem:k-GMM-localizable}
    Suppose $\delta = O(1/d)$, $\lambda>0$ is fixed (or growing with $d$) and $\beta>0$ is fixed. Then the family of summary statistics $\mathbf{u} = (\mathbf{m},\mathbf{R}^\perp)$ are $\delta$-localizable with balls. 
\end{lem}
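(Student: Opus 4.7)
The plan is to verify each clause of Definition~\ref{defn:localizable} directly using the explicit formulas~\pef{eq:Jacobian}--\pef{eq:k-GMM-loss-in-summary-variables}. Take $(E_K)_K$ to be an exhaustion by closed balls of $\mathbb{R}^{2k^2}$, the space in which $(\mathbf{m},\mathbf{R}^\perp)$ lives. The central observation is that on $\mathbf{u}^{-1}(E_K)$, each parameter block satisfies $\|x^a\| \le C_K$ for a constant depending only on $K$, $\beta$ and the Gram matrix of the means: the parallel component $\mathsf{P}^\parallel x^a$ is recovered linearly from $(m_{ab})_{b\in[k]}$ by inverting the Gram matrix $(\overline m_{bc})_{b,c}$ (which is invertible since the means are linearly independent), while $\|\mathsf{P}^\perp x^a\|^2 = R_{aa}^\perp \le K$. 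This reduces the whole lemma to a list of uniform moment estimates on a set where $\|x\|$ is bounded.

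For clause (1), the Hessians are given by~\pef{eq:k-GMM-sum-stat-Hessian}: $\nabla^2 m_{ab}\equiv 0$, and each non-vanishing block of $\nabla^2 R_{ab}^\perp$ is $\mathsf{P}^\perp$ or $2\mathsf{P}^\perp$, so $\|\nabla^2 u_i\|_{\op} \le 2$, which is $O(\delta^{-1/2})$. All third derivatives vanish identically, giving the third-order bound trivially. For clause (2), writing $\nabla_c \bar L = (\pi_Y(c)-y_c) Y + \beta x^c$ and using $\pi_Y(c),y_c\in[0,1]$, conditioning on class label, and a single Gaussian integration by parts (as already carried out in Section~\ref{s:pre}) yields $\|\nabla_c \Phi\| \le \beta \|x^c\| + O(1 + \max_a\|x^a\|/\lambda) = O(1)$ on $\mathbf{u}^{-1}(E_K)$. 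For the eighth-moment bound, $|\pi_Y(c)-y_c|\le 1$ gives $\|\nabla L\|^2 \le k\|Y\|^2$, and since $Y = \mu_a + Z$ with $Z\sim\cN(0,I_d/\lambda)$, one has $\mathbb{E}[\|Y\|^{16}] \lesssim_{k,\lambda} 1 + d^8$, yielding $\mathbb{E}[\|\nabla H\|^8] = O(d^4) = O(\delta^{-4})$.

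For clause (3), the Jacobians in~\pef{eq:Jacobian} give $\langle \nabla H, \nabla m_{ab}\rangle = (\pi_Y(a)-y_a)(Y\cdot\mu_b) - \mathbb{E}[\cdots]$, which has fourth moment $O(1)$ since $Y\cdot\mu_b$ is Gaussian with bounded parameters. For $R_{ab}^\perp$, the gradient in the $c$-block is one of $\{0, x^{a,\perp}, x^{b,\perp}, 2x^{a,\perp}\}$, and the key identity
\[
 Y\cdot x^{c,\perp} \;=\; Z\cdot x^{c,\perp}
\]
(valid because $x^{c,\perp}$ is orthogonal to every mean, hence to the conditional mean of $Y$) shows that this quantity is Gaussian with variance $R_{cc}^\perp/\lambda$, so its fourth moment is $O(1) = O(\delta^{-2})$. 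Finally, the second part of clause (3) is trivial for $u_i = m_{ab}$; for $u_i = R_{ab}^\perp$, the quantity $\langle \nabla^2 u_i, \nabla H\otimes \nabla H\rangle$ reduces (up to a factor of $2$) to $\|\mathsf{P}^\perp \nabla_a L\|^2$ or $\langle \mathsf{P}^\perp \nabla_a L, \mathsf{P}^\perp \nabla_b L\rangle$ modulo centering terms. Using once more $\mathsf{P}^\perp Y = \mathsf{P}^\perp Z$ and $\|\mathsf{P}^\perp Z\|^2 \sim \chi^2_{d-k}/\lambda$ (whose variance is $O(d)$), together with the boundedness of the prefactor $(\pi_Y(a)-y_a)$, gives
\[
 \mathrm{Var}\bigl(\langle \nabla^2 u_i, \nabla H\otimes \nabla H\rangle\bigr) \;=\; O(d) \;=\; o(d^3) \;=\; o(\delta^{-3}).
\]
The only step that requires any care is this last variance estimate, and it is handled directly by the projection identity combined with $\chi^2$-concentration; all other bounds are essentially free from the explicit structure of the Jacobian and Hessian of the summary statistics.
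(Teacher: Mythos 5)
Your proposal follows essentially the same route as the paper: verify the three clauses of Definition~\ref{defn:localizable} directly from the explicit Jacobian and Hessian formulas~\pef{eq:Jacobian}--\pef{eq:k-GMM-sum-stat-Hessian}, using that $\pi_Y$ is bounded by $1$ and that $Y\cdot v$ is Gaussian with $O(1/\lambda)$ variance, and the conclusions of every clause are correct. Your preliminary observation that $\|x^a\|$ is controlled on $\mathbf{u}^{-1}(E_K)$ by inverting the Gram matrix of the means is right and worth making explicit; the paper uses this implicitly when it bounds $\beta\max_a R_{aa}$ by a constant on $\mathbf{u}^{-1}(E_K)$.

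The one step that is not right as written is the final variance estimate. You argue that $\mathrm{Var}\bigl((\pi_Y(a)-y_a)(\pi_Y(b)-y_b)\,\|\mathsf{P}^\perp Z\|^2\bigr)=O(d)$ because $\mathrm{Var}(\|\mathsf{P}^\perp Z\|^2)=O(d)$ and the prefactor is bounded. Boundedness of a random prefactor does not transfer a variance bound through a product: if $f\in[0,1]$ fluctuates and $g$ has mean of order $d/\lambda$, then $\mathrm{Var}(fg)$ can be as large as $\mathbb{E}[g]^2\,\mathrm{Var}(f)$, which here could be of order $d^2$, even when $\mathrm{Var}(g)=O(d)$. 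So the claimed $O(d)$ is not justified by the stated reasoning. This does not endanger the lemma, because the definition only requires $o(\delta^{-3})=o(d^3)$, and the crude bound
\[
\mathbb{E}\bigl[\langle \mathsf{P}^\perp, \nabla_a H\otimes\nabla_b H - V_{ab}\rangle^2\bigr]\;\lesssim\;\mathbb{E}\bigl[\|\nabla_a H\|^2\,\|\nabla_b H\|^2\bigr]\;=\;O\bigl((d/\lambda)^2\bigr)\;=\;o(d^3)\,,
\]
which is what the paper uses, is immediate from Cauchy--Schwarz and your eighth-moment estimate. Replace the $\chi^2$-concentration step by this and the proof is complete.
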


\begin{proof}\noindent \emph{Item (1) of $\delta$-localizability.} The first part is easily seen to be satisfied by \pef{eq:k-GMM-sum-stat-Hessian} since the Hessians of statistics in $\mathbf{m}$ are all zero, and the Hessian of $R_{ab}^\perp$ is bounded in operator norm by $2 + 2k$ using the triangle inequality and $\|\mu_b\|=1$. 

\bigskip
\noindent \emph{Item (2) of $\delta$-localizability.} We begin with the bound on the population loss's norm. By taking expectation of~\pef{eq:k-GMM-loss-in-summary-variables} 
\begin{align}\label{eq:k-GMM-population-loss}
	\Phi(\mathbf{x}) = - \sum_{a\in [k]} p_a m_{aa}  + \sum_{a\in [k]} p_a \mathbb E\Big[\log \sum_{b\in [k]} e^{ \langle x^b,Y_a\rangle} \Big] + \tfrac{\beta}{2} \sum_{a\in [k]} R_{aa}\,.
\end{align}	
Taking the derivative of this, we get for each $c\in [k]$, 
\begin{equation}\label{eq:grad-phi-gmm}
	\nabla_c \Phi(\mathbf{x}) = - p_c \mu_c + \sum_{a\in [k]} p_a \mathbb E[ \pi_{Y_a}(c)Y_a] + {\beta} \mathbf{x}^c\,,
\end{equation}
 where $Y_a = \mu_a + Z_\lambda$ and $\pi$ is  as in \pef{eq:pi-dist}. 
Considering the norm of $\|\nabla \Phi\|$, we have 
\begin{align*}
	\|\nabla \Phi\| \le k \max_{c} \|\nabla_c \Phi\| \lesssim k(1+1+ \lambda^{-1/2} + \beta \max_{a} R_{aa}) 
\end{align*}
which is bounded by a constant $C(K)$ for $\mathbf{m},\mathbf{R}^\perp$ in a ball of radius $K$.  

Moving on to the bound on $\mathbb E[\|\nabla H\|^8]$, first observe using $\nabla_c H = \nabla_c \bar L - \nabla_c \Phi$, that  
\begin{align}\label{eq:k-GMM-H}
	\nabla_c H  = (p_c \mu_c - y_c (\mu_c + Z_\lambda )) + \Big( \pi_{Y}(c)Y  -   \mathbb E[ \pi_{Y}(c)Y]   \Big)
\end{align}

Taking the norm and the $8$'th moment, we use the fact that $\|u+v\|^8 \lesssim (\|u\|^8 + \|v\|^8)
$, and the fact that $\pi_{Y_a}(\cdot)$ is a probability mass function and therefore at most $1$, to upper bound 
\begin{align*}
	\max_{c\in [k]} \mathbb E[\|\nabla_c H\|^8]\lesssim \sup_a \|\mu_a\|^8+\E\|Z_\lambda\|^8 \le  1 + O((d/\lambda)^4)\,.
\end{align*}
As long as $\delta_n = O(1/d)$, since $\lambda$ is uniformly bounded away from zero, this will be bounded by a constant times $\delta_n^{-4}$ as required.  

\bigskip
\noindent \emph{Item (3) of $\delta$-localizability}. 
We next turn to bounding the fourth moments of the directional derivatives, starting with the statistics $m_{ab}$: 
\begin{align*}
	\langle \nabla_c H,\nabla_a m_{ab}\rangle  &  = \langle (p_c -y_c) \mu_c , \mu_b\rangle - \langle y_c Z_\lambda, \mu_b\rangle + p_b \mathbb E[\pi_{Y_b}(c)] - y_b \pi_{Y_b}(c) \\
	& \qquad + \sum_{l\in [k]} \Big( p_l \mathbb E [ \pi_{Y_l}(c) \langle \mu_b, Z_\lambda \rangle] + y_l \pi_{Y_l}(c)\langle \mu_b,Z_\lambda\rangle\Big)\,.
\end{align*}
Taking the fourth moment, again bounding things by the fourth moments of the terms individually up to a universal constant, then taking an expected value, we see that the first term is bounded by~$1$, the second by the fourth moment of a Gaussian random variable with variance $1/\lambda$, i.e., by $C/\lambda^2$, the third and fourth by $1$ since $\pi$ is a probability distribution, and the summands individually have fourth moments bounded by $C/\lambda^2$ for the same reason. Altogether, we get 
\begin{align*}
	\mathbb E[\langle \nabla_c H,\nabla_a m_{ab}\rangle^4 ]\lesssim 1+ O(1/\lambda^{2})\,,
\end{align*}
satisfying the requisite bound with room to spare. 
Turning to the directional derivative in the direction of $R_{ab}^\perp$, note that derivatives of $R_{ab}^\perp$ are orthogonal to $(\mu_1,...,\mu_k)$ so for $a\ne b$, 
\begin{align*}
	\langle \nabla_c H,\nabla_b R_{ab}^\perp \rangle  = \langle \nabla_c H,\nabla_b R_{ba}^\perp\rangle   =  \Big( \mathbb E[\pi_{Y}(c) \langle x^{a,\perp}, Z_\lambda\rangle ]  -   \pi_{Y}(c)\langle x^{a,\perp}, Z_\lambda\rangle \Big) - y_c \langle x_a^\perp ,Z_\lambda\rangle\,.
\end{align*}
Considering the fourth moment of the above, using that $\pi$ is bounded by $1$, and $\langle x^{a,\perp}, Z_\lambda \rangle$ is distributed as a Gaussian random variable with variance $R_{aa}^\perp/\lambda$, we  obtain 
\begin{align*}
	\mathbb E[\langle \nabla_c H,\nabla_b R_{ab}^\perp\rangle^4]\lesssim (R^\perp_{aa}/\lambda)^2\,,
\end{align*}
which is bounded by $C(K)$ while $R^\perp_{aa}$ is bounded by $K$. The diagonal $a=b$ is the same up to a factor of $2$. 
The last thing to check is the second part of item (3) in $\delta_n$-localizability. For this purpose, recall that $V(x) = \mathbb E[\nabla H^{\otimes 2}]$, and notice from~\eqref{eq:k-GMM-H} that for the $k$-GMM we have 
\begin{align*}
    \nabla_c H \otimes \nabla_d H = ((\pi_Y(c) - y_c) Y - \mathbb E[(\pi_Y(c) - y_c) Y])\otimes ((\pi_Y(d) - y_d) Y - \mathbb E[(\pi_Y(d) - y_d) Y])
\end{align*}

Taking expected values, we get that the $cd$-block of $V$ is given by
\begin{align}\label{eq:XOR-V-matrix}
    V_{cd}=\mathbb E[\nabla_c H \otimes \nabla_d H]  = \text{Cov}\Big((\pi_Y(c) - y_c)Y, (\pi_Y(d) - y_d) Y\Big)\,, 
\end{align}
where $\text{Cov}$ is the covariance matrix associated to the vectors inside it. 

For the second part of item (3) in localizability, we only need to consider the statistics $R_{ab}^\perp$ since the second derivatives of $m_{ab}$ are all zero. For the ballistic limit it is sufficient to use the bound 
\begin{align*}
	\mathbb E[\langle \nabla^2 R_{ab}^\perp, \nabla_c H \otimes \nabla_d H - V_{cd}\rangle^2 ] \lesssim \|\nabla^2 R_{ab}^\perp\|_{\op}^2 \cdot \mathbb E[\| \nabla_c H\|^4]\,.
\end{align*}
The first term is bounded by $2$ per \pef{eq:k-GMM-sum-stat-Hessian}. The second can be seen to be bounded via the $8$th moment above by $O((d/\lambda)^2)$ which is $o(\delta^{-3})$ when $\delta = O(1/d)$. 
\end{proof}

\begin{rem}
	The reader might notice that there is a lot of room in the bounds above as compared to the allowed thresholds from the $\delta$-localizability conditions. The weaker conditions in localizability are to allow taking diffusive limit theorems about saddle points by rescaling the summary statistics by $d$-dependent factors, which can help with understand timescales to escape fixed point regions of the limits of Theorem~\ref{thm:k-GMM-ballistic-limit}. This was explored in much detail for matrix and tensor PCA in \citet{BGJ22}. 
\end{rem}

\medskip
\noindent 
\emph{Calculating the drift and corrector} 
Now that we have verified that the $\delta$-localizability conditions apply to the summary statistics $\mathbf{u} = (\mathbf{m},\mathbf{R}^\perp)$, we compute the limiting ODE one gets in the $d\to\infty$ limit for these statistics. We will establish individual convergence of $\cA u$ to some $f_u(\mathbf{u})$ and convergence of $\delta \cL u$ to some $g_u(\mathbf{u})$ for each $u\in \mathbf{u}$, whence $h$ from Definition~\ref{defn:asympotically-closable} equals $-f + g$.

Recall the differential operator $\mathcal{A}$ from \pef{eq:A-L-operators}, the expression for $\nabla \Phi$ from \pef{eq:grad-phi-gmm}, and consider 
\begin{align*}	
	\mathcal A m_{ab} = \sum_{c\in [k]} \langle \nabla_c \Phi, \nabla_c m_{ab}\rangle & = \langle \nabla_a \Phi, \mu_b\rangle 
	 = - \langle p_a \mu_a, \mu_b\rangle  +  \mathbb E[\pi_{Y}(a)\langle Y,\mu_b\rangle]  + \beta \langle x^a,\mu_b\rangle\,.
\end{align*}
Recalling that $\overline{m}_{ab} = \langle \mu_a,\mu_b\rangle$,  we get 
\begin{align*}
	\mathcal A m_{ab} = -p_a \overline{m}_{ab} + \beta m_{ab} + \sum_{l\in [k]} p_l \Big( \overline{m}_{lb} \mathbb E[{\pi}_{Y_l}(a)] + \mathbb E[\pi_{Y_l}(a) \langle Z_\lambda, \mu_b\rangle]\Big)\,.
\end{align*}
Notice that the two expected values are Gaussian expectations that only depend on $x$ through $(m_{cl})_{c}$ and $(R_{lm}^\perp)_{l,m}$. In particular, we can take the limit as $d\to\infty$ to get the limiting drift function 
\begin{align*}
	f_{m_{ab}}(\mathbf{m},\mathbf{R}^\perp) = -p_a \overline{m}_{ab} + \beta m_{ab} + \sum_{l\in [k]} p_l \Big( \overline{m}_{lb} P_{a}^{l} + Q_{a}^{l,\mu_b}\Big)\,.
\end{align*}
where $P_{a}^l$ and $Q_{a}^{l,\mu_b}$ are defined per \pef{eq:P-Q-Gaussian-integrals}.  The contribution coming from $\delta\cL m_{ab}$ vanishes in the $d\to\infty$ limit 
 since the second derivative of $m_{ab}$ is zero, i.e., $g_{m_{ab}} =0$. 

For the drift function for $R_{ab}^\perp$, since $\langle x^{a,\perp},\mu_b\rangle =0$ for all $a,b$, we get 
\begin{align*}
	\mathcal A R_{ab}^\perp = \beta R_{ab}^\perp + \sum_{l\in [k]} (p_a Q_a^{l,x^{b,\perp}}  + p_b Q_b^{l,x^{a,\perp}})\,.
\end{align*}
In particular, we get 
\begin{align*}
	f_{R_{ab}^\perp} = \beta R_{ab}^\perp + \sum_{l\in [k]} \big( p_a Q_{a}^{l,R_{bb}^\perp} + p_b Q_b^{l,R_{aa}^\perp} \big)\,,
\end{align*}
using  $Q_a^{l,R_{bb}^\perp} = Q_a^{l,x^{b,\perp}}$ since $Q_a^{l,v}$ only depended on $v$ through its norm when $v \perp \text{Span}(\mu_1,...,\mu_k)$. 

There is also a contribution from $\delta \cL R_{ab}^\perp$; recall that $\mathcal L  = \tfrac{1}{2} \langle V, \nabla^2\rangle$ and notice that 
\begin{align*}
	\mathcal L R_{ab}^\perp = \tfrac{1}{2} \big(\langle V_{ab}, \mathsf{P}^\perp\rangle + \langle V_{ba},\mathsf{P}^\perp\rangle \big)\,.
\end{align*}
Plugging in for $V_{ab}$ from~\pef{eq:XOR-V-matrix}, and expanding this out, a calculation yields  
\begin{align*}
	\mathcal L R_{ab}^\perp = \sum_{c\in [k]} p_c \Big(\mathbb E\big[\|Z_\lambda\|^2  & (\pi_{Y_c}(a)-\mathbf{1}_{a=c})(\pi_{Y_c}(b)-\mathbf{1}_{b=c})\big]   \\ 
 & \quad - \langle \mathbb E[Z(\pi_{Y_c}(a) - \mathbf{1}_{a= c}), \mathbb E[Z (\pi_{Y_c}(b) - \mathbf 1_{b=c})]\rangle\Big) + O(1)\,,
\end{align*}
where to see that the extra terms are $O(1)$, we notice that the inner products of the $\mu$'s with each other and $\mu$'s with $Z$'s are all order $1$. 
By Cauchy--Schwarz, the inner product of the expectations is at most $O(\sqrt{d/\lambda})$ and will vanish when multiplied by $\delta  = O(1/d)$ and the $d\to\infty$ limit is taken;  on the other hand, the second moment term is of order $d/\lambda$. 
 In particular, if $\delta = c_\delta/d$, we get 
\begin{align*}
    \delta \mathcal L R_{ab}^\perp = c_\delta \sum_{c\in [k]} p_c \mathbb E\Big[\|d^{-1/2}Z_\lambda\|^2 (\pi_{Y_c}(a) - \mathbf 1_{a=c})(\pi_{Y_c}(b) - \mathbf{1}_{b=c})\Big] + o(1)\,.
\end{align*}
We claim that the $d\to\infty$ limit of this gives 
\begin{align}\label{eq:limiting-volatility}
	g_{R_{ab}^\perp} = \frac{c_\delta}{\lambda}\sum_{c\in [k]} p_c \mathbb E\Big[(\pi_{Y_c}(a) - \mathbf 1_{a=c})(\pi_{Y_c}(b)- \mathbf 1_{b=c})\Big]\,.
\end{align} 
Notice that the expectation is bounded by $1$ in absolute value and thus the above goes to $0$ as $\lambda \to \infty$. 
In order to show~\pef{eq:limiting-volatility}, let us consider the term coming from $\pi_{Y_c}(a)\pi_{Y_c}(b)$, the other terms being analogous or even easier since the indicator is deterministic. Using the fact that $\pi$ are probabilities and therefore bounded by $1$, note that 
\begin{align*}
    \big|\mathbb E[\|d^{-1/2} Z\|^2 \pi_{Y_c}(a)\pi_{Y_c}(b)] & - \frac{1}{\lambda} \mathbb E[\pi_{Y_c}(a) \pi_{Y_c}(b)] | \le \mathbb E\Big[\Big|\|d^{-1/2} Z\|^2 - \frac{1}{\lambda}\Big|\Big]\,,
\end{align*}
which goes to $0$ from the standard fact that for a standard Gaussian vector $G_d$ in $\mathbb R^d$, one has $\mathbb E\big[\big(\|\tfrac{G_d}{\sqrt{d}}\|^2 - 1\big)^2\big] = O(1/d)$.

The last thing to check is that in the ballistic regime where our summary statistics are not rescaled, the limiting dynamics are indeed an ODE, i.e., there is no limiting stochastic part. Towards that purpose, notice that in any ball of values for $(\mathbf{m},\mathbf{R})$, 
\begin{align*}
	\|J V J^T\| = O(1)\,,
\end{align*}
using an $O(1)$ bound on the operator norm of $V$, and noticing that $\|J\|^2$ is bounded by a constant plus the sums of $R_{ab}^\perp$. Therefore when multiplied by $\delta = o(1)$ this vanishes in the limit and therefore the diffusion matrix $\mathbf{\Sigma}$ is identically zero. 
\end{proof}

The following confines this $\lambda$-finite dynamical system to a compact set for all times. 
\begin{lem}\label{lem:kGMM-confined-to-compact-region}
    For every $\beta>0$, there exists $L(\beta)$ such that for all $\lambda$ large, the dynamics of Theorem~\ref{thm:k-GMM-ballistic-limit} stays inside the $\ell^2$-ball of radius $L$ for all time. 
\end{lem}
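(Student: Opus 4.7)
The plan is to exploit the dissipative structure of the ODE system: each equation is of the form $\dot u = -\beta u + (\text{source})$, so uniform control of the source immediately yields confinement via Gronwall. Three a priori bounds on the Gaussian integrals will drive everything. First, $\pi_{Y_c}(a)\in[0,1]$ since it is a probability mass function, so $|P_a^c|\le 1$. Second, $|\overline{m}_{ab}|\le 1$ since the means are unit. Third, by Cauchy--Schwarz and $\pi\le 1$,
\[
|Q_a^{c,v}| \le \E[|\langle Z_\lambda, v\rangle|] \le \|v\|/\sqrt{\lambda}\,,
\]
which specializes to $|Q_a^{c,\mu_b}|\le 1/\sqrt{\lambda}$, and, for $v$ orthogonal to $\text{Span}(\mu_1,\ldots,\mu_k)$ with $\|v\|^2 = r$, $|Q_a^{c,r}|\le \sqrt{r/\lambda}$.

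With these in hand, the source term in the $\dot m_{ab}$ equation \pef{eq:k-GMM-ballistic-limit-m} is bounded by a constant $C_1$ that depends only on $k$ (provided $\lambda\ge 1$, say), uniformly in $(\mathbf m,\mathbf R^\perp)$. Thus $\dot m_{ab} \le -\beta m_{ab} + C_1$ and $\dot m_{ab}\ge -\beta m_{ab} - C_1$, so Gronwall gives $|m_{ab}(t)|\le |m_{ab}(0)| + C_1/\beta$ for all $t\ge 0$.

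The $\mathbf R^\perp$ equations are the mildly subtler part, since the $Q$-sources now depend on the unknown through its own square root. I will handle the diagonal first. Combining the a priori bounds with \pef{eq:k-GMM-ballistic-limit-R} and using that $|(\pi_{Y_c}(a)-\mathbf 1_{a=c})(\pi_{Y_c}(b)-\mathbf 1_{b=c})|\le 1$,
\[
\dot R_{aa}^\perp \le -\beta R_{aa}^\perp + C\sqrt{R_{aa}^\perp/\lambda} + C/\lambda\,.
\]
Young's inequality $C\sqrt{R_{aa}^\perp/\lambda}\le \tfrac{\beta}{2} R_{aa}^\perp + C'(\beta)/\lambda$ absorbs the self-referential cross term, yielding $\dot R_{aa}^\perp \le -\tfrac{\beta}{2}R_{aa}^\perp + C''(\beta)/\lambda$; Gronwall then confines $R_{aa}^\perp(t)\le R_{aa}^\perp(0) + 2C''(\beta)/(\beta\lambda)$ uniformly in $t$. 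With diagonals uniformly bounded, the $Q^{c,R_{bb}^\perp}$-source in the off-diagonal equation becomes uniformly bounded, reducing $\dot R_{ab}^\perp$ to the same linear Gronwall setup as for $\mathbf m$, and giving $|R_{ab}^\perp(t)|\le |R_{ab}^\perp(0)| + C'''(\beta)/\beta$.

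Summing the $O(k^2)$-many bounds on $m_{ab}$ and $R_{ab}^\perp$ delivers an $\ell^2$-radius $L(\beta)$ that depends only on $\beta$ (absorbing the $k$-dependent constants and the bounded initial data $m_{ab}(0)=0$, $R_{ab}^\perp(0)=\delta_{ab}$ coming from $\mu_n=\mathcal{N}(0,I_d/d)$), valid for all $\lambda$ larger than some threshold. The only genuine obstacle is the self-referential square-root in the $R_{aa}^\perp$ step, cleanly resolved by Young's inequality; every other step is a direct application of Gronwall to a dissipative scalar ODE.
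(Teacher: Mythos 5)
Your proposal is correct and follows essentially the same route as the paper: naive uniform bounds $|P_a^c|\le 1$ and $|Q_a^{c,v}|\lesssim \|v\|/\sqrt{\lambda}$ plugged into the drifts, followed by a Gronwall argument in which the self-referential $\sqrt{R_{aa}^\perp/\lambda}$ term is absorbed into the dissipative $-\beta R_{aa}^\perp$ term (the paper does this by taking $\lambda$ large, which is the same Young-inequality step you make explicit). Your additional explicit treatment of the off-diagonal $R_{ab}^\perp$ is fine, though it can be bypassed via $|R_{ab}^\perp|\le\sqrt{R_{aa}^\perp R_{bb}^\perp}$ once the diagonals are controlled.
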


\begin{proof}
    Notice that a naive bound on $Q_{a}^{d,v}$ from~\pef{eq:P-Q-Gaussian-integrals} is at most $\sqrt{\|v\|/\lambda}$ since $\pi_{Y_c}(a)$ is bounded, and $P_{a}^d$ is always at most $1$. Plugging these bounds into Theorem~\ref{thm:k-GMM-ballistic-limit}, together with the definition of $\overline{m}_{ab}$ and the fact that $\|\mu_a\|=1$ for all $a$, yields the inequalities for all $a,b$, 
    \begin{align*}
        |\dot m_{ab}(t) + \beta m_{ab}| & \lesssim 1+\sqrt{1/\lambda}\,, \\
        |\dot{R}_{aa}^\perp   + \beta R_{aa}^\perp| & \lesssim \sqrt{R_{aa}^\perp/\lambda} + \lambda^{-1} + \lambda^{-2}\,.
    \end{align*}
    By these, for $\lambda$ larger than a fixed constant, we have $\dot{R}_{aa}^\perp \le 1 - \beta R_{aa}^\perp/2$ which Gronwall's inequality ensures will be bounded by $1+ R_{aa}^\perp(0)$ for all times $t\ge 0$. Similarly, we get that for $\lambda$ sufficiently large, $|m_{ab}(t)|$ is bounded by $3+m_{ab}(0)$. 
\end{proof}

\subsubsection{The zero-noise limit}
We now send $\lambda\to\infty$, or simply Taylor expand in the large $\lambda$ limit, to understand the behavior of the limiting ODE's we derived when $\lambda$ is large. Let 
\begin{align}\label{eq:pi-bar}
	\overline{\pi}_{c}(a) = \overline{\pi}_c(a;\mathbf{x}) := \frac{e^{m_{ac}}}{\sum_{b\in [k]} e^{m_{bc}}}\,.
\end{align}
This is the "$\lambda = \infty$" value of $\pi_{Y_c}(a)$. 
The aim of this subsection is to establish the following. 

\begin{prop}\label{prop:k-GMM-ballistic-zero-noise}
	The $\lambda \to\infty$ limit of the ODE system from Theorem~\ref{thm:k-GMM-ballistic-limit} is the following dynamical system: 
	\begin{align}
		\dot m_{ab}(t) & = p_a \overline{m}_{ab} - \beta m_{ab} - \sum_{c\in [k]} p_c {\overline{\pi}}_{c}(a) \overline{m}_{cb}\,,  \label{eq:kGMM-mdot-zero-noise}\\
		\dot{R}_{ab}^\perp(t)  &  = - \beta R_{ab}^\perp\,.  \label{eq:kGMM-Rdot-zero-noise}
	\end{align}
    Moreover, at large finite $\lambda$, the difference of the drifts in~\pef{eq:k-GMM-ballistic-limit-m}--\pef{eq:k-GMM-ballistic-limit-R} to the above drifts is $O(\lambda^{-1})$. 
\end{prop}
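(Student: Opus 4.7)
The plan is to Taylor expand each $\lambda$-dependent Gaussian expectation on the right-hand side of~\pef{eq:k-GMM-ballistic-limit-m}--\pef{eq:k-GMM-ballistic-limit-R} around $Z_\lambda = 0$, exploiting that $Z_\lambda \sim \mathcal N(0, I_d/\lambda)$ is mean-zero with small covariance. The key observation is that $\pi_{Y_c}(a)$, as a function of $Z_\lambda$, only depends on the $k$-dimensional projection $G := (x^b \cdot Z_\lambda)_{b\in [k]}$, which has covariance $(R_{bb'}/\lambda)_{b,b'}$. On any compact set of summary statistics $(\mathbf m, \mathbf R^\perp)$---which, by Lemma~\ref{lem:kGMM-confined-to-compact-region}, contains the entire $\lambda$-large trajectory for all $t \geq 0$---every $\|G\|_{L^p}$ norm is $O(\lambda^{-1/2})$ uniformly, and the softmax $f(G) := \pi_{Y_c}(a)$ is smooth in $G$ with derivatives bounded uniformly.

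First I would apply this to $P_a^c = \mathbb E[f(G)]$: the linear Taylor term vanishes by mean-zero-ness of $G$, and the quadratic term contributes $\tfrac{1}{2}\mathrm{tr}(\nabla^2 f(0)\,\mathrm{Cov}(G)) = O(\lambda^{-1})$, so that $P_a^c = \overline{\pi}_c(a) + O(\lambda^{-1})$. For $Q_a^{c,v} = \mathbb E[(v \cdot Z_\lambda) f(G)]$, the constant term $f(0)\mathbb E[v\cdot Z_\lambda]$ vanishes and the linear-in-$G$ term gives
\[
Q_a^{c,v} = \sum_{b'} \partial_{b'} f(0)\, \frac{v \cdot x^{b'}}{\lambda} + O(\lambda^{-3/2})\,,
\]
which is $O(\lambda^{-1})$ whenever $v \cdot x^{b'}$ stays bounded on compact sets. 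This applies both to $v = \mu_b$ (where $v \cdot x^{b'} = m_{b'b}$) and to $v = x^{b,\perp}$ (where $v \cdot x^{b'} = R_{bb'}^\perp$), covering every $Q$-term that appears in the drifts. Finally, the explicit noise term in $\dot R^\perp_{ab}$ already carries a prefactor $c_\delta/\lambda$, and its expectation is bounded by $1$ in absolute value since $\pi_{Y_c}(\cdot)$ is a probability distribution.

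Substituting these estimates into~\pef{eq:k-GMM-ballistic-limit-m}--\pef{eq:k-GMM-ballistic-limit-R}, all $Q$-terms and the explicit noise term contribute $O(\lambda^{-1})$; only the $p_c P_a^c \overline m_{cb} = p_c \overline\pi_c(a) \overline m_{cb} + O(\lambda^{-1})$ term survives in $\dot m_{ab}$, yielding~\pef{eq:kGMM-mdot-zero-noise}, while every non-linear term in $\dot R^\perp_{ab}$ vanishes, yielding~\pef{eq:kGMM-Rdot-zero-noise}. There is no serious obstacle here: the only technical point is ensuring that the Taylor remainders are uniform in $\lambda$ on compact sets of $(\mathbf m, \mathbf R^\perp)$, which follows from standard bounds on softmax derivatives together with uniform control on Gaussian moments. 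This simultaneously establishes the pointwise $\lambda \to \infty$ limit and the claimed $O(\lambda^{-1})$ approximation of the $\lambda$-finite drifts by the zero-noise drifts.
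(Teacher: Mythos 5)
Your proposal is correct and follows essentially the same route as the paper: Taylor-expand the Gaussian integrals $P_a^c$ and $Q_a^{c,v}$ around the $\lambda=\infty$ (zero-noise) point, use mean-zeroness and the $O(\lambda^{-1})$ covariance of the relevant $k$-dimensional Gaussian $(x^b\cdot Z_\lambda)_b$ to get $P_a^c=\overline\pi_c(a)+O(\lambda^{-1})$ and $Q_a^{c,v}=O(\|v\|/\lambda)$, and bound the explicit noise term in $\dot R^\perp_{ab}$ by its $c_\delta/\lambda$ prefactor, all uniformly on the compact set provided by Lemma~\ref{lem:kGMM-confined-to-compact-region}. The only cosmetic difference is that the paper controls $Q_a^{c,v}$ via Gaussian integration by parts rather than a direct second-order Taylor expansion; both give the same $O(\lambda^{-1})$ estimate.
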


The main thing to prove is the following behavior of integrals of $\pi_{Y_c}(a)$ as $\lambda \to \infty$. 
\begin{lem}\label{lem:P-Q-integral-large-lambda}
	Recalling $P_a^c$ and $Q_{a}^{c,v}$ from~\pef{eq:P-Q-Gaussian-integrals}, we have  
	\begin{align*}
		P_{a}^c & = \overline{\pi}_c(a) + O(1/{\lambda})\,. \\ 
		Q_{a}^{c,v} & = O(\|v\|/{\lambda})\,.
		\end{align*}
\end{lem}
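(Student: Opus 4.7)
My plan is to exploit the fact that $\pi_{Y_c}(a)$ depends on the Gaussian noise $Z_\lambda$ only through the $k$ scalar inner products $W_b := x^b\cdot Z_\lambda$, $b\in [k]$. Setting
\[
\phi(w) := \frac{e^{m_{ac}+w_a}}{\sum_{b\in[k]} e^{m_{bc}+w_b}}\,,
\]
we have $\pi_{Y_c}(a) = \phi(W)$ with $\phi\colon \R^k \to [0,1]$ smooth and uniformly bounded together with all its derivatives (depending only on the $m_{bc}$'s, which are $O(1)$ in the bounded parameter region we work in, cf.\ Lemma~\ref{lem:kGMM-confined-to-compact-region}). Moreover, $W$ is a $k$-dimensional centered Gaussian with covariance $R/\lambda$, where $R=(x^a\cdot x^b)_{a,b}$ has $O(1)$ entries. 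Crucially, $\phi(0) = \bar\pi_c(a)$.

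For $P_a^c$, I Taylor-expand $\phi$ to second order around $0$:
\[
\phi(W) = \phi(0) + \nabla\phi(0)\cdot W + \tfrac12 W^{\top}\nabla^2\phi(0)\,W + r(W), \qquad |r(W)|\lesssim \|W\|^3.
\]
Taking expectations, the linear term vanishes since $\E[W]=0$; the quadratic term equals $\tfrac{1}{2\lambda}\langle \nabla^2\phi(0), R\rangle$, which is $O(1/\lambda)$; and since $W$ is a $k$-dimensional Gaussian with covariance of order $1/\lambda$, standard Gaussian moment bounds give $\E\|W\|^3 = O(\lambda^{-3/2})$. Hence $P_a^c = \phi(0) + O(1/\lambda) = \bar\pi_c(a) + O(1/\lambda)$.

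For $Q_a^{c,v}$, a single Gaussian integration by parts gives
\[
Q_a^{c,v} = \E[\langle Z_\lambda, v\rangle\, \pi_{Y_c}(a)] = \tfrac{1}{\lambda}\,\E[v\cdot \nabla_Z \pi_{Y_c}(a)] = \tfrac{1}{\lambda}\,\E\bigl[(v\cdot(x^a - \langle x^B\rangle_{\pi_{Y_c}}))\,\pi_{Y_c}(a)\bigr]
\]
by the derivative identity~\pef{eq:grad-pi}. Since $\pi_{Y_c}(\cdot)$ is a probability measure, $\langle x^B\rangle_{\pi_{Y_c}}$ is a convex combination of the $x^b$'s, so $|v\cdot (x^a - \langle x^B\rangle_{\pi_{Y_c}})| \le 2\|v\|\max_b \|x^b\|$, and combining with $\pi_{Y_c}(a) \le 1$ yields $|Q_a^{c,v}| = O(\|v\|/\lambda)$.

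The only subtlety is ensuring that the Taylor remainder and the bound on $\|\nabla^2\phi(0)\|$ are $O(1)$ with constants independent of $d$. This is precisely why the reduction to the $k$-dimensional random vector $W$ is essential: a naive Taylor expansion in the $d$-dimensional variable $Z_\lambda$ would produce a trace contribution $\tfrac{1}{\lambda}\operatorname{tr}(\nabla_Z^2 \pi)$ that a priori scales like $d/\lambda$, whereas the rank of $\nabla_Z^2 \pi_{Y_c}(a)$ is at most $k$, which is exactly captured by working with $\phi$ in $k$ variables.
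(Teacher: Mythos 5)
Your proof is correct and follows essentially the same route as the paper's: a Taylor expansion of the softmax around $Z=0$ (with the linear term killed by $\E[x^b\cdot Z]=0$ and the remainder controlled by the $O(1/\lambda)$ variance of the $k$ scalars $x^b\cdot Z$) for $P_a^c$, and a single Gaussian integration by parts combined with the gradient identity~\pef{eq:grad-pi} for $Q_a^{c,v}$. Your explicit reduction to the $k$-dimensional Gaussian $W$ is a slightly more careful packaging of what the paper does implicitly, but it is not a different argument.
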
 

\begin{proof}
By Taylor expanding, we can write 
\begin{align*}
    \pi_{Y_c}(a) =  \overline{\pi}_{c}(a) + \frac{(x^a \cdot Z) e^{ x^a\cdot\mu_c}} {\sum_{b\in [k]} e^{ x^b\cdot \mu_c}} -  \frac{e^{ x^a\cdot\mu_c}(\sum_{b\in [k]} (x^b \cdot Z) e^{ x^b \cdot \mu_c})} {(\sum_{b\in [k]} e^{ x^b\cdot \mu_c})^2} + O((\max_{b\in[k]} x^b \cdot Z)^2)\,.
\end{align*}
Taking an expectation of the right-hand side, noting that $x^b \cdot Z = \cN(0,\frac{R_{bb}}{\lambda})$, everything on the right-hand side after $\overline{\pi}_d(a)$ is $O(1/\lambda)$ for $R_{bb}^\perp$ that is $O(1)$. 
    For $Q_{a}^{c,v}$, using the Gaussian integration by parts formula and \pef{eq:grad-pi}
    \begin{align*}
        Q_{a}^{c,v} = \frac{1}{\lambda} \Big(\mathbb E[ (x^a\cdot v) \pi_{Y_c}(a) ] - \mathbb E[\langle x^B\cdot v \rangle_{\pi_{Y_c}} \pi_{Y_c}(a)] \Big)
    \end{align*}
    Since $\pi\leq 1$ and $ x^b\cdot v \le \sqrt{R_{bb}^\perp} \|v\|$, this is easily seen to be $O(\|v\| \sqrt{R_{bb}^\perp}/\lambda)$ as claimed. 
    \end{proof}

\begin{proof}[\textbf{\emph{Proof of Proposition~\ref{prop:k-GMM-ballistic-zero-noise}}}]
	For any $K$, uniformly over all $\mathbf{m},\mathbf{R}$ in a ball of radius $K$ about the origin, we claim that the limit of the drifts for each of those variables converge to the claimed $\lambda \to\infty$ limiting drifts. This is obtained by applying the above lemma to the $P$ and $Q$ terms in the drifts in Theorem~\ref{thm:k-GMM-ballistic-limit}, and finally the observation that $\text{Var}(\pi_{Y_c}(a)\pi_{Y_c}(b))\le 1$ so that taking $\lambda \to\infty$ the last two terms in the drift for $R_{ab}^\perp$ in Theorem~\ref{thm:k-GMM-ballistic-limit} also vanish. 
\end{proof}

The following gives a quantitative approximation of the ODE by its $\lambda = \infty$ limit. 

\begin{cor}\label{cor:k-GMM-lambda-finite-close-to-lambda-infinite}
    The trajectories of the ODEs of Theorem~\ref{thm:k-GMM-ballistic-limit} and Proposition~\ref{prop:k-GMM-ballistic-zero-noise} are within distance $O(t/\lambda)$ of one another. 
\end{cor}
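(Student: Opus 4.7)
The plan is a standard Gronwall stability argument comparing the two ODE systems on a common compact set to which both trajectories are confined. Let $\mathbf{u}^{(\lambda)}(t)$ and $\mathbf{u}^{(\infty)}(t)$ denote the solutions to the systems in Theorem~\ref{thm:k-GMM-ballistic-limit} and Proposition~\ref{prop:k-GMM-ballistic-zero-noise}, respectively, both started from the same initial data, and write their drifts as $F^{(\lambda)}$ and $F^{(\infty)}$. Proposition~\ref{prop:k-GMM-ballistic-zero-noise} already establishes that $\|F^{(\lambda)}(\mathbf{u}) - F^{(\infty)}(\mathbf{u})\| = O(1/\lambda)$ uniformly over any fixed compact region of $(\mathbf{m}, \mathbf{R}^\perp)$-space, so the only work is to propagate this pointwise drift error through the dynamics.

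First, I would observe that both trajectories stay in a common compact ball $B_L$ for all time. Lemma~\ref{lem:kGMM-confined-to-compact-region} supplies this for $\mathbf{u}^{(\lambda)}$, and an essentially identical but easier argument works for $\mathbf{u}^{(\infty)}$: the $\mathbf{R}^\perp$-component contracts purely by $\dot R_{ab}^\perp = -\beta R_{ab}^\perp$, and for $\mathbf{m}$ the bound $|\dot m_{ab} + \beta m_{ab}| \lesssim 1$ follows from $|\overline{m}_{ab}| \leq 1$ and $\overline{\pi}_c(a) \in [0,1]$. Next, on $B_L$ the drift $F^{(\infty)}$ is Lipschitz with some constant $K_L = K_L(\beta, L)$: the only nonlinearity is the softmax $\overline{\pi}_c(a)(\mathbf{m})$, whose gradient entries $\overline{\pi}_c(a)(\delta_{ab} - \overline{\pi}_c(b))$ are uniformly bounded in absolute value by $1/4$, while the remaining terms are affine. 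Setting $E(t) = \|\mathbf{u}^{(\lambda)}(t) - \mathbf{u}^{(\infty)}(t)\|$, integrating the ODE difference and inserting an intermediate term $F^{(\infty)}(\mathbf{u}^{(\lambda)})$ yields
\begin{align*}
E(t) \leq \int_0^t \|F^{(\lambda)}(\mathbf{u}^{(\lambda)}) - F^{(\infty)}(\mathbf{u}^{(\lambda)})\| \, ds + \int_0^t \|F^{(\infty)}(\mathbf{u}^{(\lambda)}) - F^{(\infty)}(\mathbf{u}^{(\infty)})\| \, ds \leq \frac{Ct}{\lambda} + K_L \int_0^t E(s) \, ds,
\end{align*}
and Gronwall's inequality gives $E(t) \leq (Ct/\lambda) e^{K_L t}$, which is $O(t/\lambda)$ on any bounded time interval, the implicit constant being allowed to depend on the time horizon as is standard in the paper's downstream applications via $T_f$.

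The argument is essentially routine and there is no real obstacle; the only point requiring care is that the big-$O$ in the statement absorbs the exponential-in-$t$ Gronwall factor, which is harmless for the bounded horizons in which the corollary is applied. If a truly linear-in-$t$ bound were needed uniformly on $[0,\infty)$, one would have to exploit the damping $-\beta\mathbf{m}$, $-\beta\mathbf{R}^\perp$ to establish contractivity of $F^{(\infty)}$ at large radius and upgrade Gronwall accordingly, but such a strengthening is not required here.
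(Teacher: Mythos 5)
Your proof is correct and follows essentially the same route as the paper: confine both trajectories to a common ball via Lemma~\ref{lem:kGMM-confined-to-compact-region}, use the $O(1/\lambda)$ drift comparison from Proposition~\ref{prop:k-GMM-ballistic-zero-noise}, and integrate. In fact your write-up is more careful than the paper's, which asserts $\|\dot{\mathbf{u}}-\dot{\widetilde{\mathbf{u}}}\|\lesssim_L \lambda^{-1}$ directly and then ``integrates,'' eliding the triangle-inequality-through-$F^{(\infty)}(\mathbf{u}^{(\lambda)})$ step and the Lipschitz/Gronwall argument that you spell out. Your observation that the honest bound is $(Ct/\lambda)e^{K_L t}$ rather than literally $O(t/\lambda)$ is well taken --- the analogous two-layer statement (Corollary~\ref{cor:lambda-finite-close-to-lambda-infinite}) does keep the $e^{Ct}$ factor explicitly --- and, as you note, this is harmless since the corollary is only invoked on fixed finite horizons $[0,T_f]$.
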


\begin{proof}
    Let $\mathbf{u}$ and $\widetilde{\mathbf{u}}$ be the two solutions to the $\lambda$ finite and $\lambda = \infty$ ballistic dynamics respectively. Then, while $\|\mathbf{u}\|,\|\widetilde{\mathbf{u}}\|\le L$, we have by Proposition~\ref{prop:k-GMM-ballistic-zero-noise} that 
    \begin{align*}
        \|\dot{\mathbf{u}}- \dot{\widetilde{\mathbf{u}}}\|\lesssim_L \lambda^{-1}\,.
    \end{align*}
    Per Lemma~\ref{lem:kGMM-confined-to-compact-region}, both dynamics remain confined for a large enough $L(\beta)$ for all times, and therefore integrating the above gives the claim. 
\end{proof}

\subsection{Living in subspace spanned by the means}
We now wish to show that the SGD trajectory lives in the span of the means.
This can be done by showing that on the one hand, $\mathbf{R}^\perp$ will stay as small as we want after an $O(1)$ burn-in time, and on the other hand, towards the error being multiplicative in Definition~\ref{def:lives-in-span}, for every $a$, $\mathbf{x}^a_\ell$ needs to be non-negligible. 

We first establish that this happens for the $\lambda = \infty$ dynamics, then pull it back to the dynamics at $\lambda$ finite but large via Corollary~\ref{cor:k-GMM-lambda-finite-close-to-lambda-infinite}. 

\begin{lem}\label{lem:kGMM-ODE-lives-in-span}
    The solution to the dynamical system of Proposition~\ref{prop:k-GMM-ballistic-zero-noise} is such that for all $t\ge T_0(\epsilon)$, it is within distance $\epsilon$ of a point having $R_{aa}^\perp= 0$ and $\max_{b} |m_{ab}|>c_\beta >0$ for every $a\in [k]$.  
\end{lem}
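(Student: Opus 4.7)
The plan is to treat the two blocks of the dynamical system separately. First, the $R_{ab}^\perp$ equations in~\pef{eq:kGMM-Rdot-zero-noise} are linear, decoupled from the $m$-dynamics, and trivially integrable: $R_{ab}^\perp(t)=e^{-\beta t}R_{ab}^\perp(0)$. So for $t$ larger than some $T_1(\varepsilon)=\beta^{-1}\log(C/\varepsilon)$ one has $\max_{a,b}|R_{ab}^\perp(t)|\le \varepsilon$. It therefore suffices to show the $m$-component converges to some matrix $M^*$ satisfying $\max_b|m^*_{ab}|\ge c_\beta>0$ for every $a\in[k]$.

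The key structural observation is that~\pef{eq:kGMM-mdot-zero-noise} is the gradient flow, restricted to the subspace $\mathcal S:=(\text{Span}(\mu_1,\ldots,\mu_k))^k\subset\R^{kd}$, of the $\lambda=\infty$ regularized population loss
\[
\tilde\Psi(\mathbf{x}) := -\sum_a p_a\, x^a\cdot\mu_a + \sum_l p_l\log\sum_b e^{x^b\cdot\mu_l} + \frac{\beta}{2}\sum_a \|x^a\|^2.
\]
Indeed, a short computation gives $\nabla_{x^a}\tilde\Psi=-p_a\mu_a+\sum_l p_l\overline{\pi}_l(a)\mu_l+\beta x^a$, and taking its inner product with $\mu_b$ reproduces the negative of the right-hand side of~\pef{eq:kGMM-mdot-zero-noise}; hence $m_{ab}(t)=x_\parallel^a(t)\cdot\mu_b$ where $\dot{\mathbf{x}}_\parallel=-\nabla\tilde\Psi|_{\mathcal S}$. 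Now $\tilde\Psi$ is $\beta$-strongly convex on $\mathcal S$ (the linear term contributes $0$ to the Hessian, each log-sum-exp summand is convex, and the quadratic term contributes $\beta I$), so it admits a unique minimizer $\mathbf{x}^*\in\mathcal S$, and the gradient flow converges to it exponentially with rate $\beta$. Consequently $m_{ab}(t)\to m^*_{ab}:=x^{a,*}\cdot\mu_b$ exponentially fast.

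The only substantive content of the lemma is to verify that $x^{a,*}\neq 0$ for every $a$, so that $\max_b|m^*_{ab}|$ is bounded below uniformly. At the critical point the first-order condition reads $\beta x^{a,*}=p_a\mu_a-\sum_l p_l\,\overline{\pi}_l(a)\,\mu_l$, with $\overline{\pi}$ evaluated at $M^*$. If $x^{a,*}=0$ for some $a$, linear independence of $\mu_1,\ldots,\mu_k$ would force $p_a(1-\overline{\pi}_a(a))=0$ together with $p_l\overline{\pi}_l(a)=0$ for every $l\neq a$. Since $p_l>0$ and the softmax value $\overline{\pi}_l(a)=e^{m^*_{al}}/\sum_b e^{m^*_{bl}}$ is strictly positive at any finite $M^*$, both equalities fail. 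Hence $x^{a,*}\neq 0$, and we may set $c_\beta:=\min_a\max_b|m^*_{ab}|>0$, a constant depending only on $\beta$, $(p_a)$, and the Gram matrix $(\overline{m}_{ab})$. Picking $T_0(\varepsilon)$ large enough that both the $R^\perp$- and $m$-components lie within $\varepsilon/\sqrt 2$ of $(M^*,0)$ yields the claim, with approximating point $(M^*,0)$. The main obstacle is really just this non-degeneracy check; the convergence and strong-convexity pieces are standard.
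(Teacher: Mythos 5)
Your argument is correct, and the non-degeneracy step at the end (if $x^{a,*}=0$ then linear independence of the means forces $p_l\overline{\pi}_l(a)=0$ for some $l$, impossible since the softmax is strictly positive) is essentially the same computation the paper performs. Where you genuinely diverge is in how global convergence is obtained. The paper treats $\mathbf{R}^\perp$ exactly as you do, but for the $\mathbf{m}$-block it argues more softly: the system is a gradient system confined to a compact set (Lemma~\ref{lem:kGMM-confined-to-compact-region}), hence has no recurrent orbits, and on the degenerate set $\cM_0=\bigcup_a\bigcap_b\{m_{ab}=0\}$ the drift of $\max_b|m_{ab}|$ is uniformly bounded away from zero, so the trajectory leaves and avoids a $c_\beta$-neighborhood of $\cM_0$. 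You instead observe that the $\mathbf{m}$-subsystem is conjugate, via the linear isomorphism $x^a\mapsto(x^a\cdot\mu_b)_b$ on $\mathcal{S}=(\mathrm{Span}(\mu_1,\dots,\mu_k))^k$, to the Euclidean gradient flow of the $\beta$-strongly convex function $\tilde\Psi$, and then invoke uniqueness of the minimizer and exponential contraction at rate $\beta$. This buys you strictly more than the lemma asks for: a unique global attractor and a quantitative rate $e^{-\beta t}$ for \emph{general} linearly independent means, a uniqueness statement the paper only establishes in the orthonormal special case of Section~\ref{subsubsec:kGMM-orthogonal-means}. Your $x$-space formulation also cleanly handles a point the paper glosses over: the flow~\pef{eq:kGMM-mdot-zero-noise} is the gradient flow of the paper's $\mathcal{H}$ in the $\mathbf{m}$-coordinates only with respect to the metric induced by the Gram matrix $(\overline{m}_{ab})$, not the Euclidean one, unless the means are orthonormal; lifting to $\mathcal{S}$ with the ambient inner product makes the gradient-flow structure exact. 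The trade-off is that the paper's softer argument would survive in settings where convexity fails, whereas yours leans on it.
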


\begin{proof}
    By the expression from Proposition~\ref{prop:k-GMM-ballistic-zero-noise} for the drift of $R_{aa}^\perp$ for $a\in [k]$, the dynamical system has $R_{aa}^\perp(t) = e^{ - \beta t} R_{aa}^\perp(0)$. In particular, for any $\epsilon>0$, the has $R_{aa}^\perp(t) <\epsilon$ for all $t\ge T_0(\epsilon)$. 
    
    We need to show for every $a$, in the solution of the dynamical system, some $(m_{ab})_b$ is bounded away from zero after some small time. Let $\cM_0 = \bigcup_{a} \bigcap_{c} \{m_{ac} =0\}$ be the set of $(\mathbf{m})$ values we would like to ensure the dynamics stays away from. First observe that the $\lambda =\infty$ dynamical system of Proposition~\ref{prop:k-GMM-ballistic-zero-noise} is a gradient system for the energy function 
    \begin{align*}
        \mathcal H(\mathbf{m},\mathbf{R}^\perp) = - \sum_{a\in [k]} p_a m_{aa} + \sum_{a\in [k]} p_a \log \sum_{b\in [k]} e^{m_{ab}} + \frac{\beta}{2}(\|\mathbf{m}\|^2 + \|\mathbf{R}^\perp\|^2)\,,
    \end{align*}
    so it has no recurrent orbits. It thus suffices to show that for every point in $\cM_0$, the quantity $\max_{b} |m_{ab}|$ has a drift strictly bounded away from zero. If we show that, then the dynamics is guaranteed to leave a $c_\beta$-neighborhood of the set $\cM_0$ in a finite time (uniform by continuity considerations and we are already guaranteed that the dynamics stays in a compact set by Lemma~\ref{lem:kGMM-confined-to-compact-region}. 

    Consider a point such that $m_{ab}=0$ for all $b\in [k]$. There, by Proposition~\ref{prop:k-GMM-ballistic-zero-noise}, 
    \begin{align*}
        \dot{m}_{ab} = \langle p_a \mu_a - \sum_{c\in [k]} p_c \overline{\pi}_c(a) \mu_c, \mu_b \rangle\,. 
    \end{align*}
    We need to show that the maximum over $b$, of the absolute values of these, is non-zero. Indeed, if it were $0$ for all $b\in [k]$, then $p_a \mu_a = \sum_c p_c \overline{\pi}_c(a) \mu_c$ because the difference of these vectors would be in $\text{Span}(\mu_1,...,\mu_k)$ while being orthogonal to all of $\mu_1,...,\mu_k$. In turn, however, this is impossible by our assumption that $\mu_1,...,\mu_k$ are linearly independent. Therefore, in the ball of radius $L(\beta)$ about the origin, for every $a$, at least one of the drifts $\dot{m}_{ab}$ is bounded away from zero uniformly.  
\end{proof}

\begin{proof}[\textbf{\emph{Proof of Proposition~\ref{prop:k-GMM-SGD-result}}}]
    We have shown that the limit dynamics for the summary statistics of the SGD initialized from $\cN(0,I_d/d)$ is the dynamical system of Theorem~\ref{thm:k-GMM-ballistic-limit} initialized from the deterministic initialization $m_{ab}\sim \delta_0$ and $R_{ab}^\perp \sim \delta_0$ if $a \ne b$ and $\delta_1$ if $a=b$.  

    By Lemma~\ref{lem:kGMM-confined-to-compact-region}, there exists $L(\beta)$ such that that dynamical system is confined to a ball of radius at most $L$ for all time. Since $\|\mathbf{x}^c\|^2$ is encoded by a smooth function of the summary statistics $\mathbf{R}_{cc}^\perp, (m_{cb})_b$ appearing in the dynamical system of Theorem~\ref{thm:k-GMM-ballistic-limit}, 
    this is transferred to the SGD $\mathbf{x}_\ell$ via Lemma~\ref{lem:closeness-to-ballistic-trajectory} for a different constant $L(\beta)$ for $d$ sufficiently large. 

    For the second part, by Lemma~\ref{lem:kGMM-ODE-lives-in-span}, the solution to the dynamical system of Theorem~{thm:k-GMM-ballistic-limit} is at distance $O(\lambda^{-1})$ from the solution of the dynamical system (with the same initialization) of Proposition~\ref{prop:k-GMM-ballistic-zero-noise}, for which Lemma~\ref{lem:kGMM-ODE-lives-in-span} applies.  By Lemma~\ref{lem:closeness-to-ballistic-trajectory}, these get pulled back to the summary statistics applied to the SGD itself (at $d$ sufficiently large depending on $\epsilon,\lambda$). We therefore deduce that for all $\ell \ge T_0(\epsilon) \delta^{-1}$ steps, the SGD has $R_{cc}^\perp(\mathbf{x}_\ell) \le O(\epsilon+ \lambda^{-1})$ and has $\max_{b} |\langle \mathbf{x}^c_\ell,\mu_b\rangle|> c_\beta>0$. These imply the claim using that $\|\mathbf{x}^c \|\ge \max_b |\langle \mathbf{x}^c ,\mu_b\rangle|$ and that $R_{cc}^\perp(\mathbf{x}^c)$ is by definition the projection of $\mathbf{x}^c$ orthogonal to $(\mu_1,...,\mu_k)$. 
\end{proof}

\subsubsection{Fixed point analysis with orthogonal means}\label{subsubsec:kGMM-orthogonal-means}
The behavior of the ODE system of Proposition~\ref{prop:k-GMM-ballistic-zero-noise} can be sensitive to the relative location of the means $(\mu_1,...,\mu_k)$. In order to be able to make more precise statements about the alignment of the SGD with specific eigenvectors rather than just living in $\text{Span}(\mu_1,...,\mu_k)$, we specialize to the case where the means form an orthonormal family of vectors. Here, we can explicitly characterize the fixed points of Proposition~\ref{prop:k-GMM-ballistic-zero-noise}.  

Namely, in this subsection, assume that $
\overline{m}_{ab} = \mathbf 1_{a=b}$. Then, by~\pef{eq:kGMM-mdot-zero-noise} any fixed point must satisfy 
\begin{align}\label{eq:kGMM-orthogonal-means-fixed-point}
	\beta m_{ab} = p_a \mathbf{1}_{a=b}  - p_b \frac{e^{m_{ab}}}{\sum_{c}e^{m_{cb}}}\,.
\end{align}
At a fixed point, the function $\sum_{c} m_{cb}$ thus must equal $0$. Also, if $a,c\ne b$ then 
	\begin{align*}
		m_{ab} - m_{cb} =  - \frac{p_b}{\beta}  \cdot \frac{e^{m_{ab}} - e^{m_{cb}}}{\sum_{d} e^{m_{db}}}\,.
	\end{align*}
	Since the function $x+ce^x$ is strictly increasing, the only solutions to this are at $m_{ab} = m_{cb}$ (so long as $p_b>0$). Combining the above two observations, at a fixed point, 
 \begin{align*}
     m_{ab} = - \frac{1}{k-1} m_{bb} \quad \text{ for all $a\ne b$}\,.
 \end{align*}
	Plugging this in to~\pef{eq:kGMM-orthogonal-means-fixed-point}, we find that at a fixed point 
	\begin{align*}
		m_{bb} = \frac{p_b}{\beta} \Big(1- \frac{e^{m_{bb}}}{(k-1)e^{ - \frac{1}{k-1}m_{bb}} + e^{m_{bb}}}\Big)\,.
	\end{align*}
	This can easily be seen to have a unique solution, and that solution must have $m_{bb}\in (0,\frac{p_b}{\beta})$. 
	
	Therefore, as long as $(p_b)_{b\in [k]}$ are all positive, the dynamical system of Proposition~\ref{prop:k-GMM-ballistic-zero-noise} has a unique fixed point, call it $\mathbf{u}_\star$ at $(m_{ab})_{a,b}$ as above, and $R_{ab}^\perp = 0$ for all $a,b$. As observed in the proof of Lemma~\ref{lem:kGMM-ODE-lives-in-span}, the dynamical system never leaves a ball of some radius $L(\beta)$ and is also a gradient system for an energy function $\mathcal H$. Combining these facts with continuity of the drift functions, it means that for every $\epsilon>0$, there is a $T_0(\epsilon)$ such that the solution to the SGD gets within distance $T_0$ of that unique fixed point and stays there for all $t\ge T_0$. 

 Altogether, this leaves us with the following stronger form of Proposition~\ref{prop:k-GMM-SGD-result}. 

 \begin{prop}\label{prop:k-GMM-SGD-result-orthonormal-means}
    When the means $(\mu_1,...,\mu_k)$ are orthonormal, beyond Proposition~\ref{prop:k-GMM-SGD-result}, we further have that $\mathbf{x}$ is at distance $O(\epsilon + \lambda^{-1})$ of a point $\mathbf{x}_\star$ such that for each $c$, $\mathbf{x}_\star^c$ has positive (bounded away from zero uniformly in $\epsilon,\lambda$) inner product with $\mu_c$ and negative inner product with $(\mu_a)_{a\ne c}$. 

    If $(p_c)_{c\in [k]}$ are assumed to all be equal, then furthermore at $\mathbf{x}_\star$,
    \begin{align*}
        \bar\pi_{b}(a) =  \frac{1}{k-1}(1-\bar\pi_{c}(c)) \quad \text{for all $a,b,c$ such that $a\ne b$}\,.
    \end{align*}
\end{prop}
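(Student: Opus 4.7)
The plan is to combine the explicit fixed-point analysis carried out in Section~\ref{subsubsec:kGMM-orthogonal-means} with the SGD-to-ODE approximation machinery already used in the proof of Proposition~\ref{prop:k-GMM-SGD-result}. That analysis showed the $\lambda = \infty$ dynamical system of Proposition~\ref{prop:k-GMM-ballistic-zero-noise} admits a unique equilibrium $\mathbf{u}_\star$ within the confinement region from Lemma~\ref{lem:kGMM-confined-to-compact-region}, with $R^{\perp,\star}_{ab} = 0$ for all $a,b$, with $m^\star_{bb} \in (0, p_b/\beta)$ bounded away from zero, and with $m^\star_{ab} = -\tfrac{1}{k-1} m^\star_{bb}$ for $a\ne b$. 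Since this ODE is a gradient flow for the energy $\mathcal H$ introduced in the proof of Lemma~\ref{lem:kGMM-ODE-lives-in-span} and is confined to a compact set with a single critical point, LaSalle's invariance principle gives convergence to $\mathbf{u}_\star$ from every initial condition; a standard compactness-and-continuity argument then produces a uniform burn-in time $T_0(\epsilon)$ after which the ODE solution lies within $\epsilon$ of $\mathbf{u}_\star$.

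Next I would pull this back to the SGD via the two-step approximation used before. Corollary~\ref{cor:k-GMM-lambda-finite-close-to-lambda-infinite} controls the $\lambda$-finite ODE of Theorem~\ref{thm:k-GMM-ballistic-limit} by its $\lambda = \infty$ limit up to $O(T_f/\lambda)$ on $[0,T_f]$, while Lemma~\ref{lem:closeness-to-ballistic-trajectory} controls the summary statistics of the SGD by the $\lambda$-finite ODE up to $o_d(1)$. Composing these, with probability $1 - o_d(1)$ the pair $(\mathbf{m}(\mathbf{x}_\ell), \mathbf{R}^\perp(\mathbf{x}_\ell))$ lies within distance $O(\epsilon + \lambda^{-1})$ of $\mathbf{u}_\star$ for all $\ell \in [T_0\delta^{-1}, T_f\delta^{-1}]$. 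Setting $\mathbf{x}_\star^c := \sum_b m^\star_{cb}\mu_b$, which lies in $\text{Span}(\mu_1,\ldots,\mu_k)^{\oplus k}$ and has summary statistics exactly $\mathbf{u}_\star$, orthonormality of the means gives $\|\mathbf{x}_\ell^c - \mathbf{x}_\star^c\|^2 = R^\perp_{cc}(\mathbf{x}_\ell) + \sum_b (m_{cb}(\mathbf{x}_\ell) - m^\star_{cb})^2 = O(\epsilon + \lambda^{-1})$. By construction $\mathbf{x}_\star^c \cdot \mu_c = m^\star_{cc} > 0$ and, for $a \ne c$, $\mathbf{x}_\star^c \cdot \mu_a = m^\star_{ca} = -\tfrac{1}{k-1} m^\star_{aa} < 0$, each bounded away from zero independently of $\epsilon$ and $\lambda$.

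For the uniform-weight case $p_c \equiv 1/k$, the fixed-point equation~\pef{eq:kGMM-orthogonal-means-fixed-point} is invariant under simultaneous permutation of class indices, so by uniqueness $m^\star_{bb}$ must be constant in $b$; call this common value $m_\star$. Substituting into the definition~\pef{eq:pi-bar} of $\bar\pi$ with $m^\star_{ab} = -m_\star/(k-1)$ for $a\ne b$, one obtains
\begin{align*}
\bar\pi_c(c) = \frac{e^{m_\star}}{e^{m_\star}+(k-1)e^{-m_\star/(k-1)}}, \qquad \bar\pi_b(a) = \frac{e^{-m_\star/(k-1)}}{e^{m_\star}+(k-1)e^{-m_\star/(k-1)}},
\end{align*}
for every $a\ne b$ and every $c$, from which $(k-1)\bar\pi_b(a) = 1-\bar\pi_c(c)$ is immediate.

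The main obstacle I anticipate is only a soft one: arguing uniform convergence of the ODE to its unique equilibrium with an $\epsilon$-dependent but initial-data-uniform time $T_0(\epsilon)$. Thanks to the gradient-flow structure together with the compact confinement from Lemma~\ref{lem:kGMM-confined-to-compact-region}, this follows essentially automatically from LaSalle's principle and continuous dependence on the initial data, but it is the only step not already handled verbatim by the proof of Proposition~\ref{prop:k-GMM-SGD-result}.
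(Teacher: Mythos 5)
Your proposal is correct and follows essentially the same route as the paper: the explicit fixed-point characterization of Section~\ref{subsubsec:kGMM-orthogonal-means} (uniqueness, $m^\star_{ab}=-\tfrac{1}{k-1}m^\star_{bb}$, $m^\star_{bb}\in(0,p_b/\beta)$), convergence via the gradient-system structure and compact confinement, and the pullback to the SGD through Corollary~\ref{cor:k-GMM-lambda-finite-close-to-lambda-infinite} and Lemma~\ref{lem:closeness-to-ballistic-trajectory}. Your explicit symmetry argument and computation of $\bar\pi$ for the equal-weights case is exactly what the paper leaves implicit.
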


\subsection{2-layer networks}
We now turn to the analysis of the SGD in the case of multilayer networks for the XOR GMM problem~\pef{eq:XOR-data-distribution}. In this problem, the input dimension is $d=n$, the parameter dimension is $p = K d + K$, and the step-size is again taken to be $\delta = O(1/d)$. 
Consider the following family of $4K + \binom{K}{2}$ summary statistics of $\mathbf{x}$: for $1\le i\le j\le K$ and $\vartheta \in \{\mu,\nu\}$,
\begin{align}\label{eq:XOR-GMM-summary-stats}
v_{i}\,,\qquad m_{i}^{\vartheta}  =W_{i}\cdot\vartheta\,, \qquad
R_{ij}^{\perp} =W_{i}^{\perp}\cdot W_{j}^{\perp}\,,
\end{align}
where $W_i^\perp  = W_i - \sum_{\vartheta\in \{\mu,\nu\}} m_i^\vartheta \vartheta$. Use $\mathbf{u} = (\mathbf{v},\mathbf{m}^\mu,\mathbf{m}^\nu,\mathbf{R}^\perp)$ for these families.

In~\citet{BGJ22} it was shown that this family of summary statistics is $\delta$-localizable and asymptotically closable with respect to the loss for the XOR GMM of \pef{eq:XOR-loss}, and the following convergence to ODE's was established.   
For a point $\mathbf{x}= (v,W) \in \mathbb R^{K+Kd}$, define the quantity 
\begin{align}\label{eq:XOR-A-function}
	\mathbf{A}_i = \mathbb E\big[Y\mathbf 1_{W_i \cdot Y\ge 0} \big(-y+\sigma(v\cdot g(WY))\big)\big]\,,
\end{align}
(where we recall that $g$ is the ReLU function and $\sigma$ is the sigmoid function) and let 
\begin{align*}
	\mathbf{A}_i^\vartheta  = \vartheta \cdot \mathbf{A}_i\,, \qquad \mathbf{A}_{ij}^\perp = W_j^\perp \cdot \mathbf{A}_i\,.
\end{align*}
Furthermore, let 
\begin{align}\label{eq:XOR-B-function}
	\mathbf{B}_{ij} = \mathbb E\big[\mathbf 1_{W_i \cdot Y\ge 0} \mathbf 1_{W_j\cdot Y\ge 0} \big(-y+\sigma(v\cdot g(WY))\big)^2\big]\,.
\end{align}
It can be observed that these functions are expressible as functions of $\bu$ alone. 

Proposition 5.1 of~\citet{BGJ22} proved the following effective ballistic dynamics.
\begin{prop}\label{prop:XOR-effective-dynamics-finite-lambda}
	Let $\bu_n$ be as in~\pef{eq:XOR-GMM-summary-stats} and fix any $\lambda>0$ and $\delta = c_\delta/d$. Then $\bu_n(t)$ converges weakly to the solution of the ODE system $\dot{\bu}_t = -\mathbf{f}(\bu_t) + \mathbf{g}(\bu_t)$, initialized from $\lim_n (\bu_n)_* \mu_n$ with 
	\begin{align*}
		f_{v_i} &  = m_i^\mu \mathbf A_i^\mu (\mathbf u) + m_i^\nu \mathbf{A}_i^\nu (\bu) + \mathbf{A}_{ii}^\perp (\bu) + \beta v_i\,, &  f_{m_i^\mu} & = v_i \mathbf{A}_i^\mu + \beta m_i^\mu\,, \\
		f_{R_{ij}^{\perp}}  & = v_i \mathbf{A}_{ij}^\perp(\bu) +v_j \mathbf{A}_{ji}^\perp(\bu) + 2\beta R_{ij}^\perp\,, & f_{m_i^\nu} & = v_i \mathbf{A}_i^\nu + \beta m_i^\nu\,.
	\end{align*}
	and correctors $g_{v_i} = g_{m_i^\mu} = g_{m_i^\nu} = 0$, and $g_{R_{ij}^\perp} = c_\delta \frac{v_i v_j}{\lambda} \mathbf B_{ij}$ for $1\le i\le j\le K$. The case where $\delta  = o(1/d)$ is read-off by formally setting $c_\delta = 0$. 
\end{prop}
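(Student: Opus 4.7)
The plan is to apply Theorem~\ref{thm:main-BGJ22} to the family $\bu_n = (\mathbf{v}, \mathbf{m}^\mu, \mathbf{m}^\nu, \mathbf{R}^\perp)$, paralleling the argument given for the $k$-GMM in Lemma~\ref{lem:k-GMM-localizable} and Theorem~\ref{thm:k-GMM-ballistic-limit}. The first step is to verify $\delta$-localizability per Definition~\ref{defn:localizable}. Direct computation of gradients and Hessians of the summary statistics shows that $\nabla v_i$ and $\nabla m_i^\vartheta$ are vectors of norm at most $1$ with zero Hessian, while $\nabla R_{ij}^\perp$ involves $W_i^\perp, W_j^\perp$ (and its Hessian is a bounded-rank projection-type operator with $O(1)$ operator norm). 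The bounds on $\nabla \Phi$ and on the moments of $\nabla H$ follow from the boundedness of $\sigma, \sigma'$ and standard Gaussian moment estimates applied to the expressions for $\nabla_v L$ and $\nabla_{W_i} L$ derived from~\pef{e:gv}--\pef{e:gw}; the needed directional-derivative moment bounds reduce to the same computations since derivatives of $m_i^\vartheta, v_i$ are unit vectors and derivatives of $R_{ij}^\perp$ are $O(1)$ in a compact region of $\bu$. The singularity of $g'' = \delta_0$ is harmless: for each fixed $\bu$, the event $\{W_i \cdot Y = 0\}$ has Gaussian measure zero, so the Hessian expressions involving $g''$ drop out in expectations.

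The second step is to compute the effective drift $\cA_n \bu$ and second-order term $\delta_n \cL_n \bu$ for each summary statistic, and show they converge locally uniformly to the claimed $\mathbf{f}$ and $\mathbf{g}$, expressed as functions of $\bu$ alone. Using $\nabla_v \Phi = \mathbb{E}[-(y-\widehat y) g(WY)] + \beta v$ and $\nabla_{W_i} \Phi = \mathbb{E}[-(y-\widehat y) v_i g'(W_i \cdot Y) Y] + \beta W_i$, one gets $\cA v_i = -\langle \nabla_{v_i} \Phi, 1\rangle$ and $\cA m_i^\vartheta = \langle \nabla_{W_i} \Phi, \vartheta\rangle$. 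Decomposing $Y = \vartheta + Z_\lambda$ on each class and using $W_i \cdot Y = m_i^\vartheta + W_i \cdot Z_\lambda$, the relevant Gaussian integrals depend on $W$ only through the inner products $(m_i^\mu, m_i^\nu, R_{ij})$, and the orthogonal residual components enter only via $R_{ij}^\perp = R_{ij} - m_i^\mu m_j^\mu - m_i^\nu m_j^\nu$ (using $\mu \perp \nu$). This closability yields the drift $\mathbf{f}$ in terms of $\mathbf{A}_i^\vartheta$ and $\mathbf{A}_{ij}^\perp$ as defined in~\pef{eq:XOR-A-function}. For $R_{ij}^\perp$, the drift splits as $\cA R_{ij}^\perp = \langle \nabla_{W_i} \Phi, W_j^\perp\rangle + \langle \nabla_{W_j} \Phi, W_i^\perp\rangle$, giving the stated $v_i \mathbf{A}_{ij}^\perp + v_j \mathbf{A}_{ji}^\perp + 2\beta R_{ij}^\perp$ contribution. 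The $\delta_n \cL_n$ correctors vanish for $v_i, m_i^\vartheta$ since their Hessians are zero; for $R_{ij}^\perp$ the Hessian is a block containing $\mathsf P^\perp$ type projections, and $\delta_n \langle V_{W_iW_j}, \mathsf P^\perp\rangle$ produces exactly the $c_\delta \lambda^{-1} v_i v_j \mathbf{B}_{ij}$ term after using $\mathbb{E}[\langle Z_\lambda, u\rangle \langle Z_\lambda, u'\rangle] = \lambda^{-1} \langle u, u'\rangle$ and the same concentration $\|Z_\lambda/\sqrt{d}\|^2 \to 1/\lambda$ argument used in the derivation of~\pef{eq:limiting-volatility}.

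The third step is to verify that $\delta_n J_n V J_n^T \to 0$ in operator norm locally uniformly in $\bu$, so that the diffusion matrix $\mathbf\Sigma$ in~\pef{eq:diffusion-matrix} is identically zero and the limit is deterministic. This follows from the bound $\|V\| = O(1)$ (the covariance of $\nabla H$ has operator norm bounded by uniform $\ell^2$-moments of $\nabla L$, which are $O(1)$ in a compact region of $\bu$), together with $\|J_n\|^2 = O(1)$ since the rows of $J_n$ for $v_i, m_i^\vartheta$ are unit vectors and rows for $R_{ij}^\perp$ have $\ell^2$ norm controlled by $R_{ii}, R_{jj}$. Multiplied by $\delta_n = c_\delta/d$, this vanishes.

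The main obstacle, as in the $k$-GMM analysis, is the slightly delicate passage from the $d$-dependent correction $\delta_n \cL_n R_{ij}^\perp$ to its $d \to \infty$ limit: one must show that the Gaussian second-moment matrix $\mathbb{E}[\nabla_{W_i} H \otimes \nabla_{W_j} H]$, when contracted against $\mathsf P^\perp$, gives the trace-type quantity $c_\delta v_i v_j \mathbf{B}_{ij}/\lambda$ up to $o(1)$ errors from the Bessel-type fluctuation $\|Z_\lambda\|^2 - d/\lambda$. Once this concentration is in place, all remaining ingredients are direct applications of dominated convergence and the asymptotic closability framework of~\cite{BGJ22}, and Theorem~\ref{thm:main-BGJ22} delivers the claimed ODE limit.
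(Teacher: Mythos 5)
The paper does not actually prove this proposition: it is quoted verbatim as Proposition 5.1 of \citet{BGJ22}, so there is no in-text argument to compare against. Your proposal correctly reconstructs how that external result is obtained, and it does so along exactly the lines the paper itself uses for the $k$-GMM case (Lemma~\ref{lem:k-GMM-localizable} and Theorem~\ref{thm:k-GMM-ballistic-limit}): verify $\delta$-localizability, compute $\cA_n\bu$ and $\delta_n\cL_n\bu$, check closability in $(\mathbf{v},\mathbf{m}^\mu,\mathbf{m}^\nu,\mathbf{R}^\perp)$ via the decomposition $W_i\cdot Y=m_i^\mu(\mu\cdot Y)+m_i^\nu(\nu\cdot Y)+W_i^\perp\cdot Y$, and show $\delta_nJ_nVJ_n^T\to 0$ so the limit is a deterministic ODE. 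The identification of the drift terms with $\mathbf{A}_i^\vartheta$, $\mathbf{A}_{ij}^\perp$ and of the corrector $c_\delta v_iv_j\mathbf{B}_{ij}/\lambda$ via the $\|Z_\lambda\|^2\approx d/\lambda$ concentration is right. Two small infelicities, neither fatal to a sketch: the expression ``$\cA v_i=-\langle\nabla_{v_i}\Phi,1\rangle$'' has a stray sign and notation (one simply has $\cA v_i=\partial_{v_i}\Phi$, and the minus sign enters only through the convention $\dot{\bu}=-\mathbf{f}+\mathbf{g}$); and the remark about $g''=\delta_0$ is off-target here, since the SGD limit theorem never differentiates the loss twice --- only $\nabla H$ and the Hessians of the summary statistics appear in Definition~\ref{defn:localizable}. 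You also do not explicitly verify that $\mathbf{A}_i^\vartheta,\mathbf{A}_{ij}^\perp,\mathbf{B}_{ij}$ are locally Lipschitz in $\bu$ (needed for Definition~\ref{defn:asympotically-closable}), though for finite $\lambda$ the Gaussian smoothing of the indicators makes this routine.
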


\subsubsection{Large $\lambda$ behavior}
We now wish to investigate the large $\lambda$ behavior of the dynamical system in Proposition~\ref{prop:XOR-effective-dynamics-finite-lambda}. Our approach to doing this is to give a large $\lambda$ approximation to the drifts in the above, and then use that to show that the trajectory is close to its $\lambda = \infty$ version. 

The first thing we do is give large $\lambda$ approximations to the quantities $\mathbf{A}_i$ and $\mathbf{B}_{ij}$. In what follows, we use $F(x)$ to denote the cumulative distribution function of a standard Gaussian random variable. 

\begin{lem}\label{lem:large-lambda-expansion-expectations}
    Suppose $Y\sim \vartheta+ Z_\lambda$ for a fixed vector $\vartheta\in \{\mu,-\mu,\nu,-\nu\}$ and fix a unit vector $b\in \mathbb R^d$ and a vector $v\in \mathbb R^K$. Then 
    \begin{align*}
        \Big|\mathbb E[(b\cdot Y) \mathbf{1}_{W_i \cdot Y >0} \sigma( v\cdot g(WY))] & - (b\cdot \vartheta) F\Big(m_i^\vartheta\sqrt{\frac{\lambda}{R_{ii}}}\Big)  \sigma(v\cdot g(m^\vartheta))\Big| \\ 
        & \le  \max_{j} (1+v_j)(m_i^\vartheta + \sqrt{\frac{R_{ii}}{\lambda}}) e^{ - (m_j^\vartheta)^2 \lambda/(16 R_{jj})} + O(\lambda^{-1})\,.
    \end{align*}
\end{lem}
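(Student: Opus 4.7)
Write $Y = \vartheta + Z$ with $Z\sim\cN(0, I_d/\lambda)$, so that $b\cdot Y = b\cdot\vartheta + b\cdot Z$. Abbreviating $V_i = W_iY$, $S = v\cdot g(WY)$, and $S_0 = v\cdot g(W\vartheta)$, I would decompose
\begin{align*}
\mathbb E[(b\cdot Y)\mathbf 1_{V_i>0}\sigma(S)] = (b\cdot\vartheta)\sigma(S_0)\,\mathbb P(V_i>0) + (\mathrm{a}) + (\mathrm{b}) + (\mathrm{c}),
\end{align*}
where $(\mathrm{a}) = \sigma(S_0)\mathbb E[(b\cdot Z)\mathbf 1_{V_i>0}]$, $(\mathrm{b}) = (b\cdot\vartheta)\mathbb E[\mathbf 1_{V_i>0}(\sigma(S)-\sigma(S_0))]$, and $(\mathrm{c}) = \mathbb E[(b\cdot Z)\mathbf 1_{V_i>0}(\sigma(S)-\sigma(S_0))]$. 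The identity $\mathbb P(V_i>0) = F(m_i^\vartheta\sqrt{\lambda/R_{ii}})$ is exact and gives the claimed leading term. Term $(\mathrm{a})$ is a bivariate Gaussian moment that evaluates to $\sigma(S_0)\cdot\frac{b\cdot W_i}{\sqrt{\lambda R_{ii}}}\phi(m_i^\vartheta\sqrt{\lambda/R_{ii}})$, which is exponentially small in $m_i^\vartheta\sqrt{\lambda/R_{ii}}$ and absorbable into the stated exponential-tail factor. Term $(\mathrm{c})$ will be $O(\lambda^{-1})$ by Cauchy--Schwarz together with $\|\sigma'\|_\infty \le \tfrac{1}{4}$ and the pointwise bound $|S-S_0| \le \sum_j |v_j||W_jZ|$ with $\mathbb E[(W_jZ)^2] = R_{jj}/\lambda$.

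The heart of the argument is estimating $(\mathrm{b})$. I would Taylor-expand $\sigma$ to second order about $S_0$, so the quadratic remainder contributes $O(\lambda^{-1})$ via standard Wick calculus for Gaussian moments of $\sum_j v_j(g(W_jY)-g(W_j\vartheta))$. For the linear correction $\sigma'(S_0)\mathbb E[\mathbf 1_{V_i>0}(S-S_0)]$, I would decompose each coordinate as
\begin{align*}
g(W_jY) - g(W_j\vartheta) = W_jZ\cdot\mathbf 1\{m_j^\vartheta>0\} + \rho_j,
\end{align*}
where $\rho_j$ is supported on the sign-flip event $\cF_j = \{\operatorname{sign}(V_j) \ne \operatorname{sign}(m_j^\vartheta)\}$ and satisfies $|\rho_j| \le |W_jZ|$ on $\cF_j$. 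The affine ``good-event'' piece, paired with $\mathbf 1_{V_i>0}$, is a jointly Gaussian moment evaluated via Stein's lemma to a quantity of the form $(W_iW_j)/\sqrt{\lambda R_{ii}}\cdot\phi(m_i^\vartheta\sqrt{\lambda/R_{ii}})$, again subsumed in the exponential tail. The sign-flip piece is controlled by Cauchy--Schwarz against the Gaussian tail estimate $\mathbb P(\cF_j)\le e^{-(m_j^\vartheta)^2\lambda/(2R_{jj})}$; taking a square root and weakening the exponent to $16$ absorbs the polynomial prefactors, which collect into the stated $(1+v_j)(m_i^\vartheta + \sqrt{R_{ii}/\lambda})$ form from bounding typical magnitudes of $|b\cdot Y|\cdot|W_jZ|$ restricted to the joint event $\{V_i>0\}\cap\cF_j$.

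The main obstacle is the non-smoothness of the ReLU activation $g$ at the origin, which complicates both the Taylor expansion of $\sigma\circ(v\cdot g(W\,\cdot\,))$ and any direct integration by parts in $Z$. This will be handled precisely by isolating the sign-flip events $\cF_j$: on their complements $g$ is locally affine and classical Gaussian-moment and Stein-type identities apply, while on each $\cF_j$ the contribution is crushed by the Gaussian tail, at the cost of a crude constant. Collecting the main term and the several polynomial-times-exponential error contributions, then taking $\max_j$ to form the single exponential $e^{-(m_j^\vartheta)^2\lambda/(16R_{jj})}$ with the slack in the exponent constant absorbing prefactors, yields the claimed bound.
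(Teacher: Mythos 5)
Your proposal is correct and follows essentially the same route as the paper's proof: a second-order Taylor expansion of $\sigma$ about $v\cdot g(W\vartheta)$, the decomposition of $g(W_jY)-g(W_j\vartheta)$ into an affine good-event piece plus remainders supported on the sign-flip events $\{\operatorname{sign}(W_j\cdot Y)\ne\operatorname{sign}(m_j^\vartheta)\}$, Cauchy--Schwarz against the Gaussian tail for those events, and the final weakening of the exponent to $16$ to absorb polynomial prefactors. The only cosmetic difference is that you evaluate a couple of the cross terms (e.g.\ $\mathbb E[(b\cdot Z)\mathbf 1_{V_i>0}]$) exactly via bivariate Gaussian/Stein identities where the paper simply recenters by the deterministic indicator and applies Cauchy--Schwarz; both yield the same exponentially small contributions.
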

\begin{proof}
    Let us start with a Taylor expansion of $\sigma$: For simplicity, let us use $\sigma_Y$ and $\sigma_\vartheta$ to denote $\sigma(v\cdot g(WY))$ and $\sigma(v\cdot g(W \vartheta))$ respectively and similarly for $\sigma'$. We then have 
    \begin{align}\label{eq:sigmoid-Taylor-expansion}
        \sigma_Y - \sigma_\vartheta = \sigma'_\vartheta \cdot(v\cdot (g(W\vartheta) - g(WY))) + \sigma''(o) (v\cdot (g(W\vartheta) - g(WY)))^2\,,
    \end{align}
    for some point $o$ between $v\cdot g(WY)$ and $v\cdot g(W\vartheta)$. Therefore, the left-hand side of the lemma is
    \begin{align}\label{eq:E-decomposition}
        & \mathbb E\Big[\mathbf 1_{W_i \cdot Y>0} (b\cdot \vartheta) \sigma_\vartheta \Big]  + \mathbb E\Big[\mathbf 1_{W_i \cdot Y>0}(b\cdot Z) \sigma_\vartheta\Big]   + \mathbb E\Big[\mathbf{1}_{W_i \cdot Y>0} (b\cdot Y)\sigma'_\vartheta (v\cdot (g(W\vartheta) - g(WY)))\Big]  \nonumber \\
        & \qquad + \mathbb E\Big[\mathbf{1}_{W_i \cdot Y>0} (b\cdot Y) \sigma''(o) (v\cdot (g(W \vartheta) - g(WY)))^2\Big)\Big]\,.
    \end{align}
    The first term of~\pef{eq:E-decomposition} is exactly the term we are comparing to in the left-hand side of the lemma statement.     
    Since $\mathbb E[(b\cdot Z)]=0$, the absolute value of the second term of~\pef{eq:E-decomposition} is bounded via Cauchy--Schwarz as follows: 
    \begin{align*}
        | \mathbb E\Big[\mathbf 1_{W_i \cdot Y>0} (b\cdot Z) \sigma_\vartheta\Big]|= |\mathbb E\Big[(\mathbf 1_{W_i \cdot Y>0} - \mathbf{1}_{m_i^\vartheta>0}) (b\cdot Z) \sigma_\vartheta\Big]| \le \frac{\sigma_\vartheta}{\sqrt{\lambda}} F(- |m_{i}^\vartheta|\sqrt{\lambda/R_{ii}})^{1/2}
    \end{align*}
    For the next two terms we can start by rewriting  
    \begin{align}\label{eq:g-difference}
        g(W_j \cdot \vartheta) - g(W_j \cdot Y) = (m_j^\vartheta {+} W_j \cdot Z)\mathbf 1_{W_j \cdot Y<0,m_j^\vartheta>0} -(W_j \cdot Y)\mathbf 1_{W_j\cdot Y>0,m_j^\vartheta <0} {-} (W_j \cdot Z)\mathbf 1_{m_j^\vartheta>0}\,.
    \end{align}
    Using this, the third expectation in~\pef{eq:E-decomposition} is 
    \begin{align*}
        \sigma'_\vartheta \mathbb E\Big[(b\cdot Y)\mathbf 1_{W_i\cdot Y>0}\sum_j v_j \Big((m_j^\vartheta {+} W_j \cdot Z)\mathbf 1_{W_j \cdot Y<0,m_j^\vartheta>0} -(W_j \cdot Y)\mathbf 1_{W_j\cdot Y>0,m_j^\vartheta <0} {-}  (W_j \cdot Z)\mathbf 1_{m_j^\vartheta>0} \Big)\Big]\,.
    \end{align*}
    The first and second terms in the parentheticals are such that the Cauchy--Schwarz inequality can be applied to bound their total contributions by  
    \begin{align*}
        K \sigma'_a \|v\|_\infty  \max_j \big((m_j^\vartheta)^2 + \frac{R_{jj}}{\lambda})^{1/2} F(-|m_j^\vartheta|\sqrt{\lambda/R_{jj}})^{1/2}\,.
    \end{align*}
    The last term will contribute {(up to a net sign)} 
    \begin{align*}
        \sigma_\vartheta' v_j \mathbb E[(b\cdot \vartheta) (W_j \cdot Z)(\mathbf 1_{W_i \cdot Y>0}- \mathbf 1_{m_i^\vartheta>0}) \mathbf 1_{m_j^\vartheta>0}] + \sigma_\vartheta' v_j \mathbb E[(b\cdot Z) (W_j \cdot Z)\mathbf 1_{W_i \cdot Y>0} \mathbf 1_{m_j^\vartheta>0}]\,.
    \end{align*}
    The absolute value of the first term is bounded similarly to an earlier one by Cauchy--Schwarz. The absolute value of the second term is bounded by dropping the indicator functions and applying Cauchy--Schwarz to see that it is $O(1/\lambda)$ so long as $\|b\|,\|W_j\|=O(1)$. 

    Finally, for the fourth term of~\pef{eq:E-decomposition}, the square allows us to put absolute values on every term, and immediately apply the Cauchy--Schwarz inequality, to bound its absolute value by 
    \begin{align*}
        \|\sigma''\|_\infty \|v\|_\infty \mathbb E[(b\cdot Y)^2]^{1/2} \max_j \mathbb E[(g(W_j \cdot \vartheta) - g(W_j \cdot Y))^4]^{1/2}\,.
    \end{align*}
    The fourth moment in the above expression is bounded, up to constant, by the fourth moments of each of the individual terms in~\pef{eq:g-difference}. The first two of those will be at most some constant times $F(- |m_j^a|\sqrt{\lambda/R_{jj}})^{1/4}$. For the last of them, we use $\mathbb E[(W_j \cdot Z)^4]^{1/2} \le O(1/\lambda)$.

    Combining all of the above bounds, and naively bounding the cdf of the Gaussian via 
    \begin{align*}
        F(-|m_j^\vartheta|\sqrt{\lambda/R_{jj}})^{1/4} \le e^{ - (m_j^\vartheta)^2 \lambda/(16 R_{jj})}
    \end{align*}
    we arrive at the claimed bound. 
\end{proof}

As a consequence of Lemma~\ref{lem:large-lambda-expansion-expectations}, we can deduce that the quantities appearing above satisfy the following large $\lambda$ behavior. 

\begin{cor}\label{cor:large-lambda-approixmation}
For every $\mathbf{x}$ such that $m_i^\mu,m_i^\nu \ge \frac{\log \lambda}{\sqrt{\lambda}}$ for all $i$,  
\begin{align*}
	m_i^\mu \mathbf{A}_i^\mu & =- \frac{1}{4} g(m_i^\mu) \sigma(-v\cdot g(m^\mu)) - \frac{1}{4} g(-m_i^\mu) \sigma(-v\cdot g(-m^\mu)) + O(\lambda^{-1}) \\ 
	m_i^\nu \mathbf{A}_i^\nu & = \frac{1}{4} g(m_i^\nu) \sigma(v\cdot g(m^\nu)) + \frac{1}{4} g(-m_i^\nu) \sigma(v\cdot g(-m^\nu)) + O(\lambda^{-1}) \\
	\mathbf{A}_{ij}^\perp & = O(\lambda^{-1})\,.
\end{align*}
Without the assumption on $m_i^\mu, m_i^\nu$, the same holds with $O(\lambda^{-1})$ replaced by $O(\lambda^{-1/2})$, as long as the indicators from $g(m_i^\vartheta) = m_i^\vartheta \mathbf{1}_{m_i^\vartheta>0}$ are replaced by the ``soft indicators" $F(m_i^\vartheta \sqrt{\lambda/R_{ii}})$.
\end{cor}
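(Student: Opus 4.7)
The plan is to apply Lemma~\ref{lem:large-lambda-expansion-expectations} four times (once per $\vartheta \in \{\pm \mu,\pm\nu\}$) after decomposing $\mathbf{A}_i$ according to the mixture structure of the data. Recalling~\pef{eq:XOR-data-distribution}, condition on the hidden mean $\vartheta$, each occurring with probability $1/4$, under which $Y \sim \vartheta + Z_\lambda$ and the label $y_\vartheta$ is $1$ for $\vartheta \in \{\pm\mu\}$ and $0$ for $\vartheta\in\{\pm\nu\}$. Using the identity $1-\sigma(x) = \sigma(-x)$, one writes
\begin{align*}
\mathbf{A}_i = -\tfrac14\!\!\sum_{\vartheta\in\{\pm\mu\}}\!\mathbb{E}_\vartheta\bigl[Y\mathbf 1_{W_i\cdot Y\ge 0}\sigma(-v\cdot g(WY))\bigr] + \tfrac14\!\!\sum_{\vartheta\in\{\pm\nu\}}\!\mathbb{E}_\vartheta\bigl[Y\mathbf 1_{W_i\cdot Y\ge 0}\sigma(v\cdot g(WY))\bigr].
\end{align*}
Taking inner products with $\mu$, $\nu$, or $W_j^\perp$ converts these into the scalar quantities $\mathbf{A}_i^\mu, \mathbf{A}_i^\nu, \mathbf{A}_{ij}^\perp$ to which Lemma~\ref{lem:large-lambda-expansion-expectations} applies directly (with $b$ chosen to be the corresponding unit vector, noting that $\|W_j^\perp\|$ is $O(1)$ in the compact region of parameter space we work in, so the error scales harmlessly).

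For $\mathbf{A}_i^\mu$, pick $b=\mu$. Since $\mu\cdot(\pm\nu)=0$, the two $\vartheta\in\{\pm\nu\}$ summands have vanishing leading contribution and only the error term of Lemma~\ref{lem:large-lambda-expansion-expectations} remains. For $\vartheta=\mu$ one gets $-\tfrac14 F\bigl(m_i^\mu\sqrt{\lambda/R_{ii}}\bigr)\sigma(-v\cdot g(m^\mu))$ and for $\vartheta=-\mu$ one gets $+\tfrac14 F\bigl(-m_i^\mu\sqrt{\lambda/R_{ii}}\bigr)\sigma(-v\cdot g(-m^\mu))$ (the sign flip comes from $\mu\cdot(-\mu)=-1$). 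Multiplying by $m_i^\mu$, the key identification is that under the assumption $m_i^\mu \ge \log\lambda/\sqrt{\lambda}$ we have the soft-indicator approximations
\begin{align*}
m_i^\mu F\bigl(m_i^\mu\sqrt{\lambda/R_{ii}}\bigr) = g(m_i^\mu) + O(\lambda^{-1}), \qquad -m_i^\mu F\bigl(-m_i^\mu\sqrt{\lambda/R_{ii}}\bigr) = g(-m_i^\mu)+O(\lambda^{-1}),
\end{align*}
using the Gaussian tail bound $1-F(t) \lesssim e^{-t^2/2}$ and the fact that $(\log\lambda)^2$ dominates $\log\lambda$. This produces exactly the claimed formula for $m_i^\mu \mathbf{A}_i^\mu$. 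The case $m_i^\nu \mathbf{A}_i^\nu$ is symmetric, picking $b=\nu$, so that $\pm\mu$ summands vanish by orthogonality and the overall sign is reversed by the $y_\vartheta=0$ convention.

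For $\mathbf{A}_{ij}^\perp$, choose $b=W_j^\perp$. By construction $W_j^\perp$ is orthogonal to both $\mu$ and $\nu$, so $(b\cdot\vartheta)=0$ for every $\vartheta\in\{\pm\mu,\pm\nu\}$, killing the leading term in Lemma~\ref{lem:large-lambda-expansion-expectations} for all four conditional expectations. Hence only the error terms survive; under the margin assumption the exponential factors $e^{-(m_j^\vartheta)^2\lambda/(16R_{jj})}$ become $O(\lambda^{-1})$, giving $\mathbf{A}_{ij}^\perp = O(\lambda^{-1})$.

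The main technical point (and the only mild obstacle) is verifying that the exponential tails in the error bound of Lemma~\ref{lem:large-lambda-expansion-expectations} are absorbed into $O(\lambda^{-1})$: this uses $(m_i^\vartheta)^2 \lambda /(16 R_{ii}) \ge (\log\lambda)^2/(16 R_{ii}) \ge \log\lambda$ for all sufficiently large $\lambda$, since $R_{ii}$ is uniformly bounded in the compact region. When the margin assumption is dropped, these exponentials are no longer small; but replacing the hard thresholds $g(\pm m_i^\vartheta)$ by their soft counterparts $m_i^\vartheta F(\pm m_i^\vartheta\sqrt{\lambda/R_{ii}})$ within the formulas exactly cancels the offending terms, yielding the weaker $O(\lambda^{-1/2})$ bound stated in the second half of the corollary.
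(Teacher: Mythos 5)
Your proposal is correct and follows essentially the same route as the paper: decompose $\mathbf{A}_i$ into the four conditional expectations over $\vartheta\in\{\pm\mu,\pm\nu\}$, apply Lemma~\ref{lem:large-lambda-expansion-expectations} with $b=\mu$, $\nu$, or $W_j^\perp$ (using orthogonality to kill the leading terms where appropriate), and convert the soft indicators $F(m_i^\vartheta\sqrt{\lambda/R_{ii}})$ to hard ones at $O(\lambda^{-1})$ cost under the margin assumption. The only slight imprecision is in the last paragraph: without the margin assumption the $O(\lambda^{-1/2})$ error comes from bounding the lemma's error term via $xe^{-x^2L}\lesssim L^{-1/2}$ (not from a cancellation), but this does not affect the validity of the argument.
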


\begin{proof}
    We begin with the estimate on $m_i^\mu \mathbf{A}_i^\mu$. This quantity can be split into four terms corresponding to whether the mean chosen for $Y$ is $\mu,-\mu,\nu,-\nu$. Namely, if we let $Y_a \stackrel{d}= a+ Z_\lambda$, 
    \begin{align*}
        \mathbf{A}_i^\mu & = \frac{1}{4}\Big( \mathbb E[(\mu\cdot Y_\mu) \mathbf{1}_{W_i\cdot Y_\mu \ge 0} \sigma(- v\cdot g(W Y_\mu)] +  \mathbb E[(\mu\cdot Y_{-\mu}) \mathbf{1}_{W_i\cdot Y_{-\mu} \ge 0} \sigma(- v\cdot g(W Y_{-\mu})] \\ 
        & \qquad + \mathbb E[(\mu\cdot Y_\nu) \mathbf{1}_{W_i\cdot Y_\nu \ge 0} \sigma(v\cdot g(W Y_\nu)] +  \mathbb E[((\mu\cdot Y_{-\nu})\mathbf{1}_{W_i\cdot Y_{-\nu} \ge 0} \sigma(v\cdot g(W Y_{-\nu})]\Big)
    \end{align*}
    Now notice that each of the four quantities are of the form of Lemma~\ref{lem:large-lambda-expansion-expectations}, with $b = \mu$, and $\vartheta = \mu, - \mu,\nu,-\nu$ respectively (the change of the sigmoid possibly having a negative sign on its argument is realized by switching the sign of $v$ since that is the only place it appears). 

    It is easily seen that so long as  $|m_i^\mu| \ge (\log \lambda)/\sqrt{\lambda}$, then 
    \begin{align*}
        F(m_i^\mu \sqrt{\lambda/R_{ii}}) = \mathbf 1\{m_i^\mu>0\} + O(1/\lambda)\,.
    \end{align*}
    By a similar bound on the error term on the right of Lemma~\ref{lem:large-lambda-expansion-expectations}, 
    \begin{align*}
        \mathbb E[(\mu\cdot Y_\mu) \mathbf{1}_{W_i\cdot Y_\mu \ge 0} \sigma(- v\cdot g(W Y_\mu)] =  \mathbf 1_{m_i^\mu>0} \sigma(- v\cdot g(m^\mu)) + O(1/\lambda)\,.
    \end{align*}
    and the cases where it is $Y_\nu$ contribute $0+ O(\lambda^{-1})$ since $\mu \cdot \nu = 0$. Together with the analogous bounds for $m_i^\nu \mathbf{A}_i^\nu$ and $\mathbf{A}_{ij}^\perp$, this gives the first part of the corollary. 

    If we drop the assumption on $m_i^\mu$, we find from the general inequality $x e^{-x^2 L} \le \frac{C}{\sqrt{L}}$ for some uniform $C$, that the errors in the right-hand side of Lemma~\ref{lem:large-lambda-expansion-expectations} become at most $O(1/\sqrt{\lambda})$ as claimed.  Leaving the soft indicator in place, this gives the second part of the corollary. 
\end{proof}

We deduce from the above approximation and a trivial bound of $O(1/\lambda)$ on $\mathbf{B}_{ij}$, a $\lambda = \infty$ limiting dynamics. However, one has to be slightly careful about the $\lambda \to\infty$ limit near the hyperplanes where $m_i^\mu= 0$ or $m_i^\nu=0$; Therefore, in what follows we envision a different limit associated to each orthant of the parameter space (associated to the signs of $m_i^\mu,m_i^\nu$). This is reasonable because in each orthant, the $\lambda = \infty$ dynamical system initialized from that orthant stays in that orthant. The following proposition captures that limiting dynamics (all orthants being written into the below simultaneously, with ambiguities at $m_i^\mu=0$ being therefore omitted). 

\begin{prop}\label{prop:xor-ballistic-ode}
In the $\lambda\to\infty$ limit, the ODE  from Proposition~\ref{prop:XOR-effective-dynamics-finite-lambda} converges to 
\begin{align*}
\dot{v}_{i}& =  \frac{1}{4}\Big(g(m_i^\mu)\sigma(-v\cdot g(m^{\mu})) + g(-m_i^\mu) \sigma(-v\cdot g(-m^{\mu}))\Big) \\ & \qquad -\frac{1}{4}\Big(g(m_i^\nu)\sigma(v\cdot g(m^{\nu}))+ g(-m_i^\nu) \sigma(v\cdot g(-m^{\nu}))\Big)-\beta v_{i}\,,\\
\dot{m}_{i}^{\mu} & =  \frac{v_{i}}{4}\Big(\mathbf{1}_{m_{i}^{\mu}> 0}\sigma(-v\cdot g(m^{\mu}))-\mathbf{1}_{m_{i}^{\mu}<0}\sigma(-v\cdot g(-m^{\mu}))\Big)-\beta m_{i}^{\mu}\,,\\
\dot{m}_{i}^{\nu} & =  -\frac{v_{i}}{4}\Big(\mathbf{1}_{m_{i}^{\nu} > 0}\sigma(-v\cdot g(m^{\nu}))-\mathbf{1}_{m_{i}^{\nu}<0}\sigma(-v\cdot g(-m^{\nu}))\Big)-\beta m_{i}^{\nu}\,,
\end{align*}
and $\dot{R}_{ij}^{\perp} =  -2 \beta R_{ij}^{\perp}$ 
for $1\le i\le j\le K$. 

Moreover, for $\lambda$ large, the difference in the drifts above and those of Proposition~\ref{prop:XOR-effective-dynamics-finite-lambda} are of order $O(1/\lambda)$ if $(|m_i^\mu|,|m_i^\nu|)$ are all at least $\sqrt{(\log \lambda)/\lambda}$, and $O(1/\sqrt{\lambda})$ everywhere. 
\end{prop}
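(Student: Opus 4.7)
The plan is to take the finite-$\lambda$ ODE from Proposition~\ref{prop:XOR-effective-dynamics-finite-lambda} and substitute the large-$\lambda$ expansions of $\mathbf{A}_i^\mu$, $\mathbf{A}_i^\nu$, $\mathbf{A}_{ij}^\perp$ and $\mathbf{B}_{ij}$ into each drift coordinate. For $\dot v_i$, the drift is $-(m_i^\mu \mathbf{A}_i^\mu + m_i^\nu \mathbf{A}_i^\nu + \mathbf{A}_{ii}^\perp) - \beta v_i$, so Corollary~\ref{cor:large-lambda-approixmation} applied directly delivers the claimed expression up to an $O(\lambda^{-1})$ (resp.\ $O(\lambda^{-1/2})$) correction. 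For the $\dot R_{ij}^\perp$ equation, the same corollary gives $\mathbf{A}_{ij}^\perp = O(\lambda^{-1})$, and $\mathbf{B}_{ij}$ is uniformly bounded because its integrand lies in $[0,1]$, so the corrector $c_\delta v_i v_j \mathbf{B}_{ij}/\lambda$ is $O(\lambda^{-1})$ uniformly on compact sets in $\mathbf{u}$, leaving only $-2\beta R_{ij}^\perp$ in the limit.

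The only extra work needed is approximating $\mathbf{A}_i^\mu$ and $\mathbf{A}_i^\nu$ themselves (not multiplied by $m_i^\vartheta$) for the $\dot m_i^\vartheta$ equations. I would decompose $\mathbf{A}_i$ from~\pef{eq:XOR-A-function} into the four conditional expectations associated with $Y \stackrel{(d)}{=} \vartheta' + Z_\lambda$ for $\vartheta' \in \{\pm\mu,\pm\nu\}$, and apply Lemma~\ref{lem:large-lambda-expansion-expectations} with $b=\mu$ (then $b=\nu$). The leading terms $(b\cdot\vartheta')\, F(m_i^{\vartheta'}\sqrt{\lambda/R_{ii}})(\sigma(v\cdot g(m^{\vartheta'}))-y_{\vartheta'})$ collapse using $\mu\cdot(\pm\mu)=\pm1$, $\mu\cdot(\pm\nu)=0$, and $1-\sigma(x)=\sigma(-x)$, giving exactly the claimed indicator-weighted expression once $F(m_i^\vartheta\sqrt{\lambda/R_{ii}})$ is replaced by $\mathbf{1}_{m_i^\vartheta>0}$. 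As in Corollary~\ref{cor:large-lambda-approixmation}, this replacement produces an $O(\lambda^{-1})$ error when $|m_i^\vartheta|\geq\sqrt{(\log\lambda)/\lambda}$, since then $|F(m_i^\vartheta\sqrt{\lambda/R_{ii}})-\mathbf{1}_{m_i^\vartheta>0}|$ is exponentially small.

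The main obstacle is the non-uniformity near the hyperplanes $\{m_i^\vartheta=0\}$: the bound in Lemma~\ref{lem:large-lambda-expansion-expectations} contains the factor $(\,|m_i^\vartheta|+\sqrt{R_{ii}/\lambda}\,)\,e^{-(m_j^\vartheta)^2\lambda/(16 R_{jj})}$, which is only $O(\lambda^{-1/2})$ rather than $O(\lambda^{-1})$ when some $|m_j^\vartheta|$ is of order $\lambda^{-1/2}$ or smaller. This is the source of the two error regimes stated in the proposition. Interpreted on the level of trajectories, the resolution is that the limiting vector field is defined piecewise on each open orthant of $(m_i^\mu, m_i^\nu)_{i\leq K}$; once the dynamics is in the interior of such an orthant, the drift of $m_i^\vartheta$ is of the form $\frac{v_i}{4}\sigma(\cdot)-\beta m_i^\vartheta$ (with a definite sign depending on $\mathrm{sgn}\,m_i^\vartheta$), so the piecewise definition is consistent and the limit is unambiguous away from the hyperplanes. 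At points on the hyperplanes the drift of $m_i^\vartheta$ is only defined up to a jump, which is the ``ambiguity'' the proposition explicitly omits.
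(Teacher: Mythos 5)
Your proposal is correct and follows essentially the same route as the paper: the paper obtains the limiting drifts by substituting the expansions of Corollary~\ref{cor:large-lambda-approixmation} (whose proof performs exactly your four-term conditional decomposition of $\mathbf{A}_i$ via Lemma~\ref{lem:large-lambda-expansion-expectations}, including the extraction of $\mathbf{A}_i^\vartheta$ itself rather than only $m_i^\vartheta\mathbf{A}_i^\vartheta$) into Proposition~\ref{prop:XOR-effective-dynamics-finite-lambda}, bounds the corrector $c_\delta v_iv_j\mathbf{B}_{ij}/\lambda$ trivially, and handles the hyperplanes $\{m_i^\vartheta=0\}$ by defining the limit per orthant with the soft indicators $F(m_i^\vartheta\sqrt{\lambda/R_{ii}})$ accounting for the uniform $O(\lambda^{-1/2})$ regime, just as you describe.
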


\subsubsection{Confining the SGD to a compact set} 
Similar to the $k$-GMM case, our first aim is to confine the SGD to a compact set so long as $\beta>0$. 

\begin{lem}\label{lem:xor-confined-to-compact-region}
    For every $\beta>0$, there exists $L(\beta, v(\mathbf{x}_0))$ such that for all $\lambda$ large, the dynamics of Proposition~\ref{prop:XOR-effective-dynamics-finite-lambda} stays inside the $\ell^2$-ball of radius $L$ for all time. 
\end{lem}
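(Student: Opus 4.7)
The plan follows the strategy of Lemma~\ref{lem:kGMM-confined-to-compact-region} but requires exploiting the variational structure of the XOR loss to overcome the coupling of first- and second-layer statistics. A naive bound using $|\mathbf{A}_i^\mu|,|\mathbf{A}_i^\nu|\lesssim 1$ only gives $\dot{E}_i\le (c-2\beta)E_i+$ lower-order for $E_i=v_i^2+(m_i^\mu)^2+(m_i^\nu)^2$, which fails to close for small $\beta$. The correct approach is to use as Lyapunov the $\lambda=\infty$ effective potential uncovered implicitly in the proof of Prop~\ref{prop:xor-ballistic-ode}.

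Namely, a direct computation (verifying gradients term by term, as in the derivation of Prop~\ref{prop:xor-ballistic-ode}) shows that the $\lambda=\infty$ drifts for $(v,m^\mu,m^\nu)$ are minus the gradient of
\[
\Phi_\infty(v,m^\mu,m^\nu)=\tfrac{1}{4}\sum_{\vartheta\in\{\pm\mu,\pm\nu\}}\bigl[-y_\vartheta\, v\cdot g(m^\vartheta)+\log\bigl(1+e^{v\cdot g(m^\vartheta)}\bigr)\bigr]+\tfrac{\beta}{2}\bigl(\|v\|^2+\textstyle\sum_i((m_i^\mu)^2+(m_i^\nu)^2)\bigr),
\]
with $R^\perp$ decoupling into $\dot R^\perp_{ij}=-2\beta R_{ij}^\perp$. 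Since cross-entropy is non-negative, $\Phi_\infty\ge\tfrac{\beta}{2}(\|v\|^2+\sum_i((m_i^\mu)^2+(m_i^\nu)^2))$, and hence the dissipation $\dot\Phi_\infty=-\|\nabla\Phi_\infty\|^2\le 0$ yields the uniform-in-time bound $\|(v,m^\mu,m^\nu)\|^2\le 2\Phi_\infty(\mathbf u(0))/\beta$ in the $\lambda=\infty$ flow; the initial value is controlled in terms of $\|v(\mathbf x_0)\|$ and $\|W(\mathbf x_0)\|$ (with the latter $O(\sqrt K)$ at Gaussian initialization via the crude estimate $\mathbb{E}[L]\lesssim_K 1+\|v\|\|W\|_F$). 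To transfer this to $\lambda$ finite, I would use that by Prop~\ref{prop:xor-ballistic-ode} the drifts differ from their $\lambda=\infty$ values by $O(\lambda^{-1/2})$ uniformly on compact sets, while the Ito corrector contributes $|g_{R_{ij}^\perp}|\lesssim c_\delta|v_iv_j|/\lambda$ with $|\mathbf B_{ij}|\le 1$. Tracking $\dot\Phi_\infty$ along the $\lambda$-finite trajectory and applying AM-GM absorbs the drift perturbation into the dissipation, yielding $\dot\Phi_\infty\le C\Phi_\infty/\lambda+O(1/\lambda)$. An all-time bound then follows by exploiting the coercivity $\|\nabla\Phi_\infty\|^2\gtrsim\beta^2\|\mathbf u\|^2$ outside a $\beta$-dependent compact set (using that $\|\nabla\mathbb{E}[L]\|$ grows at most linearly in $\|\mathbf u\|$, so the regularizer's contribution $\beta\mathbf u$ dominates at large $\|\mathbf u\|$), ensuring the dissipation strictly dominates both perturbations once $\lambda$ is large enough in terms of $\beta$.

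With $|v_i(t)|\le V(\beta,v(\mathbf x_0))$ now established, the $R^\perp$ statistics are handled by a direct Gronwall estimate. From the ODE $\dot R_{ii}^\perp+2\beta R_{ii}^\perp=-2v_i\mathbf{A}_{ii}^\perp+c_\delta v_i^2\mathbf{B}_{ii}/\lambda$, together with $|\mathbf{A}_{ii}^\perp|\lesssim\sqrt{R_{ii}^\perp/\lambda}$ (since $W_i^\perp\perp\mu,\nu$, so $W_i^\perp\cdot Y=W_i^\perp\cdot Z$ has variance $R_{ii}^\perp/\lambda$), AM-GM yields $\dot R_{ii}^\perp\le-\beta R_{ii}^\perp+C(V)/\lambda$. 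Integrating gives $R_{ii}^\perp(t)\le R_{ii}^\perp(0)+C(V)/(\beta\lambda)$ for all $t\ge 0$, and off-diagonal entries are controlled via $|R_{ij}^\perp|\le\sqrt{R_{ii}^\perp R_{jj}^\perp}$. Combining these bounds produces the radius $L(\beta,v(\mathbf x_0))$.

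The main obstacle is upgrading the finite-time Gronwall estimate to uniform-in-time, i.e., verifying the coercivity $\|\nabla\Phi_\infty\|^2\gtrsim\beta^2\|\mathbf u\|^2$ robustly enough (including in the transition zones $m_i^\vartheta\approx 0$ where the $\lambda$-expansion of Prop~\ref{prop:xor-ballistic-ode} only gives $O(\lambda^{-1/2})$ error rather than $O(\lambda^{-1})$) to ensure that the dissipation absorbs the combined drift and Ito perturbations once $\lambda$ is taken sufficiently large in terms of $\beta$.
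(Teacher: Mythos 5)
Your overall architecture (Lyapunov function given by the $\lambda=\infty$ effective potential, plus a separate Gronwall bound for $\mathbf{R}^\perp$) is different from the paper's and is sound in outline: the $\lambda=\infty$ drifts for $(v,m^\mu,m^\nu)$ are indeed (per orthant) minus the gradient of your $\Phi_\infty$, thanks to orthonormality of $\mu,\nu$, and $\Phi_\infty \ge \tfrac{\beta}{2}\|(v,m^\mu,m^\nu)\|^2$ since cross-entropy is nonnegative. The paper instead argues directly on the radial drift: writing $\sigma^{\vartheta}$ for the relevant sigmoid factors, it computes $\tfrac{d}{dt}\|\mathbf v\|^2$ and observes that every non-regularizer term has the form $x\,\sigma(-x)$ with $x = v\cdot g(m^\vartheta)$, which is bounded above by the absolute constant $W(1/e)<1$; this gives $\tfrac{d}{dt}\|\mathbf v\|^2 \le 2 - 2\beta\|\mathbf v\|^2$ (and similarly for $\|\mathbf m^\mu\|^2$, $\|\mathbf m^\nu\|^2$) with no coercivity or Lyapunov machinery needed, and the finite-$\lambda$ statement follows by checking the drift is still inward on the sphere of radius $L(\beta)$, where the compact-set-uniform approximation of Proposition~\ref{prop:xor-ballistic-ode} legitimately applies.

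The genuine gap in your argument is the justification of the coercivity step, which is the crux. You assert $\|\nabla\Phi_\infty\|^2\gtrsim\beta^2\|\mathbf u\|^2$ outside a compact set ``using that $\|\nabla\mathbb E[L]\|$ grows at most linearly in $\|\mathbf u\|$, so the regularizer's contribution $\beta\mathbf u$ dominates at large $\|\mathbf u\|$.'' This does not follow: the cross-entropy gradient also grows linearly (e.g.\ $\partial_{v_i}$ of the cross-entropy part contains $\tfrac14\sigma(v\cdot g(m^\nu))\,g(m_i^\nu)$, of order $\|m\|$), so for small $\beta$ neither term dominates the other in norm and a priori cancellation could destroy coercivity. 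What rescues the claim is a \emph{radial} bound: one computes $\langle\nabla(\text{CE part}),\mathbf u\rangle=\tfrac12\sum_\vartheta(\sigma(v\cdot g(m^\vartheta))-y_\vartheta)\,(v\cdot g(m^\vartheta))$, and each summand is of the form $-x\sigma(-x)$ or $x\sigma(x)-x$, hence bounded below by $-W(1/e)$; thus $\langle\nabla\Phi_\infty,\mathbf u\rangle\ge\beta\|\mathbf u\|^2-O(1)$, which by Cauchy--Schwarz yields $\|\nabla\Phi_\infty\|\ge\tfrac{\beta}{2}\|\mathbf u\|$ for $\|\mathbf u\|$ large. But this is exactly the paper's key observation (boundedness of $x\sigma(-x)$), and once you have the radial inequality you may as well run Gronwall on $\|\mathbf u\|^2$ directly and dispense with $\Phi_\infty$ altogether. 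A second, smaller issue you flag but do not resolve: the $O(\lambda^{-1/2})$ drift comparison of Proposition~\ref{prop:xor-ballistic-ode} is only uniform on compact sets, so invoking it along the whole trajectory before confinement is established is circular; the clean fix is the paper's, namely to use the comparison only on the boundary sphere of the candidate ball (or, equivalently, a first-exit-time argument).
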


\begin{proof}
We start by considering the evolution of the $\ell^2$-norm of all the parameters, $(v,m^\mu,m^\nu)$. 
If we use the shorthand 
\begin{align*}
	g^\mu = g(m^\mu), \quad  g^{-\mu} = g(-m^\mu),\qquad \sigma^\mu = \sigma(-v\cdot g^\mu), \quad\sigma^{-\mu} = \sigma(-v\cdot g^{-\mu})
\end{align*}
and similar quantities with $\nu$ instead of $\mu$, a short 
 calculation shows that we get in the $\lambda = \infty$ dynamical system of Proposition~\ref{prop:xor-ballistic-ode}
\begin{align*}
	\dot {\|\mathbf{v}\|}^2   = \frac{1}{2} \big(v\cdot g^\mu \sigma^\mu + v\cdot g^{-\mu} \sigma^{-\mu}\big) - \frac{1}{2} \big( v\cdot g^\nu \sigma^\nu + v\cdot g^{-\nu} \sigma^{-\nu}\big) - 2 \beta \|\mathbf{v}\|^2\,.
\end{align*}
 Notice that $\sigma^\mu$ goes to $0$ as $v\cdot g^\mu \to \infty$, and moreover, $x\sigma(-x) \to 0$ as $x\to\infty$. Therefore, there is a uniform bound of $1$ (in fact, $W(1/e)$ where $W$ is the product log function) on $v\cdot g^\mu \sigma^\mu$. Similar uniform bounds apply to the other three terms, so that in total, 
 \begin{align*}
 \dot {\|\mathbf{v}\|}^2 \le 2- 2\beta \|\mathbf{v}\|^2\,,
\end{align*}
which implies in particular that its drift is negative when $\|\mathbf{v}\|^2$ is at least $L(\beta)$. Similar arguments go through mutatis mutandis for $(m^\mu)^2$ and $(m^\nu)^2$. These then imply that the drift when $\lambda$ is finite are similarly negative when $\|\mathbf{v}\|^2 = L(\beta)$ as long as $\lambda$ is sufficiently large per the approximation from Proposition~\ref{prop:xor-ballistic-ode}. Altogether, this implies that for the dynamical system of Proposition~\ref{prop:XOR-effective-dynamics-finite-lambda}, all of $\|\mathbf{v}\|,\|\mathbf{m}^\mu\|$ and $\|\mathbf{m}^\nu\|$, stay bounded by some $L(\beta)$ for all time.

Using that, if we consider the expression for $\dot R_{ii}^\perp$ and use the boundedness of $\mathbf{A}_i, \mathbf{B}$, the drift in Proposition~\ref{prop:XOR-effective-dynamics-finite-lambda}, gives 
\begin{align*}
    \|\mathbf{R}^\perp\|^2 \le - \beta \|\mathbf{R}^\perp\|^2 + 4  L \|\mathbf{R}^\perp\|\,,
\end{align*}
The right-hand side is at most $C_{L}  - \frac{1}{2} \beta \|\mathbf{R}^{\perp}\|^2$ for some $C_L$, whence by Gronwall, $\|\mathbf{R}^\perp\|^2$ also stays bounded by some constant $L'(\beta)$ for all time under the dynamics of Proposition~\ref{prop:XOR-effective-dynamics-finite-lambda}. Altogether these prove the lemma.  
\end{proof}

\begin{cor}\label{cor:lambda-finite-close-to-lambda-infinite}
    The trajectories of the ODEs of Proposition~\ref{prop:XOR-effective-dynamics-finite-lambda} and Proposition~\ref{prop:xor-ballistic-ode} are within distance $O(e^{ Ct}/\sqrt{\lambda})$ of one another. 
\end{cor}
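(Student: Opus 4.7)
The plan is to apply a Gronwall-type inequality to the difference $\Delta(t) = \bu^{(\lambda)}(t) - \bu^{(\infty)}(t)$ between the finite-$\lambda$ and $\lambda = \infty$ solutions. Starting from a common initial condition, $\Delta(0) = 0$. First, by Lemma~\ref{lem:xor-confined-to-compact-region}, both trajectories remain in a ball $B_L$ of radius $L = L(\beta, v(\mathbf{x}_0))$ for all time. Writing $\mathbf{F}_\lambda, \mathbf{F}_\infty$ for the drift fields of Propositions~\ref{prop:XOR-effective-dynamics-finite-lambda} and~\ref{prop:xor-ballistic-ode}, I would decompose
\[
\dot\Delta(t) = \big[\mathbf{F}_\lambda(\bu^{(\lambda)}(t)) - \mathbf{F}_\infty(\bu^{(\lambda)}(t))\big] + \big[\mathbf{F}_\infty(\bu^{(\lambda)}(t)) - \mathbf{F}_\infty(\bu^{(\infty)}(t))\big].
\]
The final sentence of Proposition~\ref{prop:xor-ballistic-ode} gives $\sup_{B_L}\|\mathbf{F}_\lambda - \mathbf{F}_\infty\| = O(\lambda^{-1/2})$, so integrating and taking norms,
\[
\|\Delta(t)\| \le \frac{Ct}{\sqrt{\lambda}} + \int_0^t \|\mathbf{F}_\infty(\bu^{(\lambda)}(s)) - \mathbf{F}_\infty(\bu^{(\infty)}(s))\| \, ds.
\]
If $\mathbf{F}_\infty$ were Lipschitz on $B_L$ with constant $C'$, Gronwall would immediately yield $\|\Delta(t)\| \lesssim e^{C' t}/\sqrt{\lambda}$, which is what is claimed.

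The main obstacle is that $\mathbf{F}_\infty$ is not globally Lipschitz: its $\dot m_i^\mu$ and $\dot m_i^\nu$ components contain the sharp indicators $\mathbf{1}_{m_i^\vartheta > 0}$ and jump across the hyperplanes $\{m_i^\vartheta = 0\}$. The $\dot v_i$ and $\dot R_{ij}^\perp$ components are straightforwardly Lipschitz on $B_L$ (compositions of ReLU, sigmoid, and the linear regularizer). To handle the jumps, I would introduce a mollified drift $\mathbf{F}_\infty^{\epsilon}$ in which each $\mathbf{1}_{m_i^\vartheta > 0}$ is replaced by a smooth approximation of the step (e.g. $\tfrac{1}{2}(1 + \tanh(m_i^\vartheta/\epsilon))$); on $B_L$ this agrees with $\mathbf{F}_\infty$ outside an $O(\epsilon)$-thick slab around each hyperplane, so $\sup_{B_L}\|\mathbf{F}_\infty^\epsilon - \mathbf{F}_\infty\| = O(\epsilon)$, while $\mathbf{F}_\infty^\epsilon$ is Lipschitz with constant $C(L,\beta)/\epsilon$. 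Running Gronwall with $\mathbf{F}_\infty^\epsilon$ in place of $\mathbf{F}_\infty$ and choosing $\epsilon$ of order a constant (not depending on $\lambda$) gives a bound of the form $\|\Delta(t)\| \lesssim e^{C(L,\beta)t}/\sqrt{\lambda}$ after absorbing the $O(\epsilon)$ mollification error (which is $O(1)$, dominated by the freely adjustable constant $C$ in the exponential).

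An alternative, more combinatorial route is to exploit the piecewise-Lipschitz structure directly: partition $[0,T]$ at the (finitely many) times at which $\bu^{(\infty)}$ crosses one of the hyperplanes $\{m_i^\vartheta = 0\}$; on each sub-interval, $\mathbf{F}_\infty$ restricted to the current orthant is smooth with Lipschitz constant $C(L,\beta)$ and classical Gronwall applies, while the contribution across each crossing is controlled by $\|v_i\|\cdot \|\Delta\|$ because the coefficient of the jumping indicator is bounded on $B_L$; summing over the finitely many crossings yields the same exponential-in-$t$ estimate. Either route gives the claim, with the mollification approach being the most self-contained.
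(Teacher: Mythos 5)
Your core argument is the same as the paper's: decompose $\dot\Delta$ into the drift-approximation error, which Proposition~\ref{prop:xor-ballistic-ode} bounds by $O(\lambda^{-1/2})$ uniformly, plus a Lipschitz term controlled via the confinement of both trajectories to $B_L$ (Lemma~\ref{lem:xor-confined-to-compact-region}), and close with Gronwall. The paper's proof is exactly this two-line computation; it does not explicitly wrestle with the indicators, because the $\lambda=\infty$ system is by convention defined and solved per orthant (see the discussion preceding Proposition~\ref{prop:xor-ballistic-ode}), each orthant being invariant under the $\lambda=\infty$ flow, so along the relevant trajectory the indicators are frozen and the drift is genuinely Lipschitz (ReLU and sigmoid compositions on a bounded set). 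Your second, ``combinatorial'' route is essentially this observation, and in fact simplifies: there are no crossings to sum over.

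Your primary route (mollification), however, has a quantitative flaw as stated. Replacing the indicators by smooth steps at scale $\epsilon$ introduces an additive drift error of order $\epsilon$ on an $O(\epsilon)$-slab; Gronwall then yields $\|\Delta(t)\|\lesssim e^{C(L,\beta)t/\epsilon}\bigl(\lambda^{-1/2}+\epsilon\bigr)$ (up to factors of $t$). With $\epsilon=\Theta(1)$ the second term is $\Theta(1)$ and does \emph{not} vanish as $\lambda\to\infty$, so it cannot be absorbed into $e^{Ct}/\sqrt{\lambda}$; with $\epsilon\to 0$ the Lipschitz constant $C/\epsilon$ blows up in the exponent. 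So that variant does not deliver the claimed bound. The fix is not to mollify but to use the orthant-invariance already built into Proposition~\ref{prop:xor-ballistic-ode}, as in your alternative route and in the paper.
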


\begin{proof}
    Let $\mathbf{u}$ and $\widetilde{\mathbf{u}}$ be the two solutions to the $\lambda$ finite and $\lambda =\infty$ ballistic limits respectively. Then 
    \begin{align*}
        \|\dot{\mathbf{u}} - \dot{\widetilde{\mathbf{u}}}\| \lesssim_K  (1+L + 2\beta)\|\mathbf{u} - \dot{\widetilde{\mathbf{u}}}\| + O(\lambda^{-1/2})\,,
    \end{align*}
    where we used that $L$ bounds the norm of $\mathbf{u}$ and $\widetilde{\mathbf{u}}$ for all times by Lemma~\ref{lem:xor-confined-to-compact-region}, and that the Lipschitz constant of the sigmoid is $1$. By Gronwall's inequality, this implies that $|\mathbf{u} - \widetilde{\mathbf{u}}|\le O(e^{Ct}/\sqrt{\lambda})$ as claimed for some $C$ depending only on $\beta$. 
\end{proof}

\subsubsection{Living in the principal directions near fixed points}\label{s:fixpoint}
The last thing to conclude is that the fixed points of the $\lambda = \infty$ dynamical system are indeed living in the principal directions as desired by Theorem~\ref{mainthm:XOR-all-live-in-subspace}. By the above arguments, for all $t\ge T_0(\epsilon)$ the dynamics is within distance $\epsilon$ of one of the fixed points of Proposition~\ref{prop:xor-ballistic-ode}. Let us recall the exact locations of those fixed points from~\citet{BGJ22}. 

If $0<\beta<1/8$, then let $(I_{0},I_{\mu}^{+},I_{\mu}^{-},I_{\nu}^{+},I_{\nu}^{-})$
be any disjoint (possibly empty) subsets whose union is $\{1,...,K\}$.
Corresponding to that tuple $(I_{0},I_{\mu}^{+},I_{\mu}^{-},I_{\nu}^{+},I_{\nu}^{-})$,
is a set of fixed points that have $R_{ij}^{\perp}=0$ for all
$i,j$, and have 
\begin{enumerate}
\item $m_{i}^{\mu}=m_{i}^{\nu}=v_{i}=0$ for $i\in I_{0}$,
\item $m_{i}^{\mu}=v_{i}>0$ such that $\sum_{i\in I_{\mu}^{+}}v_{i}^{2}=\mbox{logit}(-4\beta)$
and $m_{i}^{\nu}=0$ for all $i\in I_{\mu}^{+}$,
\item $-m_{i}^{\mu}=v_{i}>0$ such that $\sum_{i\in I_{\mu}^{-}}v_{i}^{2}=\mbox{logit}(-4\beta)$
and $m_{i}^{\nu}=0$ for all $i\in I_{\mu}^{-}$,
\item $m_{i}^{\nu}=v_{i}<0$ such that $\sum_{i\in I_{\nu}^{+}}v_{i}^{2}=\mbox{logit}(-4\beta)$
and $m_{i}^{\mu}=0$ for all $i\in I_{\nu}^{+}$,
\item $-m_{i}^{\nu}=v_{i}<0$ such that $\sum_{i\in I_{\nu}^{-}}v_{i}^{2}=\mbox{logit}(-4\beta)$
and $m_{i}^{\mu}=0$ for all $i\in I_{\nu}^{-}$.
\end{enumerate}

The following observation is easy to see from the fixed point characterization described above. 
\begin{obs}\label{obs:xor-fixed-points-in-principal-directions}
	Suppose that $x_\star$ is a fixed point amongst the above. Then  
	\begin{itemize}
		\item $W(x_\star)\in\text{Span}(\mu,\nu)$\,,
		\item $v(x_\star)\in\text{Span}(g^\mu(x_\star), g^{-\mu}(x_\star), g^{\nu}(x_\star), g^{-\nu}(x_\star))$\,,
	\end{itemize}  
	(with no error). 
 \end{obs}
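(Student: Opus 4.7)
The plan is to read both statements directly off the fixed-point equations obtained by setting the drifts in Proposition~\ref{prop:xor-ballistic-ode} to zero; no case-by-case use of the explicit classification into $(I_0, I_\mu^\pm, I_\nu^\pm)$ is needed.

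For the first bullet, I would begin from $\dot R_{ij}^\perp = -2\beta R_{ij}^\perp$, which at any fixed point forces $R_{ij}^\perp(x_\star) = 0$ for all $i,j$. In particular $R_{ii}^\perp = \|W_i^\perp\|^2 = 0$, and since $W_i^\perp := W_i - m_i^\mu \mu - m_i^\nu \nu$ by the definition of the summary statistics in~\pef{eq:XOR-GMM-summary-stats}, this gives $W_i(x_\star) = m_i^\mu \mu + m_i^\nu \nu \in \text{Span}(\mu,\nu)$ for each $i$, hence $W(x_\star) \in \text{Span}(\mu,\nu)$ row by row.

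For the second bullet, I would solve $\dot v_i = 0$ for $v_i$. Using the shorthand $g^\vartheta = g(W\vartheta)$ for $\vartheta \in \{\pm\mu,\pm\nu\}$ and the fact that $(g^\vartheta)_i = g(W_i \cdot \vartheta) = g(m_i^\vartheta)$, the equation reads
\begin{align*}
v_i = \frac{1}{4\beta}\Big[\sigma(-v\cdot g^\mu)\, g^\mu_i + \sigma(-v\cdot g^{-\mu})\, g^{-\mu}_i - \sigma(v\cdot g^\nu)\, g^\nu_i - \sigma(v\cdot g^{-\nu})\, g^{-\nu}_i\Big].
\end{align*}
The four scalar coefficients $\sigma(\pm v\cdot g^\vartheta)$ depend on $v$ but not on the index $i$, so this is the $i$-th coordinate identity of the single vector equation
\begin{align*}
v = \frac{1}{4\beta}\Big[\sigma(-v\cdot g^\mu)\, g^\mu + \sigma(-v\cdot g^{-\mu})\, g^{-\mu} - \sigma(v\cdot g^\nu)\, g^\nu - \sigma(v\cdot g^{-\nu})\, g^{-\nu}\Big],
\end{align*}
from which $v(x_\star) \in \text{Span}(g^\mu(x_\star), g^{-\mu}(x_\star), g^\nu(x_\star), g^{-\nu}(x_\star))$ is immediate.

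There is essentially no obstacle in this argument: both bullets follow algebraically from the vanishing of the drifts. The only minor point of care is that the equations for $\dot m_i^\mu$ and $\dot m_i^\nu$ contain indicator functions that are ambiguous when $m_i^\vartheta = 0$, but neither the $\dot v_i$ equation nor the $\dot R_{ij}^\perp$ equation involves such indicators, so the computations above are unaffected by any such boundary issue.
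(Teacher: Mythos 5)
Your proof is correct, and for the second bullet it takes a genuinely different route from the paper's. The first bullet is handled identically in both: $\dot R^{\perp}_{ij}=-2\beta R^{\perp}_{ij}$ forces $R^{\perp}_{ii}(x_\star)=0$, hence $W_i^{\perp}=0$ and $W_i(x_\star)\in\text{Span}(\mu,\nu)$. For the second bullet, the paper invokes the explicit classification of the fixed points into the families indexed by $(I_0,I_\mu^{\pm},I_\nu^{\pm})$, identifies the supports of the four vectors $g^{\pm\mu},g^{\pm\nu}$ coordinate by coordinate, and exhibits the exact identity $v(x_\star)=g^{\mu}+g^{-\mu}-g^{\nu}-g^{-\nu}$. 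You bypass the classification entirely by solving $\dot v_i=0$ for $v_i$ and noting that the four sigmoid coefficients do not depend on $i$, so that stationarity itself displays $v$ as a linear combination of $g^{\pm\mu},g^{\pm\nu}$ with coefficients $\tfrac{1}{4\beta}\sigma(\mp v\cdot g^{\pm\vartheta})$. Your argument is slightly more general (it applies to any stationary point of the $\lambda=\infty$ system in Proposition~\ref{prop:xor-ballistic-ode}, not only the listed ones) and your remark about indicators is accurate: $g$ is continuous, so only the $\dot m_i^{\vartheta}$ equations carry genuine boundary ambiguities, and those play no role here. The only price is the division by $\beta$, harmless since $\beta\in(0,1/8)$ throughout. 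What the paper's computation buys in exchange is the explicit $\pm1$ representation of $v$ in terms of the $g^{\vartheta}$, which makes transparent which of the four directions are actually nonzero at a given fixed point (i.e., the rank of the emergent outlier space), a fact used in the surrounding discussion of rank deficiency.
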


\begin{proof}
	The first claim that $W(x_\star)\in\text{Span}(\mu,\nu)$ follows from the fact that $R_{ii}^\perp =0$ for all $i$.  
	Furthermore, if $I_\nu^+,I_\nu^-$ are empty, then it lives in $\text{Span}(\mu)$, and similarly if $I_\mu^+,I_\mu^-$ are empty, then it lives in $\text{Span}(\nu)$. 
	
	For the next claim, observe that $g^\mu(x_\star)$ is the vector that is $m^\mu$ in coordinates belonging to $I_\mu^+$, and $0$ in all other coordinates. $g^{-\mu}(x_\star)$ is the vector that is $-m^\mu$ in coordinates belonging to $I_{-\mu}^+$, and zero in others.    $g^{\nu}(x_\star)$ is $m_i^\nu$ for coordinates in $I_\nu^-$, zero else, and $g^{-\nu}(x_\star)$ is $-m_i^\nu$ on $I_\nu^+$. 
	
	Since the $I$-sets are a partition of $\{1,...,K\}$ it is evident that we can express $$v(x_*) = g^\mu(x_\star) + g^{-\mu}(x_\star) - g^\nu(x_\star)- g^{-\nu}(x_\star)\,,$$
	implying that indeed $v(x_\star)$ lives in $\text{Span}(g^\mu(x_\star), g^{-\mu}(x_\star), g^{\nu}(x_\star), g^{-\nu}(x_\star))$ 
\end{proof}

It was further argued in~\citet[Section 9.4]{BGJ22} that there is a transition at $\beta=1/8$ in the regularization, and as long as $\beta<1/8$, with probability $1$ with respect to the random initialization, the SGD converges to a fixed point having $I_0 = \emptyset$. 

\begin{lem}\label{lem:fixed-point-chosen-by-lambda-infinity-dynamics}
    Suppose $\beta<1/8$ and consider the initialization
    \begin{align}\label{eq:limiting-initialization}
    v_i(0)\sim \cN(0,1)\,, \qquad\text{and}\qquad m_i^\mu(0), m_i^\nu(0), R_{ij}^\perp(0) \sim \delta_0 \quad \text{and} \quad R_{ii}^\perp \sim \delta_1\,,
    \end{align}
    where since the dynamical system of Proposition~\ref{prop:k-GMM-ballistic-zero-noise} is defined per orthant, the $\delta_0$'s are understood as $1/2$-$1/2$ mixtures of $\delta_{0^+}$ and $\delta_{0^-}$. With probability $1$ over this initialization, the dynamical system of Proposition~\ref{prop:k-GMM-ballistic-zero-noise} converges to a fixed point as characterized above, above having $I_0 = \emptyset$. 
\end{lem}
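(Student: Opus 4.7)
The plan is to adapt the fixed-point analysis from~\citet[Section 9.4]{BGJ22} (which treated the finite-$\lambda$ SGD) to the $\lambda=\infty$ ODE of Proposition~\ref{prop:xor-ballistic-ode}. I would proceed in three stages: convergence to a fixed point, instability of the $I_0$-type fixed points, and a measure-zero argument on the initialization.

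First, I would verify that every trajectory accumulates on the fixed-point set of items~(1)--(5) preceding the lemma. By Lemma~\ref{lem:xor-confined-to-compact-region} the trajectory stays in a compact region. On each of the $4^K$ orthants of $(m^\mu,m^\nu)$ the vector field is smooth and equals minus the gradient of a smooth energy function derived from the $\lambda=\infty$ expected loss plus the $\ell^2$ regularizer. A LaSalle-type argument, patched across the ReLU kinks using continuity of the energy, then shows every trajectory accumulates on the fixed-point set.

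Second, I would linearize at an $I_0$-type fixed point $\mathbf u_\star$ with some $i\in I_0$, working on each of the four orthants of $(m_i^\mu,m_i^\nu)$ where the vector field is smooth. Holding other coordinates at $\mathbf u_\star$ and differentiating in the $(v_i,m_i^\mu,m_i^\nu)$-block yields a matrix
\begin{align*}
A=\begin{pmatrix}-\beta & \pm S^\mu_\star/4 & \pm(1-S^\nu_\star)/4\\ \pm S^\mu_\star/4 & -\beta & 0\\ \pm S^\nu_\star/4 & 0 & -\beta\end{pmatrix},
\end{align*}
with $S^\mu_\star=\sigma(-v_\star\cdot g(m^\mu_\star))$ and $S^\nu_\star=\sigma(-v_\star\cdot g(m^\nu_\star))$ (or their $-\mu,-\nu$ counterparts), the signs determined by the orthant. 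A direct calculation gives eigenvalues $-\beta$ and $-\beta\pm\tfrac14\sqrt{(S^\mu_\star)^2+S^\nu_\star(1-S^\nu_\star)}$. The fixed-point conditions at $\mathbf u_\star$ force $S^\mu_\star\in\{4\beta,1/2\}$ and $S^\nu_\star\in\{1-4\beta,1/2\}$, and the very existence of a non-empty $I_\mu^\pm$ or $I_\nu^\pm$ component requires $\sigma^{-1}(4\beta)<0$, i.e., $\beta<1/8$. A short case check on the four possible pairs $(S^\mu_\star,S^\nu_\star)$ then shows $\sqrt{(S^\mu_\star)^2+S^\nu_\star(1-S^\nu_\star)}/4>\beta$ for every $\beta<1/8$, so every $I_0$-type fixed point has a strictly unstable direction, whose unstable eigenvector has non-zero $v_i$-component.

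Third, I would combine the instability with absolute continuity of the initialization. By the stable manifold theorem applied orthant-wise, together with center-manifold theory along the continua of non-$I_0$-type fixed points parametrized by the level set $\sum_{j\in I_\mu^\pm} v_j^2$ being fixed, the stable set of each $I_0$-type fixed point lies in a finite union of smooth submanifolds of codimension at least one in state space. Taking a further finite union over the partitions $(I_0,I_\mu^\pm,I_\nu^\pm)$ with $I_0\neq\emptyset$ yields a Lebesgue-null set $\mathcal N$. Since the unstable eigenvector has non-zero $v_i$-component, $\mathcal N$ meets the initialization slice $\{m^\mu(0)=m^\nu(0)=0,\,R^\perp(0)=I_K\}$ transversally, so $\mathcal N$ intersects the slice in a set that is null in the $v(0)$-coordinates, and hence has probability zero under the Gaussian law of $v(0)$. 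The main technical obstacle throughout is the ReLU non-smoothness at $m_i^\mu=0$ and $m_i^\nu=0$, which passes through every $I_0$-type fixed point and prevents direct application of classical smooth stable-manifold theory; this is circumvented by the orthant-wise analysis above, with trajectories that remain on the kinks for all time forming a further invariant codimension-one set that contributes only to the null exceptional set.
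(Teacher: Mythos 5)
You should first note that the paper does not actually prove this lemma: it is stated without a proof, and the paragraph immediately preceding it defers entirely to \citet[Section 9.4]{BGJ22}, where the convergence to an $I_0=\emptyset$ fixed point is established by a direct analysis of the ODE started from $m^\mu(0)=m^\nu(0)=0$. Your attempt at a self-contained dynamical-systems proof is therefore a genuinely different route. Your first two steps are essentially sound: the orthant-wise gradient structure does give convergence to the fixed-point set, and your linearization checks out --- in the $(v_i,m_i^\mu,m_i^\nu)$-block the eigenvalues are indeed $-\beta$ and $-\beta\pm\tfrac14\sqrt{(S^\mu_\star)^2+S^\nu_\star(1-S^\nu_\star)}$, and with $(S^\mu_\star,S^\nu_\star)\in\{4\beta,1/2\}\times\{1-4\beta,1/2\}$ the worst case $(4\beta,1-4\beta)$ gives unstable eigenvalue $\sqrt{\beta}/2-\beta>0$ for $\beta<1/4$, so every $I_0$-type fixed point is linearly unstable when $\beta<1/8$.

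The genuine gap is in your third step. The initialization is supported on the $K$-dimensional affine slice $\{m^\mu=m^\nu=0,\,R^\perp=I_K\}$ of the $\bigl(4K+\binom{K+1}{2}\bigr)$-dimensional state space, so it is singular with respect to Lebesgue measure on that space. Knowing that the union of stable sets of the $I_0$-type fixed points is contained in a finite union of codimension-$\ge1$ submanifolds (hence Lebesgue-null in the ambient space) therefore says nothing about its trace on the slice: a single codimension-one stable manifold could contain the entire slice. The transversality you invoke is established only at the fixed point, where the unstable eigenvector has a nonzero $v_i$-component; it does not control the geometry of the global stable set where it meets the initialization slice, which is far from the fixed point. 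To close this you would need either a transversality or absolute-continuity statement propagated along the flow from the slice, or --- as in \citet[Section 9.4]{BGJ22} --- a direct argument exploiting the special initialization, e.g.\ tracking the signs and growth of $(v_i,m_i^\mu,m_i^\nu)$ from $m(0)=0$ to show that $v_i(0)\neq 0$ already forces $i\notin I_0$ in the limit. A secondary issue: the $I_0$-type fixed points sit exactly on the ReLU kinks and on continua of fixed points, so the classical stable and center manifold theorems do not apply off the shelf; the orthant-wise patching you allude to would have to be carried out rather than asserted.
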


Putting the above together, we can conclude our proof of Proposition~\ref{prop:XOR-GMM-SGD-result}. 

\begin{proof}[\textbf{\emph{Proof of Proposition~\ref{prop:XOR-GMM-SGD-result}}}]
    Per Proposition~\ref{prop:XOR-effective-dynamics-finite-lambda}, the limit of the summary statistics of the SGD along training is the solution of that dynamical system initialized from $m_i^\mu, m_i^\nu, R_{ij}^\perp \sim \delta_0$ for $i\ne j$ (with equal probability of $m_i^\mu,m_i^\nu$ being $\delta_{0^+}$ and $\delta_{0^-}$ if we need to distinguish which orthant it is initialized in), $R_{ii}^\perp \sim \delta_1$, and $v_i\sim \cN(0,1)$ i.i.d.
    
     For the first part, notice that the norms  $\|W_i(\mathbf{x})\|^2 = R_{ii}^\perp + (m_i^\mu)^2 + (m_i^\nu)^2$ and $\|v(\mathbf{x})\|^2 = \sum_{i=1}^K v_i^2$ are  expressible in terms of the summary statistics. Therefore, their boundedness follows from Lemma~\ref{lem:xor-confined-to-compact-region}, pulled back to the summary statistics of the SGD per Lemma~\ref{lem:closeness-to-ballistic-trajectory}.  

     For the second item, by Corollary~\ref{cor:lambda-finite-close-to-lambda-infinite} and Lemma~\ref{lem:fixed-point-chosen-by-lambda-infinity-dynamics} together with Observation~\ref{obs:xor-fixed-points-in-principal-directions}, for every $\epsilon>0$, there is a $T_0$ such that for every $T_f$, for all $t\in [T_0,T_f]$, the dynamical system of Proposition~\ref{prop:XOR-effective-dynamics-finite-lambda} is within distance $\epsilon+ \lambda^{-1/2}$ of a point $\mathbf{u}_\star$ having $|v_i(t)|>\eta$ and $\max\{|m_i^\mu|,|m_i^\nu|\}>\eta$ in and having that its first layer lives in $\text{Span}(\mu,\nu)$ and its second layer lives in $\text{Span}((g^\vartheta(\mathbf{u}_\star))_{\vartheta \in \{\pm\mu,\pm\nu\}})$. This is pulled back to the summary statistics applied to the SGD trajectory per Lemma~\ref{lem:closeness-to-ballistic-trajectory} (with the observation that $(g^\vartheta(\mathbf{x}))_\vartheta$ are functions of only the summary statistics). 
\end{proof}

\section{Concentration of Hessian and G-matrices}\label{sec:Hessian-concentration}
We recall the general forms of the empirical test Hessian matrix $\nabla^2 \widehat{R}(\mathbf{x})$ and G-matrix $\widehat G(\mathbf{x})$ from~\pef{eq:test-Hessian-Gram}. Our aim in this section is to establish concentration of those empirical matrices about their population versions throughout the parameter space.

\subsection{Hessian and G-matrix: 1-Layer}
We first prove the concentration of the empirical Hessian and G-matrix for the $k$-GMM problem about their population versions, which we analyzed in depth in Section~\ref{s:1-layer-pop}--\ref{s:multilayer-pop}. This concentration will be uniform over the entire parameter space. Namely, our aim in this section is to show the following.
\begin{thm}\label{t:concentration1}
Consider the $k$-GMM data model of~\pef{eq:data-distribution} and sample complexity $\widetilde M/d=\alpha$. There are constants $c=c(k),C=C(k)$ (independent of $\lambda$) such that for all $t>0$, the empirical Hessian matrix concentrates as  
\begin{align}\label{e:hessionc}
\sup_{{\mathbf x \in \mathbb{R}^{kd}}}\mathbb P(\norm{\nabla^{2}({\widehat R}(\mathbf x)-\bE[{\widehat R}(\mathbf x)])}_{\op}>t)\le \exp\big(-[c\alpha(t\wedge t^{2})-C]d\big)\,,
\end{align}
and so does the empirical G-matrix
\begin{align}\label{e:gramc}
\sup_{{\mathbf x\in \mathbb{R}^{kd}}}\mathbb P(\norm{{\widehat G}(\mathbf x)-\bE[{\widehat G}(\mathbf x)]}_{\op}>t)\le \exp\big(-[c\alpha(t\wedge t^{2})-C]d\big)\,.
\end{align}
\end{thm}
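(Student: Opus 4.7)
The plan is to combine an $\varepsilon$-net argument on $\bS^{d-1}$ with a Bernstein bound for sub-exponential random variables, exploiting the fact that the coefficients appearing in~\pef{eq:D2L}--\pef{eq:DDL} are uniformly bounded in $(\mathbf{x},\mathbf{Y})$.

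First, since there are a fixed number $k^2$ of blocks, by the triangle inequality it suffices to prove the block-wise bound
\begin{align*}
\mathbb P\bigl(\|\nabla^2_{bc}\widehat R(\mathbf x)-\bE[\nabla^2_{bc}\widehat R(\mathbf x)]\|_{\op}>t\bigr)\le \exp\big(-[c\alpha(t\wedge t^2)-C]d\big)
\end{align*}
for each pair $b,c\in[k]$, with constants independent of $\mathbf{x}$ (and similarly for $\widehat G_{bc}$). From~\pef{eq:D2L} and~\pef{eq:DDL} we write
\begin{align*}
\nabla^2_{bc}L(\mathbf{x},\mathbf Y)=\phi_{bc}(\mathbf{x},\mathbf Y)\,Y\tensor Y\,,\qquad \nabla_{x^b}L\tensor\nabla_{x^c}L=\psi_{bc}(\mathbf{x},\mathbf Y)\,Y\tensor Y\,,
\end{align*}
where the scalar coefficients satisfy $|\phi_{bc}|,|\psi_{bc}|\le 4$ uniformly in $(\mathbf{x},\mathbf Y)$ because they are polynomials in the Gibbs weights $\pi_Y(a)\in[0,1]$ and the one-hot label $y$.

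Next, fix any unit vector $u\in\bS^{d-1}$. Then
\begin{align*}
u^\top\!\bigl[\nabla^2_{bc}\widehat R(\mathbf{x})-\bE[\nabla^2_{bc}\widehat R(\mathbf{x})]\bigr]u
=\frac{1}{\widetilde M}\sum_{\ell=1}^{\widetilde M}\Bigl(\phi_{bc}(\mathbf{x},\widetilde{\mathbf Y}^\ell)(u\cdot \widetilde Y^\ell)^2-\bE[\cdot]\Bigr).
\end{align*}
Conditionally on the class label, $u\cdot\widetilde Y^\ell$ is Gaussian with variance $\|u\|^2/\lambda\le 1$ and mean $u\cdot\mu_{l}\in[-1,1]$, so $(u\cdot \widetilde Y^\ell)^2$ is sub-exponential with Orlicz norm bounded by a universal constant (for $\lambda\ge 1$). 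Since $|\phi_{bc}|\le 4$, the summand is centred sub-exponential with sub-exponential norm $\le K_0$ for some universal $K_0$, \emph{uniformly in $\mathbf{x}$ and $u$}. Bernstein's inequality then yields for every $t>0$
\begin{align*}
\mathbb P\!\left(\bigl|u^\top[\nabla^2_{bc}\widehat R-\bE\nabla^2_{bc}\widehat R]u\bigr|>t\right)\le 2\exp(-c_0 \widetilde M (t\wedge t^2))
\end{align*}
with $c_0=c_0(K_0)$ independent of $\mathbf{x}$, $u$ and $\lambda$.

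Finally, take a standard $\tfrac14$-net $\mathcal N$ of $\bS^{d-1}$ with $|\mathcal N|\le 9^d$. For a symmetric matrix $A$, $\|A\|_{\op}\le 2\sup_{u\in\mathcal N}|u^\top Au|$. Union bounding the previous estimate over $\mathcal N$ gives
\begin{align*}
\mathbb P\bigl(\|\nabla^2_{bc}\widehat R(\mathbf x)-\bE\nabla^2_{bc}\widehat R(\mathbf x)\|_{\op}>t\bigr)\le 2\cdot 9^d\exp(-c_0\alpha d (t\wedge t^2)/4),
\end{align*}
since $\widetilde M=\alpha d$, which is exactly of the stated form after adjusting constants. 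Summing over the finitely many blocks and rescaling $t$ by a constant gives~\pef{e:hessionc}. The argument for $\widehat G$ is identical, using that $|\psi_{bc}|\le 4$ in place of $|\phi_{bc}|\le 4$.

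The only delicate point is that the \emph{coefficient} $\phi_{bc}(\mathbf{x},\mathbf Y)$ must be bounded uniformly in $\mathbf{x}$ so that the sub-exponential norm of each summand does not blow up as $\|\mathbf{x}\|\to\infty$; this is what allows the $\sup_{\mathbf{x}\in\mathbb R^{kd}}$ to be placed outside the probability. Non-uniform dependence on $\mathbf{x}$, rather than the Bernstein/net machinery, would be the main obstacle if the loss were replaced by one whose Hessian coefficients grew with $\mathbf{x}$; this is precisely the structural feature of the softmax that makes the uniform-in-$\mathbf{x}$ statement possible.
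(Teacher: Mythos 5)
Your proposal is correct and follows essentially the same route as the paper: both reduce to quadratic forms $\sum_\ell (\text{bounded softmax coefficient})\cdot\langle u,\widetilde Y^\ell\rangle\langle v,\widetilde Y^\ell\rangle$, observe the summands are i.i.d.\ uniformly sub-exponential, apply Bernstein, and union bound over an $\epsilon$-net. The only (cosmetic) difference is that you net each $d\times d$ block over $\bS^{d-1}$ separately and sum the blocks, whereas the paper takes a single net over the unit sphere of $\R^{kd}$ and splits the resulting quadratic form into its $k^2$ block contributions afterward.
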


\begin{proof}In the following we fix $\mathbf{x}\in \bR^{kd}$ and simply write $\widehat R(\mathbf{x}), \widehat G(\mathbf{x})$ as $\widehat R, \widehat G$. Let $\widetilde A=(\widetilde { Y}^{1}\cdots \widetilde { Y}^{\widetilde M})$ denote the test data matrix and let 
\begin{align}
& D^{\rm H}_{bc}=\diag(\pi_{\widetilde{Y}^{\ell}}(c)\delta_{bc}-\pi_{\widetilde {Y}^{\ell}}(c)\pi_{\widetilde{Y}^{\ell}}(b))_{1\leq \ell\leq \widetilde M}\,, \label{eq:Hessian-D-matrix}\\
&D^{\rm G}_{bc}
    =\diag(({\widetilde y}^\ell_c {\widetilde  y}^\ell_b-\pi_{\widetilde{Y}^\ell}(b){\widetilde y}^\ell_c-{\widetilde y}^\ell_b\pi_{\widetilde{Y}^\ell}(c)+\pi_{\widetilde{Y}^\ell}(c)\pi_{\widetilde{Y}^\ell}(b))_{1\leq \ell\leq {\widetilde M}}\,, \label{eq:Gram-D-matrix}
\end{align}
where $\pi_Y(c)$ was defined in~\pef{eq:pi-dist}. 
Then $D^{\rm H}_{bc}, D^{\rm G}_{bc}$ are $\widetilde M\times \widetilde M$ diagonal matrices for each pair $bc\in [k]^2$. We denote the $(k\widetilde M)\times (k\widetilde M)$ matrices $ {\mathbf{D}}^{\rm H}=({D}^{\rm H}_{bc})_{bc}$ and $ {\mathbf{D}}^{\rm G}=({D}^{\rm G}_{bc})_{bc}$, and the $dk\times k\widetilde M$ matrix
\[
\widetilde A^{\times k}=I_{k}\tensor \widetilde A\,.
\]
With these notations, per~\pef{eq:test-Hessian-Gram}, we can rewrite the Hessian and G-matrices as
\begin{align}
\label{e:Hessian}\nabla^2\widehat R&=\frac{1}{\widetilde M}{\widetilde A}^{\times k}{\mathbf{D}}^{\rm H}({\widetilde A}^{\times k})^{T}\,, \\
\label{e:Gram}\widehat G&=\frac{1}{\widetilde M}{\widetilde A}^{\times k}{\mathbf{D}}^{\rm G}({\widetilde A}^{\times k})^{T}\,.
\end{align}

To prove that the operator norm of $\nabla^{2}({\widehat R}-\bE[\widehat R])$ concentrates, we'll
use a net argument over the unit ball in $\R^{dk}$ to show that the following concentrates
\begin{align}\label{e:supnorm}
\sup_{\mathbf{v}\in(\mathbb{R}^{d})^{k},\norm{\mathbf{v}}=1}\big|\big\langle \mathbf{v},\nabla^{2}(\widehat R-\bE[\widehat R])\mathbf{v}\big\rangle \big|\,,
\end{align}
where $\mathbf{v}=(v_c)_{c\in  [k]}\in (\bR^d)^k$.

By plugging \pef{e:Hessian} into \pef{e:supnorm}, we want a concentration estimate for  $F(\mathbf v) = \langle \mathbf{v},\nabla^2 (\widehat R - \mathbb E[\widehat R])\mathbf{v}\rangle$, which we can rewrite as follows. 
\begin{align}\begin{split}\label{e:defF}
F(\mathbf v) & =\frac{1}{\widetilde M}\big\langle {\mathbf v},\widetilde A^{\times k}\mathbf{D}^{\rm H}(\widetilde A^{\times k})^{T}{\mathbf v}\big\rangle -\big\langle {\mathbf v},\bE[\widetilde A^{\times k}\mathbf{D}^{\rm H}(\widetilde A^{\times k})^{T}]{\mathbf v}\big\rangle \\
 & =\sum_{a,b}\frac{1}{{\widetilde M}}\big\langle v_{a},\widetilde AD^{\rm H}_{ab}\widetilde A^{T}v_{b}\big\rangle -\big\langle v_{a},\bE[\widetilde AD^{\rm H}_{ab}\widetilde A^{T}]v_{b}\big\rangle \\
 & =\sum_{a,b}\frac{1}{\widetilde M}\sum_{\ell}d_{\ell}(a,b)\big\langle v_{a},\widetilde{Y}^{\ell}\big\rangle \big\langle v_{b},\widetilde{Y}^{\ell}\big\rangle -\E \big[d_{\ell}(a,b)\big\langle v_{a},\widetilde{Y}^{\ell}\big\rangle \big\langle v_{b},\widetilde{Y}^{\ell}\big\rangle\big]\,,
\end{split}\end{align}
where $d_{\ell}(a,b)=[D_{ab}^{\rm H}]_{\ell\ell}$.
Recalling \citet[Section 4.4.1]{vershynin2018high} that for an $\epsilon$-net $\cN_\epsilon$ of $\{\mathbf{v}:\|\mathbf{v}\|=1\}$, and any  real symmetric matrix $H$ the following holds 
\[
\frac{1}{1-2\epsilon}\sup_{{\mathbf v}\in\cN_{\epsilon}}\abs{\left\langle {\mathbf v},H{\mathbf v}\right\rangle }\geq\norm{H}_{\op}=\sup_{\|{\mathbf v}\|_2=1}\abs{\left\langle {\mathbf v},H{\mathbf v}\right\rangle }\geq\sup_{{\mathbf v}\in\cN_{\epsilon}}\abs{\left\langle {\mathbf v},H{\mathbf v}\right\rangle }\,.
\]
So by a union bound and the $\epsilon$-covering number of $\{\mathbf{v}\in \mathbb R^{kd}: \|v\|=1\}$, we have 
\begin{align}\label{e:supbound}
\bP\Big(\sup_{\|{\mathbf v}\|_2=1}|F({\mathbf v})|>t\Big)\leq|\cN_{\epsilon}|\sup_{{\mathbf v}\in\cN_{\epsilon}}\bP\big(|F({\mathbf v})|>t/2\big)\leq (C/\epsilon)^{kd}\bP\big(|F({\mathbf v})|>t/2\big)\,,
\end{align}
so long as $\epsilon<1/4$, say. 

To control the last quantity $\bP(|F({\mathbf v})|>t/2)$, we notice that $F({\mathbf v})$ is a sum of $O(1)$ many (in fact $k^{2}$ many) terms (corresponding to each pair $a,b$) so
it suffices by a union bound to control the concentration of each
summand. That is, it suffices to understand the concentration of 
\begin{align}\label{eq:need-to-control}
\frac{1}{\widetilde M}\sum_{\ell} \Big(d_{\ell}(a,b)\big\langle v_{a},\widetilde{Y}^{\ell}\big\rangle \big\langle v_{b},\widetilde{Y}^{\ell}\big\rangle  - \mathbb E\big[d_{\ell}(a,b)\big\langle v_{a},\widetilde{Y}^{\ell}\big\rangle \big\langle v_{b},\widetilde{Y}^{\ell}\big\rangle\big]\Big)\,.
\end{align}
We recall that the test data 
\begin{align}\label{e:data}
    \widetilde{Y}^{\ell}=\sum_{a\in [k]} \widetilde y^\ell_{a} \mu_a + Z^\ell_\lambda:=\mu^\ell+Z_\lambda^\ell\,,
\end{align}
where $Z_\lambda^\ell$ are i.i.d.\ $\cN(0,I_d/\lambda)$, and $\mu^\ell\sim \sum_{a\in [k]} p_a \delta_{\mu_a}$.
In this case, note that $d_{\ell}(a,b)=\pi_{\widetilde{Y}^{\ell}}(b)\delta_{ab}-\pi_{\widetilde {Y}^{\ell}}(b)\pi_{\widetilde{Y}^{\ell}}(a)$ are uniformly
bounded by $2$, and that for $1\leq \ell\leq \widetilde M$
\begin{align}
  \langle v_{a},\widetilde {{Y}}^{\ell}\rangle  \stackrel{d}{=}\langle v_a,\mu^\ell\rangle+\langle v_a, Z_\lambda^\ell \rangle\,,
\end{align}
are i.i.d.\ sub-Gaussian (with norm $\OO(1)$) for fixed $v_a$, since $\|v_a\|_2\leq 1$ and $\|\mu_c\|_2=1$ for all $c$, and $\lambda \ge 1$ say.  
Thus, for all $a,b$, the products
$(d_{\ell}(a,b)\langle v_{a},\widetilde {{Y}}\rangle \langle v_{b},\widetilde {{Y}}^{\ell}\rangle )_{\ell}$
are i.i.d.\ uniformly sub-exponential random variables.  As such,~\pef{eq:need-to-control} is a sum of i.i.d.\ centered uniformly sub-exponential random variables, and Bernstein's inequality
yields that there exists a small constant $c=c(k)$
\begin{align}\label{e:Berstein}
\bP(|F(\mathbf v)|>t/2)\leq\exp(-c {\widetilde M}(t\wedge t^{2}))\,.
\end{align}
It follows from plugging \pef{e:Berstein} into \pef{e:supbound} and taking $\epsilon=1/8$,  that
\[
\mathbb P\Big(\sup_{\|\mathbf v\|_2=1}|F(\mathbf v)|>t/2\Big)\leq\exp\big(-c {\widetilde M}(t\wedge t^{2})+ C k d\big)\,,
\]
which yields the concentration bound \pef{e:hessionc} for empirical Hessian matrix, by noticing that $\widetilde M/d=\alpha$.

For the proof of the concentration bound \pef{e:gramc} for empirical G-matrix, thanks to \pef{e:Gram}, we can define $F(\mathbf{v})$ (as in \pef{e:defF}) using $\mathbf{D}^{\rm G}$ instead of $\mathbf{D}^{\rm H}$. Noticing that the entries $\mathbf{D}^{\rm H}_{ab}$ are bounded by $4$, the concentration of this new $F(\mathbf{v})$ follows from Bernstein's inequality by the same argument. Then the concentration estimates of this new $F(\mathbf{v})$ together with a epsilon-net argument gives \pef{e:gramc}.
\end{proof}

\subsection{Hessian and G-matrix: $2$-layer GMM model}

For the reader's convenience, we recall the empirical Hessian and G-matrix from the beginning of Section \ref{s:multilayer-pop}. Our main aim in this subsection is to prove the following analogue of Theorem~\ref{t:concentration1} for the XOR GMM with a 2-layer network. There is a small problem with differentiating the ReLU activation twice exactly at zero, so let us define the set $\mathcal W_{0}^c = \bigcap_{i\le K} \{W_i \not \equiv 0\}$. 

\begin{thm}\label{t:concentration2}
We consider the $2$-layer XOR GMM model from~\pef{eq:XOR-data-distribution}--\pef{eq:XOR-loss} with sample complexity $\widetilde M/d = \alpha$.  For any $L$, there are constants $c=c(K, L),C=C(K,L)$ such that for all $t>0$, the empirical Hessian matrix concentrates as
\begin{align}\label{e:hessionc2}
\sup_{\substack{\|(v,W)\| \le L \\ {W} \in \mathcal W_0^c}}P(\norm{\nabla^{2}({\widehat R}( v, W)-\bE[{\widehat R}(v,W)])}_{\op}>t)\le\exp\{-[c\alpha(t\wedge t^{2})-C]d\}\,,
\end{align}
and the empirical G-matrix concentrates as
\begin{align}\label{e:gramc2}
\sup_{{\|(v,W)\| \le L}}P(\norm{{\widehat G}(v, W)-\bE[{\widehat G}( v, W)]}_{\op}>t)\le\exp\{-[c\alpha(t\wedge t^{2})-C]d\}\,.
\end{align}
\end{thm}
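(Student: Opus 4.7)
The proof plan is to mimic the argument of Theorem~\ref{t:concentration1}, which reduces the operator norm bound to an $\epsilon$-net argument on the unit sphere in the parameter space, combined with per-point Bernstein concentration for a sum of sub-exponential i.i.d.\ random variables. The parameter space has dimension $K(1+d)$, so the $\epsilon$-net over the unit sphere has cardinality at most $(C/\epsilon)^{K(1+d)}$, contributing the $Cd$ term to the exponent. The main new work relative to Theorem~\ref{t:concentration1} is showing that for each fixed test vector $\mathbf{u}=(u_v,u_{W_1},\ldots,u_{W_K})$ with $\|\mathbf{u}\|=1$, the quadratic forms
\[
\xi^{\rm H}_\ell = \langle \mathbf{u},\nabla^2 L(\mathbf{x},\widetilde{\mathbf Y}^\ell)\mathbf{u}\rangle, \qquad \xi^{\rm G}_\ell = \langle \mathbf{u},\nabla L(\mathbf{x},\widetilde{\mathbf Y}^\ell)\rangle^2
\]
are uniformly sub-exponential in $\ell$ with a $\psi_1$-norm bounded by a constant depending only on $K$ and $L$.

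To verify this, I would exploit the chain-rule structure of the loss. Writing $z=v\cdot g(WY)$ and $h(x)=\log(1+e^x)$, we have $L=-yz+h(z)$ with $|h''|\le 1/4$ and $|{-}y+\sigma(z)|\le 1$. Since $g''\equiv 0$ on $\mathcal W_0^c$ almost surely, the Hessian reduces to
\[
\nabla^2 L = h''(z)\,\nabla z\otimes \nabla z + (-y+\sigma(z))\,\nabla^2 z,
\]
where $\nabla^2 z$ has only mixed $v$--$W$ blocks whose action is $\langle \mathbf{u},\nabla^2 z\,\mathbf{u}\rangle = 2\sum_i u_{v,i}\,g'(W_i\cdot Y)(u_{W_i}\cdot Y)$. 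The gradient action is $\langle \mathbf{u},\nabla z\rangle = u_v\cdot g(WY)+\sum_i v_i g'(W_i\cdot Y)(u_{W_i}\cdot Y)$, and both terms are controlled by Cauchy--Schwarz through the pieces $u_v\cdot g(WY)$ and $\bigl(\sum_i (u_{W_i}\cdot Y)^2\bigr)^{1/2}$. The first is $1$-Lipschitz in the Gaussian $Z$ (since $g$ is $1$-Lipschitz) and so is sub-Gaussian with proxy $\|\sum_i u_{v,i}W_i\|^2/\lambda\lesssim L^2/\lambda$ around its mean. The second is $\bigl(\sum_i(u_{W_i}\cdot Y)^2\bigr)^{1/2}$, a norm of a linear image of a Gaussian, hence sub-exponential with $\psi_1$-norm bounded uniformly in $\|\mathbf u\|\le 1$. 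Squaring yields sub-exponentiality of $\langle\mathbf u,\nabla z\rangle^2$; combined with the bounded multipliers, $\xi^{\rm H}_\ell$ and $\xi^{\rm G}_\ell$ are sub-exponential uniformly over $\|(v,W)\|\le L$ and $\|\mathbf u\|=1$.

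With per-sample uniform sub-exponentiality in hand, Bernstein's inequality gives
\[
\mathbb P\bigl(|\tfrac{1}{\widetilde M}\textstyle\sum_\ell (\xi_\ell-\mathbb E\xi_\ell)|>t/2\bigr)\le \exp\bigl(-c\widetilde M\,(t\wedge t^2)\bigr),
\]
with $c=c(K,L)$. A union bound over an $\epsilon$-net in $\{\mathbf u\in\mathbb R^{K(1+d)}:\|\mathbf u\|=1\}$ with $\epsilon=1/8$, together with the comparison $\|A\|_{\op}\le 2\sup_{\mathbf u\in\mathcal N_\epsilon}|\langle \mathbf u,A\mathbf u\rangle|$ for symmetric $A$, yields
\[
\mathbb P\bigl(\|\nabla^2(\widehat R-\mathbb E\widehat R)\|_{\op}>t\bigr)\le \exp\bigl(-c\alpha (t\wedge t^2)\,d + CKd\bigr),
\]
proving~\pef{e:hessionc2}; the G-matrix bound~\pef{e:gramc2} follows identically, with the additional simplification that $\xi^{\rm G}_\ell=(-y+\sigma(z))^2\langle\mathbf u,\nabla z\rangle^2\le \langle\mathbf u,\nabla z\rangle^2$, so the ReLU non-smoothness in the Hessian (and the restriction to $\mathcal W_0^c$) is not needed.

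The main obstacle I anticipate is keeping the sub-exponential constants genuinely uniform in $(v,W)$ over $\|(v,W)\|\le L$: one needs to verify that the bounds on $\langle\mathbf u,\nabla z\rangle$ and $\langle \mathbf u,\nabla^2 z\,\mathbf u\rangle$ depend only on $L$ and $K$, not on finer features of $W$ such as how close $W_i$ is to $0$. The bounds above depend only on $\|v\|,\|W\|_{\op}$, which are controlled by $L$, so this should go through, but care is needed when bounding the cross term $\sum_i v_i g'(W_i\cdot Y)(u_{W_i}\cdot Y)$: I would apply Cauchy--Schwarz in $i$ to split off $\|v\|\le L$ and reduce to the quadratic form $\sum_i(u_{W_i}\cdot Y)^2$, whose sub-exponentiality is uniform in $\|\sum_i u_{W_i}\otimes u_{W_i}\|\le 1$.
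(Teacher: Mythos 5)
Your proposal is correct and follows essentially the same route as the paper: an $\epsilon$-net over the unit sphere of the $K(1+d)$-dimensional parameter space, reduction to per-direction quadratic forms that are sums of i.i.d.\ uniformly sub-exponential variables (bounded sigmoid-type multipliers times products of the sub-Gaussian quantities $u_{W_i}\cdot Y$ and $u_v\cdot g(WY)$, the latter controlled via the $1$-Lipschitzness of $g$ in the Gaussian noise), then Bernstein plus a union bound. Your chain-rule packaging of the Hessian as $h''(z)\,\nabla z^{\otimes 2}+(-y+\sigma(z))\nabla^2 z$ is just a compact rewriting of the paper's block-by-block computation, and your observation that the G-matrix bound needs no restriction to $\mathcal W_0^c$ matches the statement.
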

\begin{proof}
We begin by recalling some expressions. Letting $\widehat y^\ell = \sigma(v\cdot g(W \widetilde{Y}^\ell))$ as in~\pef{eq:yhat}, by~\pef{eq:XOR-Hessian}, we have 
\begin{align}\begin{split}\label{e:recall-2lhessian}
\nabla_{vv}^{2}\widehat R & =\frac{1}{\widetilde M}\sum_{\ell=1}^{\widetilde M}\widehat{y}^\ell(1-\widehat{y}^\ell) g(W{\widetilde Y^\ell})^{\tensor2}\,,\\
\nabla_{W_{i}W_{j}}^{2}\widehat R & =\frac{1}{\widetilde M}\sum_{\ell=1}^{\widetilde M}(\delta_{ij}v_{i}g''(W_{i}\cdot {\widetilde Y^\ell})(\widetilde y^\ell-\widehat{y}^\ell)+v_{i}v_j \widehat y^\ell(1-\widehat y^\ell)g'(W_{i}\cdot {\widetilde Y^\ell})g'(W_{j}\cdot {\widetilde Y^\ell})(\widetilde Y^\ell)^{\tensor2}\,,\\
\nabla_{vW_{j}}^{2}\widehat R & =\frac{1}{\widetilde M}\sum_{\ell=1}^{\widetilde M}((\widetilde y^\ell-\widehat{y}^\ell) g'(W_{j}\cdot {\widetilde Y^\ell}){\mathbf e}_j+v_{j}\widehat{y}^\ell(1-\widehat{y}^\ell)g'(W_j\cdot {\widetilde Y^\ell})g(W\cdot {\widetilde Y^\ell}) )\otimes{\widetilde Y^\ell}\,,
\end{split}\end{align}
and by~\pef{e:gv}--\pef{e:gw}, we have 
\begin{align}\begin{split}\label{e:recall-gram}
    \widehat G_{vv}  &=\frac{1}{\widetilde M}\sum_{\ell=1}^{\widetilde M}(\widetilde y^\ell-\widehat{y}^\ell)^2 g(W \widetilde Y^\ell)\otimes g(W\widetilde 
 Y^\ell)\\
\widehat G_{W_i W_j} & =\frac{1}{\widetilde M}\sum_{\ell=1}^{\widetilde M}(\widetilde y^\ell-\widehat{y}^\ell)^2 v_i v_j g'(W_{i}\cdot \widetilde Y^\ell)g'(W_{j}\cdot \widetilde Y^\ell) \widetilde Y^\ell\otimes \widetilde Y^\ell,\\
\widehat G_{v W_j} & =\frac{1}{\widetilde M}\sum_{\ell=1}^{\widetilde M}(\widetilde y^\ell-\widehat{y}^\ell)^2 v_j g'(W_{j}\cdot \widetilde Y^\ell)  g(W\cdot \widetilde Y^\ell)\otimes\widetilde Y^\ell.
\end{split}\end{align}

Recalling the data distribution from \pef{eq:XOR-data-distribution}, so long as $W\in \mathcal W_0^c$,  almost surely, all the coordinates of $WY$ are nonzero: as long as $\lambda<\infty$, 
\begin{align}
    \bP(\exists i, W_i\widetilde Y^\ell=0)=0\,, \quad \text{for all $W\in \mathcal W_0^c$}\,.
\end{align}
For any $W\in \mathcal W_0^c$, we thus can ignore the second derivative term $g''(W_i\cdot\widetilde Y^\ell)$ in $\nabla_{W_{i}W_{j}}^{2}\widehat R$, so a.s.\ 
\begin{align}\label{e:2lhessianalmost}
    \nabla_{W_{i}W_{j}}^{2}\widehat R & =\frac{1}{\widetilde M}\sum_{\ell=1}^{\widetilde M}(v_{i}g'(W_{i}\cdot {\widetilde Y^\ell}))(v_{j}g'(W_{j}\cdot {\widetilde Y^\ell}))\widehat y^\ell(1-\widehat y^\ell)(\widetilde Y^\ell)^{\tensor2}\,.
\end{align}
With these calculations in hand, the proof is similar to that of Theorem \ref{t:concentration1}, we will only emphasize the main differences. Let $B$ be the ball of radius $L$ in parameter space. 
Fix $(v,W)\in B$ and simply write $\widehat R(v,W), \widehat G(v,W)$ as $\widehat R, \widehat G$.
Our goal is to prove concentration of the operator norm 
\[
\sup_{\substack{(\mathbf a, \mathbf u)\in\R^{K}\times(\R^{d})^{K} \\  \|(\mathbf{a,u})\|=1}}\big\langle (\mathbf a, \mathbf u),(\nabla^{2}(\widehat{R}-\bE[\widehat R])(\mathbf a, \mathbf u)\big\rangle, 
\]
where $\mathbf a=(a_1,a_2,\cdots, a_K)$ and $\mathbf u=(u_1,u_2,\cdots, u_K)\in (\bR^d)^K$.
As in the proof of Theorem \ref{t:concentration1}, specifically~\pef{e:supbound}, by an epsilon-net argument, we only need prove a concentration estimate for the following quantity
\begin{align}\label{e:defF2}
F(\mathbf a, \mathbf u)=\big\langle (\mathbf a, \mathbf u),\nabla^{2}(\widehat{R}-\bE[\widehat{R}])(\mathbf a, \mathbf u)\big\rangle \,,
\end{align}
individually per $(\mathbf{a},\mathbf{u}): \|(\mathbf{a},\mathbf{u})\| = 1$. 
The inner product on the right-hand side of \pef{e:defF2} splits into $(K+1)^2$ terms corresponding the  terms in \pef{e:recall-2lhessian} and \pef{e:2lhessianalmost}:
\begin{align}\label{e:2lhessian2}
\langle {\mathbf a}, \nabla_{vv}^{2}\widehat R {\mathbf 
 a}\rangle & =\frac{1}{\widetilde M}\sum_{\ell=1}^{\widetilde M}\widehat{y}^\ell(1-\widehat{y}^\ell) \langle {\mathbf a}, g(W{\widetilde Y^\ell})\rangle^{2}\,, \nonumber\\
\langle u_i, \nabla_{W_{i}W_{j}}^{2}\widehat R u_j\rangle & =\frac{1}{\widetilde M}\sum_{\ell=1}^{\widetilde M}(v_{i}g'(W_{i}\cdot {\widetilde Y^\ell}))(v_{j}g'(W_{j}\cdot {\widetilde Y^\ell}))\widehat y^\ell(1-\widehat y^\ell) \langle u_i,\widetilde Y^\ell\rangle \langle u_j,\widetilde Y^\ell\rangle\,,\\
\langle \mathbf a, \nabla_{v W_{j}}^{2}\widehat R u_j\rangle & =\frac{1}{\widetilde M}\sum_{\ell=1}^{\widetilde M}(a_jg'(W_{j}\cdot {\widetilde Y^\ell})(\widetilde y^\ell-\widehat{y}^\ell)+(v_{j} \widehat{y}^\ell(1-\widehat{y}^\ell)g'(W_{j}\cdot {\widetilde Y^\ell})\langle \mathbf a, g(W\cdot {\widetilde Y^\ell})\rangle)\langle{\widetilde Y^\ell}, u_j\rangle\,. \nonumber
\end{align}
The concentration bound for $F(\mathbf a, \mathbf u)$ follows from concentration bounds for each of the above terms about their respective expected values. Each such term minus its expected value will be a sum of $\widetilde M$ i.i.d.\ centered random variables. It remains to show that the summands are uniformly (in $\lambda$ and $B_L$) sub-exponential. Then the concentration of $F(\mathbf a, \mathbf u)$ follows from Bernstein's inequality as in~\pef{e:Berstein}. 

To that end, we notice that $(\widehat y^\ell)(1-\widehat y^\ell)$, $\widetilde y^\ell$ are bounded by $1$, and $v_i, v_j$ and $\|g'\|_\infty$ are all bounded by a $C(L)$. By the same argument as in the proof of Theorem \ref{t:concentration1}, we have that $\langle u_i,\widetilde Y^\ell\rangle$ and $\langle u_j,\widetilde Y^\ell\rangle$ are uniformly sub-Gaussian.

Next we show that $\langle g(W\widetilde Y^\ell), \mathbf a\rangle$ is also sub-Gaussian.
In law, conditionally on the mean choice among $\pm \mu,\pm \nu$, we have  $W \widetilde Y^\ell\stackrel{d}=W(\pm \mu+Z_{\lambda})$ or $W \widetilde Y^\ell\stackrel{d}=W(\pm \nu+Z_{\lambda})$. Since $(\mathbf {a, u})\in B$ and $\|\mu\| =  \|\nu\|=1$, we have by Cauchy--Schwarz that  $\|W\mu\|, \|W\nu\| \le L$. Then we can write 
\begin{align*}
    \langle g(W \widetilde Y^\ell), {\mathbf a}\rangle
    =\langle g(W  Z_{\la}), {\mathbf a}\rangle
    +\langle g(W\widetilde Y^\ell)-g(WZ_{\lambda}), {\mathbf a}\rangle\,.
\end{align*}
By the uniform Lipschitz continuity of $g$, we have
\begin{align*}
    |\langle g(W \widetilde Y^\ell)-g(WZ_\la), {\mathbf a}\rangle|
    \leq \|{\mathbf a}\|(\|W\nu\|+\|W\mu\|)\le 2\,.
\end{align*}
We also notice that if $Z_1 = \sqrt{\lambda}Z_\lambda$, then 
\begin{align*}
    \nabla_{Z_1} \langle g(WZ_\la), {\mathbf a}\rangle
    &=\frac{1}{\sqrt \lambda}\sum_{i=1}^K  a_i g'(W_iZ_\la)W_i
   \leq \frac{1}{\sqrt \lambda}\|g'\|_{\infty}\|\mathbf a\|_2 \|W\|_{2\rightarrow 2}\lesssim \frac{L}{\sqrt{\lambda}}\,,
\end{align*}
implying that it is a uniformly $L$-Lipschitz function of standard Gaussians, from which it follows that $g(W Z_\lambda)$ is uniformly sub-Gaussian (see \citet[Section 5.2.1]{vershynin2018high}). For the expectation we have
\begin{align*}
    |\bE[\langle g(WZ/\sqrt \la), \mathbf a\rangle]|
    &\leq   |\bE[\langle g(\bm 0), \mathbf a\rangle]|+|\bE\langle g(WZ/\sqrt \la)-g(\bm 0), \mathbf a\rangle|\\
    &\leq |g(0)|\sqrt K\|\mathbf a\|_2+\|g'\|_\infty\bE[\|WZ/\sqrt\lambda\|_2]\|\mathbf a\|_2\\
    &\leq |g(0)|\sqrt K\|\mathbf a\|_2+\frac{1}{\lambda}\|g'\|_\infty\|\mathbf a\|_2\sqrt{\Tr[WW^\top]}\leq C(K,L)\,.
\end{align*}
We conclude that $\langle g(W \widetilde Y^\ell), {\mathbf a}\rangle$ is sub-Gaussian with norm bounded by $\OO(1)$ (depending only on $K, L$.)
As a consequence, each summand in \pef{e:2lhessian2} is sub-exponential, with norm uniformly bounded by $\OO(1)$ (depending only on $K, L$.) Bernstein's inequality gives that there exists a small constant $c=c(K,L)$
\begin{align}\label{e:Berstein2}
\bP(|F(\mathbf{a, u})|>t/2)\leq\exp(-c {\widetilde M}(t\wedge t^{2}))\,.
\end{align}
It follows from  \pef{e:Berstein2} and the epsilon-net argument over the unit ball of $(\mathbf{a, u})\in \bR^K\times \bR^{Kd}$ as in \pef{e:supbound}, that
\[
\bP\Big(\sup_{\|\mathbf (\mathbf{a,u})\|=1}|F(\mathbf{a,u})|>t\Big)\leq\exp\big(-(c {\widetilde M}(t\wedge t^{2})-CK^2d)\big)\,,
\]
which yields the concentration bound \pef{e:hessionc2} for empirical Hessian matrix, by noticing that $\widetilde M/d=\alpha$.

For the proof of the concentration bound \pef{e:gramc2} for the empirical G-matrix, thanks to \pef{e:recall-gram}, we can define $F(\mathbf{v})$ (as in \pef{e:defF2}) using $\widehat G$ instead of $\nabla^2\widehat R$. Then we need concentration bounds for the following quantities 
\begin{align}\begin{split}\label{e:gv2}
    \langle \mathbf a,\widehat G_{vv} \mathbf a\rangle &=\frac{1}{\widetilde M}\sum_{\ell=1}^{\widetilde M}(\widetilde y^\ell-\widehat{y}^\ell)^2 \langle \mathbf a, g(W \widetilde Y^\ell)\rangle^2
 Y^\ell)\,,\\
\langle u_i, \widehat G_{W_i W_j}u_j\rangle & =\frac{1}{\widetilde M}\sum_{\ell=1}^{\widetilde M}(\widetilde y^\ell-\widehat{y}^\ell)^2 v_i v_j g'(W_{i}\cdot \widetilde Y^\ell)g'(W_{j}\cdot \widetilde Y^\ell) \langle u_i,\widetilde Y^\ell\rangle \langle u_j, \widetilde Y^\ell\rangle\,,\\
\langle \mathbf a, \widehat G_{v W_j} u_j\rangle& =\frac{1}{\widetilde M}\sum_{\ell=1}^{\widetilde M}(\widetilde y^\ell-\widehat{y}^\ell)^2 v_j g'(W_{j}\cdot \widetilde Y^\ell)  \langle \mathbf a, g(W\cdot \widetilde Y^\ell)\rangle\langle u_j,\widetilde Y^\ell\rangle\,.
\end{split}\end{align}
By the same argument as that after \pef{e:2lhessian2}, each summand in \pef{e:gv2} is uniformly (with constant only depending on $K,L$) sub-exponential. 
The concentration of this new $F(\mathbf{a,u})$ follows from Bernstein's inequality by the same argument. Then the concentration estimates of this new $F(\mathbf{a,u})$ together with a epsilon-net argument gives \pef{e:gramc2}.
\end{proof}

\section{Proofs of main theorems}\label{s:main-theorem-proofs}
In this section we put together the ingredients we have established in the preceding sections to deduce our main theorems, Theorems~\ref{mainthm:SGD-aligns-with-Hessian}--\ref{mainthm:XOR-all-live-in-subspace}. 

\subsection{1-layer network for mixture of $k$ Gaussians}
We begin with the theorems for classification of the $k$-GMM with a single-layer network. 

\begin{proof}[\textbf{\emph{Proof of Theorem~\ref{mainthm:all-align-with-means}}}]
    We prove the alignment (up to multiplicative error $O(\epsilon + \lambda^{-1})$) of the SGD trajectory, the Hessian, and the G-matrix, with $\text{Span}(\mu_1,...,\mu_k)$ one at a time. 

The results for the SGD were exactly the content of item (2) of Proposition~\ref{prop:k-GMM-SGD-result}. Namely, that item tells us that the part of $\mathbf{x}_\ell^c$ orthogonal to $\text{Span}(\mu_1,...,\mu_k)$ has norm at most $O(\epsilon + \lambda^{-1})$ while $\|\mathbf{x}_\ell^c\|\ge \eta - O(\epsilon + \lambda^{-1}) \ge \eta/2$ for an $\eta$ independent of $\epsilon,\lambda$. Absorbing $\eta^{-1}$ into the big-$O$, this is exactly the definition of living in a subspace per Definition~\ref{def:lives-in-span}. 

We recall from Lemma \ref{l:offblock1}, for $b\ne c$, the $bc$-block of the population Hessian is given by 
\begin{align}\label{e:offblock2}
    \mathbb E[\nabla_{cb}^2 \widehat R(\mathbf{x})] = \sum_{l\in [k]} p_l  \Pi_{cb}^{l}\mu_{l}^{\otimes 2} +\cE^{\rm R}_{cb}\,,
\end{align}
where $\Pi_{cb}^{l}=\bE[\pi_{Y_l}(c)\pi_{Y_l}(b)]$ for $Y_l = \mu_l + Z_\lambda$, and where $\cE^{\rm R}_{cb}$ is a matrix with operator norm bounded by $\OO(1/\lambda)$.
By Lemma \ref{l:dblock1}, the $aa$ diagonal block of the  population Hessian is given by 
\begin{align}\label{e:dbblock2}
    \mathbb E[\nabla_{aa}^2 \widehat R(\mathbf{x})]  = \sum_{l\in [k]} p_l  \Pi_{aa}^{l}\mu_{l}^{\otimes 2}+\cE^{\rm R}_{aa}\,,
\end{align}
where $\Pi_{aa}^{l}=\bE[\pi_{Y_l}(a)(1-\pi_{Y_l}(a))]$ and $\cE^{\rm R}_{aa}$ is a matrix with norm bounded by $\OO(1/\lambda)$. 

The two expressions \pef{e:offblock2} and \pef{e:dbblock2}, together with the concentration of the empirical Hessian matrix Theorem \ref{t:concentration1} implies that blocks of the empirical Hessian concentrate around low rank matrices; i.e., for each $\mathbf{x}$, we have 
\begin{align}\begin{split}\label{e:hessiandecomp}
    \nabla^2_{cb}\widehat R(\mathbf{x})
    &=\sum_{1\leq l\leq k}p_l \Pi_{cb}^{l}\mu_{l}^{\otimes 2}+\cE^{\rm R}_{cb}+(\nabla^2_{cb}\widehat R(\mathbf{x})-\nabla^2_{cb}\bE[\widehat R(\mathbf{x})])\\
    &=:\sum_{1\leq l\leq k}p_l \Pi_{cb}^{l}\mu_{l}^{\otimes 2}+\widehat \cE^{\rm R}_{cb}\,.
\end{split}\end{align}
where for each fixed $\mathbf{x}$, except with probability $e^{-(c\alpha \varepsilon^2-C)d}$
we have $\|\widehat \cE^{\rm R}_{ca}\|\lesssim \varepsilon+\frac{1}{\lambda}$.
Since the test and training data are independent of one another, this also means for any realization of the SGD trajectory $(\mathbf{x}_\ell)_{\ell \le T_f \delta^{-1}}$, and so long as $T_f \delta^{-1} = e^{o(d)}$, by a union bound we get 
\begin{align}\label{eq:uniformly-in-SGD-Hessian-approx}
    \mathbb P\Big( \max_{\ell \le T_f\delta^{-1}} \|\nabla_{cb}^2 \widehat R(\mathbf{x_\ell})  - \sum_{l\in [k]} p_y \Pi_{cb}^l(\mathbf{x}_\ell) \mu_l^{\otimes 2}\| \ge C(\varepsilon + \lambda^{-1})\Big) =  o_d(1)\,.
\end{align}
where we've put in the $\mathbf{x}_\ell$ to emphasize the dependence of $\Pi_{cb}^l$ on the location in parameter space.

Moreover, we recall from item (1) of Proposition~\ref{prop:k-GMM-SGD-result}, that there exists a constant $L$ such that 
    for all $\lambda$ large, with probability $1-o_d(1)$, the SGD trajectory has $\|\mathbf{x}_\ell\|\le L$ for all $T_0 \delta^{-1} \le \ell \le T_f \delta^{-1}$. It follows by definition of $\pi$ that there exists a constant $c=c(L)>0$, such that the coefficients in \pef{e:offblock2} and \pef{e:dbblock2} are lower bounded: $p_l \Pi_{cb}^l(\mathbf{x}_\ell), p_l \Pi_{aa}^l(\mathbf{x}_\ell)\geq c$ for all $T_0 \delta^{-1} \le \ell\le T_f\delta^{-1}$.
Thus the first sum $\sum_{1\leq l\leq k}p_l \Pi_{cb}^{l}\mu_{l}^{\otimes 2}$ on the righthand side of \pef{e:hessiandecomp} is positive definite, and its norm is lower bounded by $c$ (uniformly in $\epsilon,\lambda$). Together with \pef{eq:uniformly-in-SGD-Hessian-approx}, we conclude that the $b,c$ blocks of the test Hessian $\nabla_{bc}^2 \widehat R(\mathbf{x}_\ell)$ live in ${\rm Span}(\mu_1, \mu_2,\cdots, \mu_k)$ up to error $\OO(\epsilon+\lambda^{-1})$. Namely, this is because it satisfies Definition~\ref{def:matrix-lives-in-subspace} with the choice of $M = \widehat{\cE}_{cb}^{\rm R}$, after absorbing $c^{-1}$ into the big-$O$.

By the same argument, thanks to Lemma \ref{l:Gblock1}, the $bc$ block of the population G-matrix is given by 
\begin{align}\label{e:Gblock1}
\delta_{bc} p_b \mu_b^{\tensor 2}
        - p_c \E[\pi_{Y_c}(b)]\mu_c^{\otimes 2} 
        - p_b \E[\pi_{Y_b}(c)]\mu_b^{\otimes 2}+\sum_l p_l \E[\pi_{Y_l}(b)\pi_{Y_l}(c)] \mu_l^{\otimes 2}
        +\cE^{\rm G}_{bc}\,,
\end{align}
where $\cE^{\rm R}_{bc}$ is a matrix with norm bounded by $\OO(1/\lambda)$. 
The expression \pef{e:Gblock1}, together with the concentration of empirical G-matrix Theorem \ref{t:concentration1}, and a union bound over $\ell \le T_f \delta^{-1}$,
implies that blocks of the empirical G-matrix concentrate around 
\begin{align}\label{e:gramdecomp}
    \nabla^2_{bc}\widehat G(\mathbf{x}_\ell)
    &=\delta_{bc} p_b \mu_b^{\tensor 2}
        - p_c \E[\pi_{Y_c}(b)]\mu_c^{\otimes 2} 
        - p_b \E[\pi_{Y_b}(c)]\mu_b^{\otimes 2}  +\sum_l p_l \E[\pi_{Y_l}(b)\pi_{Y_l}(c)] \mu_l^{\otimes 2}
        +\widehat \cE^{\rm G}_{bc}(\mathbf{x}_\ell)\,,
        \end{align}
where except with probability $T_f \delta^{-1} e^{-(c\alpha \varepsilon^2-C)d}$
we have $\|\widehat \cE^{\rm G}_{ca}(\mathbf{x}_\ell)\|\lesssim \varepsilon+\frac{1}{\lambda}$ for all $\ell \le T_f \delta^{-1}$. Namely, the analogue of~\pef{eq:uniformly-in-SGD-Hessian-approx} will apply to $\nabla^2_{bc}\widehat G$ about the low-rank part of the above. In order to deduce the claimed alignment of Theorem~\ref{mainthm:all-align-with-means} it remains to show that each block of the matrix in~\pef{e:gramdecomp} (minus $\widehat{\cE}_{bc}^{\rm G}$ has operator norm bounded away from zero uniformly in $\epsilon,\lambda$. Towards this, recall that we can work on the event that $\|\mathbf{x}_\ell \|\le L$. In the on-diagonal blocks, we can rewrite the part of \pef{e:gramdecomp} that is not $\widehat{\cE}_{bb}^{\rm G}$ as 
\begin{align}\label{e:maint}
    p_b \bE[(1-\pi_{Y_b}(b))^2]\mu_b^{\tensor 2}
        &+\sum_{l\neq b} p_l \E[\pi_{Y_l}(b)^2] \mu_l^{\otimes 2}\,.
\end{align}
This matrix is positive definite and since $\|\mathbf{x}_\ell\|\le L$, there exists $c(L)$ such that the coefficients of $\mu_c$ are all bounded away from zero by $c$, so that the norm of the above is bounded from below by zero for all $(\mathbf{x}_\ell)_{T_0\delta^{-1}\le \ell \le T_f \delta^{-1}}$. 

Thus by \ref{prop:k-GMM-SGD-result}, the coefficients $p_b \bE[(1-\pi_{Y_b}(b))^2]$, $p_l \E[\pi_{Y_l}(b)^2]$ in \pef{e:maint} are lower bounded. Thus \pef{e:maint} is positive definite, there exists a constant $c>0$, such that its norm is lower bounded by $c$.

For $b\neq c$, we can lower bound the operator norm of the matrix
\begin{align}
        - p_c \E[\pi_{Y_c}(b)(1-\pi_{Y_c}(c))]\mu_c^{\otimes 2} 
        - p_b \E[\pi_{Y_b}(c)(1-\pi_{Y_b}(b))]\mu_b^{\otimes 2}+\sum_{l\neq b,c} p_l \E[\pi_{Y_l}(b)\pi_{Y_l}(c)] \mu_l^{\otimes 2}\,,
\end{align}
while $\|\mathbf{x}_\ell\|\le L$ using that if $k>2$ then the last sum contributes some positive portion outside of $\text{Span}(\mu_c,\mu_b)$ which can be used to lower bound the operator norm by some $c(L)$ and if $k=2$, then the first two terms are the only two, are negative definite, and have coefficients similarly bounded away from zero by some $-c(L)$. 
\end{proof}

\begin{proof}[\textbf{Proof of Theorem~\ref{mainthm:SGD-aligns-with-Hessian}}]
    The theorem follows from Theorem~\ref{mainthm:all-align-with-means} together with the observation that the matrices in ~\pef{e:offblock2} and~\pef{e:Gblock1} that are not the error portion $\cE_{cb}^R$ and $\cE_{bc}^G$ respectively, are of rank $k$  since they are sums of $k$ rank-$1$ matrices and as explained in the above proof, each of their eigenvalues are bounded away from zero uniformly in $\epsilon,\lambda$, (in a manner depending only on $L$). 
\end{proof}

\begin{proof}[\textbf{\emph{Proof of Theorem~\ref{mainthm:topeigenvector}}}]
The claims regarding the SGD are proved in Proposition~\ref{prop:k-GMM-SGD-result-orthonormal-means}. 

It remains to prove alignment of the top eigenvectors of the $cc$-blocks of the Hessian and G-matrices with $\mu_c$. Recall from \pef{e:hessiandecomp}, the decomposition of the empirical Hessian matrix 
\begin{align}\begin{split}\label{e:hessiandecomp2}
    \nabla^2_{aa}\widehat R(\mathbf{x})
    &=:\sum_{1\leq l\leq k}p_l \Pi_{aa}^{l}\mu_{l}^{\otimes 2}+\widehat \cE^{\rm R}_{aa}\,,
\end{split}\end{align}
where $\Pi_{aa}^{l}=\bE[\pi_{Y_l}(a)(1-\pi_{Y_l}(a))]$, and with probability $e^{-(c\alpha \varepsilon^2-C)d}$
we have $\|\widehat \cE^{\rm R}_{ca}\|\lesssim \varepsilon+\frac{1}{\lambda}$.

By our assumption that $\mu_1, \mu_2,\cdots,\mu_k$ are orthonormal, thus the first part in the decomposition \pef{e:hessiandecomp2} can be viewed as an orthogonal decomposition. $\nabla^2_{aa}\widehat R(\mathbf{x})$ is a perturbation of $\sum_{1\leq l\leq k}p_l \Pi_{aa}^{l}\mu_{l}^{\otimes 2}$ by $\widehat \cE^{\rm R}_{aa}$. In turn, by the same reasoning as in Lemma~\ref{lem:P-Q-integral-large-lambda}, 
\begin{align*}
    \Pi_{aa}^l = \overline{\Pi}_{aa}^l + O(\lambda^{-1}) \quad \text{where} \quad \overline{\Pi}_{aa}^l = \mathbb E[\bar \pi_{l}(a) (1-\bar \pi_{l}(a))]\,,
\end{align*}
for $\bar \pi_l(a) = e^{m_{al}}/\sum_{b}e^{m_{bl}}$ as in~\pef{eq:pi-bar}. 
By Lemma~\ref{lem:xor-confined-to-compact-region}, there exists $c(\beta)>0$ such that for all $(\mathbf{x}_\ell)_{T_0 \delta^{-1} \le \ell \le T_f \delta^{-1}}$, all the coefficients $p_l \Pi_{aa}^l$ in \pef{e:hessiandecomp2} are lower bounded: $p_l \Pi_{aa}^l\geq c$.
 Thus $\sum_{1\leq l\leq k}p_l \Pi_{aa}^{l}\mu_{l}^{\otimes 2}$ has $k$ positive eigenvalues, $\{p_l \Pi_{aa}^{l}\}_{1\leq l\leq k}$, and each of them is lower bounded by~$c$. The associated eigenvectors are given by $\{\mu_l\}_{1\leq l\leq k}$. We furthermore claim that the one corresponding to $\mu_a^{\otimes 2}$ is separated from the others uniformly in $\epsilon,\lambda$. This follows from the fact that we derived in Proposition~\ref{prop:k-GMM-SGD-result-orthonormal-means} that $\mathbf{x}_\ell$ is within $O(\epsilon + \lambda^{-1})$ distance of a point $\mathbf{x}_\star$ such that $\bar \pi_b(a) = \frac{1}{k-1}(1-\bar \pi_c(c))$ as long as $a\ne b$. This implies that $\bar \pi_l(a)$ for $l\ne a$ is closer to $0$ than $\bar \pi_a(a)$ is close to $1$, whence $\bar \pi_a(a)(1-\bar\pi_a(a))>\bar \pi_l(a)(1-\bar \pi_l(a))$ by an amount that is uniform in $\epsilon,\lambda$. This ensures that along $(\mathbf{x}_\ell)$ the largest eigenvector in $\sum_l \Pi_{aa}^l \mu_a^{\otimes 2}$ is the one with eigenvector $\mu_a$ and the next $k-1$ are those corresponding to $(\mu_l)_{l\ne a}$. Altogether, by eigenvector stability, the top eigenvector of $\nabla^2_{aa}\widehat R(\mathbf{x}_\ell)$ lives in $\text{Span}(\mu_a)$ up to error $\OO(\epsilon+\lambda^{-1})$ and the next $k-1$ all live in $\text{Span}((\mu_l)_{l\ne a})$ up to an error $\OO(\varepsilon+1/\lambda)$ for all $\ell\in [T_0 \delta^{-1},T_f\delta^{-1}]$.

 The statement for the empirical G-matrix is established similarly by using \pef{e:gramdecomp} as input. The part that is different from the above is to see that its top eigenvector in the $aa$-block is the one corresponding approximating by $\mu_a$ and its next $k-1$ are those approximating $(\mu_l)_{l\ne a}$. For that, recall the expression~\pef{e:maint} and use that since $(1-\bar\pi_a(a))>\bar \pi_l(a)$, the coefficient of $\mu_a^{\otimes 2}$ in the $aa$-block is strictly larger than the coefficients of $\mu_l$ for $l\ne a$. 
\end{proof}

\subsection{2-layer network for XOR Gaussian mixture}
We now turn to proving our main theorems for the XOR Gaussian mixture model with a 2-layer network of width $K$. 

\begin{proof}[\textbf{\emph{Proof of Theorem~\ref{mainthm:XOR-all-live-in-subspace}}}]
    We prove the alignment individually for each of the SGD, the Hessian matrix, and the G-matrix. 
    For the SGD trajectory, the claimed alignment was exactly the content of item (2) in Proposition~\ref{prop:XOR-GMM-SGD-result}. 

Letting $F$ denote the cdf of a standard Gaussian, thanks to Lemma~\ref{lem:population-Hessian-approximation},
\begin{align}
  \bE[\nabla_{vv}^2 L]&=\frac{1}{4}\sum_{\vartheta\in\{\pm \mu, \pm \nu\}}\sigma'(v\cdot g(W\vartheta))g(W\vartheta)^{\tensor2}+\cE^{\rm R}_{vv}\,,  \label{e:ELvv2} \\
  \bE[\nabla_{W_i W_i}^2 L]&=\frac{v_i^2}{4}\sum_{\vartheta\in\{\pm \mu, \pm \nu\}}F\left(m_i^\vartheta\sqrt{\frac{\lambda}{R_{ii}}}\right)\vartheta^{\tensor2}+\cE^{\rm R}_{W_iW_i}\,, \label{e:ELWW2}
\end{align}
where the two error matrices satisfy $\|\cE^{\rm R}_{vv}\|, \|\cE^{\rm R}_{W_iW_i}\|=\OO(1/\sqrt \lambda)$. The two expressions \pef{e:ELvv2} and \pef{e:ELWW2},  together with the concentration of empirical Hessian matrix Theorem \ref{t:concentration2} imply that for every $L$, every fixed $\mathbf{x} = (v,W)$ with $W\notin \mathcal W_0^c$, the blocks of the empirical Hessian matrix satisfy
\begin{align}\label{e:vv0}
    \nabla^2_{vv}\widehat R(v,W)
    &=\frac{1}{4}\sum_{\vartheta\in\{\pm \mu, \pm \nu\}}\sigma'(v\cdot g(W\vartheta))g(W\vartheta)^{\tensor2}+\widehat \cE^{\rm R}_{vv}\,,\\
    \nabla^2_{W_iW_i}\widehat R(v,W)
    &=\frac{v_i^2}{4}\sum_{\vartheta\in\{\pm \mu, \pm \nu\}}F\left(m_i^\vartheta\sqrt{\frac{\lambda}{R_{ii}}}\right)\vartheta^{\tensor2}+\widehat \cE^{\rm R}_{W_i W_i}\,,\label{e:WiWi}
\end{align}
where except with probability $e^{-(c\alpha \varepsilon^2-C)d}$,
we have
\begin{align}\label{e:errorla}
    \|\widehat \cE^{\rm R}_{vv}(v,W)\|_{\op},\|\widehat \cE^{\rm R}_{W_iW_i}(v,W)\|_{\op}\lesssim \varepsilon+\frac{1}{\sqrt \lambda}\,.
\end{align}
Using independence of the test and training data, recalling from item (1) of Proposition~\ref{prop:XOR-GMM-SGD-result} that the SGD stays confined to a ball of radius $L$ in parameter space for all $\ell \le T_f \delta^{-1}$, and from item (2) of Proposition~\ref{prop:XOR-GMM-SGD-result} that $\|W_i\|>0$ for all $i$ for all $T_0 \delta^{-1}\le \ell\le T_f \delta^{-1}$, taking a union bound over $\ell\le T_f\delta^{-1}$, we get for $\alpha>\alpha_0$, 
\begin{align}
    \mathbb P\Big( \max_{T_0\delta^{-1}\le  \ell \le T_f\delta^{-1}} & \|\nabla_{vv}^2 \widehat R(\mathbf{x_\ell})  - \frac{1}{4}\sum_{\vartheta\in\{\pm \mu, \pm \nu\}}\sigma'(v\cdot g(W\vartheta))g(W\vartheta)^{\tensor2}\|_{\op} \ge C(\varepsilon + \lambda^{-1/2})\Big) =  o(1)\,. \label{eq:XOR-uniformly-in-SGD-v-Hessian-approx} \\ 
    \mathbb P\Big( \max_{T_0\delta^{-1}\le \ell \le T_f\delta^{-1}} & \|\nabla_{W_iW_i}^2 \widehat R(\mathbf{x_\ell})  - \frac{v_i^2}{4}\sum_{\vartheta\in\{\pm \mu, \pm \nu\}}F\left(m_i^\vartheta\sqrt{\frac{\lambda}{R_{ii}}}\right)\vartheta^{\tensor2}\|_{\op} \ge C(\varepsilon + \lambda^{-1/2})\Big) =  o(1)\,. \label{eq:XOR-uniformly-in-SGD-W-Hessian-approx}
\end{align}
where in the quantity that the blocks of the empirical Hessian are being compared to, $v,W$ evaluated along the SGD trajectory $\mathbf{x}_\ell$.

It remains to show that the low-rank matrices in ~\pef{eq:XOR-uniformly-in-SGD-v-Hessian-approx}--\pef{eq:XOR-uniformly-in-SGD-W-Hessian-approx} have some operator norm uniformly bounded away from zero to deduce the alignment of the empirical test Hessian with the claimed vectors. By item (2) of Proposition~\ref{prop:XOR-GMM-SGD-result}, there exists some constant $c>0$ uniform in $\lambda$, such that with probability $1-o_d(1)$, for all $\ell \le T_f \delta^{-1}$ the SGD $\mathbf{x}_\ell$ is such that 
\begin{align}
    \max_{\vartheta \in \{\pm \mu,\pm \nu\}} \sigma'(v\cdot g(W\vartheta)) \|g(W \vartheta)\|
    =\max_{\vartheta\in \{\pm \mu,\pm \nu\}} \sigma'\big(\sum_i v_i g(m_i^\vartheta)\big) \|g(W \vartheta)\| >c\,,
\end{align}

Thus the deterministic matrix in \pef{eq:XOR-uniformly-in-SGD-v-Hessian-approx} $\sum_{\vartheta\in\{\pm \mu, \pm \nu\}}\sigma'(v\cdot g(W\vartheta))g(W\vartheta)^{\tensor2}$ is positive definite and  its norm is lower bounded away from $0$ for all $(\mathbf{x}_\ell)_{\ell \le T_f \delta^{-1}}$. Together with \pef{eq:XOR-uniformly-in-SGD-v-Hessian-approx}, we conclude that the second-layer test Hessian $\nabla^2_{vv} \widehat R(\mathbf{x}_\ell)$ live in $\text{Span}(g(W(\mathbf{x}_\ell)\cdot \vartheta)_{\vartheta\in \{\pm \mu,\pm \nu\}})$.

For \pef{eq:XOR-uniformly-in-SGD-W-Hessian-approx}, by item (2) of Proposition~\ref{prop:XOR-GMM-SGD-result}, there exists $c>0$ (independent of $\epsilon,\lambda$) such that with probability $1-o_d(1)$, for every $i$, it must be the case that
\begin{align}
     \max_{\vartheta \in \{\pm \mu,\pm\nu\}} \frac{v_i^2}{4}F\big(m_i^\vartheta\sqrt{\frac{\lambda}{R_{ii}}}\big)
    >c\, \quad \text{for all $(\mathbf{x}_\ell)_{\ell \le T_f \delta^{-1}}$}\,.
\end{align}
Then the deterministic matrix in \pef{eq:XOR-uniformly-in-SGD-W-Hessian-approx} is positive definite and has norm uniformly lower bounded away from $0$ for all $\mathbf{x}_\ell$ for $\ell\le T_f \delta^{-1}$. Together with \pef{eq:XOR-uniformly-in-SGD-W-Hessian-approx}, we conclude that the first-layer test Hessian $\nabla^2_{W_iW_i} \widehat R(\mathbf{x}_\ell)$ lives in $\text{Span}(\mu,\nu)$.

By the same argument, thanks to Lemma \ref{lem:population-Gram-approximation}, 
together with the concentration of the empirical G-matrix from Theorem \ref{t:concentration2}, the blocks of the empirical G-matrix concentrate around low rank matrices 
\begin{align}\begin{split}\label{e:vvG}
    \widehat G_{vv}(v,W)
    &=\frac{1}{4}\sum_{\vartheta\in \{\pm \mu, \pm \nu\}}(\sigma(v\cdot g(W\vartheta))-y_\vartheta)^{2}g(W\vartheta)^{\otimes 2}+\widehat \cE^{\rm G}_{vv}\,,\\
    \widehat G_{W_iW_i}(v,W)
    &=v_i^2A+\widehat \cE^{\rm G}_{W_i W_i}\,,
\end{split}\end{align}
where recalling~\pef{eq:A0},
\begin{equation}\label{e:defA}
A=\frac{1}{4}\sum_{\vartheta\in\{\pm\mu,\pm \nu\}}(y_\vartheta - \sigma(v\cdot g(W\vartheta)))^{2}F(m^\vartheta_{i}\sqrt{\frac{\lambda}{R_{ii}}})\vartheta^{\tensor2}\,,
\end{equation}
and for every $\mathbf{x} = (v,W)$ of norm at most $L$, except with probability $e^{-(c\alpha \varepsilon^2-C)d}$
we have $\|\widehat \cE^{\rm G}_{vv}\|,\|\widehat \cE^{\rm G}_{W_iW_i}\|\lesssim \varepsilon+\frac{1}{\sqrt\lambda}$. 
Union bounding over the $T_f\delta^{-1}$ points along the trajectory of the SGD, as in the lead-up to~\pef{eq:XOR-uniformly-in-SGD-v-Hessian-approx}--\pef{eq:XOR-uniformly-in-SGD-W-Hessian-approx}, we get 
\begin{align}
    \mathbb P\Big( \max_{\ell \le T_f\delta^{-1}} & \|\widehat G_{vv}(\mathbf{x_\ell})  - \frac{1}{4}\sum_{\vartheta\in\{\pm \mu, \pm \nu\}}(\sigma(v\cdot g(W\vartheta))-y_\vartheta)^{2}g(W\vartheta)^{\otimes 2}\|_{\op} \ge C(\varepsilon + \lambda^{-1/2})\Big) =  o(1)\,. \label{eq:XOR-uniformly-in-SGD-v-Gram-approx} \\ 
    \mathbb P\Big( \max_{\ell \le T_f\delta^{-1}} & \|\widehat G_{W_i W_i}(\mathbf{x_\ell})  - v_i^2 A\|_{\op} \ge C(\varepsilon + \lambda^{-1/2})\Big) =  o(1)\,. \label{eq:XOR-uniformly-in-SGD-W-Gram-approx}
\end{align}
where in the quantity that the blocks of the empirical G-matrix are being compared to, $v,W$ are evaluated along the SGD trajectory $\mathbf{x}_\ell$.

We recall from item (1) of Proposition~\ref{prop:XOR-GMM-SGD-result} the SGD stays inside the $\ell^2$-ball of radius $L$ for all time. Thus for $\mathbf x=\mathbf x_\ell$, the coefficients  $(\sigma(v\cdot g(W\vartheta))-y_\vartheta)^{2},\sigma(v\cdot g(W\vartheta))^{2},(1-\sigma(v\cdot g(W\vartheta)))^{2}$ in \pef{e:vvG} and \pef{e:defA} are lower bounded away from $0$. By the same argument as for the test Hessian matrix, we conclude that the second-layer test G-matrix $\widehat G_{vv}(\mathbf{x}_\ell)$ lives in $\text{Span}(g(W(\mathbf{x}_\ell) \vartheta)_{\vartheta\in \{\pm \mu,\pm \nu\}})$ and its first layer lives in $\text{Span}(\mu,\nu)$, up to error $O(\epsilon + \lambda^{-1/2})$. 
\end{proof}

\begin{proof}[\textbf{\emph{Proof of Theorem~\ref{mainthm:XOR-lives-in-Hessian}}}]
    The first layer alignment follows from Theorem~\ref{mainthm:XOR-all-live-in-subspace} together with the observation that the part of~\pef{e:WiWi} besides $\widehat{\cE}_{W_i W_i}^R$ is a rank-2 matrix with eigenvectors $\mu,\nu$ and having both eigenvectors bounded away from zero uniformly in $(\epsilon,\lambda)$; this latter fact comes from the observation that the sum of the two cdf's $F(a) + F(-a)=1$ always, and $v_i$ being bounded away from zero per part (2) of Proposition~\ref{prop:XOR-GMM-SGD-result}. A similar behavior ensures the same for the G-matrix's first layer top two eigenvectors per~\pef{e:vvG}--\pef{e:defA} and a uniform lower bound on the sigmoid function over all possible parameters in a ball of radius $L$, as guaranteed by part (1) of Proposition~\ref{prop:XOR-GMM-SGD-result}. 

    For the alignment of the second layer, it follows from Theorem~\ref{mainthm:XOR-all-live-in-subspace} together with the following. First observe that the part of~\pef{e:vv0} that is not $\widehat{\cE}_{vv}^R$ is a rank-4 matrix with eigenvectors $(g(W \vartheta))_{\vartheta\in \{\pm \mu,\pm\nu\}}$. To reason that all of the corresponding eigenvectors are uniformly (in $\epsilon,\lambda$) bounded away from zero, use the uniform lower bound on $\sigma'$ while the parameters are in a ball of radius $L$ about the origin as promised by part (1) of Proposition~\ref{prop:XOR-GMM-SGD-result}, together with the uniform lower bound, for each $i$, on one of $(W_i \cdot \vartheta)_{\vartheta\in \{\pm \mu,\pm \nu\}}$ which holds for the SGD after $T_0(\epsilon)$ per the proof of part (2) of Proposition~\ref{prop:XOR-GMM-SGD-result}. 
\end{proof}

\section{Extension to empirical matrices generated from train data}\label{sec:training-data}
In this section, we discuss how to prove the results for the $k$-GMM model, Theorems~\ref{mainthm:SGD-aligns-with-Hessian}--\ref{mainthm:topeigenvector} in the case the empirical Hessian and empirical G-matrices are generated from the train data itself, assuming $M\gtrsim d\log d$. As the arguments are largely similar we will focus on describing the modifications/new ingredients compared to the proofs in the preceding two sections. In this section, we override the notation to let 
\begin{align}
    \nabla^2 \widehat R(\mathbf{x}) & =  \frac{1}{ M} \sum_{\ell = 1}^{M} \nabla^2 L(\mathbf{x}, {\mathbf{Y}}^\ell)\,, \qquad \text{and}\qquad  \widehat G(\mathbf{x})  = \frac{1}{ M} \sum_{\ell =1}^{ M} \nabla L(\mathbf{x},{\mathbf{Y}}^\ell)^{\otimes 2}\,. \label{eq:train-Hessian-Gram}
\end{align}
both be generated from the train data $\mathbf{Y}^\ell = (y^\ell,Y^\ell)$.

\subsection{Uniform concentration of train matrices}
The key ingredient to extending the results to hold for the train empirical matrices, is to establish a stronger concentration result when $M\gtrsim d\log d$ showing that with high probability, the empirical matrices are close to their population versions everywhere throughout a ball $B_L$ in the parameter space. This will allow us to assume the concentration holds along the SGD trajectory, even though the two are in principle correlated. 

\begin{lem}\label{lem:uniform-in-space-kGMM-concentration}
Consider the $k$-GMM data model of~\pef{eq:data-distribution} and sample complexity $M= \alpha d$. Fix $L$ and let $B_L$ be the ball of radius $L$ about the origin in the parameter space $\mathbb R^{kd}$. There are constants $c=c(k,L),C=C(k,L)$ (independent of $\lambda$) such that for all $t>e^{ - d^{o(1)}}$, the empirical Hessian matrix concentrates as  
\[
\mathbb P\Big(\sup_{x\in B_{r}}\norm{\nabla^2 \widehat{R}(x)-\mathbb E[\nabla^2 \widehat R(x)]}_{\op} \ge t\Big)\leq C\exp(-c([\alpha(t\wedge t^{2})-k\log(d/t)]\wedge1)d)\,.
\]
The same bound holds replacing $\nabla^2 \widehat R$ by $\widehat G$. 
\end{lem}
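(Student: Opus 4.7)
The plan is to upgrade the pointwise concentration already proved in Theorem~\ref{t:concentration1} to a uniform bound over $B_L$ via a standard $\varepsilon$-net plus Lipschitz argument. Write the Hessian blocks as $\nabla^{2}_{bc}\widehat R(\mathbf x)=\frac{1}{M}\sum_{\ell}\Psi_{bc}(\mathbf x;Y^\ell)\,Y^\ell\otimes Y^\ell$ where $\Psi_{bc}(\mathbf x;Y)=\pi_Y(b)\delta_{bc}-\pi_Y(b)\pi_Y(c)$, and similarly the G-matrix in the form $\Psi^{\rm G}_{bc}(\mathbf x;Y)Y\otimes Y$. Since $\Psi_{bc},\Psi^{\rm G}_{bc}\in[-2,4]$ and the softmax derivative~\pef{eq:grad-pi} gives $\|\nabla_{\mathbf x}\Psi_{bc}(\mathbf x;Y)\|\lesssim \|Y\|$ (the bound being uniform in $\mathbf x$ since $\Psi$ only involves $\pi$'s), we obtain the deterministic Lipschitz estimate
\begin{equation}\label{eq:Lip-main}
\|\nabla^{2}\widehat R(\mathbf x)-\nabla^{2}\widehat R(\mathbf x')\|_{\op}\;\lesssim\;\Big(\frac{1}{M}\sum_{\ell}\|Y^\ell\|^{3}\Big)\|\mathbf x-\mathbf x'\|\,,
\end{equation}
and the same estimate for $\widehat G$. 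First I would record \pef{eq:Lip-main}, giving careful but quick derivations of the required bound on $\|\nabla_{\mathbf x}\Psi\|$ by differentiating the softmax.

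Next I would control the random Lipschitz constant in \pef{eq:Lip-main}. Since $Y^\ell\stackrel{d}{=}\mu_{y^\ell}+Z_\lambda^\ell$ with $Z_\lambda^\ell\sim\cN(0,I_d/\lambda)$, each $\|Y^\ell\|^2$ is sub-exponential with mean $O(d)$, so $\|Y^\ell\|^3$ is sub-exponential of order $O(d^{3/2})$. Bernstein then yields a constant $K_0>0$ such that on an event $\cE_{\rm data}$ of probability $\ge 1-e^{-K_0 d}$, the empirical average $\frac{1}{M}\sum_{\ell}\|Y^\ell\|^3$ is at most $K_1 d^{3/2}$ for some universal $K_1=K_1(\lambda_0)$. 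Thus on $\cE_{\rm data}$, the Lipschitz constant of $\nabla^{2}\widehat R(\cdot)$ and $\widehat G(\cdot)$ on $B_L$ is $\le K_1 d^{3/2}$.

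The $\varepsilon$-net argument is then routine. Cover $B_L\subset\mathbb R^{kd}$ with $\cN_\varepsilon$ a minimal $\varepsilon$-net, of cardinality at most $(3L/\varepsilon)^{kd}$. Choose $\varepsilon=t/(4K_1 d^{3/2})$, so that on $\cE_{\rm data}$ the Lipschitz bound \pef{eq:Lip-main} contributes at most $t/4$ of error in matching any $\mathbf x\in B_L$ with its nearest net point (applied separately to $\widehat R(\mathbf x)-\bE[\widehat R(\mathbf x)]$ via a triangle inequality using the same Lipschitz estimate for the population Hessian, which is even easier to prove). A union bound of Theorem~\ref{t:concentration1} over $\cN_\varepsilon$ at threshold $t/2$ then gives
\[
\prob\Big(\sup_{\mathbf x\in B_L}\|\nabla^{2}\widehat R(\mathbf x)-\mathbb E\,\nabla^{2}\widehat R(\mathbf x)\|_{\op}>t\Big)\le |\cN_\varepsilon|\,e^{-[c\alpha (t\wedge t^2)-C]d}+e^{-K_0 d},
\]
and $\log|\cN_\varepsilon|\lesssim kd\log(d/t)$ (absorbing $L,K_1$ constants into the implicit constant), from which the claimed bound follows after the usual $\wedge 1$ cutoff to keep the estimate non-trivial. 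The identical argument applied to the $G$-matrix, using \pef{eq:train-Hessian-Gram} and the analogue of \pef{eq:Lip-main}, finishes the proof.

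The main obstacle is not the net argument itself but the Lipschitz bound \pef{eq:Lip-main}: one must verify that the $\mathbf x$-dependence of $\Psi_{bc}$, $\Psi^{\rm G}_{bc}$ enters only through the softmax $\pi_Y(\cdot;\mathbf x)$, so that a single factor of $\|Y\|$ is picked up per $\nabla_{\mathbf x}$, and that the remaining $Y\otimes Y$ factor contributes the last two powers. This is why a sample complexity of $M\gtrsim d\log d$, rather than $M\gtrsim d$, appears: we need $c\alpha(t\wedge t^2)$ to dominate $k\log(d/t)$ for the union bound to be effective.
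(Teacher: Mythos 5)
Your proposal is correct and follows essentially the same route as the paper: an $\varepsilon$-net over $B_L$ combined with a Lipschitz bound on $\mathbf x\mapsto\nabla^2\widehat R(\mathbf x)$ and a union bound of Theorem~\ref{t:concentration1} over the net (the paper obtains the sharper Lipschitz constant $O(\sqrt d)$ via $\|AA^T\|_{\op}/M=O(1)$ together with $\max_\ell\|Y^\ell\|=O(\sqrt d)$, rather than your cruder $\frac{1}{M}\sum_\ell\|Y^\ell\|^3=O(d^{3/2})$, but this only affects constants inside the logarithm of the net cardinality and hence is immaterial). One small caution: $\|Y^\ell\|^3$ is not sub-exponential, so Bernstein does not apply directly to the average of the cubes; instead control it, as the paper does, by a union bound on the event $\max_{\ell\le M}\|Y^\ell\|\le\sqrt{Kd}$, which holds except with probability $Me^{-cd}$.
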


\begin{proof}
This follows by like the argument Theorem~\ref{t:concentration1} above, but with an additional $\epsilon$-net in the parameter space. We
present only the bound for $\nabla^{2}\hat{R}$ as the bound for $\widehat G$
can be verified analogously. 

Let
$F(x)=\norm{\nabla^{2}\widehat{R}(x)-\mathbb E[\nabla^2 \widehat R(x)]}_{\op}$.
Since we will be performing an $\epsilon$-net in the parameter space, we wish to bound the Lipschitz constant of
$F$; this bound will be the source of the extra $\log d$ we need on $\alpha$ for the probability in the lemma to be small. 
We can then bound 
\[
\abs{F(x)-F(x')}\leq I+\E [I] \quad \text{where} \quad 
I=\sup_{\norm{v}=1}\abs{\langle v,(\nabla^{2}\widehat{R}(x)-\nabla^2\widehat{R}(x'))v\rangle }\,.
\]
 Recall from \pef{e:Hessian} above that we can write 
\[
\nabla^{2}\hat{R}(x)=\frac{1}{M}A^{\times k}\mathbf{D}^H(x)(A^{T})^{\times k}\,,
\]
where $A$ is the matrix whose columns are formed by the data $(Y^\ell)_{\ell =1}^{M}$, and $\mathbf{D}^H$ is the matrix of~\eqref{eq:Hessian-D-matrix}, except with train data instead of test data. 
Thus 
\begin{align*}
I & \leq\frac{1}{M}\sup_{v}\abs{\langle v,A^{\times k}(\mathbf{D}^{H}(x)-\mathbf{D}^{H}(x'))(A^{\times k})^{T}v\rangle }\\
& \leq \norm{\mathbf{D}^H(x)-\mathbf{D}^H(x')}_{\op} \norm{AA^T}_{\op}/M\\
&= \sup_{a,b,\ell}\abs{d_\ell(a,b;x)-d_\ell(a,b;x')} \cdot\norm{AA^{T}}_{\op}/M,
\end{align*}
where as before $d_\ell(a,b;x)=[\mathbf{D}^H_{ab}(x)]_{\ell\ell}$.
Recalling the definition of $\pi_Y$ from~\eqref{eq:pi-dist}, 
\[
\nabla_{x^a} \pi_{Y}(b)=(\pi_Y(b)\delta_{ab}-\pi_Y(a)\pi_Y(b))Y^\ell\,.
\]
Using this and boundedness of $\pi_Y$, we can bound $\sup_{a,b,x} \|\nabla d_\ell(a,b;x)\|\lesssim \|Y^\ell\|$. 

Combining these, we see that for every $x,x'$, we have 
\begin{align*}
I & \lesssim\sup_{\ell}\norm{Y^{\ell}}\cdot(\norm{AA^{T}}_{\op}/M)\cdot \norm{x-x'}\,.
\end{align*}
For all $x,y\in B_L$, we evidently have $\norm{x-x'}\le 2L$.  
Let $E_{K,d}$ denote the event
\[
\{\norm{AA^T}_{\op}/M \leq \sqrt{K}\}\cap \bigcap_{\ell\le M} \{\norm{Y^\ell}\leq\sqrt{K d}\}\,,
\]
on which $I\lesssim LK\sqrt{d} \|x-x'\|$. In order to bound the probability of $E_{K,d}^c$, we first use standard concentration of Gaussian vectors and the fact that means are unit norm to deduce that for every $K>1 + 1/\lambda$, there exists $c(k,K)>0$ such that 
\begin{align*}
    \mathbb P\Big(\bigcup_{\ell \le M} \{\|Y^\ell\|>\sqrt{K d}\}\Big) \le M e^{ - c(k,K)d}\,.
\end{align*}
Similarly, by the fact that the means are unit norm, and the concentration of Gaussian covariance matrices (see e.g.,~\citet[Theorem 4.6.1]{Vershynin}), the probability that $\|A A^T\|_{\op}/M > \sqrt{K}$ is at most $e^{ - c(k,K)d}$, whence a union bound implies for some other $c(k,K)>0$, 
\[
\mathbb P(E_{K,d}^c)\lesssim M e^{-c(k,K)d}\,,
\]
whence for all $x,x'\in B_L$ we have  $ \mathbb P(I\ge L K\sqrt{d}\|x-x'\|) \le M e^{ - c(k,K)d}$. By a similar calculation, we can easily bound the population Hessian's operator norm as $\sup_{x} \mathbb E[\|\nabla^2 \widehat R(x)\|^2]^{1/2}\lesssim_L 1$. 
Thus by Cauchy--Schwarz, we also have for all $x,x'\in B_L$, 
\[
\E [I]\lesssim_L \sqrt{d} \|x-x'\|+ M^{1/2} e^{ - c(k,K)d/2}\,.
\]
So long as $M$ is sub-exponential in $d$, we deduce that for all $x,x'\in B_L$, 
\begin{align*}
    |F(x)- F(x')| \lesssim_L \sqrt{d} \|x-x'\| + e^{ - c d}\,. 
\end{align*}
except with probability $\mathbb P(E_{K,d}^c) \lesssim Me^{ - c(k,K)d}$. 
Fix a $K>2$, say, and take an $\epsilon$-net of $B_L$ called $\cN_\epsilon$, with $\epsilon = t/C\sqrt{d}$ for a large enough constant $C(k,L)$. Then we have that for every $t>e^{-d^{o(1)}}$, say,
\begin{align*}
    \mathbb P(\sup_{x\in B_L} F(x) >t) \le \mathbb P(\sup_{x\in \cN_\epsilon}F(x)>t/2) + Me^{ - c(k)d}
\end{align*}
By a union bound and Theorem~\ref{t:concentration1}, we obtain for some other $C(k,L)$, that 
\begin{align*}
\mathbb P(\sup_{x\in B_L}F(x) & >t)\lesssim \Big(\frac{C\sqrt{d}}{t}\Big)^{dk}e^{ - [c\alpha (t\wedge t^2) - C] d} + Me^{ - cd}\,,
\end{align*}
which is the claimed bound up to change of constants. 
\end{proof}

\subsection{Concluding the alignment proofs for train empirical matrices}
We now reason how to use Lemma~\ref{lem:uniform-in-space-kGMM-concentration} to prove Theorems~\ref{mainthm:SGD-aligns-with-Hessian}--\ref{mainthm:topeigenvector} with train empirical matrices as long as $M\gtrsim d\log d$.  We focus on the modifications one makes to the proofs in Section~\ref{s:main-theorem-proofs}. 

Towards this, let $\mathcal E_0$ be the event that the SGD remains in $B_L$ for all $\ell\le T_f\delta^{-1}$, which holds with probability $1-o_d(1)$ by Lemma~\ref{lem:kGMM-confined-to-compact-region} for a large enough $L(\beta)$. Let $\cE_1$ be the event that $\mathbf{Y}^\ell$ are such that the concentration bound in Lemma~\ref{lem:uniform-in-space-kGMM-concentration} holds, with that choice of $L$, for $t= \varepsilon/2$, say (where $\varepsilon$ is the error we will allow in our alignment claims). This also has probability $1-o_d(1)$ by Lemma~\ref{lem:uniform-in-space-kGMM-concentration} so long as $\alpha \gtrsim \log d$, or equivalently $M\gtrsim d\log d$. We therefore can work on the intersection of $\cE_0 \cap \cE_1$, on which the complement of the event in the probability in~\eqref{eq:uniformly-in-SGD-Hessian-approx} holds for all $\ell \in [T_0 \delta^{-1},T_f \delta^{-1}]$. (This was the step where the independence of the train and test data was previously used.) The analogue for the G-matrix is similarly deduced. With that established, the remainder of the proofs in the $k$-GMM part of Section~\ref{s:main-theorem-proofs} go through unchanged. 

\section{Additional figures}\label{sec:additional figures}
This section includes additional figures for empirical spectra along training in the $k$-GMM model, as well as versions of the figures of Section~\ref{sec:main-results} generated with train data instead of test data. 

\subsection{Additional figures for K-GMM}
In this section, we include more figures depicting the spectral transitions for the $k$-GMM model. 

\begin{figure}[h]
    \centering
 \includegraphics[width=.4\textwidth]{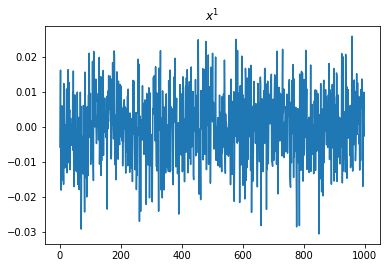}\qquad
\includegraphics[width=.4\textwidth]{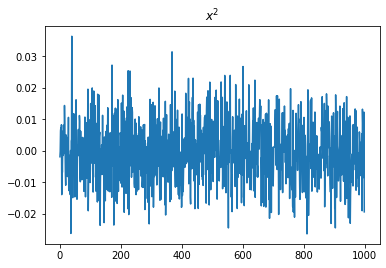}
 \includegraphics[width=.4\textwidth]{KGMM/Xcorr3.png}\qquad
 \includegraphics[width=.4\textwidth]{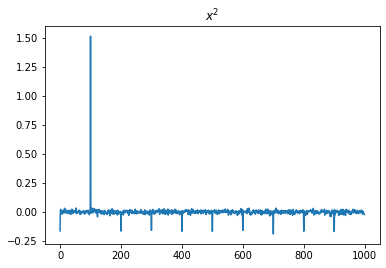}
      \caption{ 
      The SGD coordinates $x^1$ and $x^2$ at initialization $t=0$ (above), and at the end of training (below). Initially $x^i$ is a random vector, but over the course of training it correlates with $\mu_i$ (and anti-correlates with $(\mu_j)_{j\ne i}$), matching the results of Theorem~\ref{mainthm:all-align-with-means} and~\ref{mainthm:topeigenvector} Parameters are the same as in Figure~\ref{fig:KGMM-topspaces}.}
\end{figure}
\begin{figure}[h]
    \centering
 \includegraphics[width=.3\textwidth]{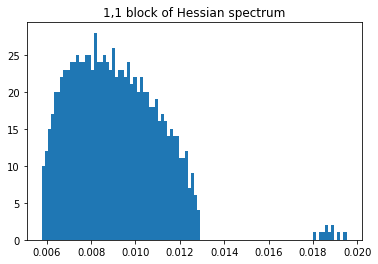}
\includegraphics[width=.3\textwidth]{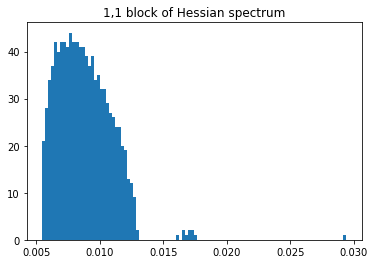}
 \includegraphics[width=.3\textwidth]{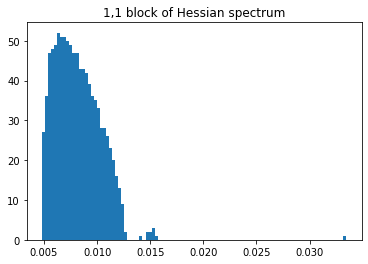}
 
      \caption{ 
     Further illustration of the dynamical spectral transition depicted in Fig.~\ref{fig:DBBP-kgmm}. Here we see that there is initially two components of the spectrum, then over the course of training, the top eigenvalue and the next $9$ separate from each other as proven in Theorem~\ref{mainthm:topeigenvector}.}
\end{figure}

\newpage

\subsection{Training data}\label{sec:additional-figures-train}
We include here variants of plots from Section~\ref{sec:main-results}, in which the empirical matrices are generated using training data, as opposed to independent test data. We begin with the train figures for the $k$-GMM model, then include those for the XOR GMM model. It is easily observed from these that the proven behavior holds just as well for train data as it does for test data empirical matrices.

\begin{figure}[h]\label{fig:KGMM-topspaces-train}
    \centering

   \subfigure{\includegraphics[width=.35\textwidth]{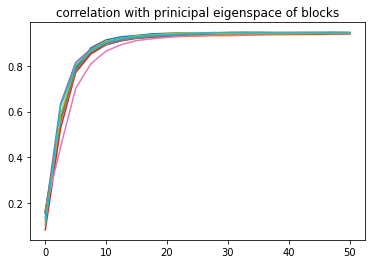}}
   \qquad
\subfigure{        \includegraphics[width=.35\textwidth]{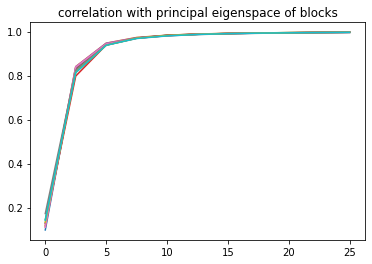}}
      \caption{An analogue of Figure~\ref{fig:KGMM-topspaces} for the situation in which the Hessian (right) and G-matrix (left) are generated with the full batch of train data.}
\end{figure}

\begin{figure}[h]
    \centering
\includegraphics[width=.35\textwidth]{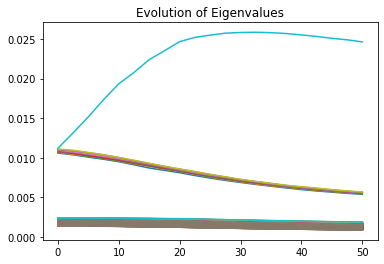}
\qquad 
\includegraphics[width=.35\textwidth]{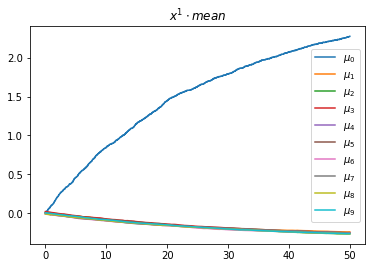}
      \caption{The training data analogue of Fig.~\ref{fig:DBBP-kgmm}. The same phenomenology as with test empirical matrices is easily observed to persist.}
    \label{fig:DBBP-kgmm-train}
\end{figure}

\begin{figure}[h]
    \centering
    \subfigure[]{\includegraphics[width=.235\textwidth]{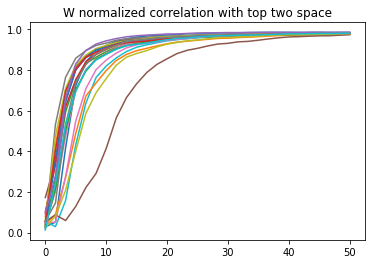}}
           \subfigure[]{ \includegraphics[width=.235\textwidth]{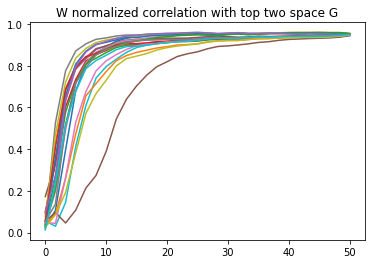}}
\subfigure[]{\includegraphics[width=.235\textwidth]{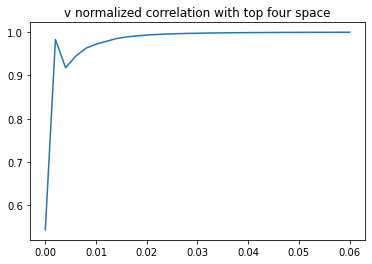}}
   \subfigure[]{\includegraphics[width=.235\textwidth]{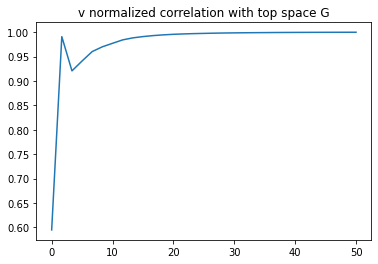}}
      \caption{The training data analogue of Fig.~\ref{fig:XOR-topspaces}. The same phenomenology as with test empirical matrices is easily observed to persist. Parameters are the same as in Figure~\ref{fig:XOR-topspaces}.}
    \label{fig:XOR-topspaces-train} 
\end{figure}

\begin{figure}[h] 
    \centering
 \includegraphics[width=.35\textwidth]{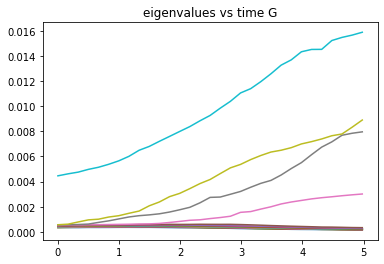}
   \qquad
 \includegraphics[width=.35\textwidth]{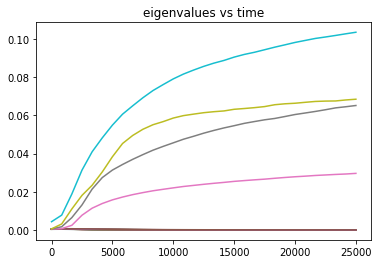}
      \caption{The training data analogue of Fig.~\ref{fig:XOR-topspaces}. The same phenomenology as with test empirical matrices is easily observed to persist.}
    \label{fig:DBBP-XOR-train}
\end{figure}

\subsection*{Acknowledgements}
The authors sincerely thank all anonymous referees for their helpful comments and suggestions. 
G.B.\ acknowledges the support of NSF grant DMS-2134216. 
R.G.\ acknowledges the support of NSF grant DMS-2246780. 
The research of J.H.\ is supported by NSF grants DMS-2054835 and DMS-2331096.
A.J. acknowledges the support of the Natural Sciences and Engineering Research Council of Canada (NSERC) and the Canada Research Chairs programme. Cette recherche a \'et\'e enterprise gr\^ace, en partie, au 
soutien financier du Conseil de Recherches en Sciences Naturelles et en G\'enie du Canada (CRSNG),  [RGPIN-2020-04597, DGECR-2020-00199], et du Programme des chaires de recherche du Canada.

\bibliographystyle{abbrvnat}
\bibliography{references-arXiv-v2.bib}

\end{document}